\theoremstyle{plain}
\newtheorem{theorem}{Theorem}[section]
\newtheorem{lemma}[theorem]{Lemma}
\newtheorem{assumption}[theorem]{Assumption}
\crefname{assumption}{Assumption}{Assumptions}
\def\1{\bm{1}}
\definecolor{brickred}{rgb}{0.8, 0.25, 0.33}
\definecolor{darkred}{rgb}{0.75,0.0,0.0}
\definecolor{darkblue}{rgb}{0.0,0.0,0.75}
\def\gdphi{{\nabla\Phi}}
\def\gdx{{\nabla_{x}}}
\def\gdy{{\nabla_{y}}}
\def\gdxy{{\nabla_{xx}^2}}
\def\gdxy{{\nabla_{xy}^2}}
\def\gdyy{{\nabla_{yy}^2}}
\def\gdf{{\bar{\nabla}f}}
\def\pr{{\mathrm{Pr}}}
\def\init{{\mathrm{init}}}
\def\rmA{{\mathbf{A}}}
\def\rmB{{\mathbf{B}}}
\def\rmI{{\mathbf{I}}}
\def\rmP{{\mathbf{P}}}
\def\rmQ{{\mathbf{Q}}}
\def\rmU{{\mathbf{U}}}
\def\rmX{{\mathbf{X}}}
\def\vw{{\bm{w}}}
\def\vx{{\bm{x}}}
\def\vz{{\bm{z}}}
\DeclareMathAlphabet{\mathsfit}{\encodingdefault}{\sfdefault}{m}{sl}
\SetMathAlphabet{\mathsfit}{bold}{\encodingdefault}{\sfdefault}{bx}{n}
\def\gD{{\mathcal{D}}}
\def\gE{{\mathcal{E}}}
\def\gF{{\mathcal{F}}}
\def\gH{{\mathcal{H}}}
\newcommand{\E}{\mathbb{E}}
\newcommand{\R}{\mathbb{R}}
\DeclareMathOperator*{\argmin}{arg\,min}
\title{An Accelerated Algorithm for Stochastic Bilevel Optimization under Unbounded Smoothness}
\author{%
	Xiaochuan Gong \quad\quad Jie Hao \quad\quad Mingrui Liu\thanks{Corresponding Author.} \\
	Department of Computer Science\\
	George Mason University\\
	% Fairfax, VA 22030, USA \\
	\texttt{\{xgong2, jhao6, mingruil\}@gmu.edu} \\
	% examples of more authors
	% \And
	% Coauthor \\
	% Affiliation \\
	% Address \\
	% \texttt{email} \\
	% \AND
	% Coauthor \\
	% Affiliation \\
	% Address \\
	% \texttt{email} \\
	% \And
	% Coauthor \\
	% Affiliation \\
	% Address \\
	% \texttt{email} \\
	% \And
	% Coauthor \\
	% Affiliation \\
	% Address \\
	% \texttt{email} \\
}
\begin{document}

	\maketitle
	
	% \vspace*{-0.1in}
	\begin{abstract}
    \vspace*{-0.1in} 
		This paper investigates a class of stochastic bilevel optimization problems where the upper-level function is nonconvex with potentially unbounded smoothness and the lower-level problem is strongly convex. These problems have significant applications in sequential data learning, such as text classification using recurrent neural networks. The unbounded smoothness is characterized by the smoothness constant of the upper-level function scaling linearly with the gradient norm, lacking a uniform upper bound. Existing state-of-the-art algorithms require $\widetilde{O}(\epsilon^{-4})$ oracle calls of stochastic gradient or Hessian/Jacobian-vector product to find an $\epsilon$-stationary point. However, it remains unclear if we can further improve the convergence rate when the assumptions for the function in the population level also hold for each random realization almost surely (e.g., Lipschitzness of each realization of the stochastic gradient). To address this issue, we propose a new Accelerated Bilevel Optimization algorithm named AccBO. The algorithm updates the upper-level variable by normalized stochastic gradient descent with recursive momentum and the lower-level variable by the stochastic Nesterov accelerated gradient descent algorithm with averaging. We prove that our algorithm achieves an oracle complexity of $\widetilde{O}(\epsilon^{-3})$ to find an $\epsilon$-stationary point, when the lower-level stochastic gradient has a small variance $O(\epsilon)$. Our proof relies on a novel lemma characterizing the dynamics of stochastic Nesterov accelerated gradient descent algorithm under distribution drift with high probability for the lower-level variable, which is of independent interest and also plays a crucial role in analyzing the hypergradient estimation error over time. Experimental results on various tasks confirm that our proposed algorithm achieves the predicted theoretical acceleration and significantly outperforms baselines in bilevel optimization. The code is available \href{https://github.com/MingruiLiu-ML-Lab/Accelerated-Bilevel-Optimization-Unbounded-Smoothness}{here}.
	\end{abstract}
	
	\vspace*{-0.1in}
	\section{Introduction}
	
	Bilevel optimization receives tremendous attention recently in the machine learning community, due to its applications in meta-learning~\cite{franceschi2018bilevel,rajeswaran2019meta}, hyperparameter optimization~\cite{franceschi2018bilevel,feurer2019hyperparameter}, data hypercleaning~\cite{ji2021bilevel}, continual learning~\cite{borsos2020coresets,hao2023bilevelcoreset}, and reinforcement learning~\cite{konda1999actor}. The bilevel optimization problem has the following formulation:
	\begin{equation}
		\label{eq:bp}
		\begin{aligned}
			\min_{x\in \mathbb{R}^{d_x}} \Phi(x):=f(x, y^*(x)), 
			\ \
			\text{s.t.}, \ \ y^*(x) \in \mathop{\arg \min}_{y\in \mathbb{R}^{d_y}} g(x, y),
			\vspace*{-0.15in}
		\end{aligned}
	\end{equation}%
	where $f$ and $g$ are upper-level and lower-level functions respectively. For example, in meta-learning~\cite{finn2017model,franceschi2018bilevel}, $x$ denotes the layers of neural networks for shared representation learning, $y$ denotes the task-specific head encoded in the last layer, and the formulation~\eqref{eq:bp} aims to learn the a common representation learning encoder $x$ such that it can be quickly adapted to downstream tasks by only updating the task-specific head $y$. In machine learning, people typically consider stochastic optimization setting such that  $f(x,y)=\mathbb{E}_{\xi\sim\mathcal{D}_f}\left[F(x,y;\xi)\right]$ and $g(x,y)=\mathbb{E}_{\zeta\sim\mathcal{D}_g}\left[G(x,y;\zeta)\right]$, where $\mathcal{D}_f$ and $\mathcal{D}_g$ are the underlying unknown data distributions for $f$ and $g$ respectively, and one can access noisy observations of $f$ and $g$ based on sampling from $\mathcal{D}_f$ and $\mathcal{D}_g$. 
	
	There emerges a wave of studies for algorithmic design and analysis for solving the bilevel optimization problem~\eqref{eq:bp} under different assumptions of $f$ and $g$. Most theoretical work assumes the upper-level function is smooth (i.e., gradient is Lipschitz) and nonconvex, and the lower-level function is strongly convex~\cite{ghadimi2018approximation,ji2021bilevel,hong2023two,grazzi2022bilevel,kwon2023fully}. However, as pointed out by~\cite{zhang2019gradient,crawshaw2022robustness}, certain neural networks such as recurrent neural networks~\cite{elman1990finding}, long-short term memory networks~\cite{hochreiter1997long} and transformers~\cite{vaswani2017attention} have smoothness constants that scale with gradient norm, potentially leading to unbounded smoothness constants (i.e., gradient Lipschitz constant can be infinity). Motivated by this, Hao et al.~\cite{hao2024bilevel} designed the first bilevel optimization algorithm to handle the cases  where $f$ is nonconvex with potentially unbounded smoothness and $g$ is strongly convex. The algorithm in~\cite{hao2024bilevel} achieves $\widetilde{O}(\epsilon^{-4})$ oracle complexity for finding an $\epsilon$-stationary point (i.e., a point $x$ such that $\|\nabla \Phi(x)\|\leq \epsilon$). Gong et al.~\cite{gong2024a} proposed an single-loop algorithm under the same setting as in~\cite{hao2024bilevel} and also achieved $\widetilde{O}(\epsilon^{-4})$ oracle complexity. This complexity result is worse than the $\widetilde{O}(\epsilon^{-3})$ oracle complexity under the relatively easier setting where $f$ has a Lipschitz gradient, and each realization of the stochastic oracle calls is Lipschitz with respect to its argument (e.g., almost-sure Lipschitz oracle)~\cite{yang2021provably,khanduri2021near,cutkosky2019momentum,guo2021randomized,hu2023blockwise}. This naturally motivates us to study the following question:
	
	\textbf{Is it possible to improve the $\widetilde{O}(\epsilon^{-4})$ oracle complexity for bilevel optimization problems where the upper-level function is nonconvex with unbounded smoothness and the lower-level function is strongly convex, by assuming that the properties of the function at the population level also hold almost surely for each random realization?}

	In this paper, we give a positive answer to this question by designing a new  algorithm named AccBO with an improved oracle complexity of $\widetilde{O}(\epsilon^{-3})$, when the lower-level stochastic gradient has a small variance $O(\epsilon)$.  Our algorithm is inspired by momentum-based variance reduction techniques used in nonconvex smooth optimization~\cite{cutkosky2019momentum} under the almost-sure Lipschitz stochastic gradient oracle framework. The innovation of AccBO lies in its update rules: it employs normalized stochastic gradient descent with recursive momentum for the upper-level variable and stochastic Nesterov accelerated gradient descent with averaging for the lower-level variable. Our approach differs from existing accelerated bilevel optimization algorithms, such as those proposed by~\cite{yang2021provably,khanduri2021near} in two key ways: (i) while these algorithms use recursive momentum for the upper-level variable update, AccBO utilizes normalized recursive momentum to address the unbounded smoothness of the upper-level function; (ii) for the lower-level variable update, we use stochastic Nesterov accelerated gradient descent with averaging, in contrast to the recursive momentum method used by the other algorithms. The primary challenge in analyzing the convergence rate of AccBO arises from the need to simultaneously control errors from both upper-level and lower-level variables, given the unbounded smoothness, large learning rate, and recursive momentum in the upper-level problem. Our main contributions are summarized as follows.

	\begin{itemize}
		\item We design a new  algorithm named AccBO for solving bilevel optimization problems where the upper-level function is nonconvex with unbounded smoothness and the lower-level function is strongly convex. AccBO leverages normalized recursive momentum for the upper-level variable and Nesterov momentum for the lower-level variable under the stochastic setting to achieve acceleration. To the best of our knowledge, the simultaneous usage of these two techniques in stochastic bilevel optimization is novel and has not been previously explored in the bilevel optimization literature.
		
		\item We prove that the AccBO algorithm requires $\widetilde{O}(\epsilon^{-3})$ oracle calls for finding an $\epsilon$-stationary point, when the variance of the lower-level stochastic gradient is $O(\epsilon)$. This complexity strictly improves the state-of-the-art oracle complexity for unbounded smooth nonconvex upper-level problem and strongly-convex lower-level problem as described in~\cite{hao2024bilevel,gong2024a}~\footnote{Note that even if the lower-level stochastic gradient variance is $O(\epsilon)$, the oracle complexity required in~\cite{hao2024bilevel,gong2024a} is still $\widetilde{O}(\epsilon^{-4})$ since the oracle complexity required for their upper-level problem is already $\widetilde{O}(\epsilon^{-4})$.}. To achieve this result, we introduce novel proof techniques for analyzing the dynamics of stochastic Nesterov accelerated gradient descent under distribution drift with high probability for the lower-level variable, which are crucial for analyzing the hypergradient error and also of independent interest.
		
		\item We empirically verify the effectiveness of our proposed algorithm on various tasks, including deep AUC maximization and data hypercleaning. Our algorithm indeed achieves the predicted theoretical acceleration and significantly outperforms baselines in bilevel optimization.
	\end{itemize}

		\vspace*{-0.1in}

	\section{Related Work}
	
	\textbf{Relaxed Smoothness.} The concept of relaxed smoothness was initially introduced by~\cite{zhang2019gradient}, inspired by the loss landscapes observed in recurrent neural networks and long-short term memory networks. They show that techniques such as gradient clipping and normalization could improve the performance compared with gradient descent in these scenarios. It inspired further investigations that concentrate on various aspects, including improved analysis on gradient clipping and normalization~\cite{zhang2020improved,jin2021non}, adaptive algorithms~\cite{crawshaw2022robustness,li2023convergence,wang2023convergence,faw2023beyond}, federated algorithms~\cite{liu2022communication,crawshaw2023episode,crawshaw2023federated}, generalized assumptions~\cite{crawshaw2022robustness,chen2023generalized}, and recursive momentum based methods with faster rates~\cite{reisizadeh2023variance,liu2023near}. The work of~\cite{hao2024bilevel,gong2024a} considered a relaxed smoothness condition for the upper-level problem in the bilevel optimization setting.

	\textbf{Bilevel Optimization.} Bilevel optimization refers to a special kind of optimization where one problem is embedded within another. It was first introduced by~\cite{bracken1973mathematical}. Early works developed specific bilevel optimization algorithms with asymptotic convergence analysis~\cite{vicente1994descent,anandalingam1990solution,white1993penalty}. Ghadimi and Wang~\cite{ghadimi2018approximation} initiated the study of non-asymptotic convergence for gradient-based methods in bilevel optimization where the upper-level problem is smooth and the lower-level problem is strongly convex. This field saw further advancements with improved complexity results~\cite{hong2023two,ji2021bilevel,chen2021closing,dagreou2022framework,chen2023optimal} and fully first-order algorithms~\cite{kwon2023fully,liu2022bome}. There is a line of work which leverages almost-sure Lipschitz oracle (e.g., stochastic gradient) to obtain improved convergence rates of bilevel optimization algorithms~\cite{yang2021provably,khanduri2021near}. When the lower-level function is not strongly convex, several algorithmic framework and approximation schemes were proposed~\cite{sabach2017first,sow2022constrained,kwon2023penalty,shen2023penalty,liu2020generic,chen2023bilevel1}. The setting considered in this paper is very close to~\cite{hao2024bilevel,gong2024a}, where the upper-level function is nonconvex and unbounded smooth, and the lower-level function is strongly convex. However, the work of~\cite{hao2024bilevel,gong2024a} do not have an accelerated rate $\widetilde{O}(\epsilon^{-3})$ for finding an $\epsilon$-stationary point as established in this paper.
	
	\textbf{Nesterov Accelerated Gradient and Variants.}  
	Nesterov Accelerated Gradient (NAG) method was introduced by~\cite{nesterov1983method} for deterministic convex optimization problems. The stochastic version of NAG (SNAG) was extensively studied in the literature~\cite{assran2020convergence,aybat2020robust,vaswani2022towards,chen2023convergence}. To the best of our knowledge, none of them provide a high probability analysis for SNAG. In the online learning setting, there is a line of work focusing on the perspective of sequential stochastic/online optimization with distributional drift~\cite{besbes2015non,wilson2018adaptive,madden2021bounds,cutler2023stochastic}. While these studies provide valuable insights into adaptive techniques and performance bounds under distributional drift, they do not explore the potential integration of such methods with bilevel optimization problems, nor do they consider the application of SNAG within this framework.
	
		\vspace*{-0.1in}

	\section{Problem Setup and Preliminaries}
	
	Define $\langle\cdot,\cdot\rangle$ and $\|\cdot\|$ as the inner product and the Euclidean norm. Throughout the paper, we use asymptotic notation $\widetilde{O}(\cdot)$, $\widetilde{\Theta}(\cdot)$, $\widetilde{\Omega}(\cdot)$ to hide polylog factors in $\epsilon^{-1}$ and $1/\delta$. Denote $f$: $\mathbb{R}^{d_x}\times\mathbb{R}^{d_y} \rightarrow \mathbb{R}$ as the upper-level function, and $g$: $\mathbb{R}^{d_x}\times\mathbb{R}^{d_y} \rightarrow \mathbb{R}$ as the lower-level function. The hypergradient $\gdphi(x)$ has the following form~\cite{ghadimi2018approximation}:
	% \begin{small}
		\begin{equation}
			\gdphi(x) = \gdx f(x,y^*(x)) - \gdxy g(x,y^*(x))\left[\nabla_{yy}^{2} g(x,y^*(x))\right]^{-1}\gdy f(x,y^*(x)).
		\end{equation}
	% \end{small}%
	To avoid the Hessian inverse computation, we typically use the following Neumann series to approximate the hypergradient~\cite{ghadimi2018approximation,khanduri2021near,hong2023two}. In particular, for the stochastic setting, define
	% \begin{small}
		\begin{equation*}
			\gdf(x,y;\Bar{\xi}) = \gdx F(x,y;\xi) - \frac{Q}{l_{g,1}}\gdxy G(x,y;\zeta^{(0)})\prod_{i=1}^{\mathsf{q}(Q)}\left(I - \frac{\gdyy G(x,y;\zeta^{(i)})}{l_{g,1}}\right)\gdy F(x,y;\xi),
		\end{equation*}
	% \end{small}%
	where $\mathsf{q}(Q)\sim \mathrm{Uniform}\{0,\dots,Q-1\}$, $\bar{\xi}\coloneqq \{\xi,\zeta^{(0)},\dots,\zeta^{(\mathsf{q}(Q))}\}$ and we use the convention that $\prod_{i=1}^{j}A_i=I$ if $j=0$. Then $\mathbb{E}_{\bar{\xi}}[\gdf(x,y;\bar{\xi})]$ is a good approximation of $\gdphi(x)$ if $y$ and $y^*(x)$ are close~\cite{ghadimi2018approximation}.
	
	Throughout the paper, we make the following assumptions.
	\begin{assumption}[$(L_{x,0}, L_{x,1}, L_{y,0}, L_{y,1})$-smoothness~\cite{hao2024bilevel}] \label{ass:relax-smooth}
		Let $z=(x,y)$ and $z'=(x',y')$, there exists $L_{x,0}, L_{x,1}, L_{y,0}, L_{y,1} > 0$ such that for all $z,z'$, if $\|z-z'\| \leq 1/\sqrt{2(L_{x,1}^2+L_{y,1}^2)}$, then $ \|\gdx f(z)-\gdx f(z')\| \leq (L_{x,0}+L_{x,1}\|\gdx f(z)\|)\|z-z'\|$ and $\|\gdy f(z)-\gdy f(z')\| \leq (L_{y,0}+L_{y,1}\|\gdy f(z)\|)\|z-z'\|$.
        % \begin{equation*}
        %     \begin{aligned}
        %         \|\gdx f(z)-\gdx f(z')\| \leq (L_{x,0}+L_{x,1}\|\gdx f(z)\|)\|z-z'\| \\
        %         \|\gdy f(z)-\gdy f(z')\| \leq (L_{y,0}+L_{y,1}\|\gdy f(z)\|)\|z-z'\|
        %     \end{aligned}
        % \end{equation*}
	\end{assumption}
	
	\textbf{Remark}: \cref{ass:relax-smooth} is introduced by~\cite{hao2024bilevel} for describing the bilevel optimization problems with recurrent neural networks. This assumption can be regarded as a block-wise relaxed smoothness assumptions for two blocks $x$ and $y$, which is a variant of the relaxed smoothness assumption~\cite{zhang2019gradient} and the coordinate-wise relaxed smooth assumption~\cite{crawshaw2022robustness}.

	\begin{assumption} \label{ass:f-and-g}
		Suppose the followings hold for objective functions $f$ and $g$: (i)  $f$ is continuously differentiable and $(L_{x,0}, L_{x,1}, L_{y,0}, L_{y,1})$-smooth in $(x,y)$; (ii)  For every $x$, $\|\gdy f(x,y)\| \leq l_{f,0}$ for all $y$; (iii) For every $x$, $g(x, y)$ is $\mu$-strongly-convex in $y$ for $\mu>0$; (iv) $g$ is $l_{g,1}$-smooth jointly in $(x, y)$; (v) $g$ is twice continuously differentiable, and $\gdxy g, \gdyy g$ are $l_{g,2}$-Lipschitz jointly in $(x,y)$.
	\end{assumption}
	
	\textbf{Remark}: \cref{ass:f-and-g} is standard in the bilevel optimization literature~\cite{kwon2023fully,hao2024bilevel,ghadimi2018approximation}. \cref{ass:f-and-g} (i) characterizes the unbounded smoothness of the upper-level function and is empirically observed in recurrent neural networks~\cite{hao2024bilevel}. 
	
	\begin{assumption} \label{ass:noise}
		The following stochastic estimators are unbiased and have the following properties:
		\begin{small}
			\begin{equation*}
				\E_{\xi\sim\gD_f}[\|\gdx F(x,y;\xi)-\gdx f(x,y)\|^2] \leq \sigma_{f,1}^2, \quad
				\E_{\xi\sim\gD_f}[\|\gdy F(x,y;\xi)-\gdy f(x,y)\|^2] \leq \sigma_{f,1}^2,
			\end{equation*}
			\begin{equation*}
				\Pr(\|\gdy G(x,y;\xi)-\gdy g(x,y)\| \geq \lambda) \leq 2\exp(-2\lambda^2/\sigma_{g,1}^2) \quad \forall \lambda>0,
			\end{equation*}
			\begin{equation*}
				\E_{\zeta\sim\gD_g}[\|\gdxy G(x,y;\zeta)-\gdxy g(x,y)\|^2] \leq \sigma_{g,2}^2, \quad
				\E_{\zeta\sim\gD_g}[\|\gdyy G(x,y;\zeta)-\gdyy g(x,y)\|^2] \leq \sigma_{g,2}^2.
			\end{equation*}
		\end{small}%
	\end{assumption}
	
	\textbf{Remark:} \cref{ass:noise} assumes the stochastic oracle for the upper-level problem has bounded variance, which is standard in nonconvex stochastic optimization~\cite{ghadimi2013stochastic,ghadimi2016accelerated,ghadimi2018approximation}. It also assumes the stochastic oracle for the lower-level problem is light-tailed, which is common for the high probability analysis for the lower-level problem~\cite{lan2012optimal,hazan2014beyond}. Note that the same assumption is also made in~\cite{hao2024bilevel,gong2024a} for the bilevel problems with a unbounded smooth upper-level function.
	
	\begin{assumption} \label{ass:individual-noise}
		$F(x,y;\xi)$ and $G(x,y;\zeta)$ satisfy \cref{ass:f-and-g} for every $\xi$ and $\zeta$ almost surely.
	\end{assumption}
	
	\textbf{Remark:} \cref{ass:individual-noise} assumes that each random realization of the upper- and lower-level functions satisfies the same property as in the population level. Note that this condition is the key to achieve an improved $\widetilde{O}(\epsilon^{-3})$ oracle complexity under various settings, including both single-level nonconvex smooth problems~\cite{fang2018spider,cutkosky2019momentum,tran2019hybrid} and bilevel problems with nonconvex smooth upper-level objectives~\cite{yang2021provably,khanduri2021near}. Furthermore, this assumption is shown to be necessary for achieving improved oracle complexity in single-level problems~\cite{arjevani2023lower}.
    % and also used in the bilevel setting for obtaining accelerated rates~\cite{yang2021provably,khanduri2021near}. 

		\vspace*{-0.1in}

	\section{Algorithm and Analysis} 
	\label{sec:algorithmdesign}

	\subsection{Main Challenges and Algorithm Design}
	\textbf{Main Challenges.}  We begin by explaining why existing bilevel optimization algorithms and their corresponding analysis techniques are insufficient in our setting. First, most algorithms developed for bilevel optimization require the upper-level function is smooth (i.e., the gradient of the upper-level function is Lipschitz)~\cite{ghadimi2018approximation,ji2021bilevel,hong2023two,yang2021provably,khanduri2021near,dagreou2022framework,kwon2023fully}. They characterize the estimation error of the optimal solution for the lower-level problem, utilize an approximate hypergradient descent approach and the descent lemma for $L$-smooth functions to prove the convergence. In particular, they demonstrate that a potential function, incorporating both the function value and the bilevel error from the lower-level problem, progressively decreases in expectation. However, when the upper-level function is $(L_{x,0}, L_{x,1}, L_{y,0}, L_{y,1})$-smooth as illustrated in \cref{ass:relax-smooth}, the previous algorithms and analyses relying on $L$-smoothness do not work. The reason is that the hypergradient bias depends on the approximation error of the lower-level variable as well as the hypergradient itself: these elements are statistically dependent and the standard potential function argument with an expectation-based analysis would not work. To address this issue, the work of~\cite{hao2024bilevel,gong2024a} requires a careful high probability analysis in the unbounded smoothness setting and obtains $\widetilde{O}(\epsilon^{-4})$ oracle complexity. Such a requirement of high probability analysis prevents us from leveraging the momentum-based variance reduction technique for updating the lower-level variable. For example, the work~\cite{khanduri2021near} which has $\widetilde{O}(\epsilon^{-3})$ oracle complexity in the smooth case leverages the momentum-based variance reduction technique~\cite{cutkosky2019momentum} for updating the lower-level variable with an expectation-based analysis, but it seems difficult to establish a high probability analysis for the momentum-based variance reduction algorithm in terms of the lower-level variable. Second, the recent work of Hao et al.~\cite{hao2024bilevel} and Gong et al.~\cite{gong2024a} considered that the upper-level function is unbounded smooth and addressed this issue by performing normalized stochastic gradient with momentum for the upper-level variable and periodic updates or stochastic gradient descent for the lower-level variable, but their oracle complexity is not better than $\widetilde{O}(\epsilon^{-4})$. These facts indicate that we need new algorithm design and analysis techniques to get potential acceleration.
	
	\textbf{Algorithm Design.} To obtain potential acceleration and enable a high probability analysis for the lower-level variable, our key idea is to update the upper-level variable by normalized stochastic gradient descent with recursive momentum and update the lower-level variable by the stochastic Nesterov accelerated gradient (SNAG) method. Different from~\cite{hao2024bilevel,gong2024a}, the key innovation of our algorithm design is that we achieve acceleration for both upper-level and lower-level problems simultaneously but without affecting each other. The upper-level update rule can be regarded as a generalization of the acceleration technique (e.g., the momentum-based variance reduction technique)~\cite{cutkosky2019momentum,liu2023near} from single-level to bilevel problems. The main challenge is that we need to deal with the accumulated error of the recursive momentum over time due to the hypergradient bias, which is caused by the inaccurate estimation of the optimal lower-level variable. Therefore we require a very small tracking error between the iterate of the lower-level variable and the optimal lower-level solution defined by the upper-level variable (i.e., $y^*(x)$) at every iteration. This requirement is satisfied by executing SNAG method under the distribution drift for the lower-level problem, where the drift is caused by the change of the upper-level variable over time. Note that we can provide a high probability analysis of the SNAG method under distributional drift, which strictly improves the analysis of SGD under distributional drift in~\cite{cutler2021stochastic} in the small stochastic gradient noise regime. 
	
	\begin{algorithm}[t]
		\caption{\textsc{Stochastic Nesterov Accelerated Gradient Method (SNAG)}} \label{alg:sgd}
		\begin{algorithmic}[1]
			\STATE \textbf{Input:} $x, \Tilde{y}_{-1}, \Tilde{y}_0, \tilde{\alpha}, T_0$ \hfill \# \texttt{SNAG}$(x, \Tilde{y}_0, \tilde{\alpha}, T_0)$
			\FOR{$t=0, 1, \dots, T_0-1$}
			\STATE Sample $\Tilde{\pi}_t$ from distribution $\gD_g$
			\STATE $\Tilde{z}_t = \Tilde{y}_t + \gamma(\Tilde{y}_t-\Tilde{y}_{t-1})$
			\STATE $\Tilde{y}_{t+1} = \Tilde{z}_t - \tilde{\alpha}\gdy G(x,\Tilde{z}_t;\Tilde{\pi}_t)$
			\ENDFOR
		\end{algorithmic}
	\end{algorithm}
	
	\begin{algorithm}[t]
		\caption{\textsc{Accelerated Bilevel Optimization algorithm (AccBO)}} \label{alg:bilevel}
		\begin{algorithmic}[1]
			\STATE \textbf{Input:} $\alpha^{\init}, \alpha, \beta, \gamma, \eta, \tau, I, S, T_0, T$, set $x_0, y_0^{\init} = 0$
			\STATE $y_0 = \texttt{SNAG}(x_0, y_0^{\init}, \alpha^{\init}, T_0)$, and set $y_{-1} = \hat{y}_0 = y_0$ \hfill \textcolor{blue}{\# Warm-start} 
			\FOR{$t=0, 1, \dots, T-1$}
			\STATE Sample $\mathsf{q}(Q) \sim \mathrm{Uniform}\{0,\dots,Q-1\}$ and $\{\zeta_{t,s}^{(0)}, \dots, \zeta_{t,s}^{(\mathsf{q}(Q))}\}_{s=1}^{S} \sim \gD_g$
			\STATE Sample $\{\xi_{t,s}\}_{s=1}^{S} \sim \gD_f$, denote $\Bar{\xi}_t \coloneqq \cup_{s=1}^{S}\{\mathsf{q}(Q), \xi_{t,s}, \zeta_{t,s}^{(0)}, \dots, \zeta_{t,s}^{(\mathsf{q}(Q))}\}$
			\STATE \textcolor{blue}{\# Lower-Level: Stochastic Nesterov Accelerated Gradient Descent with Averaging}
			\STATE \textcolor{blue}{\# Option I: from Line $8\sim 9$ (for quadratic lower-level function)}
			\STATE $z_t = y_t + \gamma(y_t-y_{t-1})$
			\STATE $y_{t+1} = z_t - \alpha\gdy G(x_t,z_t;\pi_t)$, where $\pi_t\sim\gD_g$
			\STATE \textcolor{blue}{\# Option II: from Line $11\sim 20$ (for general strongly convex lower-level function)}
			\IF{$t>0$ and $t$ is a multiple of $I$}
			\STATE Set $y_t^0 = y_t^{-1} = y_t$ 
			\FOR{$j=0,1,\dots,N-1$}
			\STATE $z_t^j = y_t^j + \gamma(y_t^j-y_t^{j-1})$
			\STATE $y_t^{j+1} = z_t^{j} - \alpha\gdy G(x_t,z_t^{j};\pi_t^{j})$, where $\pi_t^{j}\sim\gD_g$
			\ENDFOR
			\STATE $y_{t+1} = y_t^{N+1}$
			\ELSE
			\STATE $y_{t+1} = y_t$
			\ENDIF
			\STATE $\hat{y}_{t+1} = (1-\tau)\hat{y}_t + \tau y_{t+1}$  \hfill \textcolor{blue}{\# Averaging} 
			\STATE \textcolor{blue}{\# Upper-Level: Normalized Stochastic Gradient Descent with Recursive Momentum}
			\STATE $m_t = \beta m_{t-1} + (1-\beta)\gdf(x_t,\hat{y}_t;\Bar{\xi}_t) + \beta(\gdf(x_t,\hat{y}_t;\Bar{\xi}_t) - \gdf(x_{t-1},\hat{y}_{t-1};\Bar{\xi}_t))$ if $t\geq1$ else $m_0 = \gdf(x_0,\hat{y}_0;\Bar{\xi}_0)$
			\STATE $x_{t+1} = x_t - \eta\frac{m_t}{\|m_t\|}$ 
			\ENDFOR
		\end{algorithmic}
	\end{algorithm}
	
	The detailed description of our algorithm is illustrated in Algorithm~\ref{alg:bilevel}. At the very beginning, we run a certain number of iterations of SNAG for the fixed upper-level variable $x_0$ (line $2$) as the warm-start stage, and then update the lower-level variable by SNAG  (line $8\sim 20$) with averaging (line $21$) and update the upper-level variable by normalized stochastic gradient descent with recursive momentum (line $23\sim 24$). Note that we have two options for implementing SNAG. In Option I (line $8\sim 9$), the algorithm simply runs SNAG under distribution drift caused by the sequence $\{x_t\}$. Option I is specifically designed for a particular subset of bilevel optimization problems where the  lower-level function is a quadratic function. Option II (line $11\sim 20$) is designed for a broader range of bilvel optimization problems, accommodating general strongly-convex lower-level functions. In Option II, we run SNAG with periodic updates: the lower-level update is performed for $N$ iterations only when the iteration number $t$ is a multiple of $I$.

	\subsection{Main Results}
	
	We first introduce some useful notations. Let $\sigma(\cdot)$ be the $\sigma$-algebra generated by the random variables in the argument. We define the following filtrations for $t\geq1$: $\gF^{\init} = \sigma(\tilde{\pi}_0,\dots,\tilde{\pi}_{T_0-1})$, $\gF_t = \sigma(\bar{\xi}_0,\dots,\Bar{\xi}_{t-1})$, $\widetilde{\gF}_t^1 = \sigma(\pi_0,\dots,\pi_{t-1})$, and we also define $\widetilde{\gF}_t^2 = \sigma(\pi_t^0,\dots,\pi_t^{N-1})$ when $t$ is a multiple of $I$. We use $\E_t$, $\E_{\gF_t}$ and $\E$ to denote the conditional expectation $\E[\cdot \mid \gF_t]$, the expectation over $\gF_t$ and the total expectation over $\gF_T$ respectively. 
	
	\begin{theorem} \label{thm:main}
		Suppose \cref{ass:relax-smooth,ass:f-and-g,ass:noise,ass:individual-noise} hold. Let $\{x_t\}$ be the iterates produced by \Cref{alg:bilevel}. For any given $\delta\in(0,1)$ and small enough $\epsilon$ (see exact choice in \eqref{eq:thm-eps}), if $\sigma_{g,1} = O(\sqrt{\epsilon})$ as defined in \eqref{eq:sigma-g1}, and we set parameters $\alpha^{\init}, \alpha, \beta, \gamma, \eta, \tau, I, N, S, Q, T_0$ (see exact choices in \eqref{eq:thm-beta}, \eqref{eq:thm-alpha}, \eqref{eq:thm-T0}, \eqref{eq:thm-I}, and \eqref{eq:thm-S}) as
		\begin{small}
			\begin{equation*}
				\alpha^{\init} = \widetilde{\Theta}(\epsilon^4),
				\quad
				\alpha = \widetilde{\Theta}(\epsilon^2),
				\quad
				1-\beta = \widetilde{\Theta}(\epsilon^2),
				\quad
				\eta = \widetilde{\Theta}(\epsilon^2),
				\quad
				\tau = \widetilde{\Theta}(\epsilon),
				\quad
				\gamma = O(1),
			\end{equation*}
			\begin{equation*}
                    % \sigma_{g,1} = O(\sqrt{\epsilon}),
                    % \quad
				T_0 = \widetilde{O}(\epsilon^{-2}),
				\quad
				I = \widetilde{O}(\epsilon^{-1}),
				\quad
				N = \widetilde{O}(\epsilon^{-1}),
				\quad
				Q = \widetilde{O}(1),
				\quad
				S = \widetilde{O}(1).
			\end{equation*}
		\end{small}%
		Then with probability at least $1-2\delta$ over the randomness in $\sigma(\gF^{\init}\cup\widetilde{\gF}_T^1)$ (for Option I) or $\sigma(\gF^{\init}\cup(\cup_{t\leq T}\widetilde{\gF}_t^2))$ (for Option II), \Cref{alg:bilevel} guarantees $\frac{1}{T}\sum_{t=1}^{T}\E\|\gdphi(x_t)\|\leq 20\epsilon$ within $T = \frac{4d_0}{\eta\epsilon} =\widetilde{O}(\epsilon^{-3})$ iterations, where $d_0\coloneqq \Phi(x_0)-\inf_{x}\Phi(x)$ and the expectation is taken over the randomness over $\gF_T$. For Option I, it requires $T_0+SQT= \widetilde{O}(\epsilon^{-3})$ oracle calls of stochastic gradient or Hessian/Jacobian vector product. For Option II, it requires $T_0+\frac{NT}{I}+SQT= \widetilde{O}(\epsilon^{-3})$ oracle calls of stochastic gradient or Hessian/Jacobian vector product.
	\end{theorem}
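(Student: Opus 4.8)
The plan is to build a potential-function descent argument for the upper-level variable, while controlling the hypergradient estimation error through a careful tracking analysis of the lower-level SNAG iterates under distribution drift. The starting point is the standard inequality for normalized momentum methods: since $x_{t+1} = x_t - \eta m_t/\|m_t\|$, using the $(L_{x,0}, L_{x,1})$-smoothness of $\Phi$ (which follows from \cref{ass:relax-smooth} via the chain rule, giving an effective relaxed-smooth constant for $\Phi$ that depends on the Lipschitz constants of $y^*(\cdot)$), one obtains a per-step bound of the form $\Phi(x_{t+1}) \leq \Phi(x_t) - \eta\|\gdphi(x_t)\| + 2\eta\|\gdphi(x_t) - m_t\| + O(\eta^2 L_\Phi)$ on the event that $\eta$ is small relative to the relaxed-smoothness radius. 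Summing over $t$ and rearranging, the key quantity to control is $\frac{1}{T}\sum_t \|\gdphi(x_t) - m_t\|$, the cumulative momentum error.

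Next I would decompose the momentum error into two pieces: the bias $\|\gdphi(x_t) - \E_{\bar\xi_t}[\gdf(x_t,\hat y_t;\bar\xi_t)]\|$, which is controlled by the Neumann-series truncation error ($O((1-\mu/l_{g,1})^Q)$, negligible for $Q = \widetilde O(1)$) plus a term proportional to $\|\hat y_t - y^*(x_t)\|$ coming from the Lipschitzness of $\gdf$ in $y$; and the variance/recursion error of the recursive momentum estimator $m_t$. For the latter, the standard momentum-variance-reduction telescoping (as in \cite{cutkosky2019momentum,liu2023near}) gives that $m_t - \gdphi(x_t)$ evolves as a contraction plus martingale-difference noise plus a drift term scaling with $\|x_t - x_{t-1}\| = \eta$ and $\|\hat y_t - \hat y_{t-1}\|$; here \cref{ass:individual-noise} is essential because it makes $\gdf(x_t,\hat y_t;\bar\xi_t) - \gdf(x_{t-1},\hat y_{t-1};\bar\xi_t)$ almost-surely Lipschitz-bounded, so the variance of the recursive correction is $O((1-\beta)^2(\eta^2 + \|\hat y_t - \hat y_{t-1}\|^2))$ rather than $O((1-\beta)^2)$. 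A Freedman/Azuma-type concentration inequality (justified by the light-tailed and bounded structure) then yields, with probability $1-\delta$, a bound of the form $\frac1T\sum_t\|m_t - \gdphi(x_t)\| \lesssim \sqrt{1-\beta}\,\sigma_{f,1} + \frac{\eta}{\sqrt{1-\beta}} + \frac{1}{\sqrt T(1-\beta)} + \frac1T\sum_t\|\hat y_t - y^*(x_t)\|$, and plugging in $1-\beta = \widetilde\Theta(\epsilon^2)$, $\eta = \widetilde\Theta(\epsilon^2)$, $T = \widetilde\Theta(\epsilon^{-3})$ makes each of the first three terms $\widetilde O(\epsilon)$.

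The heart of the proof — and the main obstacle — is then bounding $\frac1T\sum_t \|\hat y_t - y^*(x_t)\|$, i.e. showing the averaged lower-level iterate tracks the moving target $y^*(x_t)$ to within $O(\epsilon)$ on average with high probability. This is where the "novel lemma" on SNAG under distribution drift enters. I would first analyze one run of SNAG (Algorithm~\ref{alg:sgd}) with fixed $x$: the standard accelerated-rate Lyapunov function $V_t = (g(x,\tilde y_t) - g^*) + \frac{\mu}{2}\|\tilde z_t - \ldots\|^2$ contracts geometrically at rate $\sim 1 - \sqrt{\mu\tilde\alpha}$ in the noiseless case; with almost-surely-Lipschitz stochastic gradients (\cref{ass:individual-noise}) and light-tailed noise (\cref{ass:noise}), a martingale analysis with $\tilde\alpha = \widetilde\Theta(\epsilon^2)$ gives that after $T_0 = \widetilde O(\epsilon^{-2})$ steps the iterate reaches a neighborhood of size $\widetilde O(\sqrt{\tilde\alpha}\,\sigma_{g,1} + \sigma_{g,1}^2/\mu) = \widetilde O(\epsilon)$ with probability $1-\delta$ (this is where the requirement $\sigma_{g,1} = O(\sqrt\epsilon)$ is used). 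Then for the drift: between consecutive upper-level steps, $y^*(x_t)$ moves by at most $L_{y^*}\|x_{t+1} - x_t\| = L_{y^*}\eta$; the SNAG updates (either continuously in Option I, or in length-$N$ bursts every $I$ steps in Option II) must shrink the tracking error faster than the drift accumulates. The contraction-versus-drift balance gives a steady-state tracking error $\widetilde O(\eta/\sqrt{\mu\alpha} + \text{noise floor})$; with $\alpha = \widetilde\Theta(\epsilon^2)$ and $\eta = \widetilde\Theta(\epsilon^2)$ this is $\widetilde O(\epsilon)$. The delicate points are: (i) making the high-probability statement uniform over all $T = \widetilde O(\epsilon^{-3})$ iterations via a union bound, which is why $\delta$ only appears polylogarithmically; (ii) in Option II, ensuring the averaging step $\hat y_{t+1} = (1-\tau)\hat y_t + \tau y_{t+1}$ with $\tau = \widetilde\Theta(\epsilon)$ smooths out the stale-iterate error from the periodic schedule without introducing its own lag — this requires that $y_{t+1}$ itself stays $O(\epsilon)$-close to $y^*(x_t)$ at the refresh points and that $\tau I = \widetilde O(1)$ so the average does not lag the target by more than one effective update; and (iii) coupling the three high-probability events (warm-start, lower-level tracking, upper-level momentum concentration) so that the final bound $\frac1T\sum_t\E\|\gdphi(x_t)\| \leq 20\epsilon$ holds on the intersection, which has probability $\geq 1 - 2\delta$. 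Finally, the oracle-complexity accounting is routine: $T_0 + SQT$ for Option I and $T_0 + NT/I + SQT$ for Option II, and substituting the stated parameter values gives $\widetilde O(\epsilon^{-3})$ in both cases.
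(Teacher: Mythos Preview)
Your overall architecture matches the paper's: descent via normalized momentum, decomposition of $m_t - \gdphi(x_t)$ into Neumann bias, lower-level tracking bias, and recursive-momentum error, with the lower-level tracking handled by a high-probability SNAG-under-drift analysis. The oracle accounting and parameter scalings are also right. However, two points in your proposal diverge from the paper in ways that would cause the argument to fail as written.

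\textbf{The recursive-momentum error carries a hidden $\|\gdphi(x_t)\|$ factor.} You write that \cref{ass:individual-noise} makes $\gdf(x_t,\hat y_t;\bar\xi_t)-\gdf(x_{t-1},\hat y_{t-1};\bar\xi_t)$ ``almost-surely Lipschitz-bounded'' with variance $O((1-\beta)^2(\eta^2+\|\hat y_t-\hat y_{t-1}\|^2))$. Under relaxed smoothness this is not a fixed Lipschitz constant: the paper's \cref{lm:hyper-estimator} shows the bound is $(\Bar L_0^2+\Bar L_1^2\|\gdphi(x_t)\|^2)(\eta^2+\vartheta^2)$. Consequently the cumulative momentum error (\cref{lem:uppererror}) contains an extra term $\Bar L_1\sqrt{2(\eta^2+\vartheta^2)/(S(1-\beta))}\cdot\frac1T\sum_t\E\|\gdphi(x_t)\|$ that must be moved to the left-hand side and absorbed together with the $\eta L_1$ and $L_1\|\hat y_t-y_t^*\|$ terms from the descent inequality. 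Your bound omits this self-referential piece, and without it the inequality does not close. Relatedly, the paper does \emph{not} use Freedman/Azuma for the upper level: the theorem's conclusion is $\frac1T\sum_t\E\|\gdphi(x_t)\|\le 20\epsilon$ with expectation over $\gF_T$ and probability only over the lower-level filtrations, so the upper-level analysis is purely in expectation conditional on the lower-level good event. Since \cref{ass:noise} gives only bounded variance (not light tails) for the upper-level oracles, and since the increments scale with the unbounded $\|\gdphi\|$, a concentration route would need additional work.

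\textbf{The role of averaging is not what you describe.} You frame the $\hat y_{t+1}=(1-\tau)\hat y_t+\tau y_{t+1}$ step as smoothing stale iterates in Option~II. In the paper it is essential for \emph{both} options, and its purpose is to force $\|\hat y_{t+1}-\hat y_t\|\le\vartheta=O(\epsilon^2)$, not merely $O(\epsilon)$ (\cref{lem:averaging}). The raw SNAG iterates satisfy only $\|y_{t+1}-y_t\|=O(\epsilon)$ (since each is $O(\epsilon)$-close to the slowly moving $y_t^*$), which would make $\sqrt{(\eta^2+\vartheta^2)/(1-\beta)}=O(1)$ and kill the momentum-error bound. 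Averaging with $\tau=\sqrt{\mu\alpha}=\widetilde\Theta(\epsilon)$ multiplies the $O(\epsilon)$ jump by $\tau$, giving $\vartheta=O(\epsilon^2)$, which is what makes $\vartheta^2$ comparable to $\eta^2$ and the whole estimate go through.
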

	
	\textbf{Remark:} Theorem~\ref{thm:main} established an improved $\widetilde{O}(\epsilon^{-3})$ oracle complexity for finding an $\epsilon$-stationary point when the standard deviation of lower-level stochastic gradient is small: $\sigma_{g,1}=O(\sqrt{\epsilon})$. This complexity result strictly improves the $\widetilde{O}(\epsilon^{-4})$ obtained by~\cite{hao2024bilevel,gong2024a} when the upper-level function is nonconvex and unbounded smooth. This complexity result also matches that in the single-level unbounded smooth setting~\cite{liu2023near} and is nearly optimal in terms of the dependency on $\epsilon$~\cite{arjevani2023lower}. The full statement of Theorem~\ref{thm:main} is included in Theorem~\ref{thm:main-appendix}.

	\subsection{Proof Sketch}
	
	In this section, we provide a roadmap of proving Theorem~\ref{thm:main} and the main steps. The detailed proofs can be found in \Cref{sec:proof-sketch} and~\ref{sec:proof-main}. The key idea is to prove two things: (1) the lower-level iterate is very close to the optimal lower-level variable at every iteration; (2) two consecutive iterates of the lower-level iterates are close to each other. In particular, define $y_t^*=y^*(x_t)$, and we aim to prove that $\|\hat{y}_t-y_t^*\|\leq O(\epsilon)$ and $\|\hat{y}_{t+1}-\hat{y}_t\|\leq O(\epsilon^2)$ for every $t$. These two requirements are essential to control the hypergradient estimation error (i.e., $\|m_t- \gdphi(x_t)\|$) caused by inaccurate estimate of the lower-level problem. \cref{lem:averaging} provides the guarantee for the lower-level problem, and \cref{lem:uppererror} characterizes the hypergradient estimation error. Equipped with these two lemmas, we can adapt the momentum-based variance reduction techniques~\cite{cutkosky2019momentum,liu2023near} to the upper-level problem and prove the main theorem.
	
	The main technical contribution of this paper is to provide a general framework for proving the convergence of SNAG under distributional drift in \cref{sec:nesterov}, which can be leveraged as a tool to control the lower-level error in bilevel optimization and derive the \cref{lem:averaging}, as illustrated in \cref{sec:nagbilevel}. In particular, we can regard the change of the upper-level variable $x$ at each iteration as the distributional drift for the lower-level problem: the drift is small due to the normalization operator of the upper-level update rule and also the Lipschitzness of $y^*(x)$. Once we have the general lemma for tracking the minimizer for any fixed distributional drift over time, this lemma can be applied to our algorithm analysis and establish guarantees for the bilevel problem.
	
	\subsubsection{Stochastic Nesterov Accelerated Gradient Descent under Distributional Drift}
	\label{sec:nesterov}
	In this section, we study the sequences of stochastic optimization problems $\min_{w\in\R^d} \phi_t(w)$ indexed by time $t\in\mathbb{N}$. We denote the minimizer and the minimal value of $\phi_t$ as $w_t^*$ and $\phi_t^*$, and we define the \textit{minimizer drift} at time $t$ to be $\Delta_t\coloneqq \|w_t^*-w_{t+1}^*\|$. With a slight abuse of notation~\footnote{The notation in \cref{sec:nesterov} is independent of that in other sections, although there may be incidental overlaps in terminology.}, we consider the SNAG algorithm applied to the sequence $\{\phi_t\}_{t=1}^{T}$, where $T$ is the total number of iterations:
	\begin{equation}
		\label{eq:NAG:single1}
		\begin{aligned}
			z_t &= w_t + \gamma(w_t-w_{t-1})  \\
			w_{t+1} &= w_t + \gamma(w_t-w_{t-1}) - \alpha g_t, 
		\end{aligned}
	\end{equation}
	where $g_t=\nabla \phi_t(z_t;\xi_t)$ is the stochastic gradient evaluated at $z_t$ with random sample $\xi_t$. Define $\varepsilon_t=g_t-\nabla \phi_t(z_t)$ as the stochastic gradient noise at $t$-th iteration. Define $\mathcal{H}_t=\sigma(\xi_1,\ldots,\xi_{t-1})$ as the filtration, which is the $\sigma$-algebra generated by all random variables until $t$-th iteration. We make the following assumption, which is the same as Assumption 3 in~\cite{cutler2023stochastic} for high probability analysis.
	
	\begin{assumption} \label{ass:single-level}
		Function $\phi_t : \R^d \rightarrow \R$ is $\mu$-strongly convex and $L$-smooth for constants $\mu,L > 0$. Also, there exists constants $\Delta,\sigma>0$ such that the drift $\Delta_t^2$ is sub-exponential conditioned on $\gH_t$ with parameter $\Delta^2$ and the noise $\varepsilon_t$ is norm sub-Gaussian conditioned on $\gH_t$ with parameter $\sigma/2$.
	\end{assumption}

	\begin{lemma} \label{lm:option-1-single}
		Suppose \Cref{ass:single-level} holds and let $\{w_t\}$ be the iterates produced by the update rule~\eqref{eq:NAG:single1} with constant learning rate $\alpha\leq 1/25L$, and set $\gamma = \frac{1-\sqrt{\mu\alpha}}{1+\sqrt{\mu\alpha}}$. Define $\theta_t = [(w_t-w_t^*)^{\top}, (w_{t-1}-w_t^*)^{\top}]^{\top} \in \R^{2d}$, and the potential function $V_t$ as
        \begin{equation*}
            V_t = \theta_t^{\top}\rmP\theta_t + \phi_t(w_t) - \phi_t(w_t^*), 
            \quad\text{where}\quad
            \rmP = \frac{1}{2\alpha}
            \begin{bmatrix}
                1 & \sqrt{\mu\alpha}-1 \\
                \sqrt{\mu\alpha}-1 & (1-\sqrt{\mu\alpha})^2
            \end{bmatrix} \otimes \rmI_d.
        \end{equation*}
        Then for any given $\delta\in(0,1)$ and all $t\geq0$, the following holds with probability at least $1-\delta$ over the randomness in $\gH_t$ (here $e$ denotes the base of natural logarithms): 
        % (i) With drift: Let $\phi_t(w) \coloneqq \frac{\mu}{2}\|w-w_t^*\|^2$ and $w\in\mathbb{R}$, then $ V_t \leq \left(1-\frac{\sqrt{\mu\alpha}}{4}\right)^tV_0 + \left(2\alpha\sigma^2 + \frac{72\Delta^2}{\alpha}\right)\ln\frac{eT}{\delta}$. (ii) Without drift: Let $\phi_t(w)\equiv \phi(w)$ be any general functions and $w\in\mathbb{R}^d$ with $\Delta=0$, then $V_t \leq \left(1-\frac{\sqrt{\mu\alpha}}{4}\right)^tV_0 + 2\alpha\sigma^2\ln\frac{eT}{\delta}$. 
        \begin{enumerate}[(i)]
            \item (With drift) Let $\phi_t(w) \coloneqq \frac{\mu}{2}\|w-w_t^*\|^2$, then $V_t \leq \left(1-\frac{\sqrt{\mu\alpha}}{4}\right)^tV_0 + \left(\frac{5\sqrt{\alpha}\sigma^2}{\sqrt{\mu}} + \frac{80\Delta^2}{\alpha}\right)\ln\frac{eT}{\delta}$.
            % \begin{equation*}
            %     V_t \leq \left(1-\frac{\sqrt{\mu\alpha}}{4}\right)^tV_0 + \left(\frac{5\sqrt{\alpha}\sigma^2}{\sqrt{\mu}} + \frac{80\Delta^2}{\alpha}\right)\ln\frac{eT}{\delta}.
            % \end{equation*}
            \item (Without drift) Let $\phi_t(w)\equiv \phi(w)$ be any general strongly convex functions with $\Delta=0$, then $V_t \leq \left(1-\frac{\sqrt{\mu\alpha}}{4}\right)^tV_0 + \frac{5\sqrt{\alpha}\sigma^2}{\sqrt{\mu}} \ln\frac{eT}{\delta}$.
            % \begin{equation*}
            %     V_t \leq \left(1-\frac{\sqrt{\mu\alpha}}{4}\right)^tV_0 + \frac{5\sqrt{\alpha}\sigma^2}{\sqrt{\mu}} \ln\frac{eT}{\delta}.
            % \end{equation*}
        \end{enumerate}
	\end{lemma}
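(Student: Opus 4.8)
The plan is to establish a one-step contraction for the potential $V_t$ and then unroll the recursion, converting the accumulated stochastic and drift perturbations into a high-probability bound via a concentration inequality for the sum. First I would fix a step $t$ and decompose the change in the Lyapunov function. Writing $V_{t+1}$ in terms of $\theta_{t+1}$ and $\phi_{t+1}(w_{t+1})-\phi_{t+1}(w_{t+1}^*)$, I would first bound the "frozen-objective" change, i.e. compare $V_{t+1}$ built with $\phi_t$ and centered at $w_t^*$ against $V_t$; this is the deterministic SNAG analysis where the matrix $\rmP$ is engineered (à la the IQC/Lyapunov approach for Nesterov's method, cf. the choice $\gamma=\frac{1-\sqrt{\mu\alpha}}{1+\sqrt{\mu\alpha}}$) so that, in the noiseless static case, $\tilde V_{t+1}\le (1-\sqrt{\mu\alpha})\tilde V_t$ for $\alpha\le 1/25L$. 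Because we are in the stochastic case, the gradient step uses $g_t=\nabla\phi_t(z_t)+\varepsilon_t$, so expanding the quadratic terms produces cross terms linear in $\varepsilon_t$ (which are mean-zero given $\gH_t$ and norm-sub-Gaussian) and a term quadratic in $\|\varepsilon_t\|$. I would absorb the quadratic noise term and part of the linear term using Young's inequality against the contraction slack, degrading the rate from $1-\sqrt{\mu\alpha}$ to $1-\tfrac14\sqrt{\mu\alpha}$ and leaving an additive noise term of order $\alpha\|\varepsilon_t\|^2$ plus a martingale-difference term.

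Next I would handle the drift: passing from a potential centered at $w_t^*$ (objective $\phi_t$) to one centered at $w_{t+1}^*$ (objective $\phi_{t+1}$) costs a perturbation controlled by $\Delta_t=\|w_t^*-w_{t+1}^*\|$. Using $\mu$-strong convexity and $L$-smoothness one shows the recentering changes $V$ by at most something like $C(\|\theta_{t+1}\|\Delta_t/\alpha + \Delta_t^2/\alpha)$; the term $\|\theta_{t+1}\|\Delta_t$ is again split by Young's inequality so that the $\|\theta_{t+1}\|^2$ piece is reabsorbed into the contraction and only an $O(\Delta_t^2/\alpha)$ additive term survives. For part (i), since $\phi_t(w)=\tfrac\mu2\|w-w_t^*\|^2$ is purely quadratic the smoothness-type error terms simplify and one can be more precise about constants; for part (ii) with $\Delta_t=0$ the drift contribution vanishes entirely. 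Collecting everything, I obtain the stochastic recursion
\begin{equation*}
V_{t+1}\le \Bigl(1-\tfrac{\sqrt{\mu\alpha}}{4}\Bigr)V_t + c_1\alpha\|\varepsilon_t\|^2 + c_2\frac{\Delta_t^2}{\alpha} + \xi_t,
\end{equation*}
where $\E[\xi_t\mid\gH_t]=0$ and $\xi_t$ is sub-exponential with a parameter controlled by $\sqrt{V_t}\,\sigma$ (coming from the linear-in-$\varepsilon_t$ cross term) — this self-referential dependence on $V_t$ is the delicate point.

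Then I would unroll: iterating gives $V_t\le (1-\tfrac{\sqrt{\mu\alpha}}{4})^t V_0 + \sum_{s<t}(1-\tfrac{\sqrt{\mu\alpha}}{4})^{t-1-s}\bigl(c_1\alpha\|\varepsilon_s\|^2 + c_2\Delta_s^2/\alpha + \xi_s\bigr)$. The geometric sum of the deterministic part gives, in expectation, $\tfrac{4}{\sqrt{\mu\alpha}}(c_1\alpha\sigma^2 + c_2\Delta^2/\alpha)=O(\sqrt{\alpha/\mu}\,\sigma^2 + \Delta^2/(\alpha\sqrt{\mu\alpha}))$; to turn the random sums into the claimed $\ln(eT/\delta)$ factors I would apply a concentration bound for geometrically-weighted sums of sub-exponential variables (this is exactly where Assumption \ref{ass:single-level}'s sub-exponential drift and norm-sub-Gaussian noise are used, and it matches the tooling of \cite{cutler2023stochastic}), together with a union bound over $t\le T$ — hence the $\ln(eT/\delta)$ rather than $\ln(e/\delta)$. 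The martingale term $\sum \xi_s$ is handled by a uniform (time-uniform) Freedman-type / sub-exponential martingale inequality; its conditional variance proxy involves $V_s$, so I would run a bootstrapping argument — first get a crude deterministic bound on $\max_s V_s$ from the already-controlled terms on the event of interest, feed that back as the variance proxy, and close the loop, which only inflates constants. The coefficients $\tfrac{5\sqrt\alpha\sigma^2}{\sqrt\mu}$ and $\tfrac{80\Delta^2}{\alpha}$ then come out of this bookkeeping after crude constant inflation.

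\textbf{Main obstacle.} The hard part will be the combination of two intertwined difficulties in the concentration step: (a) the noise enters the Lyapunov recursion both additively ($\|\varepsilon_t\|^2$) and multiplicatively through a cross term whose size scales with $\sqrt{V_t}$, so a naive Azuma/Freedman bound does not close — one needs either a self-bounding/bootstrapping argument or an exponential-supermartingale ("Lyapunov-function-as-supermartingale") trick that handles the state-dependent variance directly; and (b) obtaining the uniform-in-$t\le T$ guarantee with only a single $\ln(eT/\delta)$ loss, rather than paying a union bound that would blow up the additive term, which forces the use of a time-uniform martingale concentration inequality (the analogue of what \cite{cutler2023stochastic} does for SGD) adapted to the $2d$-dimensional SNAG state $\theta_t$ and the particular quadratic form $\rmP$. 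Verifying that the SNAG-specific cross terms are still benign enough for this machinery — in particular that the linear-in-$\varepsilon_t$ term's sub-Gaussian parameter is genuinely $O(\sqrt{\alpha V_t}\,\sigma)$ and not worse — is the crux; everything else is the standard (if lengthy) deterministic Nesterov Lyapunov computation plus Young's inequality bookkeeping.
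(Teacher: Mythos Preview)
Your one-step Lyapunov analysis is essentially the same as the paper's (it also uses the IQC/dissipativity matrix $\rmP$ and arrives at a contraction with rate $1-\Theta(\sqrt{\mu\alpha})$ plus additive noise and drift terms). There are, however, two substantive differences worth flagging.

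\textbf{The drift constant does not come out as written.} You estimate the recentering cost as $C(\|\theta_{t+1}\|\Delta_t/\alpha+\Delta_t^2/\alpha)$, i.e.\ a per-step drift of order $\Delta_t^2/\alpha$; after the geometric sum this gives $\Delta^2/(\alpha\sqrt{\mu\alpha})$, as you in fact compute --- which is a factor $1/\sqrt{\mu\alpha}$ worse than the claimed $80\Delta^2/\alpha$, and ``crude constant inflation'' cannot repair it. The paper exploits a cancellation specific to $\rmP$: for the uniform shift vector $v=[w_t^*-w_{t+1}^*;\,w_t^*-w_{t+1}^*]$ one has
\[
v^\top\rmP v=\tfrac{1}{2\alpha}\bigl(1+2(\sqrt{\mu\alpha}-1)+(1-\sqrt{\mu\alpha})^2\bigr)\Delta_t^2=\tfrac{\mu}{2}\Delta_t^2,
\]
so $\|v\|_\rmP^2$ is $O(\mu\Delta_t^2)$, not $O(\Delta_t^2/\alpha)$. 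Combined with a Young split with parameter $c=\sqrt{\mu\alpha}/4$ (and an analogous treatment of the function-value recentering, which for part~(i) uses the quadratic form of $\phi_t$ and a quadratic-inequality argument because the cross term involves $\sqrt{V_{t+1}}$ rather than $\sqrt{V_t}$), the per-step drift is $O(\sqrt{\mu/\alpha}\,\Delta_t^2)$, and only then does the geometric sum yield $O(\Delta^2/\alpha)$.

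\textbf{The concentration step is handled differently.} Rather than unrolling in ``primal'' space and then applying a Freedman-type inequality with a bootstrapping argument for the state-dependent variance, the paper works directly at the MGF level via a recursive-control lemma (a mild generalization of \cite[Proposition~29]{cutler2023stochastic}). The one-step inequality is massaged into the form
\[
V_{t+1}\le \alpha_t V_t + D_{t,1}\sqrt{V'_{t,1}} + D_{t,2}\sqrt{V'_{t,2}} + X_t,
\]
where $V'_{t,1}=\tfrac{\mu}{2}\|w_t-w_t^*\|^2$ and $V'_{t,2}=\tfrac{(1+\sqrt{\mu\alpha})^2}{8\alpha}\|z_t-w_t\|^2$ are both verified to satisfy $V'_{t,i}\le V_t$ (the second of these checks uses the explicit structure of $\rmP$ and is not entirely obvious). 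Taking $\E[\exp(\lambda\,\cdot)\mid\gH_t]$ and using sub-Gaussianity of $D_{t,i}$, the resulting $\exp(\lambda^2\sigma_i^2V'_{t,i})$ factor is absorbed back into the $V_t$ exponent, giving a clean MGF recursion $\E[\exp(\lambda V_{t+1})]\le \exp(\lambda\nu)\,\E[\exp(\lambda(1-\tfrac{\sqrt{\mu\alpha}}{4})V_t)]$. Unrolling happens at the MGF level, and a single Markov inequality plus union bound over $t\le T$ gives the $\ln(eT/\delta)$ factor. This completely sidesteps the self-referential variance difficulty you identify as the main obstacle; your unroll-then-bootstrap route may be salvageable, but the MGF recursion is both shorter and what actually delivers the stated constants.
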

	
	\textbf{Remark}: When $\{\phi_t\}_{t=1}^{T}$ is a sequence of quadratic functions with moving minimizers, \Cref{lm:option-1-single} provides a high probability tracking guarantee for SNAG with distributional drift, which is useful to provide guarantees for Option I in \cref{alg:bilevel}. Note that this guarantee strictly improves the guarantee of stochastic gradient descent with distributional drift (e.g., \cite[Theorem 6]{cutler2023stochastic}) \emph{in the small stochastic gradient noise regime} and therefore is of independent interest. In particular, for small $\alpha$, the decaying factor in the first term is improved from $1-\frac{\mu\alpha}{2}$ to $1-\frac{\sqrt{\mu\alpha}}{4}$, the drift term is improved from $\frac{\Delta^2}{\mu\alpha^2}$ to $\frac{\Delta^2}{\alpha}$, and the variance term becomes a bit worse (from $\alpha\sigma^2$ to $\frac{\sqrt{\alpha}\sigma^2}{\sqrt{\mu}}$). When $\sigma$ is small enough, the variance term becomes insignificant compared with the drift term, then \cref{lm:option-1-single} provides an improved convergence rate with high probability. To the best of our knowledge, such an improved guarantee for SNAG with distributional drift is first shown in this work. When there is no drift, \Cref{lm:option-1-single} also provides a high probability guarantee for SNAG. It holds for any smooth and strongly convex function $\phi$, and it is useful to provide guarantees for Option II of Algorithm~\ref{alg:bilevel}.

    %\textcolor{red}{add more discussions here about the condition number improvement.}
	
	\subsubsection{Application of Stochastic Nesterov Accelerated Gradient to Bilevel Optimization}
	\label{sec:nagbilevel}
	Inspired by~\cite{hao2024bilevel,gong2024a}, we can regard $\phi_t(\cdot)$ as $\phi_t(\cdot):=g(x_t,\cdot)$ in the bilevel setting, and then we have $\Delta_t=\eta l_{g,1}/\mu$ for every $t$ due to the upper-level update rule and the Lipschitzness of $y^*(x)$. Therefore we can focus on the high probability analysis on the lower-level variable without worrying about the randomness from the upper-level. Throughout, we assume \Cref{ass:relax-smooth},~\ref{ass:f-and-g},~\ref{ass:noise} and~\ref{ass:individual-noise} hold. In addition, the failure probability $\delta\in(0,1)$ and $\epsilon>0$ are chosen in the same way as in \Cref{thm:main}.
	
	\begin{lemma}[Warm-start]
		\label{lem:warmstart}
		Let $\{y_t^{\init}\}$ be the iterates produced by line 2 of \Cref{alg:bilevel}. Set $\alpha^{\init}=\widetilde{\Theta}(\epsilon^4)$, $\sigma_{g,1}=(\mu\alpha)^{1/4}\tilde{\sigma}_{g,1}$, and $\phi_t(y)\equiv g(x_0,y)$. Then $\|y_{T_0}^{\init}-y_0^*\| \leq \sqrt{\frac{\mu\alpha}{32}}\frac{\epsilon}{L_0}$ holds with probability at least $1-\delta$ over the randomness in $\widetilde{\mathcal{F}}^{\init}$ (we denote this event as $\gE_{\init}$) in $T_0=\widetilde{O}(\epsilon^{-2})$ iterations.
	\end{lemma}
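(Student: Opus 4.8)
Observe that during line~2 of \Cref{alg:bilevel} the upper-level variable is frozen at $x_0$, so the warm-start run of \texttt{SNAG} is precisely the iteration~\eqref{eq:NAG:single1} applied to the \emph{single, time-invariant} objective $\phi_t(\cdot)\equiv g(x_0,\cdot)$. Hence there is no minimizer drift, $\Delta_t\equiv 0$, and the statement will follow by specializing the ``without drift'' branch of \Cref{lm:option-1-single}. The plan is: (a) check that \Cref{ass:single-level} is met by this instance; (b) invoke \Cref{lm:option-1-single}(ii) for a high-probability bound on the potential $V_{T_0}$; (c) convert it into the desired bound on $\|y_{T_0}^{\init}-y_0^*\|$ by sandwiching $V_{T_0}$ from below and $V_0$ from above; (d) verify that the prescribed $T_0=\widetilde O(\epsilon^{-2})$, $\alpha^{\init}=\widetilde\Theta(\epsilon^4)$ together with the small-noise scaling $\sigma_{g,1}=(\mu\alpha)^{1/4}\tilde\sigma_{g,1}$ drive both terms below the target.

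For (a): by \Cref{ass:f-and-g}(iii)--(iv) the map $g(x_0,\cdot)$ is $\mu$-strongly convex and $l_{g,1}$-smooth, so \Cref{ass:single-level} holds with the given $\mu$, with $L=l_{g,1}$, and with $\Delta=0$; the stochastic gradient error $\gdy G(x_0,\tilde z_t;\tilde\pi_t)-\gdy g(x_0,\tilde z_t)$ is norm sub-Gaussian conditioned on the past with parameter $\sigma_{g,1}/2$ by the light-tail assumption in \Cref{ass:noise}. Moreover $\alpha^{\init}=\widetilde\Theta(\epsilon^4)\le 1/(25\,l_{g,1})$ for all small enough $\epsilon$, and \texttt{SNAG} uses exactly $\gamma=(1-\sqrt{\mu\alpha^{\init}})/(1+\sqrt{\mu\alpha^{\init}})$, so \Cref{lm:option-1-single} applies with step size $\alpha^{\init}$ and $T=T_0$.

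For (b)--(c): \Cref{lm:option-1-single}(ii) yields, with probability at least $1-\delta$ over $\widetilde{\mathcal{F}}^{\init}$ (this is the event $\gE_{\init}$),
\[
V_{T_0}\;\le\;\Bigl(1-\tfrac{\sqrt{\mu\alpha^{\init}}}{4}\Bigr)^{T_0}V_0+\frac{5\sqrt{\alpha^{\init}}\,\sigma_{g,1}^2}{\sqrt{\mu}}\,\ln\frac{eT_0}{\delta}.
\]
In the notation of \Cref{lm:option-1-single}, $w_t$ denotes the $t$-th \texttt{SNAG} iterate, so $w_{T_0}=y_{T_0}^{\init}$, $w_0=w_{-1}=y_0^{\init}=0$, and $w_t^*\equiv y_0^*=y^*(x_0)$. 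The $2\times2$ block of $\rmP$ is rank-one and positive semidefinite, so $\theta_{T_0}^{\top}\rmP\theta_{T_0}\ge 0$, while $\mu$-strong convexity of $g(x_0,\cdot)$ at its minimizer $y_0^*$ gives $\phi_{T_0}(w_{T_0})-\phi_{T_0}(w_{T_0}^*)\ge\tfrac{\mu}{2}\|y_{T_0}^{\init}-y_0^*\|^2$; hence $V_{T_0}\ge\tfrac{\mu}{2}\|y_{T_0}^{\init}-y_0^*\|^2$. For the initial potential, $\tilde y_{-1}=\tilde y_0=y_0^{\init}=0$, so a short computation gives $\theta_0^{\top}\rmP\theta_0=\tfrac{\mu}{2}\|y_0^*\|^2$, and the descent lemma for the $l_{g,1}$-smooth $g(x_0,\cdot)$ (using $\gdy g(x_0,y_0^*)=0$) gives $\phi_0(w_0)-\phi_0(w_0^*)\le\tfrac{l_{g,1}}{2}\|y_0^*\|^2$; thus $V_0\le\tfrac{\mu+l_{g,1}}{2}\|y_0^*\|^2$, a fixed finite constant.

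For (d): split the target $\|y_{T_0}^{\init}-y_0^*\|^2\le\tfrac{\mu\alpha}{32 L_0^2}\epsilon^2$ into two halves against $\tfrac{2}{\mu}V_{T_0}$. The geometric term decays at rate $\sqrt{\mu\alpha^{\init}}=\widetilde\Theta(\epsilon^2)$, so taking $T_0\ge \tfrac{4}{\sqrt{\mu\alpha^{\init}}}\ln\bigl(\text{const}\cdot V_0 L_0^2/(\mu^2\alpha\epsilon^2)\bigr)=\widetilde O(\epsilon^{-2})$ brings $\tfrac{2}{\mu}V_0\bigl(1-\sqrt{\mu\alpha^{\init}}/4\bigr)^{T_0}$ below half the target; the residual term $\tfrac{10\sqrt{\alpha^{\init}}\sigma_{g,1}^2}{\mu^{3/2}}\ln\tfrac{eT_0}{\delta}$ is made below the other half by invoking the small-noise scaling of $\sigma_{g,1}$ and taking $\alpha^{\init}=\widetilde\Theta(\epsilon^4)$ small enough. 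On $\gE_{\init}$ this gives $\|y_{T_0}^{\init}-y_0^*\|\le\sqrt{\tfrac{\mu\alpha}{32}}\,\tfrac{\epsilon}{L_0}$ within $T_0=\widetilde O(\epsilon^{-2})$ iterations, as claimed.

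\textbf{Main obstacle.} There is no genuine obstacle: all the probabilistic heavy lifting is already packaged in \Cref{lm:option-1-single}, and this lemma is its corollary for a frozen lower-level problem. The only real checks are (i) that \Cref{ass:single-level} is inherited from \Cref{ass:f-and-g,ass:noise}, and (ii) the parameter bookkeeping that balances the geometric term (forcing $T_0=\widetilde O(\epsilon^{-2})$, and benefiting from the accelerated decay rate $\sqrt{\mu\alpha^{\init}}$ rather than $\mu\alpha^{\init}$) against the residual noise term — which is exactly why the small stochastic-gradient-noise regime $\sigma_{g,1}=\widetilde O(\sqrt\epsilon)$ and the very small $\alpha^{\init}$ are needed to hit a target as tight as the one stated.
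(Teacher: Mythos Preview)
Your proposal is correct and follows essentially the same approach as the paper: invoke the no-drift branch of \Cref{lm:option-1-single} with $\phi_t\equiv g(x_0,\cdot)$, lower-bound $V_{T_0}\ge\tfrac{\mu}{2}\|y_{T_0}^{\init}-y_0^*\|^2$ via strong convexity, upper-bound $V_0$ using $w_0=w_{-1}$ and $l_{g,1}$-smoothness, and then balance the geometric and residual terms by the prescribed choices of $T_0$ and $\alpha^{\init}$. Your computation $\theta_0^{\top}\rmP\theta_0=\tfrac{\mu}{2}\|y_0^*\|^2$ is in fact tighter than the cruder eigenvalue bound the paper uses for $V_0$, but this makes no difference to the final $\widetilde O(\epsilon^{-2})$ count.
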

	
	\textbf{Remark:} \cref{lem:warmstart} shows that for fixed initialization $x_0$, running SNAG for at most $T_0=\widetilde{O}(\epsilon^{-2})$ iterations can guarantee that the Euclidean distance between the lower-level variable $y_{T_0}^{\init}$ and the optimal solution $y^*(x_0)$ is at most $O(\epsilon)$, with high probability.
	
	\begin{lemma}[Option I]
		\label{lem:optionI}
		Under event $\gE_{\init}$, let $\{y_t\}$ be the iterates produced by Option I. Set $\alpha=\widetilde{\Theta}(\epsilon^2)$, $\sigma_{g,1}=(\mu\alpha)^{1/4}\tilde{\sigma}_{g,1}$, and $\phi_t(y) = g(x_t,y) = \frac{\mu}{2}\|y-y_t^*\|^2$. Then for any $t\in[T]$, \Cref{alg:bilevel} guarantees with probability at least $1-\delta$ over the randomness in $\widetilde{\mathcal{F}}_T^1$ (we denote this event as $\gE_y^1$) that $\|y_t-y_t^*\| \leq \epsilon/2L_0$.
	\end{lemma}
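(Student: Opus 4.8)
The plan is to recognize the Option~I iteration (lines $8$--$9$ of \Cref{alg:bilevel}) as the abstract SNAG recursion \eqref{eq:NAG:single1} run on the drifting quadratic sequence $\phi_t(\cdot) = g(x_t,\cdot) = \tfrac{\mu}{2}\|\cdot - y_t^*\|^2$, and then invoke the ``with drift'' tracking bound \cref{lm:option-1-single}(i). Under the identification $w_t \leftrightarrow y_t$, $z_t\leftrightarrow z_t$, $g_t \leftrightarrow \gdy G(x_t,z_t;\pi_t)$, $\varepsilon_t \leftrightarrow \gdy G(x_t,z_t;\pi_t) - \gdy g(x_t,z_t)$, I would first verify that \cref{ass:single-level} holds \emph{along every sample path}: each $\phi_t$ is $\mu$-strongly convex and (at least) $l_{g,1}$-smooth; the stepsize $\alpha = \widetilde{\Theta}(\epsilon^2)$ satisfies $\alpha \le 1/(25 l_{g,1})$ for small $\epsilon$ and $\gamma = \tfrac{1-\sqrt{\mu\alpha}}{1+\sqrt{\mu\alpha}}$ as required; the minimizer drift obeys $\Delta_t = \|y_t^* - y_{t+1}^*\| = \|y^*(x_t) - y^*(x_{t+1})\| \le \tfrac{l_{g,1}}{\mu}\|x_{t+1}-x_t\| = \tfrac{\eta l_{g,1}}{\mu}$ \emph{deterministically}, since the upper-level step is normalized ($\|x_{t+1}-x_t\| = \eta$) and $y^*(\cdot)$ is $\tfrac{l_{g,1}}{\mu}$-Lipschitz, so $\Delta_t^2$ is trivially sub-exponential with parameter $\Delta^2 = (\eta l_{g,1}/\mu)^2$; and $\varepsilon_t$ is norm sub-Gaussian with parameter $\sigma_{g,1}/2$ by \cref{ass:noise}. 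The conceptual point is that because the drift bound is path-wise, the hypotheses of \cref{lm:option-1-single} hold regardless of how $\{x_t\}$ evolves, so the lemma may be applied conditionally on the warm-start event $\gE_{\init}$ (which freezes $y_0 = y_{T_0}^{\init}$, and hence $V_0$) without having to disentangle the upper-level randomness; this is the decoupling described in \cref{sec:nagbilevel}.

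\textbf{Initial potential.} Under $\gE_{\init}$, \cref{lem:warmstart} gives $\|y_0 - y_0^*\| \le \sqrt{\mu\alpha/32}\,\epsilon/L_0$, and by construction $y_{-1} = y_0$, so $\theta_0 = [(y_0-y_0^*)^\top, (y_0-y_0^*)^\top]^\top$. A short computation with the explicit $\rmP$ shows that the four entries of its $2\times 2$ scalar block sum to $\mu/2$, hence $\theta_0^\top \rmP \theta_0 = \tfrac{\mu}{2}\|y_0-y_0^*\|^2$; together with $\phi_0(y_0) - \phi_0(y_0^*) = \tfrac{\mu}{2}\|y_0-y_0^*\|^2$ this yields $V_0 = \mu\|y_0-y_0^*\|^2 \le \tfrac{\mu^2\alpha\epsilon^2}{32 L_0^2}$.

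\textbf{Applying the tracking lemma and concluding.} \cref{lm:option-1-single}(i), applied conditionally on $\gE_{\init}$ with the relevant filtration playing the role of $\gH_t$, yields an event $\gE_y^1$ of probability at least $1-\delta$ over $\widetilde{\gF}_T^1$ on which $V_t \le \left(1-\tfrac{\sqrt{\mu\alpha}}{4}\right)^t V_0 + \big(\tfrac{5\sqrt{\alpha}\sigma_{g,1}^2}{\sqrt{\mu}} + \tfrac{80\Delta^2}{\alpha}\big)\ln\tfrac{eT}{\delta}$ for all $t\in[T]$ simultaneously. Since $\rmP \succeq 0$ (its $2\times 2$ block has determinant $0$ and positive trace, hence is rank-one PSD) and $\phi_t(y_t) - \phi_t(y_t^*) = \tfrac{\mu}{2}\|y_t-y_t^*\|^2$, we get $\|y_t-y_t^*\|^2 \le \tfrac{2}{\mu}V_t$. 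It then remains to bound the three resulting terms by $\epsilon^2/(4L_0^2)$: the first gives $\tfrac{2}{\mu}V_0 \le \tfrac{\mu\alpha}{16}\tfrac{\epsilon^2}{L_0^2} \le \tfrac{\epsilon^2}{16 L_0^2}$ (using $\mu\alpha \le 1$); substituting the scaling $\sigma_{g,1} = (\mu\alpha)^{1/4}\tilde{\sigma}_{g,1}$ turns the variance term into $\tfrac{10\alpha\tilde{\sigma}_{g,1}^2}{\mu}\ln\tfrac{eT}{\delta} = \widetilde{O}(\epsilon^2)$ since $\alpha = \widetilde{\Theta}(\epsilon^2)$ and $\ln(eT/\delta) = \widetilde{O}(1)$; and the drift term equals $\tfrac{160\eta^2 l_{g,1}^2}{\mu^3\alpha}\ln\tfrac{eT}{\delta} = \widetilde{O}(\epsilon^2)$ because $\eta^2/\alpha = \widetilde{\Theta}(\epsilon^2)$. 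With the hidden constants in $\alpha,\eta,\tilde{\sigma}_{g,1}$ fixed as in the full parameter specification and $\epsilon$ small enough, the sum is $\le \epsilon^2/(4L_0^2)$, so $\|y_t-y_t^*\| \le \epsilon/(2L_0)$ for every $t\in[T]$, which is the claim.

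\textbf{Main obstacle.} The calculations above are routine once the reduction is in place; the substantive work is (i) justifying that \cref{lm:option-1-single}(i) may be invoked \emph{inside} the bilevel loop, which rests entirely on the normalization making the lower-level minimizer drift deterministically $O(\eta)$ so that the lower-level high-probability analysis never interacts with the upper-level noise, and (ii) tracking constants carefully enough to confirm that the residual variance and drift terms are genuinely $\widetilde{O}(\epsilon^2)$ --- here the small-noise regime $\sigma_{g,1} = O(\sqrt{\epsilon})$ is essential, since it is precisely what prevents the variance contribution $\widetilde{\Theta}(\sqrt{\alpha}\,\sigma_{g,1}^2)$ from dominating the target $\epsilon^2/L_0^2$.
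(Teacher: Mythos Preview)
Your proposal is correct and follows essentially the same route as the paper's proof: identify Option~I as SNAG with drift on the quadratic $\phi_t(\cdot)=g(x_t,\cdot)$, note that the deterministic bound $\Delta_t\le \eta l_{g,1}/\mu$ (from normalization and Lipschitzness of $y^*$) lets you apply \cref{lm:option-1-single}(i) uniformly in the upper-level path, use \cref{lem:warmstart} to control $V_0$, and then combine $\|y_t-y_t^*\|^2\le\frac{2}{\mu}V_t$ with the parameter choices to land on $\epsilon^2/(4L_0^2)$. Your explicit computation $V_0=\mu\|y_0-y_0^*\|^2$ is slightly sharper than the paper's cruder $V_0\le \frac{2}{\alpha}\|y_0-y_0^*\|^2$, but both lead to the same conclusion; otherwise the arguments coincide.
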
 
	
	\begin{lemma}[Option II]
		\label{lem:optionII}
		Under event $\gE_{\init}$, let $\{y_t\}$ be the iterates produced by Option II. Set $\alpha=\widetilde{\Theta}(\epsilon^2)$, $N=\widetilde{O}(\epsilon^{-1})$, $I=\widetilde{O}(\epsilon^{-1})$, $\sigma_{g,1}=(\mu\alpha)^{1/4}\tilde{\sigma}_{g,1}$, and $\phi_t(y) = g(x_t,y)$ when $t$ is a multiple of $I$ (i.e., $x_t$ is fixed for each update round of Option II so $g$ can be general functions). Then for any $t\in[T]$, \Cref{alg:bilevel} guarantees with probability at least $1-\delta$ over the randomness in $\sigma(\cup_{t\leq T}\widetilde{\gF}_t^2)$ (we denote this event as $\gE_y^2$) that $\|y_t-y_t^*\| \leq \epsilon/L_0$. 
	\end{lemma}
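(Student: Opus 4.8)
The plan is to reduce \Cref{lem:optionII} to one invocation of the no-drift bound \Cref{lm:option-1-single}(ii) per ``epoch'' of Option~II — an epoch being an $N$-step inner loop, run whenever $t$ is a multiple of $I$ — with the inter-epoch movement of the lower-level optimum absorbed by the triangle inequality. Write $L=l_{g,1}$. Since line~24 normalizes the update, $\|x_{t+1}-x_t\|=\eta$ holds deterministically, and since $g(x,\cdot)$ is $\mu$-strongly convex with $g$ jointly $L$-smooth, the implicit function theorem gives that $y^*(\cdot)$ is $(L/\mu)$-Lipschitz, so $\|y_s^*-y_t^*\|\le|s-t|\,\eta L/\mu$ for all $s,t$. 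Between epoch times $kI$ and $(k+1)I$ the iterate $y_t$ is frozen: for $t\in\{kI+1,\dots,(k+1)I\}$ it equals the output $y_{kI+1}$ of the inner loop run at time $kI$, and for the initial segment $t\in\{0,\dots,I\}$ it equals the warm-start output $y_0$. Hence it suffices to prove the invariant $\|y_{kI+1}-y_{kI}^*\|\le \epsilon/(2L_0)$ for every epoch index $k\ge1$; combining this (or, for the initial segment, the warm-start bound $\|y_0-y_0^*\|\le\sqrt{\mu\alpha/32}\,\epsilon/L_0\le\epsilon/(2L_0)$ from \Cref{lem:warmstart}) with $I\eta L/\mu\le\epsilon/(2L_0)$ — which the scalings $I=\widetilde O(\epsilon^{-1})$, $\eta=\widetilde\Theta(\epsilon^2)$ make consistent — the triangle inequality yields $\|y_t-y_t^*\|\le\epsilon/L_0$ for all $t$.

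\textbf{The per-epoch step.} During the inner loop at epoch $k$ the upper variable $x_{kI}$ is fixed, so the loop is exactly the update~\eqref{eq:NAG:single1} applied to the fixed $\mu$-strongly convex, $L$-smooth function $\phi\equiv g(x_{kI},\cdot)$ with $\Delta=0$, step size $\alpha\le 1/(25L)$ and momentum $\gamma=\tfrac{1-\sqrt{\mu\alpha}}{1+\sqrt{\mu\alpha}}$, and its minimizer is $y_{kI}^*$. Because the loop is initialized with $y_{kI}^0=y_{kI}^{-1}=y_{kI}$, a direct computation using the explicit form of $\rmP$ gives $\theta_0^\top\rmP\theta_0=\tfrac{\mu}{2}\|y_{kI}-y_{kI}^*\|^2$, so $V_0\le\tfrac{\mu+L}{2}\|y_{kI}-y_{kI}^*\|^2$; the incoming error (the previous invariant for $y_{(k-1)I+1}$, or the base case, displaced by at most $I\eta L/\mu$) is at most $\epsilon/L_0$, hence $V_0=\widetilde\Theta(\epsilon^2)$. \Cref{lm:option-1-single}(ii) over $N$ steps then gives
\[
V_N\ \le\ \Big(1-\tfrac{\sqrt{\mu\alpha}}{4}\Big)^{\!N}V_0\ +\ \tfrac{5\sqrt{\alpha}\,\sigma_{g,1}^2}{\sqrt{\mu}}\ln\tfrac{eN}{\delta'},
\]
and since $\rmP\succeq0$ and $\phi$ is $\mu$-strongly convex, $\|y_{kI+1}-y_{kI}^*\|^2\le 2V_N/\mu$. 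The contraction term is made $\le \tfrac{\mu\epsilon^2}{16L_0^2}$ by choosing $N=\widetilde O(\epsilon^{-1})$ large enough that $N\sqrt{\mu\alpha}$ exceeds an absolute constant (recall $\sqrt{\mu\alpha}=\widetilde\Theta(\epsilon)$). The ``variance floor'' term equals $5\tilde\sigma_{g,1}^2\,\alpha\ln\tfrac{eN}{\delta'}$ under the hypothesis $\sigma_{g,1}=(\mu\alpha)^{1/4}\tilde\sigma_{g,1}$, i.e.\ $\widetilde\Theta(\epsilon^2)$, and is driven below $\tfrac{\mu\epsilon^2}{16L_0^2}$ by taking $\alpha=\widetilde\Theta(\epsilon^2)$ small enough. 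Thus $V_N\le\tfrac{\mu\epsilon^2}{8L_0^2}$ and $\|y_{kI+1}-y_{kI}^*\|\le\epsilon/(2L_0)$, closing the induction.

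\textbf{Probability bookkeeping and the main obstacle.} \Cref{lm:option-1-single}(ii) is a high-probability statement, invoked once per epoch; there are $T/I=\widetilde O(\epsilon^{-2})$ epochs, so taking the per-epoch failure probability $\delta'=\delta I/T$ and union-bounding keeps the total failure at $\delta$ and only inflates $\ln(eN/\delta')$ to $\mathrm{polylog}(\epsilon^{-1},\delta^{-1})$, absorbed into the $\widetilde O$ for $N$ and the constants for $\alpha$. The conditioning is clean: at epoch $k$ one conditions on everything generated before time $kI$ (including $x_{kI}$ and the event $\gE_{\init}$), the inner-loop samples $\pi_{kI}^0,\dots,\pi_{kI}^{N-1}$ are fresh, and the only channel through which the upper-level randomness enters — the displacement of $x_t$ between epochs — is bounded deterministically by $I\eta$, so no probability budget is spent on the upper level. (This is also why Option~II needs only the no-drift part (ii) of \Cref{lm:option-1-single} and, unlike Option~I, does not restrict $g$ to be quadratic.) I expect the main obstacle to be the parameter balancing that makes the inner loop genuinely restore $\|y_{kI+1}-y_{kI}^*\|\le\epsilon/(2L_0)$: the $N$ steps must contract $V_0=\widetilde\Theta(\epsilon^2)$ down past the variance floor, which is itself $\widetilde\Theta(\epsilon^2)$ and only tolerable because of the small-noise assumption $\sigma_{g,1}=O(\sqrt\epsilon)$, all while the chosen $\alpha,N,I,\eta$ must simultaneously satisfy the constraints imposed by the upper-level analysis of \Cref{thm:main}.
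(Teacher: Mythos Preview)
Your proposal is correct and follows essentially the same approach as the paper's proof: both reduce the argument to an induction over epochs, use the deterministic bound $\|y_s^*-y_t^*\|\le |s-t|\,\eta l_{g,1}/\mu$ from the normalized upper-level step together with Lipschitzness of $y^*(\cdot)$ to control inter-epoch drift, invoke the no-drift case of \Cref{lm:option-1-single} within each epoch (where $x_{kI}$ is fixed), and close the induction by showing the contraction beats the accumulated drift while the variance floor stays below $O(\epsilon^2)$ thanks to $\sigma_{g,1}=(\mu\alpha)^{1/4}\tilde\sigma_{g,1}$. Your explicit union-bound bookkeeping with $\delta'=\delta I/T$ and your direct computation $\theta_0^\top\rmP\theta_0=\tfrac{\mu}{2}\|y_{kI}-y_{kI}^*\|^2$ (exploiting $y_{kI}^0=y_{kI}^{-1}$) are in fact cleaner than the paper's treatment, which uses a looser eigenvalue bound on $\rmP$ and leaves the union bound implicit, but the structure of the argument is the same.
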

	
	\textbf{Remark}: \cref{lem:optionI} and \cref{lem:optionII} show that, under event $\gE_{\init}$ and both option I and option II, the algorithm guarantees that each iterate $y_t$ is $O(\epsilon)$-close to the the optimal lower-level variable $y_t^*$ at every iteration $t$ with high probability.
	
	\begin{lemma}[Averaging]
		\label{lem:averaging}
		Under event $\gE_{\init}\cap\gE_y^1$ (Option I) or $\gE_{\init}\cap\gE_y^2$ (Option II), set $\tau=\sqrt{\mu\alpha}$ in the averaging step (line 21 of \Cref{alg:bilevel}). Then for any $t\geq 0$ we have $\|\hat{y}_t-y_t^*\| \leq \frac{2\epsilon}{L_0}$ and  $ \|\hat{y}_{t+1}-\hat{y}_t\| \leq \frac{\mu\epsilon^2}{24L_0^2\sigma_{g,1}} =: \vartheta$.
	\end{lemma}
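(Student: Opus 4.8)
The plan is to treat this lemma as a \emph{deterministic} consequence of the events $\gE_{\init}$ and $\gE_y^1$ (resp.\ $\gE_y^2$): once we condition on these events, the per-iteration tracking bound $\|y_t-y_t^*\|\le\epsilon/L_0$ for all $t\in[T]$ (using the weaker Option~II constant from \cref{lem:optionII}, which also covers \cref{lem:optionI}) holds surely, so no further probabilistic argument is needed. Besides that bound I will use two elementary facts: the normalized upper-level update on line~24 gives $\|x_{t+1}-x_t\|=\eta$, and $y^*(\cdot)$ is $(l_{g,1}/\mu)$-Lipschitz (a standard consequence of \cref{ass:f-and-g}); together these yield the minimizer-drift bound $\|y_{t+1}^*-y_t^*\|\le \eta l_{g,1}/\mu$, as already noted in \cref{sec:nagbilevel}.

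For the first inequality I would induct on $t$ with hypothesis $\|\hat y_t-y_t^*\|\le 2\epsilon/L_0$. The base case is immediate: $\hat y_0=y_0$ and, on $\gE_{\init}$, \cref{lem:warmstart} gives $\|y_0-y_0^*\|\le\sqrt{\mu\alpha/32}\,\epsilon/L_0\le\epsilon/L_0$. For the inductive step, from line~21 write
\begin{equation*}
\hat y_{t+1}-y_{t+1}^* = (1-\tau)(\hat y_t-y_t^*) + (1-\tau)(y_t^*-y_{t+1}^*) + \tau(y_{t+1}-y_{t+1}^*),
\end{equation*}
take norms, and insert the induction hypothesis, the drift bound, and the tracking bound to get $\|\hat y_{t+1}-y_{t+1}^*\|\le (1-\tau)\cdot 2\epsilon/L_0 + \eta l_{g,1}/\mu + \tau\epsilon/L_0$. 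This is $\le 2\epsilon/L_0$ as soon as $\eta l_{g,1}/\mu\le \tau\epsilon/L_0$, which holds for small enough $\epsilon$ since $\tau=\sqrt{\mu\alpha}=\widetilde\Theta(\epsilon)$ whereas $\eta l_{g,1}/\mu=\widetilde\Theta(\epsilon^2)$ under the parameter choices of \cref{thm:main}. (Equivalently one can unroll the scalar recursion $a_{t+1}\le(1-\tau)a_t + \eta l_{g,1}/\mu + \tau\epsilon/L_0$.)

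For the second inequality I would use the one-step identity $\hat y_{t+1}-\hat y_t = \tau(y_{t+1}-\hat y_t)$, which follows directly from line~21, and bound $\|y_{t+1}-\hat y_t\|$ by routing through $y_{t+1}^*$ and $y_t^*$:
\begin{equation*}
\|y_{t+1}-\hat y_t\| \le \|y_{t+1}-y_{t+1}^*\| + \|y_{t+1}^*-y_t^*\| + \|y_t^*-\hat y_t\| \le \frac{\epsilon}{L_0} + \frac{\eta l_{g,1}}{\mu} + \frac{2\epsilon}{L_0} \le \frac{3\epsilon}{L_0},
\end{equation*}
the last step again using $\eta l_{g,1}/\mu=\widetilde\Theta(\epsilon^2)$ and the first bound of this lemma. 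Hence $\|\hat y_{t+1}-\hat y_t\|\le 3\tau\epsilon/L_0 = 3\sqrt{\mu\alpha}\,\epsilon/L_0$, and the claimed $\vartheta=\mu\epsilon^2/(24 L_0^2\sigma_{g,1})$ follows once $3\sqrt{\mu\alpha}/L_0\le \mu\epsilon/(24 L_0^2\sigma_{g,1})$; using the calibration $\sigma_{g,1}=(\mu\alpha)^{1/4}\tilde{\sigma}_{g,1}$ this reduces to $\alpha^{3/2}\le \mu^{1/2}\epsilon^2/(72^2 L_0^2\tilde{\sigma}_{g,1}^2)$, i.e.\ $\alpha=\widetilde O(\epsilon^{4/3})$, which is satisfied by $\alpha=\widetilde\Theta(\epsilon^2)$ for small enough $\epsilon$.

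The only real content here is this bookkeeping; the one spot that needs care is checking that the induction in the first part actually closes, i.e.\ that the constants hidden inside $\widetilde\Theta(\cdot)$ for $\eta,\alpha,\tau$ in \cref{thm:main} are picked so that $\eta l_{g,1}/\mu\le\tau\epsilon/L_0$ and the analogous final inequality for $\vartheta$ hold — a verification against the exact parameter definitions in the appendix rather than a conceptual obstacle.
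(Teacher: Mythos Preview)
Your proposal is correct and follows essentially the same route as the paper: the same decomposition $\hat y_{t+1}-y_{t+1}^*=(1-\tau)(\hat y_t-y_t^*)+(1-\tau)(y_t^*-y_{t+1}^*)+\tau(y_{t+1}-y_{t+1}^*)$ for the first bound, and the same identity $\hat y_{t+1}-\hat y_t=\tau(y_{t+1}-\hat y_t)$ with the same triangle routing for the second. The only cosmetic difference is that the paper inducts on the fixed-point value $\frac{(1-\tau)\eta l_{g,1}}{\tau\mu}+\frac{\epsilon}{L_0}$ of the scalar recursion and then checks this is $\le 2\epsilon/L_0$, whereas you induct directly on $2\epsilon/L_0$; both amount to the same parameter check $\eta l_{g,1}/\mu\le\tau\epsilon/L_0$.
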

	
	\textbf{Remark:} \cref{lem:averaging} shows that after performing averaging operations over the sequence $\{y_t\}_{t=1}^{T}$, the averaged sequence enjoys stronger guarantees. First, each averaged iterate $\hat{y}_t$ is still $O(\epsilon)$-close to the optimal lower-level variable $y_t^*$; Second, two consecutive averaged iterates (i.e., $\hat{y}_t$ and $\hat{y}_{t+1}$) is $O(\epsilon^2)$-close to each other. The stronger guarantees are crucial to control the hypergradient estimation error as described in \cref{lem:uppererror}.

	\begin{lemma} \label{lem:uppererror}
		Under event $\gE_{\init}\cap\gE_y^1$ (Option I) or $\gE_{\init}\cap\gE_y^2$ (Option II), define $\epsilon_t = m_t-\E_t[\gdf(x_t,\hat{y}_t;\Bar{\xi}_t)]$, then we have the following averaged cumulative error bound:
		\begin{small}
			\begin{equation*}
				\begin{aligned}
					\frac{1}{T}\sum_{t=0}^{T-1}\E\|\epsilon_t\|
					&\leq \frac{\Bar{\sigma}}{T(1-\beta)} + \sqrt{1-\beta}\Bar{\sigma} + \frac{\Bar{L}_0}{\sqrt{1-\beta}}\sqrt{\frac{2(\eta^2+\vartheta^2)}{S}} + \Bar{L}_1\sqrt{\frac{2(\eta^2+\vartheta^2)}{S(1-\beta)}}\frac{1}{T}\sum_{t=0}^{T-1}\E\|\gdphi(x_t)\|,
				\end{aligned}
			\end{equation*}
		\end{small}%
		where $S$ denotes the batch size, and $\Bar{\sigma}, \Bar{L}_0, \Bar{L}_1$ are defined in \Cref{lm:hyper-var,lm:hyper-estimator}.
	\end{lemma}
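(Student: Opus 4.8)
The plan is to track the recursive-momentum error $\epsilon_t = m_t - \E_t[\gdf(x_t,\hat y_t;\bar\xi_t)]$ via the standard STORM-type decomposition, adapted to the bilevel setting where the "target" is the conditional mean of the Neumann-series hypergradient estimator rather than a true gradient. First I would unfold the recursion in line~23 of \Cref{alg:bilevel}: writing $u_t := \gdf(x_t,\hat y_t;\bar\xi_t)$ and $\bar u_t := \E_t[u_t]$, one gets
\begin{equation*}
\epsilon_t = \beta\,\epsilon_{t-1} + \underbrace{\beta\big((u_t-u_{t-1}) - (\bar u_t - \bar u_{t-1})\big)}_{\text{"difference" noise}} + \underbrace{(1-\beta)(u_t - \bar u_t)}_{\text{"fresh" noise}},
\end{equation*}
where I use that $\bar\xi_t$ is independent of $\gF_t$ and that $\E_t[u_t - u_{t-1}]$ telescopes against the deterministic means appropriately — here one must be slightly careful because $m_t$ mixes in $\gdf(x_{t-1},\hat y_{t-1};\bar\xi_t)$ evaluated with the \emph{current} sample, so the martingale-difference structure is with respect to $\gF_{t+1}$. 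Both bracketed terms are mean-zero given $\gF_t$, so $\{\epsilon_t\}$ is a contractive recursion driven by martingale differences.

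Next I would unroll to $\epsilon_t = \beta^{t+1}\epsilon_{-1}\text{-type term} + \sum_{k} \beta^{t-k}(\text{diff}_k + \text{fresh}_k)$ and take $\E\|\cdot\|$. For the fresh-noise contribution, $\|u_t-\bar u_t\|$ has second moment bounded by $\bar\sigma^2/S$ (the batch-averaged variance of the hypergradient estimator, controlled by \Cref{lm:hyper-var}); summing the geometric weights $(1-\beta)\sum_k\beta^{t-k}$ and using Cauchy–Schwarz / Jensen over $t$ gives the $\sqrt{1-\beta}\,\bar\sigma$ term plus a $\bar\sigma/(T(1-\beta))$ boundary term from $\epsilon_0 = u_0 - \bar u_0$. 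For the difference noise, I would invoke the almost-sure Lipschitz-type bound on the hypergradient estimator (\Cref{lm:hyper-estimator}, giving constants $\bar L_0, \bar L_1$): each realization satisfies
\begin{equation*}
\|u_t - u_{t-1}\| \le \big(\bar L_0 + \bar L_1\|\gdphi(x_t)\|\big)\big(\|x_t - x_{t-1}\| + \|\hat y_t - \hat y_{t-1}\|\big),
\end{equation*}
and since $\|x_t-x_{t-1}\| = \eta$ (normalized step) and $\|\hat y_t - \hat y_{t-1}\| \le \vartheta$ by \Cref{lem:averaging}, the increment is at most $(\bar L_0 + \bar L_1\|\gdphi(x_t)\|)\sqrt{\eta^2+\vartheta^2}$ — the $\sqrt{2}$ and the $1/\sqrt S$ come from the batch-averaging reducing the variance of the centered difference. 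Plugging the geometric sum $\beta^2\sum_k\beta^{2(t-k)} \le 1/(1-\beta^2) \le 1/(1-\beta)$ (for the variance of the martingale sum) and averaging over $t$ yields the two remaining terms, with the $\|\gdphi(x_t)\|$-dependent piece carried along as $\frac1T\sum_t\E\|\gdphi(x_t)\|$.

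The main obstacle, and the place requiring genuine care, is the interplay with the conditioning events $\gE_{\init}\cap\gE_y^1$ (or $\gE_y^2$): the bound $\|\hat y_t-\hat y_{t-1}\|\le\vartheta$ from \Cref{lem:averaging} holds only on these high-probability events, which involve the lower-level randomness $\widetilde{\gF}_T^1$ (or $\cup_t\widetilde{\gF}_t^2$), whereas the expectations $\E_t$ in the momentum analysis are over the upper-level/hypergradient randomness $\gF_t$. I would handle this by noting that the events $\gE_y^1, \gE_y^2$ are measurable with respect to the lower-level filtration, which (by the independence structure set up in \cref{sec:nagbilevel}, where the drift $\Delta_t = \eta l_{g,1}/\mu$ is deterministic) is independent of $\bar\xi_t$ conditionally; so one can condition on a fixed realization of the lower-level iterates lying in the good event and run the martingale argument over $\gF_T$ with $\vartheta$ treated as a deterministic constant. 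The bookkeeping of \emph{which} filtration each expectation is taken over, and ensuring the martingale-difference property survives the conditioning, is the delicate part; the rest is the routine STORM geometric-sum estimate.
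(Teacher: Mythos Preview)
Your proposal follows essentially the same route as the paper: unroll the STORM recursion for $\epsilon_t$ into a $\beta^t\epsilon_0$ term plus geometric sums of a ``fresh'' martingale difference $\tilde\epsilon_t=u_t-\bar u_t$ (weighted by $1-\beta$) and a ``difference'' martingale $\hat\epsilon_t=(u_t-u_{t-1}')-(\bar u_t-\bar u_{t-1})$ (weighted by $\beta$); bound the fresh part via the variance estimate of \Cref{lm:hyper-var} and the difference part via the second-moment Lipschitz estimate of \Cref{lm:hyper-estimator} together with $\|x_t-x_{t-1}\|=\eta$ and $\|\hat y_t-\hat y_{t-1}\|\le\vartheta$ from \Cref{lem:averaging}. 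Your handling of the conditioning on $\gE_{\init}\cap\gE_y^{1}$ (or $\gE_y^2$) is also the right one: those events live in the lower-level filtration and the $\gF_T$-expectations can be taken with $\vartheta$ treated as a constant.

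The one place your sketch is genuinely incomplete is the extraction of the $\bar L_1\|\gdphi(x_t)\|$ contribution. A direct martingale-variance computation,
\[
\E\Bigl\|\sum_{k}\beta^{t-k}\hat\epsilon_k\Bigr\|\le\sqrt{\sum_k\beta^{2(t-k)}\E\|\hat\epsilon_k\|^2}
\le \sqrt{\tfrac{2(\eta^2+\vartheta^2)}{S}\sum_k\beta^{2(t-k)}\bigl(\bar L_0^2+\bar L_1^2\,\E\|\gdphi(x_k)\|^2\bigr)},
\]
leaves $\E\|\gdphi(x_k)\|^2$ trapped under the square root, which does not yield the stated bound $\bar L_1\sqrt{\tfrac{2(\eta^2+\vartheta^2)}{S(1-\beta)}}\cdot\tfrac1T\sum_t\E\|\gdphi(x_t)\|$ and would not be absorbable against the first-moment quantity $\tfrac1T\sum_t\E\|\gdphi(x_t)\|$ appearing on the left of \Cref{lm:descent-lemma}. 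The paper resolves this by an induction that peels off one $\hat\epsilon_{t-i}$ at a time: condition on $\gF_{t-i-1}$, apply Jensen to pull the conditional expectation inside the square root, expand $\E_{t-i-1}\|\hat\epsilon_{t-i}\|^2\le\tfrac{2}{S}(\bar L_0^2+\bar L_1^2\|\gdphi(x_{t-i})\|^2)(\eta^2+\vartheta^2)$, and then use $\sqrt{a+b}\le\sqrt a+\sqrt b$ \emph{before} taking the outer expectation to split off $\beta^{i}\bar L_1\|\gdphi(x_{t-i})\|$ as a separate linear term (this works because $\|\gdphi(x_{t-i})\|$ is $\gF_{t-i-1}$-measurable). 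Iterating $t+1$ times produces the sum $\sum_{i}\beta^{t-i}\E\|\gdphi(x_i)\|$ rather than an $\ell_2$-type aggregate. This step is the only substantive addition you need; the rest of your plan matches the paper.
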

	
	\textbf{Remark:} \cref{lem:uppererror} characterizes the upper-level hypergradient estimation error under the good event that the lower-level error can be controlled. One can choose hyperparameters appropriately such that the cumulative error (i.e., LHS) grows only sublinearly in terms of $T$, which is important for establishing the fast convergence of our algorithm.

	\section{Experiments} 
	\label{sec:experiment}
	
	\textbf{Deep AUC Maximization with Recurrent Neural Networks}.\label{sec:auc_max}
	AUC (Area Under the ROC Curve)~\cite{hanley1983method} is a critical metric in evaluating the performance of binary classification models. It measures the ability of the model to distinguish between positive and negative classes, and it is defined as the probability that the prediction score of a positive example is higher than that is a negative example~\cite{hanley1982meaning}. Deep AUC maximization~\cite{liu2019stochastic,ying2016stochastic} can be formulated as a min-max optimization problem~\cite{liu2019stochastic}: $\min_{\vw\in\mathbb{R}^{d}, (a,b)\in \mathbb{R}^2} \max_{\alpha\in \mathbb{R}} f(\vw, a, b, \alpha)\ \coloneqq \mathbb{E}_{\vz}[F(\vw, a, b, \alpha; \vz)],$
	where $F(\vw, a, b, \alpha; \vz) = (1-r)(h(\vw; \vx)-a )^2 \mathbb{I}_{[c=1]} + r (h(\vw;\vx)-b)^2\mathbb{I}_{[c=-1]}+ 2(1+\alpha)(r h(\vw;\vx)\mathbb{I}_{[c=-1]}-(1-r)h(\vw; \vx)\mathbb{I}_{[c=1]}) - r(1-r)\alpha^2$, $\vw$ denotes the model parameter, $\vz=(\vx,c)$ is the random data sample ($\vx$ denote the feature vector and $c\in\{+1,-1\}$ denotes the label), $h(\vw,\vx)$ is the score function defined by a neural network, and $r=\text{Pr}(c=1)$ denotes the ratio of positive samples in the population. This min-max formulation is an special case of the bilevel problem with $g=-f$ in~\eqref{eq:bp}, which can be reformulated as the following:
	\begin{equation}
		\label{eq:deepauc}
		\begin{aligned}
			&\min_{\vw\in\mathbb{R}^{d}, (a,b)\in \mathbb{R}^2} \mathbb{E}_{\vz}[F(\vw, a, b, \alpha^*(\vw,a,b); \vz)]  \quad \text{s.t.,} \quad
			\alpha^*(\vw, a, b) \in \arg\min_{\alpha \in \mathbb{R}} \ -\mathbb{E}_{\vz}[F(\vw, a, b, \alpha; \vz)]
		\end{aligned}
	\end{equation}
	where $(\vw, a, b)$ denotes the upper-level variable, and $\alpha$ denotes the lower-level variable. In this case, the lower-level is a quadratic function in terms of $\alpha$ and is strongly convex, and the upper-level function is non-convex function with potential unbounded smoothness when using a recurrent neural network as the predictive model. 
	
	We aim to perform imbalanced text classification task and maximize the AUC metric. The Deep AUC maximization experiment is performed on imbalanced Sentiment140 \cite{go2009twitter} dataset (under the license of CC BY 4.0), which is a binary text classification task. Specifically, we follow \cite{yuan2021large} to make training set imbalanced with a pre-defined imbalanced ratio ($r$), and leave the test set unchanged. Given $r$, we randomly discard the positive samples (with label 1) in original training set until the portion of positive samples equals to $r$.  The imbalance ratio $r$ is set to 0.2 in our experiment, which means only 
	$20\%$ data is positive in the training set. We use a two-layer recurrent neural network with input dimension=300, hidden dimension=4096, and output dimension=2 for the model prediction.
	
	We compare with some bilevel optimization baselines, including StocBio \cite{ji2021bilevel}, TTSA \cite{hong2023two}, SABA \cite{dagreou2022framework}, MA-SOBA \cite{chen2023optimal}, SUSTAIN \cite{khanduri2021near}, VRBO \cite{yang2021provably} and BO-REP \cite{hao2024bilevel}. We show the training and test AUC result with 25 epochs in (a) (b) of \cref{fig:auc} and running time in (c), (d) of \cref{fig:auc}. Our algorithm AccBO achieves highest AUC score among all the baselines over epochs and running time. The running time figure shows AccBO converges to a good result faster than other baselines. The detailed parameter tuning and selection are included in \cref{sec:detailexp_hc}.
	
	\begin{figure*}[!t]
		\begin{center}
			\subfigure[\scriptsize Training AUC]{\includegraphics[width=0.245\linewidth]{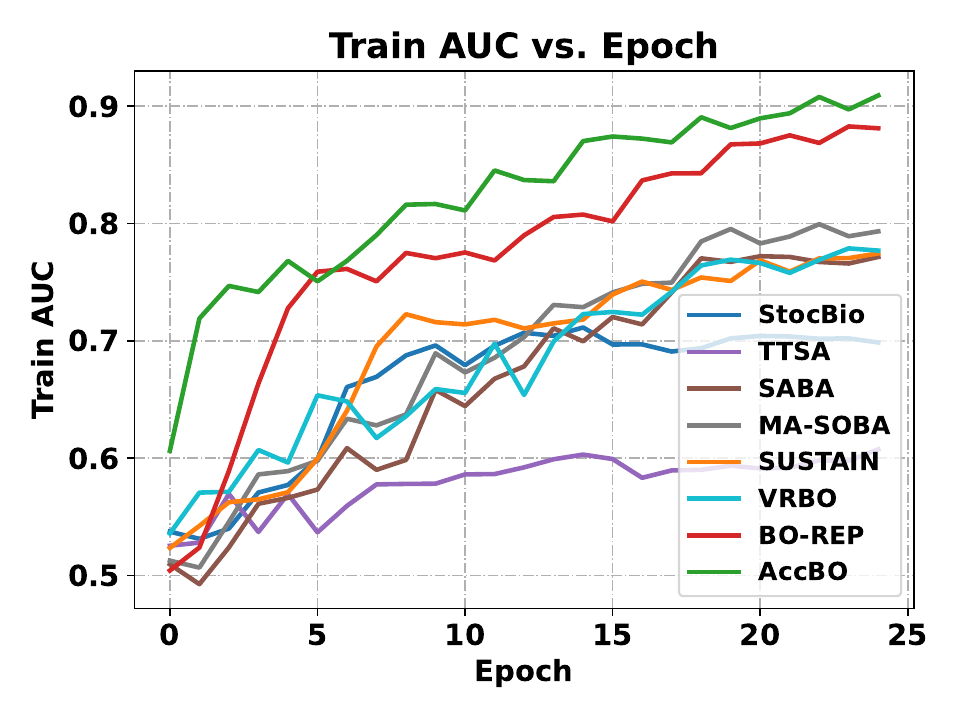}}   
			\subfigure[\scriptsize Test AUC]{\includegraphics[width=0.245\linewidth]{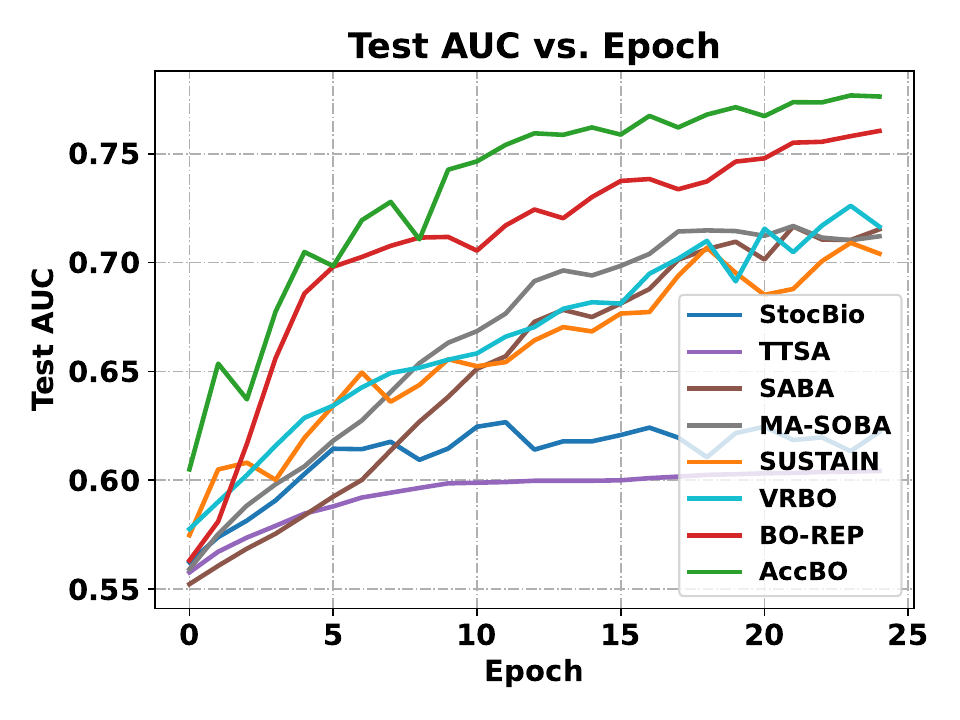}}   
			\subfigure[\scriptsize Training AUC vs. running time]{\includegraphics[width=0.245\linewidth]{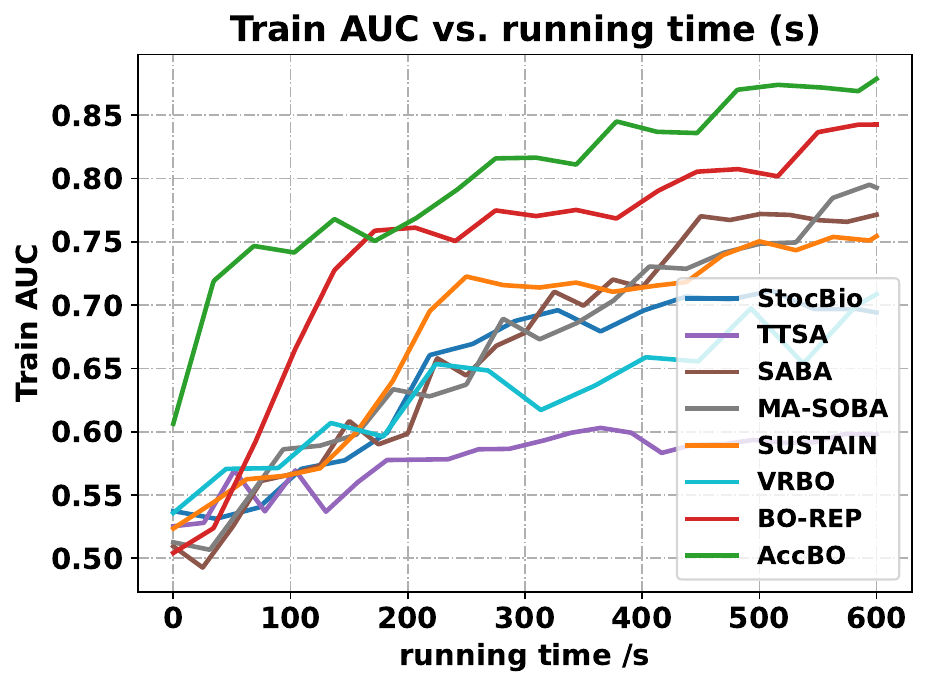}}  
			\subfigure[\scriptsize Test AUC vs. running time]{\includegraphics[width=0.245\linewidth]{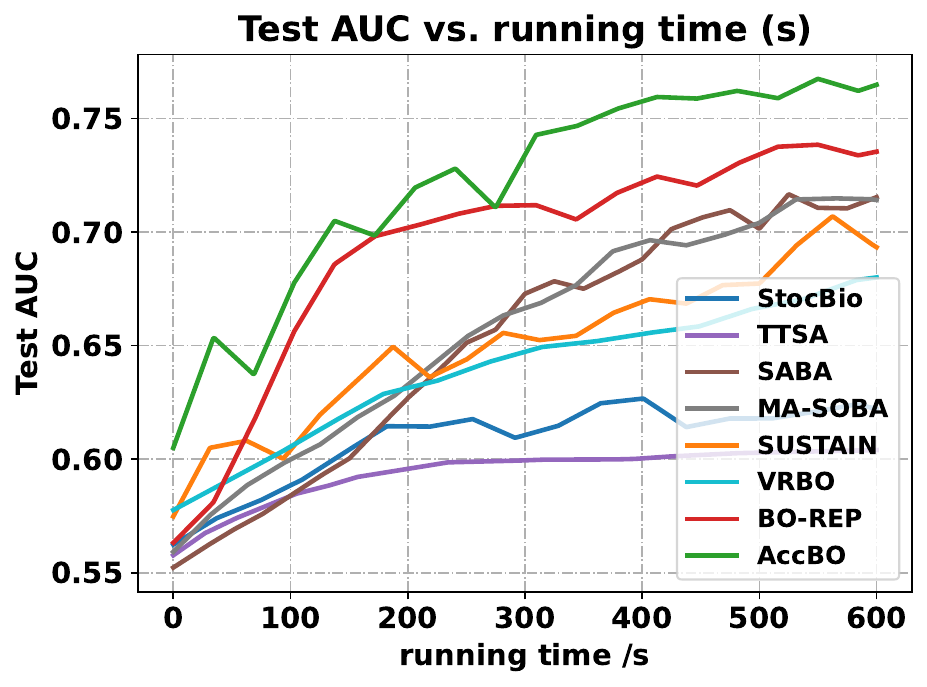}}
		\end{center}
		\caption{Results of bilevel optimization on deep AUC maximization. Figure (a), (b) are the results over epochs, and Figure (c), (d) are the results over running time.}
		\label{fig:auc}
	\end{figure*}

	\begin{figure*}[!t]
		\begin{center}
			\subfigure[\scriptsize Training ACC]{\includegraphics[width=0.245\linewidth]{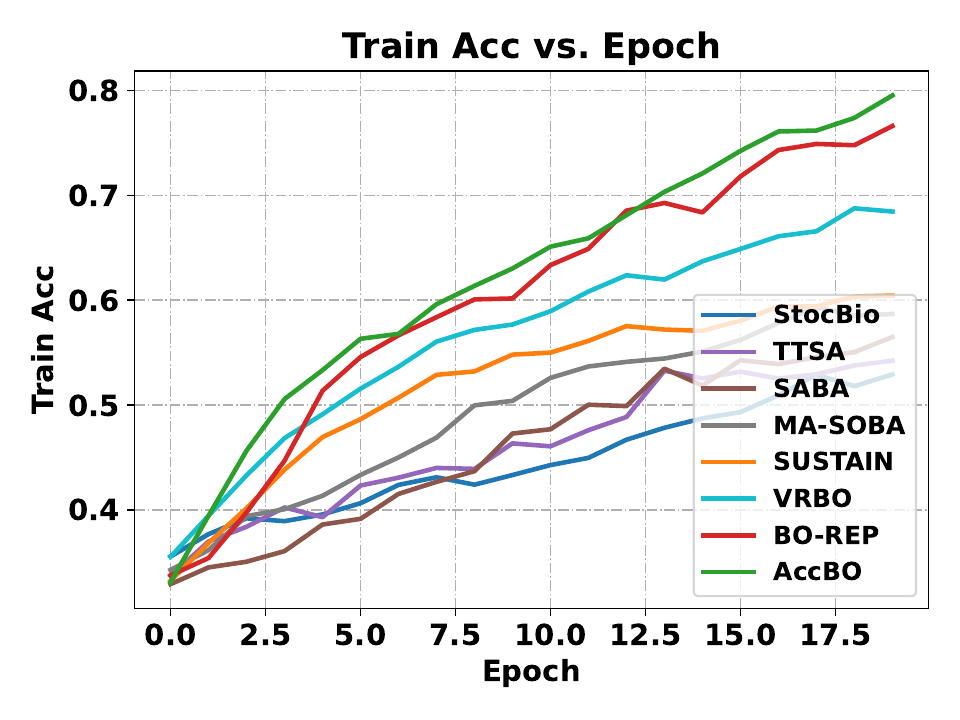}}   
			\subfigure[\scriptsize Test ACC]{\includegraphics[width=0.245\linewidth]{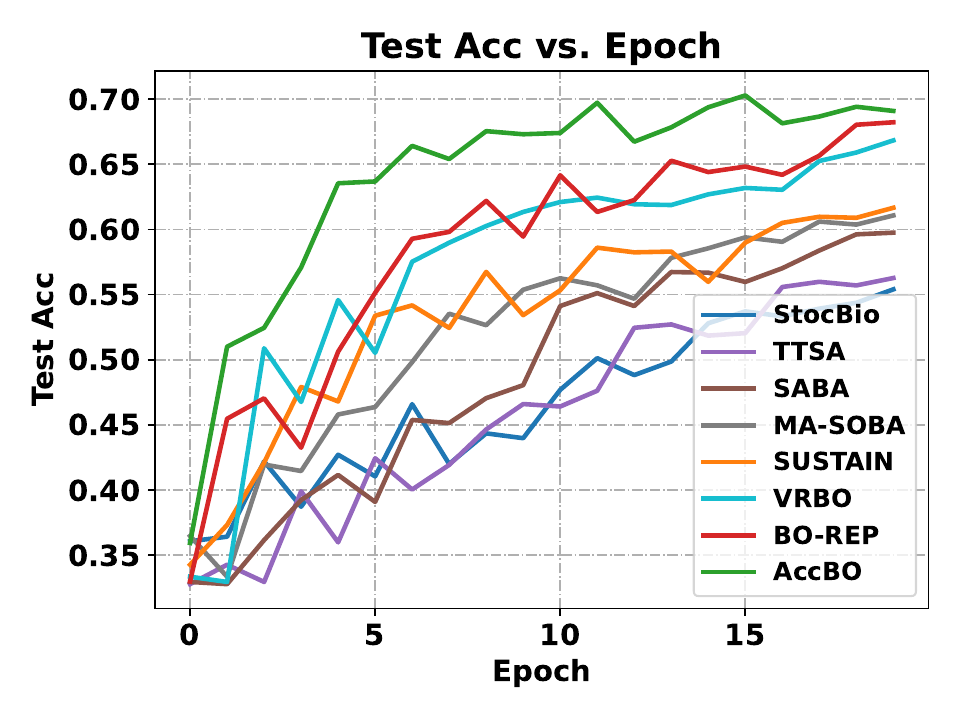}}   
			\subfigure[\scriptsize Training ACC vs. running time]{\includegraphics[width=0.245\linewidth]{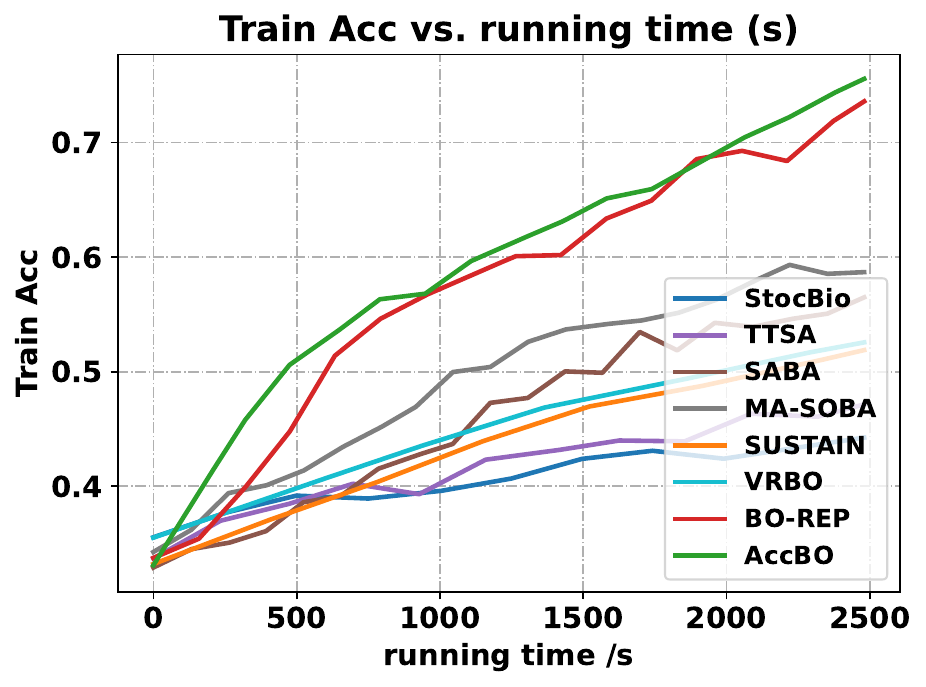}}  
			\subfigure[\scriptsize Test ACC vs. running time]{\includegraphics[width=0.245\linewidth]{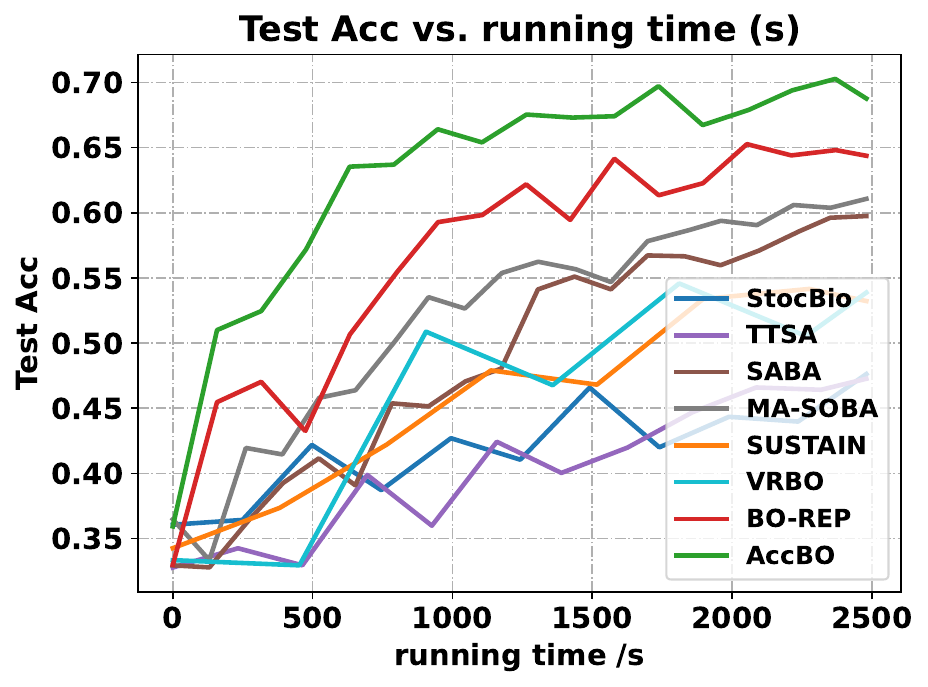}} \\
			\subfigure[\scriptsize Training ACC]{\includegraphics[width=0.245\linewidth]{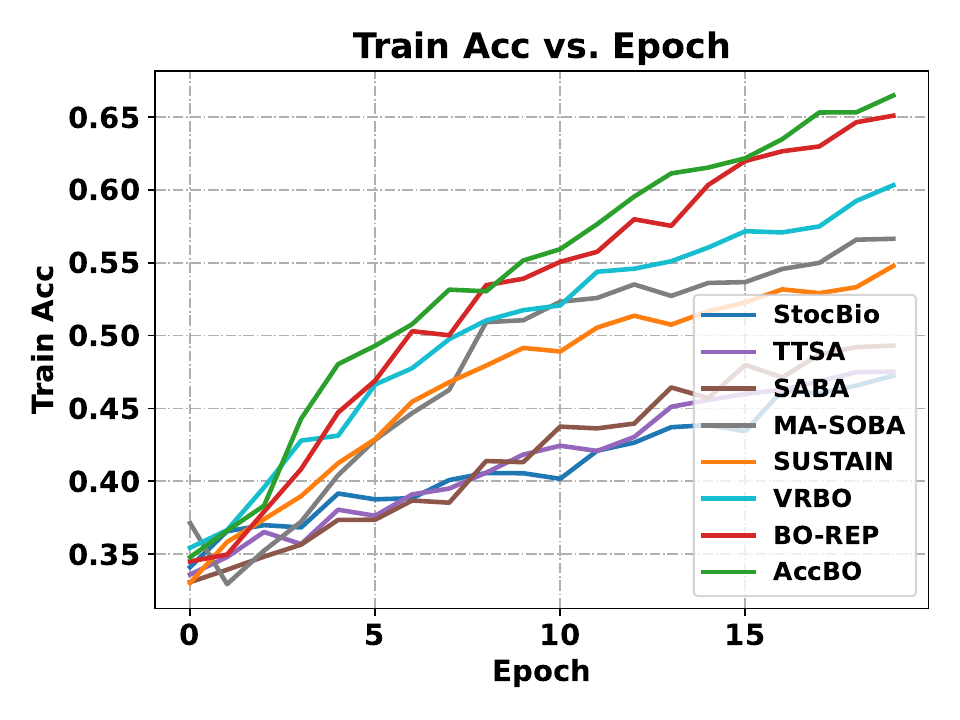}}   
			\subfigure[\scriptsize Test ACC]{\includegraphics[width=0.245\linewidth]{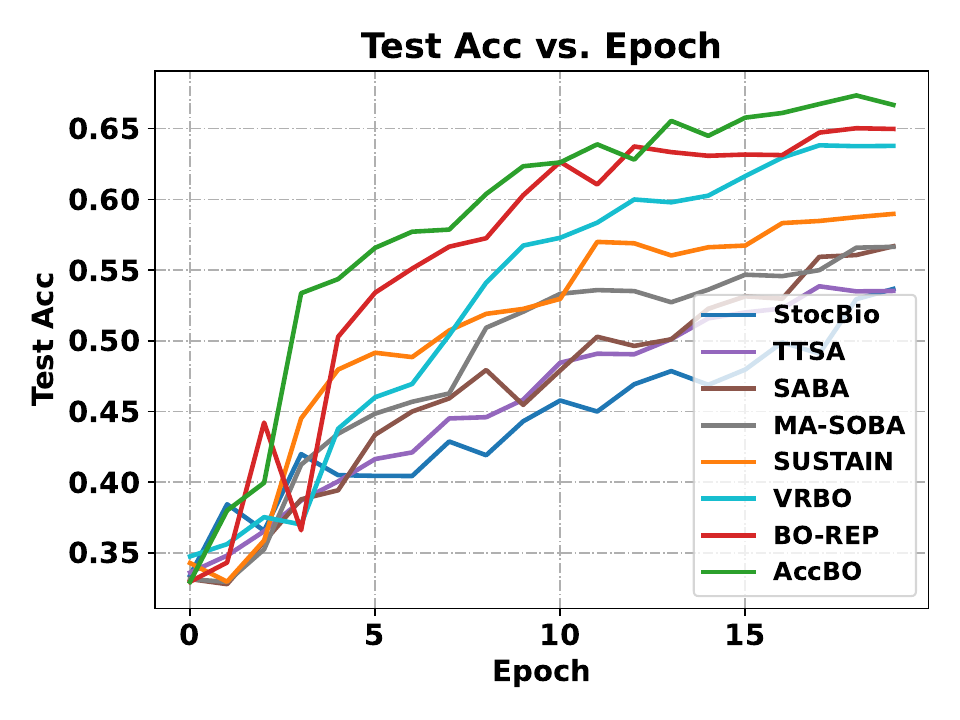}}   
			\subfigure[\scriptsize Training ACC vs. running time]{\includegraphics[width=0.247\linewidth]{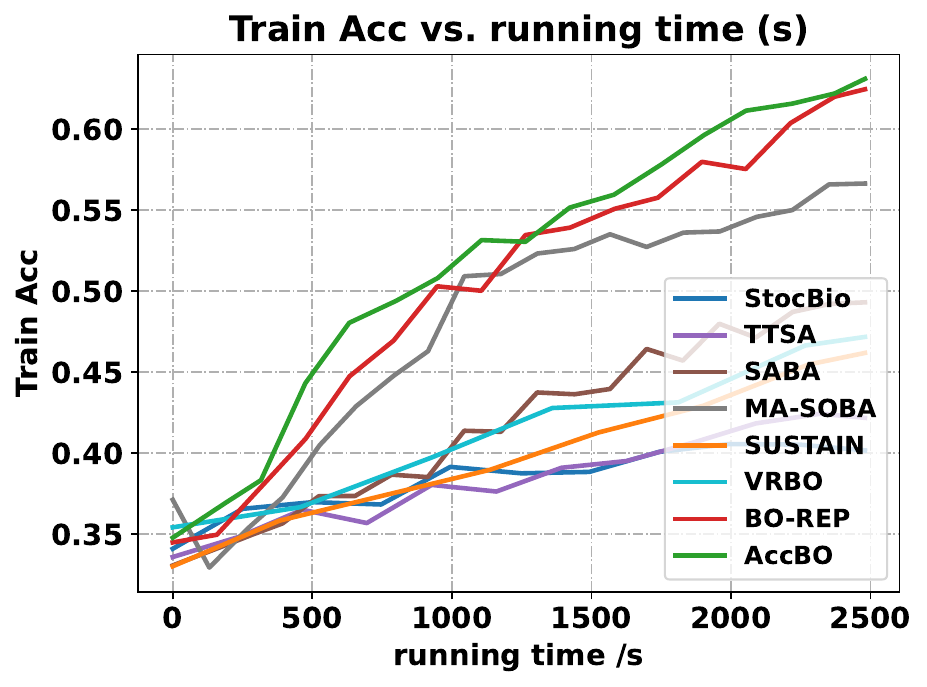}}  
			\subfigure[\scriptsize Test ACC vs. running time]{\includegraphics[width=0.245\linewidth]{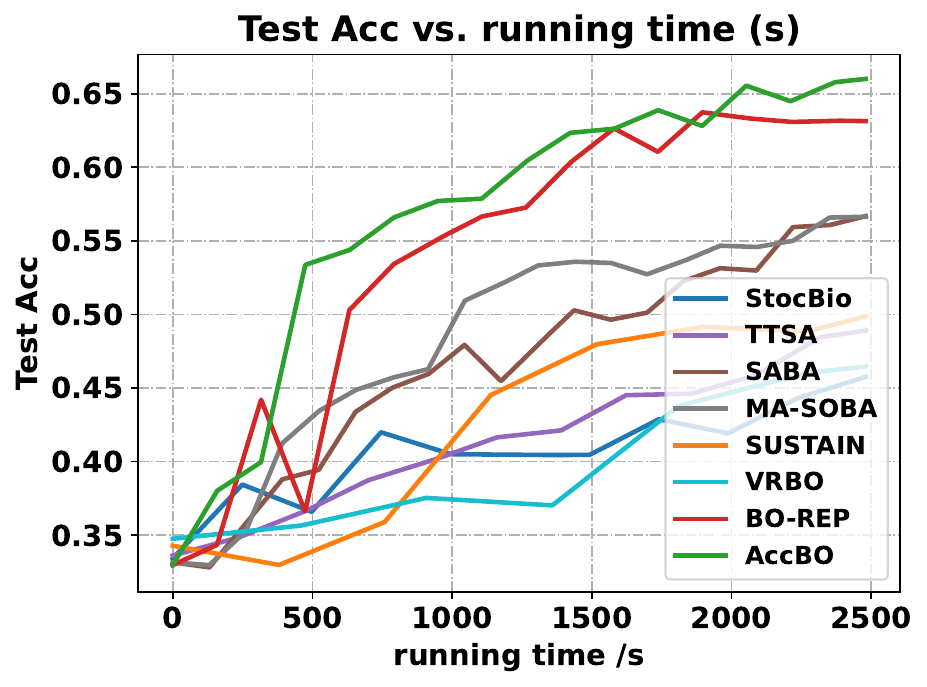}}
		\end{center}
		\caption{Results of bilevel optimization on data hyper-cleaning with $p=0.1$. Figure (a), (b), (c), (d) are the results with noise rate $p=0.1$  where (a), (b) are the results over epochs, and Figure (c), (d) are the results over running time. Figure (e), (f), (g), (h) are the results with noise rate $p=0.2$.}
		\label{fig:hc_p_0.2}
	\end{figure*}
	
	\textbf{Data Hypercleaning.} The Data hypercleaning task tries to learn a set of weights $\boldsymbol\lambda$ for the corrupted training data $\mathcal{D}_{tr}$, such that the model trained on the weighted corrupted training set can achieve good performance on the clean validation set $\mathcal{D}_{val}$, where the corrupted training set $\mathcal{D}_{tr}\coloneqq \{\vx_i, \bar{y_i}\}$ and the label $\bar{y_i}$ is randomly flipped to one of other labels with probability $0<p<1$. The data hyper-cleaning can be formulated as a bilevel optimization problem,
	\begin{equation}\label{eq:hc_formulation}
		\min_{\boldsymbol{\lambda}} \frac{1}{|\mathcal{D}_{\text{val}}|}\sum_{\xi\in \mathcal{D}_{\text{val}}}\mathcal{L}(\vw^*(\boldsymbol{\lambda}); \xi), \ \mathrm{s.t.} \ \vw^*(\boldsymbol{\lambda}) \in \argmin_{\vw} \frac{1}{|\mathcal{D}_{\text{tr}}|}\sum_{\zeta_i\in \mathcal{D}_{\text{tr}} }\sigma(\lambda_i)\mathcal{L}(\vw; \zeta_i) +  c \|\vw\|^2,
	\end{equation}
	where $\vw$ is the model parameter of a neural network, and $\sigma(x)=\frac{1}{1+e^{-x}}$ is the sigmoid function. We perform bilevel optimization algorithms on the noisy text classification dataset Stanford Natural Language Inference (SNLI)~\cite{bowman2015large} (under the license of CC BY 4.0) with a three-layer recurrent neural network with input dimension=300, hidden dimension=4096, and output dimension=3 for the label prediction. Each of sentence-pairs manually labeled as entailment, contradiction, and neutral. Specifically, the label of each training data is randomly flipped to one of the other two labels with probability $p$. We set $p=0.1$ and $p=0.2$ in the experiments, respectively. We compare all the baselines used in the deep AUC maximization experiment. Different from the formulation \eqref{eq:deepauc} for the deep AUC maximization, the lower-level function in \eqref{eq:hc_formulation} is not quadratic function of the lower-level variable. Therefore we choose Option II in \cref{alg:bilevel}, i.e., periodic updates for the lower-level variable.
	The results  are presented in \cref{fig:hc_p_0.2} ($p=0.1$ and $p=0.2$). Our algorithm AccBO exhibits the highest classification accuracy on training and test set among all the bilevel baselines, and also shows a high runtime efficiency.  More detailed parameter tuning and selection can be found in \cref{sec:detailexp_hc}. All the experiments are run on the device of NVIDIA A6000 (48GB memory) GPU and AMD EPYC 7513 32-Core CPU.

	\section{Conclusion}
	\label{sec:conclusion}
	In this paper, we propose a new algorithm named AccBO for solving bilevel optimization problems where the upper-level is nonconvex and unbounded smooth and the lower-level problem is strongly convex. The algorithm achieved $\widetilde{O}(\epsilon^{-3})$ oracle complexity for finding an $\epsilon$-stationary point when the lower-level stochastic gradient variance is $O(\epsilon)$, which matches the rate of the state-of-the-art single-level relaxed smooth optimization~\cite{liu2023near} and is nearly optimal in terms of dependency on $\epsilon$~\cite{arjevani2023lower}.
	
	\textbf{Limitations.} There are two limitations of our work. One limitation of our work is that the convergence analysis for the Option I of our algorithm relies on the lower-level problem being a quadratic function: only under this case the algorithm becomes a single-loop procedure. Another limitation is that we require the lower-level stochastic gradient has variance $O(\epsilon)$. It remains unclear how to design single-loop algorithms for more general lower-level strongly convex functions and get rid of the small stochastic gradient variance assumption for the lower-level variable.
	
	\begin{ack}
We would like to thank the anonymous reviewers for their helpful comments. We would like to thank Tianbao Yang and Qihang Lin for helpful discussions for the quadratic function with distributional drift in the earlier version of our paper. This work has been supported by the Presidential Scholarship, the ORIEI seed funding, and the IDIA P3 fellowship from George Mason University, the Cisco Faculty Research Award, and NSF award \#2436217, \#2425687. The Computations were run on ARGO, a research computing cluster provided by the Office of Research Computing at George Mason University (URL: https://orc.gmu.edu).

	\end{ack}

	\bibliography{ref}
	\bibliographystyle{plainnat}
	
	%%%%%%%%%%%%%%%%%%%%%%%%%%%%%%%%%%%%%%%%%%%%%%%%%%%%%%%%%%%%
	
	\appendix
	
	% \tableofcontents
\newpage

\section{Technical Lemmas}
\label{app:tech_lemmas}
In this section, we will introduce a few useful lemmas. The following technical lemma on recursive control is crucial for providing high probability guarantee of the lower-level variables $y_t$ and $\hat{y}_t$ in \Cref{alg:bilevel} at \textit{anytime}. We follow a similar argument as in \cite[Proposition 29]{cutler2023stochastic} with a slight generalization.

\begin{lemma}[Recursive control on MGF] \label{lm:recursive-control}
Consider scalar stochastic processes $(V_t)$, $(V'_{t,1})$, $(V'_{t,2})$, $(D_{t,1})$, $(D_{t,2})$ and $(X_t)$ on a probability space with filtration $(\gH_t)$, which are linked by the inequality
\begin{equation*}
    V_{t+1} \leq \alpha_tV_t + D_{t,1}\sqrt{V'_{t,1}} + D_{t,2}\sqrt{V'_{t,2}} + X_t + \kappa_t
\end{equation*}
for some deterministic constants $\alpha_t \in (-\infty, 1]$ and $\kappa_t\in\R$. Suppose the following properties hold.
\begin{itemize}
    \item $V_t, V'_{t,1}$ and $V'_{t,2}$ are non-negative and $\gH_t$-measurable.
    \item $D_{t,i}$ is mean-zero sub-Gaussian conditioned on $\gH_t$ with deterministic parameter $\sigma_i$, and $V'_{t,i}\leq V_t$ for $i=1,2$:
    \begin{equation*}
        \E[\exp(\lambda D_{t,i}) \mid \gH_t] \leq \exp(\lambda^2\sigma_i^2/2)
        \quad\text{for all}\quad
        \lambda \in \R.
    \end{equation*}
    \item $X_t$ is non-negative and sub-exponential conditioned on $\gH_t$ with deterministic parameter $\nu_t$:
    \begin{equation*}
        \E[\exp(\lambda X_t) \mid \gH_t] \leq \exp(\lambda\nu_t)
        \quad\text{for all}\quad
        0\leq \lambda\leq 1/\nu_t.
    \end{equation*}
\end{itemize}
Then the estimate
\begin{equation*}
    \E[\exp(\lambda V_{t+1})] \leq \exp(\lambda(\nu_t+\kappa_t))\E[\exp(\lambda(1+\alpha_t)V_t/2)]
\end{equation*}
holds for any $\lambda$ satisfying $0\leq \lambda\leq \min\left\{\frac{1-\alpha_t}{2(\sigma_1^2+\sigma_2^2)}, \frac{1}{2\nu_t}\right\}$.
\end{lemma}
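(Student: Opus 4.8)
The plan is to prove the stronger one-step \emph{conditional} estimate
\[
\E[\exp(\lambda V_{t+1}) \mid \gH_t] \le \exp\big(\lambda(\nu_t+\kappa_t)\big)\exp\Big(\tfrac{\lambda(1+\alpha_t)}{2}V_t\Big)
\]
for every $\lambda$ in the stated range, and then take total expectations: since $V_t$ is $\gH_t$-measurable, the tower property turns the right-hand side into $\exp(\lambda(\nu_t+\kappa_t))\,\E[\exp(\lambda(1+\alpha_t)V_t/2)]$, which is exactly the claim. All of the work is therefore in the conditional estimate, and it amounts to combining the three single-term tail bounds that the hypotheses provide.

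First I would exponentiate the recursion. Since $\lambda\ge 0$ and $\exp$ is increasing, the assumed inequality for $V_{t+1}$ gives
\[
\exp(\lambda V_{t+1}) \le \exp\big(\lambda(\alpha_t V_t + \kappa_t)\big)\cdot\prod_{i=1}^{2}\exp\big(\lambda D_{t,i}\sqrt{V'_{t,i}}\big)\cdot\exp(\lambda X_t).
\]
Conditioning on $\gH_t$, the factor $\exp(\lambda(\alpha_t V_t + \kappa_t))$ is measurable and pulls out. The three remaining factors are \emph{not} assumed conditionally independent, so I would decouple them with Hölder's inequality. Concretely, I would first split off $X_t$ by Cauchy--Schwarz, $\E[\,\cdot\mid\gH_t]\le \E[e^{2\lambda(\sum_i D_{t,i}\sqrt{V'_{t,i}})}\mid\gH_t]^{1/2}\,\E[e^{2\lambda X_t}\mid\gH_t]^{1/2}$; the sub-exponential bound on $X_t$ controls the second factor by $\exp(\lambda\nu_t)$ and is valid precisely on $\{2\lambda\le 1/\nu_t\}$, which is the source of the $\lambda\le 1/(2\nu_t)$ constraint. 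For the remaining sub-Gaussian factor I would bound the conditional MGF of $\sum_i D_{t,i}\sqrt{V'_{t,i}}$: applying the sub-Gaussian property of each $D_{t,i}$ with the $\gH_t$-measurable coefficient $\sqrt{V'_{t,i}}$ and then using $V'_{t,i}\le V_t$ yields a bound of the form $\exp\big(c\,\lambda^2 V_t(\sigma_1^2+\sigma_2^2)\big)$ for an absolute constant $c$ determined by how the two sub-Gaussian terms are combined (with $c=1$ when they are merged into a single sub-Gaussian increment).

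Collecting exponents gives $\E[\exp(\lambda V_{t+1})\mid\gH_t]\le \exp\big(\lambda(\alpha_t V_t + \kappa_t + \nu_t) + c\lambda^2 V_t(\sigma_1^2+\sigma_2^2)\big)$. The last step is to choose the range of $\lambda$ so that the quadratic term is absorbed: imposing $\lambda \le \tfrac{1-\alpha_t}{2c(\sigma_1^2+\sigma_2^2)}$ forces $c\lambda^2 V_t(\sigma_1^2+\sigma_2^2)\le \tfrac{\lambda(1-\alpha_t)}{2}V_t$, and then $\lambda\alpha_t V_t + \tfrac{\lambda(1-\alpha_t)}{2}V_t = \tfrac{\lambda(1+\alpha_t)}{2}V_t$, which is precisely the conditional estimate displayed at the start; taking $\E[\cdot]$ finishes the proof.

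I expect the only genuine obstacle to be the decoupling step — handling the three conditionally dependent ``new-randomness'' terms $D_{t,1}\sqrt{V'_{t,1}}$, $D_{t,2}\sqrt{V'_{t,2}}$ and $X_t$ without losing constants, so that the Hölder exponents line up with both parts of the stated $\lambda$-range. This is exactly where the statement generalizes \cite[Proposition 29]{cutler2023stochastic}, which carries a single sub-Gaussian increment, to two sub-Gaussian increments plus the deterministic shift $\kappa_t$; everything else is elementary exponent arithmetic together with the two structural inputs $V'_{t,i}\le V_t$ and the $\gH_t$-measurability of $V_t, V'_{t,1}, V'_{t,2}$.
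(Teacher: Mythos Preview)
Your proposal is correct and follows essentially the same route as the paper's proof: exponentiate the recursion, pull out the $\gH_t$-measurable factor $\exp(\lambda(\alpha_t V_t+\kappa_t))$, apply Cauchy--Schwarz to decouple the sub-exponential term $X_t$ from the sub-Gaussian sum $D_{t,1}\sqrt{V'_{t,1}}+D_{t,2}\sqrt{V'_{t,2}}$, bound each piece using the hypotheses and $V'_{t,i}\le V_t$, and finally absorb the resulting $\lambda^2(\sigma_1^2+\sigma_2^2)V_t$ into $\tfrac{\lambda(1-\alpha_t)}{2}V_t$ via the range restriction on $\lambda$. Your caution about the constant $c$ in the sub-Gaussian step is well placed: the paper simply asserts $\E[\exp(2\lambda\sum_i D_{t,i}\sqrt{V'_{t,i}})\mid\gH_t]\le \exp(2\lambda^2\sum_i\sigma_i^2 V'_{t,i})$ without explicitly justifying how the two dependent sub-Gaussian increments combine, whereas you correctly flag that this is where the exact constant (and hence the exact $\lambda$-range) is determined.
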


\begin{proof}[Proof of \Cref{lm:recursive-control}]
For any index $t\geq0$ and any scalar $\lambda\geq0$, the law of total expectation implies
\begin{equation*}
    \begin{aligned}
        \E[\exp(\lambda V_{t+1})]
        &\leq \E\left[\exp(\lambda(\alpha_tV_t + D_{t,1}\sqrt{V'_{t,1}} + D_{t,2}\sqrt{V'_{t,2}} + X_t + \kappa_t))\right] \\
        &= \exp(\lambda\kappa_t)\E\left[\exp(\lambda\alpha_tV_t)\E\left[\exp\left(\lambda\left(D_{t,1}\sqrt{V'_{t,1}} + D_{t,2}\sqrt{V'_{t,2}}\right)\right)\exp(\lambda X_t) \mid \gH_t\right]\right].
    \end{aligned}
\end{equation*}
H\"{o}lder's inequality in turn yields
\begin{equation*}
    \begin{aligned}
        &\E\left[\exp(\lambda\alpha_tV_t)\E\left[\exp\left(\lambda\left(D_{t,1}\sqrt{V'_{t,1}} + D_{t,2}\sqrt{V'_{t,2}}\right)\right)\exp(\lambda X_t) \mid \gH_t\right]\right] \\
        &\quad\quad\leq \sqrt{\E\left[\exp\left(2\lambda\left(D_{t,1}\sqrt{V'_{t,1}} + D_{t,2}\sqrt{V'_{t,2}}\right)\right) \mid \gH_t\right] \cdot \E\left[\exp(2\lambda X_t) \mid \gH_t\right]} \\
        &\quad\quad\leq \sqrt{\exp\left(2\lambda^2(\sigma_i^2V'_{t,1}+\sigma_i^2V'_{t,2})\right)\exp(2\lambda\nu_t)} \\
        &\quad\quad\leq \exp\left(\lambda^2(\sigma_1^2+\sigma_2^2)V_t\right)\exp(\lambda\nu_t)
    \end{aligned}
\end{equation*}
provided $0\leq \lambda\leq 1/2\nu_t$, where we use $V'_{t,i}\leq V_t$ for $i=1,2$ in the last inequality. Therefore, under the condition that
\begin{equation*}
    0\leq \lambda\leq \min\left\{\frac{1-\alpha_t}{2(\sigma_1^2+\sigma_2^2)}, \frac{1}{2\nu_t}\right\},
\end{equation*}
the following estimate hold for all $t\geq0$:
\begin{equation*}
    \begin{aligned}
        \E[\exp(\lambda V_{t+1})]
        &\leq \exp(\lambda\kappa_t)\E\left[\exp(\lambda\alpha_tV_t)\exp\left(\lambda^2(\sigma_1^2+\sigma_2^2)V_t\right)\exp(\lambda\nu_t)\right] \\
        &= \exp(\lambda(\nu_t+\kappa_t))\E\left[\exp\left(\lambda(\alpha_t+\lambda(\sigma_1^2+\sigma_2^2))V_t\right)\right] \\
        &\leq \exp(\lambda(\nu_t+\kappa_t))\E[\exp(\lambda(1+\alpha_t)V_t/2)],
    \end{aligned}
\end{equation*}
where the last inequality follows by the given range of $\lambda$. Thus the proof is completed.
\end{proof}

Next, we introduce the following Young's inequality beyond Euclidean norm cases. This lemma serves as an important role when dealing with distributional drift for high probability SNAG analysis.

\begin{lemma}[Young's inequality] \label{lm:P-norm-young}
For any vectors $v_1,v_2\in\R^d$, positive semidefinite (PSD) matrix $\rmQ\in\R^{d\times d}$, and scalar $c>0$, it holds that \footnote{Here we define $\|v\|_{\rmQ} \coloneqq \sqrt{v^{\top}\rmQ v}$ for any vector $v\in\R^d$ and PSD matrix $\rmQ\in\R^{d\times d}$.}
\begin{equation*}
    \|v_1+v_2\|_{\rmQ}^2 \leq (1+c)\|v_1\|_{\rmQ}^2 + \left(1+\frac{1}{c}\right)\|v_2\|_{\rmQ}^2.
\end{equation*}
\end{lemma}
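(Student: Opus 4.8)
The plan is to prove the generalized Young's inequality for the $\rmQ$-seminorm by reducing it to the standard scalar Young's inequality applied in the right inner-product space. First I would observe that since $\rmQ$ is PSD, the bilinear form $\langle u, v \rangle_{\rmQ} \coloneqq u^{\top}\rmQ v$ is a genuine (possibly degenerate) inner product, so $\|\cdot\|_{\rmQ}$ is a seminorm and in particular satisfies the Cauchy--Schwarz inequality $|\langle v_1, v_2\rangle_{\rmQ}| \le \|v_1\|_{\rmQ}\|v_2\|_{\rmQ}$. Expanding the square,
\begin{equation*}
    \|v_1+v_2\|_{\rmQ}^2 = \|v_1\|_{\rmQ}^2 + 2\langle v_1, v_2\rangle_{\rmQ} + \|v_2\|_{\rmQ}^2 \le \|v_1\|_{\rmQ}^2 + 2\|v_1\|_{\rmQ}\|v_2\|_{\rmQ} + \|v_2\|_{\rmQ}^2.
\end{equation*}

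Next I would bound the cross term $2\|v_1\|_{\rmQ}\|v_2\|_{\rmQ}$ using the AM--GM / scalar Young inequality $2ab \le c\,a^2 + \tfrac{1}{c}b^2$ for any $c>0$, applied with $a = \|v_1\|_{\rmQ}$ and $b = \|v_2\|_{\rmQ}$. This gives $2\|v_1\|_{\rmQ}\|v_2\|_{\rmQ} \le c\|v_1\|_{\rmQ}^2 + \tfrac{1}{c}\|v_2\|_{\rmQ}^2$. Substituting back,
\begin{equation*}
    \|v_1+v_2\|_{\rmQ}^2 \le (1+c)\|v_1\|_{\rmQ}^2 + \left(1+\frac{1}{c}\right)\|v_2\|_{\rmQ}^2,
\end{equation*}
which is exactly the claimed inequality.

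There is essentially no obstacle here: the only thing one needs to be careful about is that $\rmQ$ is merely PSD rather than positive definite, so $\|\cdot\|_{\rmQ}$ is a seminorm and may vanish on a nontrivial subspace; however, Cauchy--Schwarz still holds for seminorms (it follows from non-negativity of $t \mapsto \|v_1 + t v_2\|_{\rmQ}^2$ as a quadratic in $t$, whose discriminant must be $\le 0$), so the argument goes through verbatim. Alternatively, one could write $\rmQ = \rmR^{\top}\rmR$ via a (e.g. Cholesky or symmetric square root) factorization and reduce directly to the Euclidean Young inequality applied to $\rmR v_1$ and $\rmR v_2$; this is the cleanest route and avoids even invoking the seminorm Cauchy--Schwarz. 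I would present the factorization version as the main proof since it is the shortest.
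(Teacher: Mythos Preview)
Your proposal is correct, and the factorization route you say you would present as the main proof is exactly what the paper does: it writes $\rmQ = \rmU\rmU^{\top}$ (Cholesky), so that $2v_1^{\top}\rmQ v_2 = 2(\rmU^{\top}v_1)^{\top}(\rmU^{\top}v_2)$, and then applies the standard Euclidean Young inequality to $\rmU^{\top}v_1$ and $\rmU^{\top}v_2$. Your alternative via Cauchy--Schwarz for the $\rmQ$-seminorm is also valid and equally short.
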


\begin{proof}[Proof of \Cref{lm:P-norm-young}]
By definition of $\|\cdot\|_{\rmQ}$, we have
\begin{equation} \label{eq:P-norm-1}
    \begin{aligned}
        \|v_1+v_2\|_{\rmQ}^2
        &= (v_1+v_2)^{\top}\rmQ(v_1+v_2) \\
        &= \|v_1\|_{\rmQ}^2 + \|v_2\|_{\rmQ}^2 + 2v_1^{\top}\rmQ v_2.
    \end{aligned}
\end{equation}
Since $\rmQ\in\R^{d\times d}$ is PSD, let $\rmQ = \rmU\rmU^{\top}$ be the Cholesky decomposition, then
\begin{equation} \label{eq:P-norm-2}
    \begin{aligned}
        2v_1^{\top}\rmQ v_2
        &= 2v_1^{\top}\rmU\rmU^{\top}v_2
        = 2(\rmU^{\top} v_1)^{\top}(\rmU^{\top} v_2) \\
        &\leq c\|\rmU^{\top}v_1\|^2 + \frac{1}{c}\|\rmU^{\top}v_2\|^2 \\
        &= c\|v_1\|_{\rmQ}^2 + \frac{1}{c}\|v_2\|_{\rmQ}^2,
    \end{aligned}
\end{equation}
where we use Young's inequality and definition of $\|\cdot\|_{\rmQ}$ for the second and third lines, respectively. Combing \eqref{eq:P-norm-1} and \eqref{eq:P-norm-2} gives the result as claimed.
\end{proof}

\section{Auxiliary Lemmas for Bilevel Optimization} 
\label{sec:auxiliary-bilevel}
In this section, we provide important properties of the objective function $\Phi$ in bilevel optimization problems, as well as characterizations (such as variance and bias) for stochastic hypergradient estimator $\gdf(x,y;\Bar{\xi})$ based on Neumann series. For readers' convenience, we only list the results here and defer the detailed proofs to \Cref{sec:omit-proof}.

\begin{lemma}[Lipschitz property, {\cite[Lemma 8]{hao2024bilevel}}] \label{lm:lip-y*z*}
Under \cref{ass:relax-smooth,ass:f-and-g}, $y^*(x)$ is $(l_{g,1}/\mu)$-Lipschitz continuous.
\end{lemma}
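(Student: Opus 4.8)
The plan is to combine the first-order optimality condition for the lower-level problem with the $\mu$-strong convexity of $g(x,\cdot)$. Fix arbitrary $x, x' \in \mathbb{R}^{d_x}$. By \cref{ass:f-and-g}(iii), $g(x,\cdot)$ is $\mu$-strongly convex, so $y^*(x)$ is the unique minimizer and is characterized by the stationarity condition $\nabla_y g(x, y^*(x)) = 0$; similarly $\nabla_y g(x', y^*(x')) = 0$. If $y^*(x) = y^*(x')$ the claim is trivial, so assume they differ.

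First I would apply strong monotonicity of $\nabla_y g(x,\cdot)$ (equivalent to $\mu$-strong convexity) at the two points $y^*(x)$ and $y^*(x')$:
\[
\big\langle \nabla_y g(x, y^*(x)) - \nabla_y g(x, y^*(x')),\ y^*(x) - y^*(x') \big\rangle \ \geq\ \mu \|y^*(x) - y^*(x')\|^2 .
\]
Using $\nabla_y g(x, y^*(x)) = 0$ and then inserting $0 = \nabla_y g(x', y^*(x'))$, the left-hand side equals $\big\langle \nabla_y g(x', y^*(x')) - \nabla_y g(x, y^*(x')),\ y^*(x) - y^*(x') \big\rangle$. Next I would bound this inner product by Cauchy--Schwarz and then by the $l_{g,1}$-joint smoothness of $g$ (\cref{ass:f-and-g}(iv)): since $\nabla g$ is $l_{g,1}$-Lipschitz jointly in $(x,y)$ and $\nabla_y g$ is a subvector of $\nabla g$, we get $\|\nabla_y g(x', y^*(x')) - \nabla_y g(x, y^*(x'))\| \leq l_{g,1}\|(x',y^*(x')) - (x,y^*(x'))\| = l_{g,1}\|x - x'\|$. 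Combining the two displays yields $\mu \|y^*(x) - y^*(x')\|^2 \leq l_{g,1}\|x - x'\|\,\|y^*(x) - y^*(x')\|$, and dividing through by $\|y^*(x) - y^*(x')\|$ gives $\|y^*(x) - y^*(x')\| \leq (l_{g,1}/\mu)\|x - x'\|$, which is the claim.

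There is essentially no serious obstacle in this argument; the only points to be careful about are (i) applying strong convexity with the first argument of $g$ held fixed (at $x$) so that the two stationarity conditions can be substituted cleanly into the cross term, and (ii) controlling that cross term via joint Lipschitzness of $\nabla g$ rather than, say, Lipschitzness of the mixed Hessian $\nabla^2_{xy} g$ (which would also work via an integral representation but is unnecessary here). This is the standard implicit-function-type estimate for strongly convex lower-level problems, and it matches \cite[Lemma 8]{hao2024bilevel}.
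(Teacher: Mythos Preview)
Your proposal is correct and is exactly the standard strong-monotonicity plus Cauchy--Schwarz argument for this estimate; the paper does not give its own proof but simply cites \cite[Lemma~8]{hao2024bilevel}, whose argument is the same as yours.
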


\begin{lemma}[$(L_0,L_1)$-smoothness, {\cite[Lemma 9]{hao2024bilevel}}] \label{lm:technical-rs-phi}
Under \cref{ass:relax-smooth,ass:f-and-g}, for any $x, x'$ we have
\begin{equation*}
    \|\gdphi(x)-\gdphi(x')\| \leq (L_0 + L_1\|\gdphi(x')\|)\|x-x'\| 
    \quad \text{if} \quad
    \|x-x'\| \leq \frac{1}{\sqrt{2(1+l_{g,1}^2/\mu^2)(L_{x,1}^2+L_{y,1}^2)}},
\end{equation*}
where $(L_0,L_1)$-smoothness constant $L_0$ and $L_1$ are defined as
\begin{equation*}
    L_0 = \sqrt{1+\frac{l_{g,1}^2}{\mu^2}}\left(L_{x,0} + L_{x,1}\frac{l_{g,1}l_{f,0}}{\mu} + \frac{l_{g,1}}{\mu}(L_{y,0}+L_{y,1}l_{f,0}) + l_{f,0}\frac{l_{g,1}l_{g,2}+l_{g,2}\mu}{\mu^2}\right) \quad \text{and} \quad
    L_1 = \sqrt{1+\frac{l_{g,1}^2}{\mu^2}}L_{x,1}.
\end{equation*}
% \begin{equation*}
%     \begin{aligned}
%         L_0 &= \sqrt{1+\frac{l_{g,1}^2}{\mu^2}}\left(L_{x,0} + L_{x,1}\frac{l_{g,1}l_{f,0}}{\mu} + \frac{l_{g,1}}{\mu}(L_{y,0}+L_{y,1}l_{f,0}) + l_{f,0}\frac{l_{g,1}l_{g,2}+l_{g,2}\mu}{\mu^2}\right) \\
%         L_1 &= \sqrt{1+\frac{l_{g,1}^2}{\mu^2}}L_{x,1}.
%     \end{aligned}
% \end{equation*}
\end{lemma}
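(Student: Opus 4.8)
\textbf{Proof plan for \Cref{lm:technical-rs-phi}.} The plan is to pass from the variable $x$ to the pair $z=(x,y^*(x))$, write $\gdphi$ in terms of the three building blocks $\gdx f$, $\gdxy g\,[\gdyy g]^{-1}$, $\gdy f$, perturb each block, and then reassemble the constants. First I set $z=(x,y^*(x))$ and $z'=(x',y^*(x'))$. By \Cref{lm:lip-y*z*}, $\|y^*(x)-y^*(x')\|\le (l_{g,1}/\mu)\|x-x'\|$, hence $\|z-z'\|^2=\|x-x'\|^2+\|y^*(x)-y^*(x')\|^2\le(1+l_{g,1}^2/\mu^2)\|x-x'\|^2$. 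The hypothesis $\|x-x'\|\le 1/\sqrt{2(1+l_{g,1}^2/\mu^2)(L_{x,1}^2+L_{y,1}^2)}$ is exactly what forces $\|z-z'\|\le 1/\sqrt{2(L_{x,1}^2+L_{y,1}^2)}$, so \cref{ass:relax-smooth} is applicable at the pair $z,z'$ (and, since the assumption is stated for every ordered pair, I may apply it to $(z',z)$ so that $\|\gdx f(z')\|$ and $\|\gdy f(z')\|$ sit on the right-hand side, which is what eventually produces $\|\gdphi(x')\|$).

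Next I split $\gdphi(x)-\gdphi(x')$ into the $\gdx f$ difference and the difference of the second-order terms, and telescope the latter:
\begin{equation*}
\begin{aligned}
&\gdxy g(z)\,[\gdyy g(z)]^{-1}\gdy f(z) - \gdxy g(z')\,[\gdyy g(z')]^{-1}\gdy f(z') \\
&= \big(\gdxy g(z)-\gdxy g(z')\big)[\gdyy g(z)]^{-1}\gdy f(z) \\
&\quad + \gdxy g(z')\,[\gdyy g(z)]^{-1}\big(\gdyy g(z')-\gdyy g(z)\big)[\gdyy g(z')]^{-1}\gdy f(z) \\
&\quad + \gdxy g(z')\,[\gdyy g(z')]^{-1}\big(\gdy f(z)-\gdy f(z')\big),
\end{aligned}
\end{equation*}
using the resolvent identity $[\gdyy g(z)]^{-1}-[\gdyy g(z')]^{-1}=[\gdyy g(z)]^{-1}(\gdyy g(z')-\gdyy g(z))[\gdyy g(z')]^{-1}$. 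Now bound each factor via \cref{ass:f-and-g}: $\|\gdxy g(z')\|\le l_{g,1}$ (joint $l_{g,1}$-smoothness of $g$), $\|[\gdyy g(\cdot)]^{-1}\|\le 1/\mu$ ($\mu$-strong convexity), $\|\gdxy g(z)-\gdxy g(z')\|\le l_{g,2}\|z-z'\|$ and $\|\gdyy g(z)-\gdyy g(z')\|\le l_{g,2}\|z-z'\|$ ($l_{g,2}$-Lipschitz second derivatives), $\|\gdy f(\cdot)\|\le l_{f,0}$, and $\|\gdy f(z)-\gdy f(z')\|\le (L_{y,0}+L_{y,1}l_{f,0})\|z-z'\|$ (here \cref{ass:relax-smooth} for $(z',z)$ together with $\|\gdy f(z')\|\le l_{f,0}$). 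For the $\gdx f$ term, \cref{ass:relax-smooth} for $(z',z)$ gives $\|\gdx f(z)-\gdx f(z')\|\le(L_{x,0}+L_{x,1}\|\gdx f(z')\|)\|z-z'\|$, and from the hypergradient identity $\|\gdx f(z')\|\le\|\gdphi(x')\|+\|\gdxy g(z')[\gdyy g(z')]^{-1}\gdy f(z')\|\le\|\gdphi(x')\|+l_{g,1}l_{f,0}/\mu$.

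Collecting the five contributions, the coefficient of $\|z-z'\|$ is $L_{x,0}+L_{x,1}\frac{l_{g,1}l_{f,0}}{\mu}+\frac{l_{g,1}}{\mu}(L_{y,0}+L_{y,1}l_{f,0})+l_{f,0}\frac{l_{g,1}l_{g,2}+l_{g,2}\mu}{\mu^2}+L_{x,1}\|\gdphi(x')\|$, where the last three second-order pieces contribute $\frac{l_{g,1}l_{g,2}l_{f,0}}{\mu^2}$, $\frac{l_{g,2}l_{f,0}}{\mu}$ and $\frac{l_{g,1}}{\mu}(L_{y,0}+L_{y,1}l_{f,0})$ respectively. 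Substituting $\|z-z'\|\le\sqrt{1+l_{g,1}^2/\mu^2}\,\|x-x'\|$ turns the constant part into $L_0$ and the $\|\gdphi(x')\|$ part into $L_1=\sqrt{1+l_{g,1}^2/\mu^2}\,L_{x,1}$, which is exactly the claim. I expect the bulk of the work to be routine bookkeeping: getting the matrix telescoping and the resolvent identity right, checking each operator-norm bound, and carrying the $\|z-z'\|\to\|x-x'\|$ conversion so the constants line up with the stated $L_0$. The one point that needs care (rather than difficulty) is choosing the orientation of \cref{ass:relax-smooth} (the pair $(z',z)$) so that $\|\gdphi(x')\|$—not $\|\gdphi(x)\|$—appears without an extra multiplicative factor, and verifying that the neighborhood condition on $\|x-x'\|$ is precisely the one that keeps $\|z-z'\|$ inside the region where the relaxed-smoothness inequalities hold.
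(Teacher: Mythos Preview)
Your proof plan is correct and follows the standard route: the paper itself does not reprove this lemma but simply cites \cite[Lemma~9]{hao2024bilevel}, and what you have sketched is precisely the argument used there---pass to $z=(x,y^*(x))$ via the Lipschitzness of $y^*$, check that the condition on $\|x-x'\|$ puts $\|z-z'\|$ in the region where \cref{ass:relax-smooth} applies, telescope the product $\gdxy g\,[\gdyy g]^{-1}\gdy f$ using the resolvent identity, and bound $\|\gdx f(z')\|$ by $\|\gdphi(x')\|+l_{g,1}l_{f,0}/\mu$ so that the $\|\gdphi(x')\|$ appears on the right-hand side. Your constant bookkeeping matches the stated $L_0,L_1$.
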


\begin{lemma}[Descent inequality, {\cite[Lemma 10]{hao2024bilevel}}] \label{lm:technical-descent}
Suppose \cref{ass:relax-smooth,ass:f-and-g} and~\ref{ass:f-and-g} hold. 
Then for any $x, x'$ we have
\begin{equation*}
    \Phi(x) \leq \Phi(x') + \langle \gdphi(x'),x-x' \rangle + \frac{L_0+L_1\|\gdphi(x')\|}{2}\|x-x'\|^2
    \quad \text{if} \quad
    \|x-x'\| \leq \frac{1}{\sqrt{2(1+l_{g,1}^2/\mu^2)(L_{x,1}^2+L_{y,1}^2)}}.
\end{equation*}
\end{lemma}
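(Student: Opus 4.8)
The plan is to combine the $(L_0,L_1)$-smoothness of $\Phi$ from \Cref{lm:technical-rs-phi} with the elementary integral form of the first-order Taylor remainder, exactly as in the proof of the classical descent lemma but with the uniform Lipschitz constant replaced by the position-dependent bound $L_0 + L_1\|\gdphi(x')\|$. Fix $x,x'$ with $\|x-x'\|\le r$, where $r := 1/\sqrt{2(1+l_{g,1}^2/\mu^2)(L_{x,1}^2+L_{y,1}^2)}$ is the radius appearing in \Cref{lm:technical-rs-phi,lm:technical-descent}. Since the assumptions guarantee $\Phi\in C^1$ (the hypergradient formula exhibits $\gdphi$ as a continuous function of $x$), the map $t\mapsto \Phi(x'+t(x-x'))$ is continuously differentiable on $[0,1]$, and the fundamental theorem of calculus gives
\begin{equation*}
\Phi(x) - \Phi(x') - \langle \gdphi(x'), x-x'\rangle = \int_0^1 \langle \gdphi(x'+t(x-x')) - \gdphi(x'),\, x-x'\rangle\, dt.
\end{equation*}

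Next I would bound the integrand pointwise in $t$. Write $x_t := x' + t(x-x')$ for $t\in[0,1]$; then $\|x_t - x'\| = t\|x-x'\| \le \|x-x'\| \le r$, so the pair $(x_t, x')$ is admissible for \Cref{lm:technical-rs-phi}, which yields $\|\gdphi(x_t) - \gdphi(x')\| \le (L_0 + L_1\|\gdphi(x')\|)\,\|x_t - x'\| = (L_0 + L_1\|\gdphi(x')\|)\, t\,\|x-x'\|$. Applying Cauchy--Schwarz inside the integral and substituting this estimate,
\begin{equation*}
\Phi(x) - \Phi(x') - \langle \gdphi(x'), x-x'\rangle \le \int_0^1 (L_0 + L_1\|\gdphi(x')\|)\, t\, \|x-x'\|^2\, dt = \frac{L_0 + L_1\|\gdphi(x')\|}{2}\,\|x-x'\|^2,
\end{equation*}
which is the claimed inequality after rearranging.

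There is essentially no genuine obstacle here: the only point needing a word of care is that \Cref{lm:technical-rs-phi} has to be invoked along the whole segment from $x'$ to $x$, which is legitimate because that segment lies in the closed ball of radius $\|x-x'\| \le r$ about $x'$ and the lemma only constrains the distance between the two evaluation points. Everything else — differentiability of $\Phi$, the integral identity, Cauchy--Schwarz, and the one-dimensional integration $\int_0^1 t\, dt = 1/2$ — is routine, so this lemma is simply the relaxed-smoothness analogue of the standard quadratic upper bound and follows directly from the gradient-Lipschitz estimate already established in \Cref{lm:technical-rs-phi}.
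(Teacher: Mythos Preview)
Your proof is correct and follows the standard route for deriving a descent inequality from a Lipschitz-type gradient bound: integrate along the segment and apply \Cref{lm:technical-rs-phi} pointwise. The paper does not give its own proof of this lemma---it simply cites \cite[Lemma~10]{hao2024bilevel}---and the argument there is exactly the one you wrote.
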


\begin{lemma}[{\cite[Lemma B.1]{khanduri2021near}}] \label{lm:hyper-var}
Under \cref{ass:relax-smooth,ass:f-and-g,ass:noise,ass:individual-noise}, the bias of the stochastic hypergradient estimate of the upper-level objective satisfies
\begin{equation*}
    \|\gdf(x,y) - \E_{\Bar{\xi}}[\gdf(x,y;\Bar{\xi})]\| \leq \frac{l_{g,1}l_{f,0}}{\mu}\left(1-\frac{\mu}{l_{g,1}}\right)^Q,
\end{equation*}
where $Q$ is the number of samples chosen to approximate the Hessian inverse. Moreover, 
we have
\begin{equation*}
    \E_{\Bar{\xi}}[\|\gdf(x,y;\Bar{\xi}) - \E_{\Bar{\xi}}[\gdf(x,y;\Bar{\xi})]\|^2] \leq \sigma_{f,1}^2 + \frac{3}{\mu^2}\left[(\sigma_{f,1}^2+l_{f,0}^2)(\sigma_{g,2}^2+2l_{g,1}^2) + \sigma_{f,1}^2l_{g,1}^2\right] \coloneqq \Bar{\sigma}^2.
\end{equation*}
\end{lemma}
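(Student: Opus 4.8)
The statement splits into a bias bound and a variance bound, and the plan is to establish them separately, reusing the computation behind \cite[Lemma B.1]{khanduri2021near} after first checking that \cref{ass:relax-smooth,ass:f-and-g,ass:noise,ass:individual-noise} supply everything that argument needs: strong convexity and smoothness of $g$ (so that $\mu I\preceq\gdyy g\preceq l_{g,1}I$ and $\|\gdxy g\|\le l_{g,1}$), the uniform bound $\|\gdy f\|\le l_{f,0}$, the noise bounds of \cref{ass:noise}, and --- crucially --- the almost-sure counterparts $\mu I\preceq\gdyy G(x,y;\zeta)\preceq l_{g,1}I$ and $\|\gdxy G(x,y;\zeta)\|\le l_{g,1}$ that follow from \cref{ass:individual-noise}.

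\emph{Bias.} First I would compute $\E_{\bar\xi}[\gdf(x,y;\bar\xi)]$. Since $\xi,\zeta^{(0)},\zeta^{(1)},\dots$ are mutually independent and the factors $I-\gdyy G(x,y;\zeta^{(i)})/l_{g,1}$ are i.i.d.\ with mean $\bar H:=I-\gdyy g(x,y)/l_{g,1}$, conditioning on $\mathsf{q}(Q)=q$ and then averaging $q$ uniformly over $\{0,\dots,Q-1\}$ gives
\[
\E_{\bar\xi}[\gdf(x,y;\bar\xi)] = \gdx f(x,y) - \gdxy g(x,y)\Big(\tfrac{1}{l_{g,1}}\textstyle\sum_{q=0}^{Q-1}\bar H^{q}\Big)\gdy f(x,y),
\]
i.e.\ the $Q$-truncated Neumann approximation of $\gdf(x,y)=\gdx f-\gdxy g[\gdyy g]^{-1}\gdy f$. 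Because $\|\bar H\|\le 1-\mu/l_{g,1}$, the discarded tail obeys $\big\|\tfrac{1}{l_{g,1}}\sum_{q\ge Q}\bar H^{q}\big\|\le\tfrac{1}{l_{g,1}}\cdot\tfrac{(1-\mu/l_{g,1})^Q}{\mu/l_{g,1}}=\tfrac{(1-\mu/l_{g,1})^Q}{\mu}$, and multiplying by $\|\gdxy g\|\le l_{g,1}$ and $\|\gdy f\|\le l_{f,0}$ yields the claimed bias bound.

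\emph{Variance.} I would write $\gdf(x,y;\bar\xi)-\E_{\bar\xi}[\gdf(x,y;\bar\xi)]=\big(\gdx F(x,y;\xi)-\gdx f(x,y)\big)-\big(h(x,y;\bar\xi)-\E_{\bar\xi}[h(x,y;\bar\xi)]\big)$, where $h$ denotes the second (Hessian-inverse-vector-product) term; the first difference contributes at most $\sigma_{f,1}^2$ by \cref{ass:noise}. For $h$, the plan is to condition on $\mathsf{q}(Q)=q$ and peel off the three independent noise sources --- the Jacobian sample $\gdxy G(x,y;\zeta^{(0)})$, the product $\prod_{i=1}^{q}(I-\gdyy G(x,y;\zeta^{(i)})/l_{g,1})$, and the gradient sample $\gdy F(x,y;\xi)$ --- by expanding $\gdxy G(\zeta^{(0)})\big(\prod_{i=1}^q\hat H_i\big)\gdy F(\xi)-\gdxy g\,\bar H^{q}\gdy f$ as a telescoping sum in which exactly one factor at a time is replaced by its mean. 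Taking conditional expectations and using the almost-sure bounds $\|\hat H_i\|\le 1-\mu/l_{g,1}$, $\|\gdxy g\|\le l_{g,1}$, $\|\gdy f\|\le l_{f,0}$ together with the variance bounds $\sigma_{g,2}^2$ (for $\gdxy G$, $\gdyy G$) and $\sigma_{f,1}^2$ (for $\gdy F$), the resulting sum over $q$ is geometric and convergent, and the prefactor $Q/l_{g,1}$ combined with the uniform averaging over $q\in\{0,\dots,Q-1\}$ collapses into the stated constant $\bar\sigma^2$; the factor $3/\mu^2$ and the groupings $(\sigma_{f,1}^2+l_{f,0}^2)(\sigma_{g,2}^2+2l_{g,1}^2)+\sigma_{f,1}^2l_{g,1}^2$ are precisely the bookkeeping of the three noise sources.

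\emph{Main obstacle.} The delicate step will be controlling the variance of the random-length matrix product $\prod_{i=1}^{\mathsf{q}(Q)}\hat H_i$ scaled by $Q/l_{g,1}$: one has to exploit the almost-sure contraction $\|\hat H_i\|\le 1-\mu/l_{g,1}$ (which \cref{ass:individual-noise} provides, since each realization of $G$ is $\mu$-strongly convex and $l_{g,1}$-smooth) so that the geometric sum over the truncation length stays $O(1/\mu)$, and one has to prevent the cross term $\E\langle\gdx F-\gdx f,\,h-\E_{\bar\xi}[h]\rangle$ (nonzero because $\gdx F$ and $h$ share the sample $\xi$) from inflating the leading $\sigma_{f,1}^2$ coefficient --- which I would handle by conditioning on $\xi$ first so the $\zeta$-randomness in $h$ integrates out, then applying Cauchy--Schwarz carefully. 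Since the final inequality is quoted from \cite[Lemma B.1]{khanduri2021near}, what remains is essentially the verification that \cref{ass:relax-smooth,ass:f-and-g,ass:noise,ass:individual-noise} imply the hypotheses used there.
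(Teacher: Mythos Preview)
The paper does not actually give its own proof of this lemma: it is stated in \Cref{sec:auxiliary-bilevel} with the citation \cite[Lemma B.1]{khanduri2021near} and is \emph{not} among the lemmas whose proofs appear in \Cref{sec:omit-proof} (only \Cref{lm:gdf-gdphi-bias} and \Cref{lm:hyper-estimator} are proved there). So there is nothing to compare against beyond the cited reference, and your reconstruction---computing $\E_{\bar\xi}[\gdf(x,y;\bar\xi)]$ via independence and the truncated Neumann series for the bias, then peeling off the three noise sources in the Hessian-inverse-vector-product term for the variance, all justified by verifying that \cref{ass:f-and-g,ass:noise,ass:individual-noise} supply the almost-sure eigenvalue bounds on $\gdyy G$ and $\gdxy G$---is the standard argument and matches what one finds in \cite{khanduri2021near}.
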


\begin{lemma} \label{lm:gdf-gdphi-bias}
Under \cref{ass:relax-smooth,ass:f-and-g,ass:noise,ass:individual-noise}, we have
\begin{equation*}
    \begin{aligned}
        \|\gdf(x,y) - \gdphi(x)\| \leq (\Bar{L} + L_{x,1}\|\gdphi(x)\|)\|y-y^*(x)\|,
    \end{aligned}
\end{equation*}
where constant $\Bar{L}$ is defined as
\begin{equation*}
    \Bar{L} \coloneqq L_{x,0} + L_{x,1}\frac{l_{g,1}l_{f,0}}{\mu} + \frac{l_{g,1}}{\mu}(L_{y,0}+L_{y,1}l_{f,0}) + l_{f,0}\frac{\mu l_{g,2}+l_{g,1}l_{g,2}}{\mu^2} \leq L_0.
\end{equation*}
\end{lemma}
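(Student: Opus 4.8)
The plan is to bound the difference $\|\gdf(x,y) - \gdphi(x)\|$ term-by-term by comparing the Neumann-truncated full hypergradient $\gdf(x,y)$ (evaluated at an arbitrary $y$) against the exact hypergradient $\gdphi(x) = \gdf(x, y^*(x))$. Recall that $\gdf(x,y)$ has the structure $\gdx f(x,y) - \gdxy g(x,y)[\nabla_{yy}^2 g(x,y)]^{-1}\gdy f(x,y)$ once we take the expectation over the Neumann sampling (so the $Q$-truncation disappears and only the dependence on $y$ versus $y^*(x)$ remains); more precisely $\gdf(x,y)$ is the population Neumann approximation and $\gdphi(x)$ is its exact counterpart at $y^*(x)$, so I will work with the three canonical pieces: (a) $\|\gdx f(x,y) - \gdx f(x,y^*(x))\|$; (b) the difference in the $\gdxy g$ factor; (c) the difference in the $[\nabla_{yy}^2 g]^{-1}\gdy f$ factor. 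Write $y^* = y^*(x)$ and assume without loss of generality that $\|y - y^*\|$ is small enough that Assumption~\ref{ass:relax-smooth} applies (if it is large, the bound holds trivially after rescaling, or one invokes the global structure as in \cite{hao2024bilevel}).

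First I would handle term (a) directly with Assumption~\ref{ass:relax-smooth}: $\|\gdx f(x,y) - \gdx f(x,y^*)\| \le (L_{x,0} + L_{x,1}\|\gdx f(x,y^*)\|)\|y - y^*\|$, and then relate $\|\gdx f(x,y^*)\|$ to $\|\gdphi(x)\|$ using the identity $\gdphi(x) = \gdx f(x,y^*) - \gdxy g(x,y^*)[\nabla_{yy}^2 g(x,y^*)]^{-1}\gdy f(x,y^*)$ together with $\|\gdxy g\|\le l_{g,1}$, $\|[\nabla_{yy}^2 g]^{-1}\|\le 1/\mu$, $\|\gdy f\|\le l_{f,0}$; this gives $\|\gdx f(x,y^*)\| \le \|\gdphi(x)\| + l_{g,1}l_{f,0}/\mu$, which is where the $L_{x,1}\frac{l_{g,1}l_{f,0}}{\mu}$ and the $L_{x,1}\|\gdphi(x)\|$ summands in $\bar L$ (resp. the coefficient of $\|\gdphi(x)\|$) come from. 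For term (b), I would use the Lipschitzness of $\gdxy g$ (Assumption~\ref{ass:f-and-g}(v), constant $l_{g,2}$) to get $\|\gdxy g(x,y) - \gdxy g(x,y^*)\| \le l_{g,2}\|y-y^*\|$, multiplied by $\|[\nabla_{yy}^2 g]^{-1}\gdy f\| \le l_{f,0}/\mu$, yielding a contribution $\frac{l_{g,2}l_{f,0}}{\mu}\|y-y^*\|$. For term (c), the standard resolvent identity $A^{-1} - B^{-1} = A^{-1}(B-A)B^{-1}$ with $A = \nabla_{yy}^2 g(x,y)$, $B = \nabla_{yy}^2 g(x,y^*)$, together with $\|A^{-1}\|,\|B^{-1}\|\le 1/\mu$ and the $l_{g,2}$-Lipschitzness of $\nabla_{yy}^2 g$, gives $\|[\nabla_{yy}^2 g(x,y)]^{-1} - [\nabla_{yy}^2 g(x,y^*)]^{-1}\| \le \frac{l_{g,2}}{\mu^2}\|y-y^*\|$; premultiplying by $\|\gdxy g(x,y)\|\le l_{g,1}$ and postmultiplying by $\|\gdy f\|\le l_{f,0}$ produces a $\frac{l_{g,1}l_{g,2}l_{f,0}}{\mu^2}\|y-y^*\|$ term. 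There is also the residual piece from changing $\gdy f$: $\|\gdy f(x,y) - \gdy f(x,y^*)\| \le (L_{y,0} + L_{y,1}\|\gdy f(x,y^*)\|)\|y-y^*\| \le (L_{y,0} + L_{y,1}l_{f,0})\|y-y^*\|$ by Assumption~\ref{ass:relax-smooth} and the bound $\|\gdy f\|\le l_{f,0}$, scaled by $\|\gdxy g\|\|[\nabla_{yy}^2 g]^{-1}\| \le l_{g,1}/\mu$, giving $\frac{l_{g,1}}{\mu}(L_{y,0}+L_{y,1}l_{f,0})\|y-y^*\|$.

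Summing the five contributions, the coefficient of $\|y-y^*\|$ is exactly
\[
L_{x,0} + L_{x,1}\frac{l_{g,1}l_{f,0}}{\mu} + \frac{l_{g,1}}{\mu}(L_{y,0}+L_{y,1}l_{f,0}) + \frac{l_{f,0}l_{g,2}}{\mu} + \frac{l_{g,1}l_{g,2}l_{f,0}}{\mu^2} = \bar L + L_{x,1}\|\gdphi(x)\| \text{ (absorbing the $\|\gdphi(x)\|$-dependent part separately)},
\]
matching the stated $\bar L = L_{x,0} + L_{x,1}\frac{l_{g,1}l_{f,0}}{\mu} + \frac{l_{g,1}}{\mu}(L_{y,0}+L_{y,1}l_{f,0}) + l_{f,0}\frac{\mu l_{g,2} + l_{g,1}l_{g,2}}{\mu^2}$, and the inequality $\bar L \le L_0$ is immediate by comparing with the formula for $L_0$ in Lemma~\ref{lm:technical-rs-phi} (which has an extra factor $\sqrt{1 + l_{g,1}^2/\mu^2} \ge 1$ out front). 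The main obstacle — really the only delicate point — is bookkeeping: being careful that the "gradient-norm" terms from Assumption~\ref{ass:relax-smooth} are consistently evaluated so that only $\|\gdphi(x)\|$ (and not $\|\gdx f(x,y)\|$ at the perturbed point) appears in the final bound, and making sure the smallness condition on $\|y - y^*\|$ needed for Assumption~\ref{ass:relax-smooth} is either satisfied or handled; everything else is routine matrix-norm estimation. I expect this lemma's proof is essentially a refinement of \cite[Lemma 9]{hao2024bilevel} restricted to the hypergradient-bias form rather than the $x$-to-$x'$ form.
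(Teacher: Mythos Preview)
Your proposal is correct and follows essentially the same decomposition and estimates as the paper's proof: split $\gdf(x,y)-\gdphi(x)$ into the $\gdx f$ difference plus a three-term telescoping of the $\gdxy g[\gdyy g]^{-1}\gdy f$ product, bound each piece with the stated Lipschitz and relaxed-smoothness constants, and convert $\|\gdx f(x,y^*)\|$ to $\|\gdphi(x)\|+l_{g,1}l_{f,0}/\mu$. One minor wording correction: $\gdf(x,y)$ in this paper denotes the exact-inverse surrogate $\gdx f(x,y)-\gdxy g(x,y)[\gdyy g(x,y)]^{-1}\gdy f(x,y)$ directly (not the expectation of the Neumann estimator, which still carries a $Q$-dependent bias handled separately in Lemma~\ref{lm:hyper-var}), but since you work with exactly this formula your argument is unaffected.
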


\begin{lemma} \label{lm:hyper-estimator}
Under \cref{ass:relax-smooth,ass:f-and-g,ass:noise,ass:individual-noise}, we have
\begin{enumerate}[(i)]
    \item For any fixed $y\in\R^{d_y}$ and any $x,x'\in\R^{d_x}$,
    \begin{equation*}
        \E_{\Bar{\xi}}\|\gdf(x,y;\Bar{\xi}) - \gdf(x',y;\Bar{\xi})\|^2 \leq (\Bar{L}_0^2+\Bar{L}_1^2\|\gdphi(x)\|^2)\|x-x'\|^2.
    \end{equation*}
    \item For any fixed $x\in\R^{d_x}$ and any $y,y'\in\R^{d_y}$,
    \begin{equation*}
        \E_{\Bar{\xi}}\|\gdf(x,y;\Bar{\xi}) - \gdf(x,y';\Bar{\xi})\|^2 \leq (\Bar{L}_0^2+\Bar{L}_1^2\|\gdphi(x)\|^2)\|y-y'\|^2.
    \end{equation*}
\end{enumerate}
In the above expressions, we define $\Bar{L}_0$ and $\Bar{L}_1$ as 
% \begin{small}
\begin{equation*}
    \begin{aligned}
        \Bar{L}_0 &= \left\{4\left(L_{x,0} + L_{x,1}\left(\frac{l_{g,1}l_{f,0}}{\mu} + \left(L_{x,0} + \frac{L_{x,1}l_{g,1}l_{f,0}}{\mu}\right)\|y-y^*(x)\|\right)\right)^2 \right.\\
        &\left.\quad\quad\quad\quad\quad\quad\quad\quad+ \frac{6Q}{2\mu l_{g,1}-\mu^2}\left(l_{g,1}^2(L_{y,0}+L_{y,1}l_{f,0})^2 + l_{f,0}^2l_{g,2}^2 + \frac{l_{f,0}^2l_{g,1}^2l_{g,2}^2Q^2}{(l_{g,1}-\mu)^2}\right)\right\}^{1/2}, \\
        \Bar{L}_1 &= 2L_{x,1}(1+L_{x,1}\|y-y^*(x)\|).
    \end{aligned} 
\end{equation*}
% \end{small}
\end{lemma}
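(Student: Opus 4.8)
Write $\gdf(x,y;\Bar\xi)$ as the difference of a \emph{direct term} $\gdx F(x,y;\xi)$ and a \emph{Neumann term} $\tfrac{Q}{l_{g,1}}\gdxy G(x,y;\zeta^{(0)})\prod_{i=1}^{\mathsf{q}(Q)}\bigl(I-\gdyy G(x,y;\zeta^{(i)})/l_{g,1}\bigr)\gdy F(x,y;\xi)$. The plan is to bound the change of each of these two terms under a perturbation of the first block $x$ (for (i)) or the second block $y$ (for (ii)) of $z=(x,y)$, and then combine with $\|a-b\|^2\le 2\|a_1-b_1\|^2+2\|a_2-b_2\|^2$. Items (i) and (ii) are structurally identical: the only difference is which block of $z$ moves, hence whether the $(L_{x,0},L_{x,1})$- or $(L_{y,0},L_{y,1})$-relaxed smoothness of $F$ and which directional $l_{g,2}$-Lipschitzness of $\gdxy g,\gdyy g$ is invoked, so I describe (i) and obtain (ii) by the obvious substitution. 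Each relaxed-smoothness inequality below is used with the two evaluation points inside the neighborhood of \cref{ass:relax-smooth}; this is harmless in \Cref{alg:bilevel}, where consecutive $x$-iterates are $\eta=\widetilde{\Theta}(\epsilon^2)$ apart and consecutive $\hat y$-iterates are $O(\epsilon^2)$ apart by \cref{lem:averaging}, but it should be recorded.

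\textbf{Direct term.} By \cref{ass:individual-noise} with \cref{ass:f-and-g}(i), each realization $F(\cdot,\cdot;\xi)$ is $(L_{x,0},L_{x,1},L_{y,0},L_{y,1})$-smooth, so $\|\gdx F(x,y;\xi)-\gdx F(x',y;\xi)\|\le(L_{x,0}+L_{x,1}\|\gdx F(x,y;\xi)\|)\|x-x'\|$; squaring and using $(a+b)^2\le2a^2+2b^2$ leaves $\E_\xi\|\gdx F(x,y;\xi)\|^2$ to control. I would bound $\|\gdx F(x,y;\xi)\|$ following the route of \cref{lm:gdf-gdphi-bias}/\cref{lm:technical-rs-phi}: split $\gdx F(x,y;\xi)=\gdx F(x,y^*(x);\xi)+\bigl(\gdx F(x,y;\xi)-\gdx F(x,y^*(x);\xi)\bigr)$, control the second bracket by the $y$-relaxed smoothness of the realization, and relate $\gdx F(x,y^*(x);\xi)$ to $\gdphi(x)$ through the hypergradient identity $\gdphi(x)=\gdx f(x,y^*(x))-\gdxy g[\gdyy g]^{-1}\gdy f(x,y^*(x))$ using $\|[\gdyy g]^{-1}\|\le1/\mu$, $\|\gdxy g\|\le l_{g,1}$, $\|\gdy F(\cdot,\cdot;\xi)\|\le l_{f,0}$ (\cref{ass:f-and-g}(ii)--(iv) under \cref{ass:individual-noise}) and \cref{ass:noise}. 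This gives $\|\gdx F(x,y;\xi)\|\lesssim a+(1+L_{x,1}\|y-y^*(x)\|)\|\gdphi(x)\|$ with $a$ a constant in $l_{f,0},l_{g,1},\mu,\sigma_{f,1},\|y-y^*(x)\|$; expanding the square then produces the $4(\,\cdot\,)^2$-type term of $\Bar L_0^2$ and the coefficient $\Bar L_1=2L_{x,1}(1+L_{x,1}\|y-y^*(x)\|)$ of $\|\gdphi(x)\|$, with the stochastic-gradient variance and $\|y-y^*(x)\|$ residuals folded into $\Bar L_0$.

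\textbf{Neumann term.} Under \cref{ass:f-and-g}(iii)--(iv) with \cref{ass:individual-noise}, every factor $I-\gdyy G(x,y;\zeta^{(i)})/l_{g,1}$ is PSD with operator norm $\le 1-\mu/l_{g,1}<1$, while $\|\gdxy G(x,y;\zeta^{(0)})\|\le l_{g,1}$ and $\|\gdy F(x,y;\xi)\|\le l_{f,0}$. I would expand the change of the Neumann term as a three-way split $ABC-A'B'C'$ ($A$ the $\gdxy G$ head times $Q/l_{g,1}$, $B$ the product of the $\mathsf{q}(Q)$ middle factors, $C$ the $\gdy F$ tail), with the change of $B$ handled by a further telescoping identity over its factors, so that every surviving matrix factor has operator norm $\le1$; the single-factor differences are bounded by the per-realization $l_{g,2}$-Lipschitzness of $\gdxy g,\gdyy g$ (\cref{ass:f-and-g}(v)) and by the $(L_{y,0}+L_{y,1}l_{f,0})$-Lipschitzness in $x$ of $\gdy F$ (relaxed smoothness combined with $\|\gdy F\|\le l_{f,0}$). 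Squaring, applying Cauchy--Schwarz, and then taking $\E_{\Bar\xi}=\E_{\mathsf{q}(Q)}\E[\,\cdot\mid\mathsf{q}(Q)\,]$ over $\mathsf{q}(Q)\sim\mathrm{Uniform}\{0,\dots,Q-1\}$ and the fresh i.i.d.\ samples is what turns the powers $(1-\mu/l_{g,1})^{2\mathsf{q}(Q)}$ into convergent geometric sums (giving the $\tfrac{1}{2\mu l_{g,1}-\mu^2}$-type factors) and yields the $\tfrac{6Q}{2\mu l_{g,1}-\mu^2}(\,\cdot\,)$ contribution to $\Bar L_0^2$; this part carries no $\|\gdphi(x)\|$-dependence because $\gdy F$ is bounded. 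Adding the direct- and Neumann-term bounds with the leading factor $2$ reproduces the stated $\Bar L_0,\Bar L_1$. Item (ii) runs verbatim with the two blocks of $z$ interchanged.

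\textbf{Main obstacle.} The delicate step is the Neumann term: engineering the product-difference/telescoping decomposition so that every surviving matrix factor is a contraction, and then carrying out the expectation over the \emph{random} truncation length $\mathsf{q}(Q)$ (together with the independent samples at each factor) so that the geometric sums and the $Q$-dependence land exactly as in the definition of $\Bar L_0$. The remaining work --- reducing the realization gradient norms to $\|\gdphi(x)\|$ and tracking how the $\|y-y^*(x)\|$- and $L_{x,1}$-weighted pieces split between the $\|\gdphi(x)\|$-free constant $\Bar L_0$ and the coefficient $\Bar L_1$ --- is the pattern already established in \cref{lm:gdf-gdphi-bias} and is just repeated use of $(a+b)^2\le2a^2+2b^2$ and Cauchy--Schwarz.
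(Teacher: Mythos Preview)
Your plan is correct and mirrors the paper's proof: split into the direct term $\gdx F$ and the Neumann term, handle the Neumann term by the three-way $ABC-A'B'C'$ decomposition with a telescoping identity on the middle product and the contraction bound $\|I-\gdyy G/l_{g,1}\|\le 1-\mu/l_{g,1}$, then average over the random truncation $\mathsf{q}(Q)$ to turn the geometric weights into the $\tfrac{1}{2\mu l_{g,1}-\mu^2}$ factors; finally reduce $\|\gdx f(x,y)\|$ to $\|\gdphi(x)\|$ via the hypergradient identity plus one application of $y$-relaxed smoothness at $y^*(x)$.

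One small discrepancy to be aware of: for the direct term the paper anchors the relaxed-smoothness inequality at the \emph{population} gradient $\|\gdx f(x,y)\|$ and proceeds deterministically, so no $\sigma_{f,1}$ appears in $\bar L_0$. Your plan keeps the stochastic anchor $\|\gdx F(x,y;\xi)\|$ and controls its second moment through \cref{ass:noise}, which will add a $\sigma_{f,1}^2$-type term to your $\bar L_0$. Your reading of \cref{ass:individual-noise} is arguably the more careful one (per-realization relaxed smoothness naturally gives a stochastic anchor), but it will not reproduce the stated constant verbatim; this is harmless downstream since only $\bar L_0=\widetilde O(1)$ is ever used.
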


Note that in \Cref{lm:hyper-estimator}, constant $\Bar{L}_0$ depends on the value of $\|y-y^*(x)\|$. When we consider this term in \Cref{alg:bilevel}, it turns into $\|y_t-y_t^*\|$ or $\|\hat{y}_t-y_t^*\|$, which are both as small as $O(\epsilon)$ (and thus bounded) with high probability by \cref{lem:optionI,lem:optionII,lem:averaging}. In other words, we can treat this term as another constant for our algorithm and analysis.

\section{Proofs of Results in \Cref{sec:nesterov}}
\label{sec:proof-single-level}
For convenience, we will restate a few concepts included in \Cref{sec:nesterov} here. We consider the sequences of stochastic optimization problems 
\begin{equation} \label{eq:obj-time-drift}
    \min_{w\in\R^d} \phi_t(w) 
\end{equation}
indexed by time $t\in\mathbb{N}$. We denote the minimizer and the minimal value of $\phi_t$ as $w_t^*$ and $\phi_t^*$, and we define the \textit{minimizer drift} at time $t$ to be $\Delta_t\coloneqq \|w_t^*-w_{t+1}^*\|$. With a slight abuse of notation, we consider the SNAG algorithm applied to the sequence $\{\phi_t\}_{t=1}^{T}$, where $T$ is the total number of iterations:
\begin{equation}
\label{eq:NAG-single}
\begin{aligned}
    z_t &= w_t + \gamma(w_t-w_{t-1})  \\
    w_{t+1} &= w_t + \gamma(w_t-w_{t-1}) - \alpha g_t, 
\end{aligned}
\end{equation}
where $g_t=\nabla \phi_t(z_t;\xi_t)$ is the stochastic gradient evaluated at $z_t$ with random sample $\xi_t$. Define $\varepsilon_t=g_t-\nabla \phi_t(z_t)$ as the stochastic gradient noise at $t$-th iteration. Define $\mathcal{H}_t=\sigma(\xi_1,\ldots,\xi_{t-1})$ as the filtration, which is the $\sigma$-algebra generated by all random variables until $t$-th iteration. We will make the following standard assumption, as illustrated below~\footnote{Note that \cref{ass:sc-smooth-single-level,ass:noise-drift-single-level} are more concrete than that in \Cref{sec:nesterov}.}.

\begin{assumption} \label{ass:sc-smooth-single-level}
The sequences of time-varying functions satisfy that, each function $\phi_t : \R^d \rightarrow \R$ is $\mu$-strongly convex and $L$-smooth for some constants $\mu,L > 0$.
\end{assumption}

\begin{assumption}[Sub-Gaussian drift and noise] \label{ass:noise-drift-single-level}
There exists constants $\Delta,\sigma>0$ such that the following holds for all $t\geq0$:
\begin{enumerate}[(i)]
    \item (\textbf{Drift}) The drift $\Delta_t^2$ is sub-exponential conditioned on $\gH_t$ with parameter $\Delta^2$:
    \begin{equation*}
        \E\left[\exp(\lambda\Delta_t^2) \mid \gH_t\right] \leq \exp(\lambda\Delta^2)
        \quad\text{for all}\quad
        0\leq \lambda\leq \Delta^{-2}.
    \end{equation*}
    \item (\textbf{Noise}) The noise $\varepsilon_t$ is norm sub-Gaussian conditioned on $\gH_t$ with parameter $\sigma/2$:
    \begin{equation*}
        \pr\left\{\|\varepsilon_t\|\geq \varrho \mid \gH_t \right\} \leq 2\exp(-2\varrho^2/\sigma^2)
        \quad\text{for all}\quad
        \varrho > 0.
    \end{equation*}
\end{enumerate}
\end{assumption}

The following lemma characterize the one-step improvement for stochastic Nesterov accelerated gradient method. Although part of our analysis is similar to \cite{chen2023convergence,aybat2020robust}, our final goal is quite different: we aim to derive a careful formulation (see \eqref{eq:one-step-single-level}) such that we can apply \Cref{lm:recursive-control} to recursively control the moment generating function of $V_t$ with distributional drift, thus leading to a high probability bound for $V_t$ at \textit{anytime} (see \Cref{lm:dist-track-single-level}), while \cite{chen2023convergence,aybat2020robust} only show the convergence in expectation without distributional drift. 

\begin{lemma}[Distance recursion, with drift] \label{lm:distance-recursion-single-level}
Suppose that \cref{ass:sc-smooth-single-level,ass:noise-drift-single-level} hold. Let $\{w_t\}$ be the iterates produced by update rule \eqref{eq:NAG-single} with constant learning rate $\alpha\leq 1/2L$ and set constants $\gamma, \rho>0$, and matrix $\rmP\in\R^{2d\times 2d}$ as
\begin{equation} \label{eq:gamma-rhp-P}
    \gamma = \frac{1-\sqrt{\mu\alpha}}{1+\sqrt{\mu\alpha}},
    \quad
    \rho^2 = 1-\sqrt{\mu\alpha},
    \quad
    \rmP = \frac{1}{2\alpha}
    \begin{bmatrix}
        1 & \sqrt{\mu\alpha}-1 \\
        \sqrt{\mu\alpha}-1 & (1-\sqrt{\mu\alpha})^2
    \end{bmatrix} \otimes \rmI_d.
\end{equation}
Define $\theta_t = [(w_t-w_t^*)^{\top}, (w_{t-1}-w_t^*)^{\top}]^{\top} \in \R^{2d}$, also define the potential function and $u_{t,1}, u_{t,2}$ as 
\begin{equation} \label{eq:potential-V}
    V_t = \theta_t^{\top}\rmP\theta_t + \phi_t(w_t) - \phi_t(w_t^*), 
    \quad\quad
    u_{t,1} = \frac{w_t-w_t^*}{\|w_t-w_t^*\|},
    \quad\quad
    u_{t,2} = \frac{z_t-w_t}{\|z_t-w_t\|}.
\end{equation}
Then for all $t\geq0$, it holds that
\begin{equation} \label{eq:all-d}
    \begin{aligned}
        \begin{bmatrix}
            w_{t+1} - w_t^* \\
            w_t - w_t^* 
        \end{bmatrix}^{\top}
        \rmP
        \begin{bmatrix}
            w_{t+1} - w_t^* \\
            w_t - w_t^* 
        \end{bmatrix}
        &+ \phi_t(w_{t+1}) - \phi_t(w_t^*) \\
        &\leq \rho^2V_t - \alpha(1-L\alpha)\langle \nabla\phi_t(z_t), \varepsilon_t \rangle + \frac{L\alpha^2}{2}\|\varepsilon_t\|^2.
    \end{aligned}
\end{equation}
Specifically, if $\phi_t(w) \coloneqq \frac{\mu}{2}\|w-w_t^*\|^2$, then we have
\begin{equation} \label{eq:one-step-single-level}
    \begin{aligned}
        V_{t+1}
        &\leq \left(1-\frac{\sqrt{\mu\alpha}}{2}\right)V_t + \left(1+\frac{\sqrt{\mu\alpha}}{4}\right) 
        \left[- \sqrt{2\mu}\alpha(1-L\alpha)\langle u_{t,1}, \varepsilon_t \rangle\sqrt{\frac{\mu}{2}}\|w_t-w_t^*\| \right.\\
        &\left.- \frac{2\mu\sqrt{2\alpha}}{1+\sqrt{\mu\alpha}}\alpha(1-L\alpha)\langle u_{t,2}, \varepsilon_t \rangle\frac{1+\sqrt{\mu\alpha}}{2\sqrt{2\alpha}}\|z_t-w_t\| + \frac{\alpha(1+L\alpha)}{2}\|\varepsilon_t\|^2\right] + \frac{20\mu\Delta_t^2}{\sqrt{\mu\alpha}}.
    \end{aligned}
\end{equation}
\end{lemma}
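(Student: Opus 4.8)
The plan is to establish the generic inequality \eqref{eq:all-d} via a standard Lyapunov/contraction argument for SNAG in the style of \cite{aybat2020robust,chen2023convergence}, and then specialize to the quadratic $\phi_t(w)=\tfrac{\mu}{2}\|w-w_t^*\|^2$ to obtain \eqref{eq:one-step-single-level}, where the extra care is needed to (i) isolate the noise terms in a form suitable for \Cref{lm:recursive-control} and (ii) handle the minimizer drift $w_t^*\neq w_{t+1}^*$. First I would work with a \emph{fixed} reference point $w_t^*$ (the minimizer of the current $\phi_t$, not of $\phi_{t+1}$) so that the whole analysis is about the single function $\phi_t$; the drift only enters when we re-index $\theta_{t+1}$ against $w_{t+1}^*$ at the very end. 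Writing the SNAG step as $w_{t+1}=z_t-\alpha g_t$ with $g_t=\nabla\phi_t(z_t)+\varepsilon_t$, I would expand $\phi_t(w_{t+1})$ using $L$-smoothness (descent lemma) to get $\phi_t(w_{t+1})\leq \phi_t(z_t)-\alpha\langle\nabla\phi_t(z_t),g_t\rangle+\tfrac{L\alpha^2}{2}\|g_t\|^2$, and separately bound $\phi_t(z_t)$ in terms of $\phi_t(w_t)$ and $\phi_t(w_t^*)$ via $\mu$-strong convexity evaluated at the extrapolated point. Combining these with the quadratic form $\theta_{t+1}^\top\rmP\theta_{t+1}$ written out explicitly (using the Kronecker structure $\rmP = \tfrac{1}{2\alpha}M\otimes\rmI_d$ with $M=\begin{bmatrix}1 & \sqrt{\mu\alpha}-1\\ \sqrt{\mu\alpha}-1 & (1-\sqrt{\mu\alpha})^2\end{bmatrix}$) and substituting $w_{t+1}-w_t^*=(w_t-w_t^*)+\gamma(w_t-w_{t-1})-\alpha g_t$, the deterministic part should telescope into $\rho^2 V_t$ with $\rho^2=1-\sqrt{\mu\alpha}$, after the choices $\gamma=\tfrac{1-\sqrt{\mu\alpha}}{1+\sqrt{\mu\alpha}}$, $\alpha\le 1/2L$. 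This is the routine-but-lengthy algebraic core; the cross terms involving $\varepsilon_t$ collect into $-\alpha(1-L\alpha)\langle\nabla\phi_t(z_t),\varepsilon_t\rangle$ and the quadratic noise term collects into $\tfrac{L\alpha^2}{2}\|\varepsilon_t\|^2$ (using $\|g_t\|^2=\|\nabla\phi_t(z_t)\|^2+2\langle\nabla\phi_t(z_t),\varepsilon_t\rangle+\|\varepsilon_t\|^2$ and absorbing $\|\nabla\phi_t(z_t)\|^2$ into the negative definite part), giving \eqref{eq:all-d}.

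Next I would specialize to $\phi_t(w)=\tfrac{\mu}{2}\|w-w_t^*\|^2$, where $\nabla\phi_t(z_t)=\mu(z_t-w_t^*)$ and $\phi_t(w_t)-\phi_t(w_t^*)=\tfrac{\mu}{2}\|w_t-w_t^*\|^2$; here one can be completely explicit. Decompose $z_t-w_t^* = (w_t-w_t^*)+(z_t-w_t)$ so that $\langle\nabla\phi_t(z_t),\varepsilon_t\rangle = \mu\langle w_t-w_t^*,\varepsilon_t\rangle + \mu\langle z_t-w_t,\varepsilon_t\rangle$, and rewrite each inner product using the unit vectors $u_{t,1},u_{t,2}$ from \eqref{eq:potential-V} times the corresponding norms — this is exactly the bookkeeping that produces the two bracketed noise terms in \eqref{eq:one-step-single-level}, whose scalar prefactors $\sqrt{2\mu}\alpha(1-L\alpha)$ and $\tfrac{2\mu\sqrt{2\alpha}}{1+\sqrt{\mu\alpha}}\alpha(1-L\alpha)$ are arranged so that the remaining factors $\sqrt{\mu/2}\|w_t-w_t^*\|$ and $\tfrac{1+\sqrt{\mu\alpha}}{2\sqrt{2\alpha}}\|z_t-w_t\|$ are (up to constants) bounded by $\sqrt{V_t}$, as \Cref{lm:recursive-control} requires. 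The $\|\varepsilon_t\|^2$ coefficient is consolidated to $\tfrac{\alpha(1+L\alpha)}{2}$. Finally, to pass from $\theta_{t+1}$ measured against $w_t^*$ to $V_{t+1}$ measured against $w_{t+1}^*$, I would apply the $\rmP$-norm Young's inequality \Cref{lm:P-norm-young} with a parameter $c\asymp\sqrt{\mu\alpha}$ to split $\|\theta_{t+1}^{(w_t^*)}\|_{\rmP}^2 \le (1+c)\|\cdot\|_\rmP^2 + (1+1/c)\|\text{shift}\|_\rmP^2$ where the shift has norm $O(\Delta_t)$ (both blocks shifted by $w_{t+1}^*-w_t^*$), and likewise adjust the function-value part $\phi_{t+1}(w_{t+1})-\phi_{t+1}(w_{t+1}^*)$ versus $\phi_t(w_{t+1})-\phi_t(w_t^*)$, which for the quadratic differs by $O(\mu\Delta_t\|w_{t+1}-w_t^*\|+\mu\Delta_t^2)$ and is again controlled by $c\cdot(\text{potential}) + O(\mu\Delta_t^2/\sqrt{\mu\alpha})$. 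The $(1+c)\rho^2 = (1+c)(1-\sqrt{\mu\alpha}) \le 1-\tfrac{\sqrt{\mu\alpha}}{2}$ for suitable $c$, and all drift leftovers collect into the stated $\tfrac{20\mu\Delta_t^2}{\sqrt{\mu\alpha}}$ term; the $(1+\tfrac{\sqrt{\mu\alpha}}{4})$ prefactor on the noise bracket comes from the $(1+c)$ factor applied to the noise-linear and noise-quadratic pieces.

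The main obstacle I anticipate is getting the \emph{constants} to line up cleanly: the choice of Young's-inequality parameter $c$ must simultaneously (a) shrink $(1+c)(1-\sqrt{\mu\alpha})$ below $1-\sqrt{\mu\alpha}/2$, (b) keep the drift blow-up factor $(1+1/c)$ from exceeding $O(1/\sqrt{\mu\alpha})$ so the drift term stays $O(\mu\Delta_t^2/\sqrt{\mu\alpha})$ rather than $O(\mu\Delta_t^2/(\mu\alpha))$, and (c) keep the noise-bracket prefactor at $1+\sqrt{\mu\alpha}/4$. The natural choice $c=\sqrt{\mu\alpha}/4$ together with $\alpha\le 1/2L$ (so that $1-L\alpha\ge 1/2$ and $1+L\alpha\le 3/2$) should do it, but verifying that every absorbed cross-term (the $\langle w_{t+1}-w_t^*,\,w_{t+1}^*-w_t^*\rangle$-type terms, and the strong-convexity slack in bounding $\phi_t(z_t)$) fits under the budget requires patient tracking; I would keep all these estimates symbolic until the end and only then plug in $c=\sqrt{\mu\alpha}/4$. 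The strong-convexity/smoothness ratio does not need to be $1$ here — we never assume $\mu=L$ — so the contraction rate is governed by $\sqrt{\mu\alpha}$ (not a true condition number), which is exactly why the learning rate $\alpha$ rather than $1/\kappa$ appears, and this is consistent with the Lemma~\ref{lm:option-1-single} rate $1-\sqrt{\mu\alpha}/4$ obtained after iterating \eqref{eq:one-step-single-level} and applying \Cref{lm:recursive-control}.
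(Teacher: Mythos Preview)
Your plan is correct and tracks the paper's proof closely: establish \eqref{eq:all-d} via the Lyapunov/LMI argument of \cite{aybat2020robust,hu2017dissipativity} (the paper invokes exactly the matrix inequality $\begin{bmatrix}\rmA^\top\rmP\rmA-\rho^2\rmP & \rmA^\top\rmP\rmB\\ \rmB^\top\rmP\rmA & \rmB^\top\rmP\rmB\end{bmatrix}\preceq \rho^2\rmX_1+(1-\rho^2)\rmX_2$), then specialize to the quadratic $\phi_t$, decompose $\nabla\phi_t(z_t)=\mu(z_t-w_t^*)$ along $u_{t,1},u_{t,2}$, and split the drift via the $\rmP$-norm Young inequality with $c=\sqrt{\mu\alpha}/4$.

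The one place you diverge from the paper is the treatment of the function-value drift $\phi_{t+1}(w_{t+1})-\phi_{t+1}^*-(1+c)(\phi_t(w_{t+1})-\phi_t^*)$. You propose to bound the difference by $O(\mu\Delta_t\|w_{t+1}-w_t^*\|)$ and absorb $\tfrac{\mu}{2}\|w_{t+1}-w_t^*\|^2$ back into the left-hand side of \eqref{eq:all-d}, which is already controlled by $\rho^2V_t+\text{noise}$. The paper instead bounds this term by $\sqrt{2\mu}\,\Delta_t\sqrt{V_{t+1}}$ (using $\|w_{t+1}-w_{t+1}^*\|\le\sqrt{2/\mu}\sqrt{V_{t+1}}$), obtains an inequality of the form $V_{t+1}\le D + \text{const}\cdot\Delta_t\sqrt{V_{t+1}}$, solves the resulting quadratic in $\sqrt{V_{t+1}}$, and then applies Young's inequality once more. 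Your route is more direct and avoids the quadratic-solve step; the paper's route keeps the noise prefactor at exactly $(1+\sqrt{\mu\alpha}/4)$ after the second Young, whereas your absorption picks up an extra $(1+c)$ factor and naturally lands at $(1+\sqrt{\mu\alpha}/2)$ on the noise bracket. This only affects constants, so your plan still yields the lemma up to the precise numerical factor $20$ in the drift term and the $1+\sqrt{\mu\alpha}/4$ on the bracket. (Minor slip: your displayed Young inequality has the roles of $\theta_{t+1}^{(w_t^*)}$ and $\theta_{t+1}^{(w_{t+1}^*)}$ swapped; the bound you need is on the latter in terms of the former.)
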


\begin{proof}[Proof of \Cref{lm:distance-recursion-single-level}]
We first apply \Cref{lm:P-norm-young} with 
\begin{equation*}
    v_1+v_2 = \theta_{t+1} = 
    \begin{bmatrix}
        w_{t+1}-w_{t+1}^* \\
        w_t - w_{t+1}^*
    \end{bmatrix},
    \quad
    v_1 = 
    \begin{bmatrix}
        w_{t+1} - w_t^* \\
        w_t - w_t^* 
    \end{bmatrix},
    \quad
    v_2 = 
    \begin{bmatrix}
        w_t^* - w_{t+1}^* \\
        w_t^* - w_{t+1}^*
    \end{bmatrix},
    \quad
    \rmQ = \rmP
\end{equation*}
to obtain
\begin{small}
\begin{equation*}
    \begin{aligned}
        &V_{t+1} 
        = \theta_{t+1}^{\top}\rmP\theta_{t+1} + \phi_{t+1}(w_{t+1}) - \phi_{t+1}(w_{t+1}^*) \\
        &\leq \left(1+\frac{\sqrt{\mu\alpha}}{4}\right)
        \begin{bmatrix}
            w_{t+1} - w_t^* \\
            w_t - w_t^* 
        \end{bmatrix}^{\top}
        \rmP
        \begin{bmatrix}
            w_{t+1} - w_t^* \\
            w_t - w_t^* 
        \end{bmatrix}
        + 
        \left(1+\frac{4}{\sqrt{\mu\alpha}}\right)
        \begin{bmatrix}
            w_t^* - w_{t+1}^* \\
            w_t^* - w_{t+1}^*
        \end{bmatrix}^{\top}
        \rmP
        \begin{bmatrix}
            w_t^* - w_{t+1}^* \\
            w_t^* - w_{t+1}^*
        \end{bmatrix}
        + \phi_{t+1}(w_{t+1}) - \phi_{t+1}(w_{t+1}^*) \\
        &= 
        \underbrace{\left(1+\frac{\sqrt{\mu\alpha}}{4}\right)\left\{
        \begin{bmatrix}
            w_{t+1} - w_t^* \\
            w_t - w_t^* 
        \end{bmatrix}^{\top}
        \rmP
        \begin{bmatrix}
            w_{t+1} - w_t^* \\
            w_t - w_t^* 
        \end{bmatrix} 
        + \phi_t(w_{t+1}) - \phi_t(w_t^*)
        \right\}}_{(A)} \\
        &\quad+ \underbrace{\phi_{t+1}(w_{t+1}) - \phi_{t+1}(w_{t+1}^*) - \left(1+\frac{\sqrt{\mu\alpha}}{4}\right)(\phi_t(w_{t+1}) - \phi_t(w_t^*))}_{(B)} 
        + \underbrace{\left(1+\frac{4}{\sqrt{\mu\alpha}}\right)
        \begin{bmatrix}
            w_t^* - w_{t+1}^* \\
            w_t^* - w_{t+1}^*
        \end{bmatrix}^{\top}
        \rmP
        \begin{bmatrix}
            w_t^* - w_{t+1}^* \\
            w_t^* - w_{t+1}^*
        \end{bmatrix}}_{(C)}.
    \end{aligned}
\end{equation*}
\end{small}%
Now we bound terms $(A)$, $(B)$ and $(C)$ individually. 

\paragraph{Bounding $(A)$.}
Let us define vector $\omega_t\in\R^{d}$ and matrices $\rmA\in\R^{2d\times 2d}, \rmB\in\R^{2d\times d}$ as 
% $\rmA,\rmB\in\R^{2d\times 2d}$ as 
\begin{equation*}
    % \omega_t = 
    % \begin{bmatrix}
    %     \nabla \phi_t(z_t) \\
    %     \varepsilon_t
    % \end{bmatrix},
    \omega_t = \nabla \phi_t(z_t),
    \quad
    \rmA = 
    \begin{bmatrix}
        1+\gamma & -\gamma \\
        1 & 0
    \end{bmatrix} \otimes \rmI_d,
    \quad
    % \rmB = 
    % \begin{bmatrix}
    %     -\alpha & -\alpha \\
    %     0 & 0
    % \end{bmatrix} \otimes \rmI_d.
    \rmB = 
    \begin{bmatrix}
        -\alpha \\
        0 
    \end{bmatrix} \otimes \rmI_d.
\end{equation*}
By \eqref{eq:NAG-single} we have 
\begin{equation} \label{eq:dynamic}
    \begin{aligned}
        [(w_{t+1} - w_t^*)^{\top}, (w_t - w_t^*)^{\top}]^{\top} = \rmA\theta_t + \rmB(\omega_t + \varepsilon_t)
    \end{aligned}
\end{equation}
% $[(w_{t+1} - w_t^*)^{\top}, (w_t - w_t^*)^{\top}]^{\top} = \rmA\theta_t + \rmB(\omega_t + \varepsilon_t)$. 
% $[(w_{t+1} - w_t^*)^{\top}, (w_t - w_t^*)^{\top}]^{\top} = \rmA\theta_t + \rmB\omega_t$. 
% \begin{equation*}
%     \begin{aligned}
%         \begin{bmatrix}
%             w_{t+1} - w_t^* \\
%             w_t - w_t^* 
%         \end{bmatrix}
%         = \rmA\theta_t + \rmB\omega_t.
%     \end{aligned}
% \end{equation*}
Since $\phi_t$ is $\mu$-strongly convex, then
\begin{equation} \label{eq:phi-t-SC}
    \phi_t(w_t) - \phi_t(z_t) \geq \langle \nabla\phi_t(z_t), w_t-z_t \rangle + \frac{\mu}{2}\|w_t-z_t\|^2.
\end{equation}
By $L$-smoothness of $\phi_t$ and the fact that $w_{t+1} = z_t - \alpha g_t$, we have
\begin{equation} \label{eq:phi-t-smooth}
    \begin{aligned}
        \phi_t(z_t) &- \phi_t(w_{t+1})
        \geq \langle \nabla\phi_t(z_t), z_t-w_{t+1} \rangle - \frac{L}{2}\|w_{t+1}-z_t\|^2 \\
        &= \alpha\langle \nabla\phi_t(z_t), g_t \rangle - \frac{L\alpha^2}{2}\|g_t\|^2 \\
        &= \alpha\|\nabla\phi_t(z_t)\|^2 + \alpha\langle \nabla\phi_t(z_t), \varepsilon_t \rangle - \frac{L\alpha^2}{2}(\|\nabla\phi_t(z_t)\|^2 + 2\langle \nabla\phi_t(z_t), \varepsilon_t \rangle + \|\varepsilon_t\|^2) \\
        &= \frac{\alpha}{2}(2-L\alpha)\|\nabla\phi_t(z_t)\|^2 - \frac{L\alpha^2}{2}\|\varepsilon_t\|^2 + \alpha(1-L\alpha)\langle \nabla\phi_t(z_t), \varepsilon_t \rangle,
    \end{aligned}
\end{equation}
where we use $g_t = \nabla\phi_t(z_t) + \varepsilon_t$ in the second equality. Noting that by \eqref{eq:NAG-single} we have
\begin{equation*}
    w_t-z_t = -\gamma(w_t-w_t^*) + \gamma(w_{t-1}-w_t^*),
\end{equation*}
and combining \eqref{eq:phi-t-SC} and \eqref{eq:phi-t-smooth} we obtain 
\begin{equation} \label{eq:X_1}
    \begin{aligned}
        \phi_t(w_t) - \phi_t(w_{t+1}) 
        &\geq \langle \nabla\phi_t(z_t), w_t-z_t \rangle + \frac{\mu}{2}\|w_t-z_t\|^2 \\
        &\quad+ \frac{\alpha}{2}(2-L\alpha)\|\nabla\phi_t(z_t)\|^2 - \frac{L\alpha^2}{2}\|\varepsilon_t\|^2 + \alpha(1-L\alpha)\langle \nabla\phi_t(z_t), \varepsilon_t \rangle \\
        &= 
        \begin{bmatrix}
            \theta_t \\
            \omega_t
        \end{bmatrix}^{\top}
        \rmX_1
        \begin{bmatrix}
            \theta_t \\
            \omega_t
        \end{bmatrix}
        - \frac{L\alpha^2}{2}\|\varepsilon_t\|^2 + \alpha(1-L\alpha)\langle \nabla\phi_t(z_t), \varepsilon_t \rangle,
    \end{aligned}
\end{equation}
where matrix $\rmX_1\in\R^{3d\times3d}$ is defined as
% where matrix $\rmX_1\in\R^{4d\times4d}$ is defined as
% \begin{equation*}
%     \rmX_1 = \frac{1}{2}
%     \begin{bmatrix}
%         \mu\gamma^2 & -\mu\gamma^2 & -\gamma & 0 \\
%         -\mu\gamma^2 & \mu\gamma^2 & \gamma & 0 \\
%         -\gamma & \gamma & \alpha(2-L\alpha) & 0 \\
%         0 & 0 & 0 & 0
%     \end{bmatrix} \otimes \rmI_d.
% \end{equation*}
\begin{equation*}
    \rmX_1 = \frac{1}{2}
    \begin{bmatrix}
        \mu\gamma^2 & -\mu\gamma^2 & -\gamma \\
        -\mu\gamma^2 & \mu\gamma^2 & \gamma \\
        -\gamma & \gamma & \alpha(2-L\alpha) \\
    \end{bmatrix} \otimes \rmI_d.
\end{equation*}
Then applying the strong convexity of $\phi_t$ again gives
\begin{equation} \label{eq:phi-t-SC-1}
    \phi_t(w_t^*) - \phi_t(z_t) \geq \langle \nabla\phi_t(z_t), w_t^*-z_t \rangle + \frac{\mu}{2}\|w_t^*-z_t\|^2.
\end{equation}
Noting that
\begin{equation*}
    \begin{aligned}
        w_t^* - z_t
        &= (w_t^*-w_t) - \gamma(w_t-w_t^*) + \gamma(w_{t-1}-w_t^*) \\
        &= -(1+\gamma)(w_t-w_t^*) + \gamma(w_{t-1}-w_t^*),
    \end{aligned}
\end{equation*}
and combining \eqref{eq:phi-t-smooth} and \eqref{eq:phi-t-SC-1} we obtain
\begin{equation} \label{eq:X_2}
    \begin{aligned}
        \phi_t(w_t^*) - \phi_t(w_{t+1}) 
        &\geq \langle \nabla\phi_t(z_t), w_t^*-z_t \rangle + \frac{\mu}{2}\|w_t^*-z_t\|^2 \\
        &\quad+ \frac{\alpha}{2}(2-L\alpha)\|\nabla\phi_t(z_t)\|^2 - \frac{L\alpha^2}{2}\|\varepsilon_t\|^2 + \alpha(1-L\alpha)\langle \nabla\phi_t(z_t), \varepsilon_t \rangle \\
        &= 
        \begin{bmatrix}
            \theta_t \\
            \omega_t
        \end{bmatrix}^{\top}
        \rmX_2
        \begin{bmatrix}
            \theta_t \\
            \omega_t
        \end{bmatrix}
        - \frac{L\alpha^2}{2}\|\varepsilon_t\|^2 + \alpha(1-L\alpha)\langle \nabla\phi_t(z_t), \varepsilon_t \rangle,
    \end{aligned}
\end{equation}
where matrix $\rmX_2\in\R^{3d\times3d}$ is defined as
% where matrix $\rmX_2\in\R^{4d\times4d}$ is defined as
% \begin{equation*}
%     \rmX_2 = \frac{1}{2}
%     \begin{bmatrix}
%         \mu(1+\gamma)^2 & -\mu\gamma(1+\gamma) & -(1+\gamma) & 0 \\
%         -\mu\gamma(1+\gamma) & \mu\gamma^2 & \gamma & 0 \\
%         -(1+\gamma) & \gamma & \alpha(2-L\alpha) & 0 \\
%         0 & 0 & 0 & 0
%     \end{bmatrix} \otimes \rmI_d.
% \end{equation*}
\begin{equation*}
    \rmX_2 = \frac{1}{2}
    \begin{bmatrix}
        \mu(1+\gamma)^2 & -\mu\gamma(1+\gamma) & -(1+\gamma) \\
        -\mu\gamma(1+\gamma) & \mu\gamma^2 & \gamma \\
        -(1+\gamma) & \gamma & \alpha(2-L\alpha) \\
    \end{bmatrix} \otimes \rmI_d.
\end{equation*}
Next, we multiply \eqref{eq:X_1} by $\rho^2$ and \eqref{eq:X_2} by $1-\rho^2$, then sum them up to get
\begin{equation} \label{eq:multiply-step}
    \begin{aligned}
        \rho^2(\phi_t(w_t)&-\phi_t(w_t^*)) - (\phi_t(w_{t+1})-\phi_t(w_t^*)) \\
        &\geq 
        \begin{bmatrix}
            \theta_t \\
            \omega_t
        \end{bmatrix}^{\top}
        (\rho^2\rmX_1 + (1-\rho^2)\rmX_2)
        \begin{bmatrix}
            \theta_t \\
            \omega_t
        \end{bmatrix}
        - \frac{L\alpha^2}{2}\|\varepsilon_t\|^2 + \alpha(1-L\alpha)\langle \nabla\phi_t(z_t), \varepsilon_t \rangle.
    \end{aligned}
\end{equation}
% With definition of $\gamma,\rho^2$ and $\rmP$ given in the statement of the theorem, by direct calculation we have
% \begin{equation*}
%     \begin{aligned}
%         &
%         \begin{bmatrix}
%             \rmA^{\top}\rmP\rmA-\rho^2\rmP & \rmA^{\top}\rmP\rmB \\
%             \rmB^{\top}\rmP\rmA & \rmB^{\top}\rmP\rmB
%         \end{bmatrix}
%         - (\rho^2\rmX_1 + (1-\rho^2)\rmX_2) \\
%         &\quad\quad\quad\quad\quad= 
%         \begin{bmatrix}
%             -\frac{\mu(1-\sqrt{\mu\alpha})^3}{2\sqrt{\mu\alpha}(1+\sqrt{\mu\alpha})} & \frac{\mu(1-\sqrt{\mu\alpha})^3}{2\sqrt{\mu\alpha}(1+\sqrt{\mu\alpha})} & 0 & -\frac{1+\mu\alpha}{2(1+\sqrt{\mu\alpha})} \\
%             \frac{\mu(1-\sqrt{\mu\alpha})^3}{2\sqrt{\mu\alpha}(1+\sqrt{\mu\alpha})} & -\frac{\mu(1-\sqrt{\mu\alpha})^3}{2\sqrt{\mu\alpha}(1+\sqrt{\mu\alpha})} & 0 & \frac{1-\sqrt{\mu\alpha}}{2(1+\sqrt{\mu\alpha})} \\
%             0 & 0 & -\frac{\alpha(1-L\alpha)}{2} & \frac{\alpha}{2} \\
%             -\frac{1+\mu\alpha}{2(1+\sqrt{\mu\alpha})} & \frac{1-\sqrt{\mu\alpha}}{2(1+\sqrt{\mu\alpha})} & \frac{\alpha}{2} & \frac{\alpha}{2}
%         \end{bmatrix} \otimes \rmI_d
%         := \rmC,
%     \end{aligned}
% \end{equation*}
% where matrix $\rmC$ is negative semidefinite \cite{chen2023convergence}, i.e., $\rmC\preceq\mathbf{0}$. Then we use this fact to get
By \cite[Section 3.1]{hu2017dissipativity} (see also \cite[Corollary 4.9]{aybat2020robust}, \cite[Theorem 2.3]{aybat2019universally}), we have the following fact:
\begin{equation} \label{eq:nsd-fact}
    \begin{aligned}
        \begin{bmatrix}
            \rmA^{\top}\rmP\rmA-\rho^2\rmP & \rmA^{\top}\rmP\rmB \\
            \rmB^{\top}\rmP\rmA & \rmB^{\top}\rmP\rmB
        \end{bmatrix}
        - (\rho^2\rmX_1 + (1-\rho^2)\rmX_2)
        \preceq 0,
    \end{aligned}
\end{equation}
which, combined with \eqref{eq:multiply-step} yields
\begin{equation*}
    \begin{aligned}
        &
        \begin{bmatrix}
            w_{t+1} - w_t^* \\
            w_t - w_t^* 
        \end{bmatrix}^{\top}
        \rmP
        \begin{bmatrix}
            w_{t+1} - w_t^* \\
            w_t - w_t^* 
        \end{bmatrix}
        - \rho^2\theta_t^{\top}\rmP\theta_t \\
        &\quad= 
        % \begin{bmatrix}
        %     \theta_t \\
        %     \omega_t
        % \end{bmatrix}^{\top}
        % (\rho^2\rmX_1 + (1-\rho^2)\rmX_2 + \rmC)
        % \begin{bmatrix}
        %     \theta_t \\
        %     \omega_t
        % \end{bmatrix} \\
        \begin{bmatrix}
            \theta_t \\
            \omega_t
        \end{bmatrix}^{\top}
        \begin{bmatrix}
            \rmA^{\top}\rmP\rmA-\rho^2\rmP & \rmA^{\top}\rmP\rmB \\
            \rmB^{\top}\rmP\rmA & \rmB^{\top}\rmP\rmB
        \end{bmatrix}
        \begin{bmatrix}
            \theta_t \\
            \omega_t
        \end{bmatrix}
        + \varepsilon_t^{\top}\rmB^{\top}\rmP\rmB\varepsilon_t \\
        &\quad\leq 
        \begin{bmatrix}
            \theta_t \\
            \omega_t
        \end{bmatrix}^{\top}
        (\rho^2\rmX_1 + (1-\rho^2)\rmX_2)
        \begin{bmatrix}
            \theta_t \\
            \omega_t
        \end{bmatrix}
        + \frac{\alpha}{2}\|\varepsilon_t\|^2 \\
        &\quad\leq- (\phi_t(w_{t+1})-\phi_t(w_t^*)) + \rho^2(\phi_t(w_t)-\phi_t(w_t^*)) - \alpha(1-L\alpha)\langle \nabla\phi_t(z_t), \varepsilon_t \rangle + \frac{\alpha+L\alpha^2}{2}\|\varepsilon_t\|^2,
    \end{aligned}
\end{equation*}
where the first inequality uses \eqref{eq:dynamic} and \eqref{eq:nsd-fact}, along with the fact that $\rmB^{\top}\rmP\rmB = (\alpha/2)\otimes\rmI_d$ and hence $\lambda_{\max}(\rmB^{\top}\rmP\rmB) = \alpha/2$; and the last inequality follows by \eqref{eq:multiply-step}. Rearrange the above inequality and by definition of the potential function $V_t$, we obtain
\begin{equation} \label{eq:nag-one-step}
    \begin{aligned}
        \begin{bmatrix}
            w_{t+1} - w_t^* \\
            w_t - w_t^* 
        \end{bmatrix}^{\top}
        \rmP
        \begin{bmatrix}
            w_{t+1} - w_t^* \\
            w_t - w_t^* 
        \end{bmatrix}
        &+ \phi_t(w_{t+1}) - \phi_t(w_t^*) \\
        &\leq \rho^2V_t - \alpha(1-L\alpha)\langle \nabla\phi_t(z_t), \varepsilon_t \rangle + \frac{\alpha(1+L\alpha)}{2}\|\varepsilon_t\|^2.
    \end{aligned}
\end{equation}
Now recall that in \eqref{eq:obj-time-drift} our objective function has the form of $\phi_t(w) = \frac{\mu}{2}\|w-w_t^*\|^2$, hence $\nabla\phi_t(z_t) = \mu(z_t-w_t^*)$. Plugging this into the above inequality gives
\begin{equation}
    \begin{aligned}
        &\begin{bmatrix}
            w_{t+1} - w_t^* \\
            w_t - w_t^* 
        \end{bmatrix}^{\top}
        \rmP
        \begin{bmatrix}
            w_{t+1} - w_t^* \\
            w_t - w_t^* 
        \end{bmatrix}
        + \phi_t(w_{t+1}) - \phi_t(w_t^*) \\
        &\quad\quad\leq \rho^2V_t - \mu\alpha(1-L\alpha)\langle z_t-w_t^*, \varepsilon_t \rangle + \frac{\alpha+L\alpha^2}{2}\|\varepsilon_t\|^2 \\
        &\quad\quad= \rho^2V_t - \mu\alpha(1-L\alpha)\langle w_t-w_t^*, \varepsilon_t \rangle - \mu\alpha(1-L\alpha)\langle z_t-w_t, \varepsilon_t \rangle + \frac{\alpha(1+L\alpha)}{2}\|\varepsilon_t\|^2 \\
        &\quad\quad= \rho^2V_t - \sqrt{2\mu}\alpha(1-L\alpha)\langle u_{t,1}, \varepsilon_t \rangle\sqrt{\frac{\mu}{2}}\|w_t-w_t^*\| \\
        &\quad\quad\quad\quad\quad\quad- \frac{2\mu\sqrt{2\alpha}}{1+\sqrt{\mu\alpha}}\alpha(1-L\alpha)\langle u_{t,2}, \varepsilon_t \rangle\frac{1+\sqrt{\mu\alpha}}{2\sqrt{2\alpha}}\|z_t-w_t\| + \frac{\alpha(1+L\alpha)}{2}\|\varepsilon_t\|^2,
    \end{aligned}
\end{equation}
where $u_{t,1}$ and $u_{t,2}$ are defined as
\begin{equation*}
    u_{t,1} = \frac{w_t-w_t^*}{\|w_t-w_t^*\|},
    \quad\quad
    u_{t,2} = \frac{z_t-w_t}{\|z_t-w_t\|}.
\end{equation*}
Therefore, we conclude that
\begin{equation} \label{eq:bound-A}
    \begin{aligned}
        (A) 
        &\leq \left(1+\frac{\sqrt{\mu\alpha}}{4}\right)
        \left[\rho^2V_t - \sqrt{2\mu}\alpha(1-L\alpha)\langle u_{t,1}, \varepsilon_t \rangle\sqrt{\frac{\mu}{2}}\|w_t-w_t^*\| \right.\\
        &\left.\quad\quad\quad\quad\quad- \frac{2\mu\sqrt{2\alpha}}{1+\sqrt{\mu\alpha}}\alpha(1-L\alpha)\langle u_{t,2}, \varepsilon_t \rangle\frac{1+\sqrt{\mu\alpha}}{2\sqrt{2\alpha}}\|z_t-w_t\| + \frac{\alpha(1+L\alpha)}{2}\|\varepsilon_t\|^2\right] \\
        &\leq \left(1-\frac{3\sqrt{\mu\alpha}}{4}\right)V_t + 
        \left[- \sqrt{2\mu}\alpha(1-L\alpha)\langle u_{t,1}, \varepsilon_t \rangle\sqrt{\frac{\mu}{2}}\|w_t-w_t^*\| \right.\\
        &\left.\quad\quad\quad\quad\quad- \frac{2\mu\sqrt{2\alpha}}{1+\sqrt{\mu\alpha}}\alpha(1-L\alpha)\langle u_{t,2}, \varepsilon_t \rangle\frac{1+\sqrt{\mu\alpha}}{2\sqrt{2\alpha}}\|z_t-w_t\| + \frac{\alpha(1+L\alpha)}{2}\|\varepsilon_t\|^2\right], \\
    \end{aligned}
\end{equation}
where the last inequality follows from the definition of $\rho$ and simple calculation
\begin{equation*}
    \left(1+\frac{\sqrt{\mu\alpha}}{4}\right)\rho^2
    = \left(1+\frac{\sqrt{\mu\alpha}}{4}\right)(1-\sqrt{\mu\alpha})
    \leq 1-\frac{3\sqrt{\mu\alpha}}{4}.
\end{equation*}

\paragraph{Bounding $(B)$.}
Recall that under distributional drift, the objective function in \eqref{eq:obj-time-drift} has the form of $\phi_t(w) = \frac{\mu}{2}\|w-w_t^*\|^2$, then we have
\begin{equation*}
    \begin{aligned}
        (B) 
        &= \phi_{t+1}(w_{t+1}) - \phi_{t+1}(w_{t+1}^*) - \left(1+\frac{\sqrt{\mu\alpha}}{4}\right)(\phi_t(w_{t+1}) - \phi_t(w_t^*)) \\
        &\leq \left(1+\frac{\sqrt{\mu\alpha}}{4}\right)(\phi_{t+1}(w_{t+1}) - \phi_{t+1}(w_{t+1}^*) - \phi_t(w_{t+1}) + \phi_t(w_t^*)) \\
        &= \frac{\mu}{2}\left(1+\frac{\sqrt{\mu\alpha}}{4}\right)(\|w_{t+1}-w_{t+1}^*\|^2 - \|w_{t+1}-w_t^*\|^2) \\
        &\leq \frac{\mu}{2}\left(1+\frac{\sqrt{\mu\alpha}}{4}\right)\|w_t^*-w_{t+1}^*\|\|w_{t+1}-w_t^*+w_{t+1}-w_{t+1}^*\| \\
        &\leq \frac{\mu}{2}\left(1+\frac{\sqrt{\mu\alpha}}{4}\right)\Delta_t(2\|w_{t+1}-w_{t+1}^*\| + \|w_{t+1}^*-w_t^*\|) \\
    \end{aligned}
\end{equation*}
Since $\phi_{t+1}$ is $\mu$-strongly convex and matrix $\rmP$ is PSD, then 
\begin{equation*}
    \begin{aligned}
        V_{t+1} 
        &= \theta_{t+1}^{\top}\rmP\theta_{t+1} + \phi_{t+1}(w_{t+1}) - \phi_t(w_{t+1}^*)
        \geq \phi_{t+1}(w_{t+1}) - \phi_t(w_{t+1}^*) \\
        &\geq \frac{\mu}{2}\|w_{t+1}-w_{t+1}^*\|^2
        \quad\quad\quad\quad\Longrightarrow\quad\quad\quad\quad
        \|w_{t+1}-w_{t+1}^*\| \leq \sqrt{\frac{2}{\mu}}\sqrt{V_{t+1}}.
    \end{aligned}
\end{equation*}
Plugging the above fact back into the upper bound for $(B)$ gives
\begin{equation} \label{eq:bound-B}
    \begin{aligned}
        (B)
        &\leq \frac{\mu}{2}\left(1+\frac{\sqrt{\mu\alpha}}{4}\right)\Delta_t\left(2\sqrt{\frac{2}{\mu}}\sqrt{V_{t+1}} + \Delta_t\right) \\
        &= \sqrt{2\mu}\Delta_t\left(1+\frac{\sqrt{\mu\alpha}}{4}\right)\sqrt{V_{t+1}} + \frac{\mu}{2}\left(1+\frac{\sqrt{\mu\alpha}}{4}\right)\Delta_t^2.
    \end{aligned}
\end{equation}

\paragraph{Bounding $(C)$.}
For this part, we handle the distributional drift. By definition of $\rmP$ in \eqref{eq:gamma-rhp-P}, we have
% Recall that under this condition we consider one-dimensional case, that is, we let $d=1$ in \eqref{eq:obj-time-drift} such that $w\in\R$. Then we use $\rmP\in\R^{2\times2}$ to obtain
\begin{equation} \label{eq:bound-C}
    \begin{aligned}
        (C)
        &= \left(1+\frac{4}{\sqrt{\mu\alpha}}\right)
        \begin{bmatrix}
            w_t^* - w_{t+1}^* \\
            w_t^* - w_{t+1}^*
        \end{bmatrix}^{\top}
        \frac{1}{2\alpha}
        \begin{bmatrix}
        \rmI_{d} & (\sqrt{\mu\alpha}-1)\rmI_{d} \\
        (\sqrt{\mu\alpha}-1)\rmI_{d} & (1-\sqrt{\mu\alpha})^2\rmI_{d}
        \end{bmatrix}
        \begin{bmatrix}
            w_t^* - w_{t+1}^* \\
            w_t^* - w_{t+1}^*
        \end{bmatrix} \\
        &= \frac{\mu}{2}\left(1+\frac{4}{\sqrt{\mu\alpha}}\right)\Delta_t^2, 
    \end{aligned}
\end{equation}
where in the last equality we use the basic algebra of block matrix multiplication and the definition of $\Delta_t= \|w_t^*-w_{t+1}^*\|$.
% the fact that 
% \begin{equation}
% \| w_t^* - w_{t+1}^*\|^2=\Delta_t^2.
% \end{equation}

\paragraph{Final Bound for $V_{t+1}$.}
Now we are ready to derive the upper bound for $V_{t+1}$. Combining \eqref{eq:bound-A}, \eqref{eq:bound-B} and \eqref{eq:bound-C} together yields
\begin{equation} \label{eq:bound-V(t+1)-1}
    \begin{aligned}
        V_{t+1}
        &\leq (A) + (B) + (C) \\
        &\leq \left(1-\frac{3\sqrt{\mu\alpha}}{4}\right)V_t + 
        \left[- \sqrt{2\mu}\alpha(1-L\alpha)\langle u_{t,1}, \varepsilon_t \rangle\sqrt{\frac{\mu}{2}}\|w_t-w_t^*\| \right.\\
        &\left.\quad\quad\quad\quad\quad- \frac{2\mu\sqrt{2\alpha}}{1+\sqrt{\mu\alpha}}\alpha(1-L\alpha)\langle u_{t,2}, \varepsilon_t \rangle\frac{1+\sqrt{\mu\alpha}}{2\sqrt{2\alpha}}\|z_t-w_t\| + \frac{\alpha(1+L\alpha)}{2}\|\varepsilon_t\|^2\right] \\
        &\quad+ \sqrt{2\mu}\Delta_t\left(1+\frac{\sqrt{\mu\alpha}}{4}\right)\sqrt{V_{t+1}} + \mu\left(1+\frac{\sqrt{\mu\alpha}}{8}+\frac{2}{\sqrt{\mu\alpha}}\right)\Delta_t^2.
    \end{aligned}
\end{equation}
For simplicity, we define $D$ as the following 
\begin{equation*}
    \begin{aligned}
        D 
        &= \left(1-\frac{3\sqrt{\mu\alpha}}{4}\right)V_t + 
        \left[- \sqrt{2\mu}\alpha(1-L\alpha)\langle u_{t,1}, \varepsilon_t \rangle\sqrt{\frac{\mu}{2}}\|w_t-w_t^*\| \right.\\
        &\left.\quad- \frac{2\mu\sqrt{2\alpha}}{1+\sqrt{\mu\alpha}}\alpha(1-L\alpha)\langle u_{t,2}, \varepsilon_t \rangle\frac{1+\sqrt{\mu\alpha}}{2\sqrt{2\alpha}}\|z_t-w_t\| + \frac{\alpha(1+L\alpha)}{2}\|\varepsilon_t\|^2\right] + \mu\left(1+\frac{\sqrt{\mu\alpha}}{8}+\frac{2}{\sqrt{\mu\alpha}}\right)\Delta_t^2.
    \end{aligned}
\end{equation*}
Hence \eqref{eq:bound-V(t+1)-1} turns into
\begin{equation*}
    \begin{aligned}
        V_{t+1} - \sqrt{2\mu}\Delta_t\left(1+\frac{\sqrt{\mu\alpha}}{4}\right)\sqrt{V_{t+1}} - D \leq 0.
    \end{aligned}
\end{equation*}
Solving the above inequality we get
\begin{equation*}
    \begin{aligned}
        \sqrt{V_{t+1}}
        &\leq \frac{1}{2}\left[\sqrt{2\mu}\Delta_t\left(1+\frac{\sqrt{\mu\alpha}}{4}\right) + \sqrt{\left(\sqrt{2\mu}\Delta_t\left(1+\frac{\sqrt{\mu\alpha}}{4}\right)\right)^2 + 4D}\right] \\
        &\leq \sqrt{2\mu}\Delta_t\left(1+\frac{\sqrt{\mu\alpha}}{4}\right) + \sqrt{D} 
    \end{aligned}
\end{equation*}
Then an application of Young's inequality reveals
\begin{equation*}
    \begin{aligned}
        V_{t+1}
        &\leq \left(1+\frac{\sqrt{\mu\alpha}}{4}\right)D + \left(1+\frac{4}{\sqrt{\mu\alpha}}\right)\left(\sqrt{2\mu}\Delta_t\left(1+\frac{\sqrt{\mu\alpha}}{4}\right)\right)^2 \\
        &= \left(1+\frac{\sqrt{\mu\alpha}}{4}\right)\left\{
        \left(1-\frac{3\sqrt{\mu\alpha}}{4}\right)V_t + 
        \left[- \sqrt{2\mu}\alpha(1-L\alpha)\langle u_{t,1}, \varepsilon_t \rangle\sqrt{\frac{\mu}{2}}\|w_t-w_t^*\| \right.\right.\\
        &\left.\left.\quad- \frac{2\mu\sqrt{2\alpha}}{1+\sqrt{\mu\alpha}}\alpha(1-L\alpha)\langle u_{t,2}, \varepsilon_t \rangle\frac{1+\sqrt{\mu\alpha}}{2\sqrt{2\alpha}}\|z_t-w_t\| + \frac{\alpha(1+L\alpha)}{2}\|\varepsilon_t\|^2\right] + \mu\left(1+\frac{\sqrt{\mu\alpha}}{8}+\frac{2}{\sqrt{\mu\alpha}}\right)\Delta_t^2
        \right\} \\
        &\quad+ \left(1+\frac{4}{\sqrt{\mu\alpha}}\right)2\mu\Delta_t^2\left(1+\frac{\sqrt{\mu\alpha}}{4}\right)^2 \\
        &\leq \left(1-\frac{\sqrt{\mu\alpha}}{2}\right)V_t + \left(1+\frac{\sqrt{\mu\alpha}}{4}\right) 
        \left[- \sqrt{2\mu}\alpha(1-L\alpha)\langle u_{t,1}, \varepsilon_t \rangle\sqrt{\frac{\mu}{2}}\|w_t-w_t^*\| \right.\\
        &\left.\quad- \frac{2\mu\sqrt{2\alpha}}{1+\sqrt{\mu\alpha}}\alpha(1-L\alpha)\langle u_{t,2}, \varepsilon_t \rangle\frac{1+\sqrt{\mu\alpha}}{2\sqrt{2\alpha}}\|z_t-w_t\| + \frac{\alpha(1+L\alpha)}{2}\|\varepsilon_t\|^2\right] \\
        &\quad+ \mu\left(1+\frac{\sqrt{\mu\alpha}}{4}\right)\left(1+\frac{\sqrt{\mu\alpha}}{8}+\frac{2}{\sqrt{\mu\alpha}}\right)\Delta_t^2 + \left(1+\frac{4}{\sqrt{\mu\alpha}}\right)2\mu\Delta_t^2\left(1+\frac{\sqrt{\mu\alpha}}{4}\right)^2,
    \end{aligned}
\end{equation*}
where we plug in the definition of $D$ for the first equality. 
Since the learning rate $\alpha\leq 1/L$ and thus $\mu\alpha\leq 1$, then we have
\begin{equation*}
    \begin{aligned}
        \left(1+\frac{\sqrt{\mu\alpha}}{4}\right)\left(1+\frac{\sqrt{\mu\alpha}}{8}+\frac{2}{\sqrt{\mu\alpha}}\right) \leq \frac{125}{32\sqrt{\mu\alpha}},
        \quad\quad
        2\left(1+\frac{4}{\sqrt{\mu\alpha}}\right)\left(1+\frac{\sqrt{\mu\alpha}}{4}\right)^2 
        \leq \frac{125}{8\sqrt{\mu\alpha}}.
    \end{aligned}
\end{equation*}
Therefore, we finally conclude that
\begin{equation*}
    \begin{aligned}
        V_{t+1}
        &\leq \left(1-\frac{\sqrt{\mu\alpha}}{2}\right)V_t + \left(1+\frac{\sqrt{\mu\alpha}}{4}\right) 
        \left[- \sqrt{2\mu}\alpha(1-L\alpha)\langle u_{t,1}, \varepsilon_t \rangle\sqrt{\frac{\mu}{2}}\|w_t-w_t^*\| \right.\\
        &\left.\quad- \frac{2\mu\sqrt{2\alpha}}{1+\sqrt{\mu\alpha}}\alpha(1-L\alpha)\langle u_{t,2}, \varepsilon_t \rangle\frac{1+\sqrt{\mu\alpha}}{2\sqrt{2\alpha}}\|z_t-w_t\| + \frac{\alpha(1+L\alpha)}{2}\|\varepsilon_t\|^2\right] + \frac{20\mu\Delta_t^2}{\sqrt{\mu\alpha}},
    \end{aligned}
\end{equation*}
which is as claimed in \eqref{eq:one-step-single-level}.
\end{proof}

When there is no drift, the following lemma holds for any general strongly convex functions $\phi$ in  $\R^d$.

\begin{lemma}[Distance recursion, without drift] \label{lm:dist-without-drift}
Under the same settings as in \Cref{lm:distance-recursion-single-level} with $\phi_t(w)\equiv \phi(w)$, 
% $w\in\R^d$, 
and $w_t^*\equiv w^*$, where $\phi(w)$ can be any general strongly functions in $\R^d$. We redefine $u_{t,1}, u_{t,2}$ as 
\begin{equation} 
    u_{t,1} = \frac{\nabla\phi(w_t)-\nabla\phi(w^*)}{\|\nabla\phi(w_t)-\nabla\phi(w^*)\|},
    \quad\quad
    u_{t,2} = \frac{\nabla\phi(z_t)-\nabla\phi(w_t)}{\|\nabla\phi(z_t)-\nabla\phi(w_t)\|}.
\end{equation}
Then for all $t\geq0$, it holds that
\begin{equation} 
    \begin{aligned}
        V_{t+1}
        &\leq (1-\sqrt{\mu\alpha})V_t - \sqrt{\frac{2}{\mu}}L\alpha(1-L\alpha)\langle u_{t,1}, \varepsilon_t \rangle\frac{1}{L}\sqrt{\frac{\mu}{2}}\|\nabla\phi_t(w_t)-\nabla\phi_t(w^*)\| \\
        &- \frac{2L\sqrt{2\alpha}}{1+\sqrt{\mu\alpha}}\alpha(1-L\alpha)\langle u_{t,2}, \varepsilon_t \rangle\frac{1+\sqrt{\mu\alpha}}{2L\sqrt{2\alpha}}\|\nabla\phi_t(z_t)-\nabla\phi_t(w_t)\| + \frac{\alpha(1+L\alpha)}{2}\|\varepsilon_t\|^2.
    \end{aligned}
\end{equation}
\end{lemma}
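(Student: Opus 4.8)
The plan is to read this off from the intermediate inequality \eqref{eq:nag-one-step} already established inside the proof of \Cref{lm:distance-recursion-single-level}. The crucial observation is that the entire chain from \eqref{eq:phi-t-SC} down to \eqref{eq:nag-one-step}, i.e. the bound
\begin{equation*}
\begin{bmatrix} w_{t+1} - w_t^* \\ w_t - w_t^* \end{bmatrix}^{\top}\rmP\begin{bmatrix} w_{t+1} - w_t^* \\ w_t - w_t^* \end{bmatrix} + \phi_t(w_{t+1}) - \phi_t(w_t^*) \le \rho^2 V_t - \alpha(1-L\alpha)\langle \nabla\phi_t(z_t),\varepsilon_t\rangle + \frac{\alpha(1+L\alpha)}{2}\|\varepsilon_t\|^2,
\end{equation*}
uses \emph{only} the $\mu$-strong convexity and $L$-smoothness of $\phi_t$ (through \eqref{eq:phi-t-SC}, \eqref{eq:phi-t-smooth}, \eqref{eq:phi-t-SC-1}), the linear dynamics \eqref{eq:dynamic}, and the matrix dissipativity fact \eqref{eq:nsd-fact}; the quadratic form $\phi_t(w)=\tfrac{\mu}{2}\|w-w_t^*\|^2$ is invoked only after this point, and the extra drift terms $(B)$, $(C)$ enter only through the initial application of \Cref{lm:P-norm-young}. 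So the first step is to re-examine that portion of the proof and record the displayed inequality as valid for an arbitrary $\mu$-strongly convex, $L$-smooth $\phi_t$.

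Second, I specialize to the drift-free setting $\phi_t\equiv\phi$, $w_t^*\equiv w^*$, so $w_{t+1}^*=w_t^*=w^*$. Then the pair $[(w_{t+1}-w^*)^{\top},(w_t-w^*)^{\top}]^{\top}$ appearing on the left-hand side is exactly $\theta_{t+1}$, so that left-hand side equals $\theta_{t+1}^{\top}\rmP\theta_{t+1}+\phi(w_{t+1})-\phi(w^*)=V_{t+1}$ — no Young splitting of $\theta_{t+1}$ is needed, and $(B)=(C)=0$. Substituting $\rho^2=1-\sqrt{\mu\alpha}$ gives
\begin{equation*}
V_{t+1}\le (1-\sqrt{\mu\alpha})V_t-\alpha(1-L\alpha)\langle\nabla\phi(z_t),\varepsilon_t\rangle+\frac{\alpha(1+L\alpha)}{2}\|\varepsilon_t\|^2.
\end{equation*}

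Third, I rewrite the noise inner product into the stated shape. Since $w^*$ minimizes $\phi$, $\nabla\phi(w^*)=0$, hence $\nabla\phi(z_t)=\bigl(\nabla\phi(w_t)-\nabla\phi(w^*)\bigr)+\bigl(\nabla\phi(z_t)-\nabla\phi(w_t)\bigr)$; writing each summand as its Euclidean norm times the unit vector $u_{t,1}$, resp.\ $u_{t,2}$, and inserting the matching reciprocal scalars $\bigl(\sqrt{2/\mu}\,L\bigr)\bigl(\tfrac1L\sqrt{\mu/2}\bigr)=1$ and $\tfrac{2L\sqrt{2\alpha}}{1+\sqrt{\mu\alpha}}\cdot\tfrac{1+\sqrt{\mu\alpha}}{2L\sqrt{2\alpha}}=1$ reproduces precisely the right-hand side in the statement. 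The particular split into these four factors mirrors the quadratic case and is chosen so that, in the subsequent tracking lemma, the norm-squared factors are controlled by $V_t$ (via $L$-smoothness followed by $\mu$-strong convexity) while the coefficients multiplying $\langle u_{t,i},\varepsilon_t\rangle$ are conditionally mean-zero sub-Gaussian, enabling an application of \Cref{lm:recursive-control}; that verification, however, belongs to the proof of the tracking lemma rather than here.

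Since no genuinely new estimate is required, the only real obstacle is bookkeeping: confirming that nothing between \eqref{eq:phi-t-SC} and \eqref{eq:nag-one-step} secretly relied on the quadratic structure of $\phi_t$, so that the inequality there can legitimately be reused for a general strongly convex $\phi$. Once that is checked, the remaining two steps are purely algebraic.
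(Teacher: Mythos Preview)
Your proposal is correct and follows essentially the same route as the paper: the paper also starts from the general inequality \eqref{eq:all-d} (equivalently \eqref{eq:nag-one-step}) in \Cref{lm:distance-recursion-single-level}, observes that with $\phi_t\equiv\phi$ and $w_t^*\equiv w^*$ the left-hand side is exactly $V_{t+1}$, and then splits $\nabla\phi(z_t)=\bigl(\nabla\phi(w_t)-\nabla\phi(w^*)\bigr)+\bigl(\nabla\phi(z_t)-\nabla\phi(w_t)\bigr)$ before inserting the reciprocal scalar factors. Your bookkeeping remark that the derivation up to \eqref{eq:nag-one-step} uses only strong convexity and smoothness (not the quadratic form) is exactly the point, and is implicit in the paper's placement of \eqref{eq:all-d} before the ``Specifically, if $\phi_t(w)=\tfrac{\mu}{2}\|w-w_t^*\|^2$'' clause.
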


\begin{proof}[Proof of \Cref{lm:dist-without-drift}]
By \eqref{eq:all-d} in \Cref{lm:distance-recursion-single-level} with $\phi_t(w)\equiv \phi(w)$ and $w_t^*\equiv w^*$, we have
\begin{equation*}
    \begin{aligned}
        V_{t+1} 
        &\leq \rho^2V_t - \alpha(1-L\alpha)\langle \nabla\phi_t(z_t), \varepsilon_t \rangle + \frac{L\alpha^2}{2}\|\varepsilon_t\|^2 \\
        &= \rho^2V_t - \alpha(1-L\alpha)\langle \nabla\phi_t(w_t)-\nabla\phi_t(w^*), \varepsilon_t \rangle - \alpha(1-L\alpha)\langle \nabla\phi_t(z_t)-\nabla\phi_t(w_t), \varepsilon_t \rangle + \frac{\alpha(1+L\alpha)}{2}\|\varepsilon_t\|^2 \\
        &= (1-\sqrt{\mu\alpha})V_t - \sqrt{\frac{2}{\mu}}L\alpha(1-L\alpha)\langle u_{t,1}, \varepsilon_t \rangle\frac{1}{L}\sqrt{\frac{\mu}{2}}\|\nabla\phi_t(w_t)-\nabla\phi_t(w^*)\| \\
        &\quad\quad- \frac{2L\sqrt{2\alpha}}{1+\sqrt{\mu\alpha}}\alpha(1-L\alpha)\langle u_{t,2}, \varepsilon_t \rangle\frac{1+\sqrt{\mu\alpha}}{2L\sqrt{2\alpha}}\|\nabla\phi_t(z_t)-\nabla\phi_t(w_t)\| + \frac{\alpha(1+L\alpha)}{2}\|\varepsilon_t\|^2.
    \end{aligned}
\end{equation*}
Hence the proof is completed.
\end{proof}

The following result shows the first part of \Cref{lm:option-1-single}. To the best of our knowledge, this is the first high probability guarantee with improved rate for SNAG under distributional drift.

\begin{lemma}[High-probability distance tracking, with drift] \label{lm:dist-track-single-level} 
Under the same setting as in \Cref{lm:distance-recursion-single-level} with 
% $\alpha\leq \min\{1/2L, \mu/2L^2, 1/10^3\mu\}$, 
$\alpha\leq 1/25L$, 
for any given $\delta\in(0,1)$ and all $t\in[T]$, the following holds with probability at least $1-\delta$ over the randomness in $\gH_t$:
\begin{equation}
    % V_t \leq \left(1-\frac{\sqrt{\mu\alpha}}{4}\right)^tV_0 + \left(2\alpha\sigma^2 + \frac{80\Delta^2}{\alpha}\right)\ln\frac{eT}{\delta}.
    V_t \leq \left(1-\frac{\sqrt{\mu\alpha}}{4}\right)^tV_0 + \left(\frac{5\sqrt{\alpha}\sigma^2}{\sqrt{\mu}} + \frac{80\Delta^2}{\alpha}\right)\ln\frac{eT}{\delta}.
\end{equation}
\end{lemma}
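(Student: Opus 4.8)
The plan is to feed the one-step contraction \eqref{eq:one-step-single-level} from \Cref{lm:distance-recursion-single-level} into the recursive moment-generating-function machinery of \Cref{lm:recursive-control}, and then convert the resulting MGF estimate into a high-probability bound by a Chernoff inequality followed by a union bound over $t\in[T]$. Concretely, I would match \eqref{eq:one-step-single-level} to the template $V_{t+1}\le \alpha_tV_t+D_{t,1}\sqrt{V'_{t,1}}+D_{t,2}\sqrt{V'_{t,2}}+X_t+\kappa_t$ by taking $\alpha_t=1-\tfrac{\sqrt{\mu\alpha}}{2}$, $\kappa_t=0$, $V'_{t,1}=\tfrac{\mu}{2}\|w_t-w_t^*\|^2$ and $V'_{t,2}=\big(\tfrac{1+\sqrt{\mu\alpha}}{2\sqrt{2\alpha}}\big)^2\|z_t-w_t\|^2$, letting $D_{t,i}$ be the scalar obtained from the $i$-th inner-product term of \eqref{eq:one-step-single-level} after factoring out $\sqrt{V'_{t,i}}$ (so $D_{t,i}$ is a fixed deterministic multiple of $\langle u_{t,i},\varepsilon_t\rangle$), and $X_t=\big(1+\tfrac{\sqrt{\mu\alpha}}{4}\big)\tfrac{\alpha(1+L\alpha)}{2}\|\varepsilon_t\|^2+\tfrac{20\mu\Delta_t^2}{\sqrt{\mu\alpha}}$.

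Next I would verify the hypotheses of \Cref{lm:recursive-control}. Since $w_t,w_{t-1}$ (and hence $z_t$ and the unit vectors $u_{t,1},u_{t,2}$) are $\gH_t$-measurable, $V'_{t,1},V'_{t,2}$ are nonnegative and $\gH_t$-measurable; $V'_{t,1}\le V_t$ is immediate because $\phi_t(w_t)-\phi_t(w_t^*)=\tfrac{\mu}{2}\|w_t-w_t^*\|^2$ and $\theta_t^{\top}\rmP\theta_t\ge0$. For $V'_{t,2}\le V_t$ I would use $z_t-w_t=\gamma(w_t-w_{t-1})$ together with the identity $\theta_t^{\top}\rmP\theta_t=\tfrac{1}{2\alpha}\|(w_t-w_t^*)-(1-\sqrt{\mu\alpha})(w_{t-1}-w_t^*)\|^2$ coming from the explicit block form of $\rmP$ in \eqref{eq:gamma-rhp-P}; after substituting $\gamma=\tfrac{1-\sqrt{\mu\alpha}}{1+\sqrt{\mu\alpha}}$ and reducing all norms to $\|w_t-w_t^*\|^2$, $\|w_{t-1}-w_t^*\|^2$ and their inner product, the claim becomes nonnegativity of a $2\times2$ quadratic form whose discriminant condition simplifies to $8\mu\alpha+12\sqrt{\mu\alpha}\ge0$. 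Because $\varepsilon_t$ is mean zero given $\gH_t$ and norm sub-Gaussian with parameter $\sigma/2$, each $\langle u_{t,i},\varepsilon_t\rangle$ is mean-zero conditionally sub-Gaussian, so $D_{t,i}$ is mean-zero sub-Gaussian with parameters that, using $\mu\alpha\le1$ and $L\alpha\le1/25$, satisfy $\sigma_1^2+\sigma_2^2\lesssim\mu\alpha^2\sigma^2$; and $X_t$ is conditionally sub-exponential---a norm sub-Gaussian square plus the sub-exponential drift $\Delta_t^2$ from \Cref{ass:noise-drift-single-level}---with parameter $\nu\lesssim\alpha\sigma^2+\tfrac{\sqrt{\mu}\,\Delta^2}{\sqrt{\alpha}}$, the drift piece contributing exactly $\tfrac{20\mu\Delta^2}{\sqrt{\mu\alpha}}$ to $\nu$. (Should the exact constant in $V'_{t,2}\le V_t$ require a harmless factor, one rescales $V'_{t,2}$ and $D_{t,2}$ accordingly, perturbing $\sigma_2$ only by a universal constant.)

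Then \Cref{lm:recursive-control} gives $\E[\exp(\lambda V_{t+1})]\le\exp(\lambda\nu)\,\E[\exp(\lambda q V_t)]$ with $q=\tfrac{1+\alpha_t}{2}=1-\tfrac{\sqrt{\mu\alpha}}{4}$, valid for all $0\le\lambda\le\bar\lambda:=\min\big\{\tfrac{\sqrt{\mu\alpha}/2}{2(\sigma_1^2+\sigma_2^2)},\tfrac{1}{2\nu}\big\}$. Iterating from index $t$ down to $0$ along the decreasing---and therefore still admissible---sequence $\lambda_k=\bar\lambda q^k$, using $\sum_{k\ge0}q^k=\tfrac{1}{1-q}$ and the fact that $V_0$ is deterministic, yields $\E[\exp(\bar\lambda V_t)]\le\exp\!\big(\bar\lambda(q^tV_0+\tfrac{\nu}{1-q})\big)$. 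A Chernoff bound then gives $V_t\le q^tV_0+\tfrac{\nu}{1-q}+\tfrac{1}{\bar\lambda}\ln\tfrac{T}{\delta}$ with probability at least $1-\delta/T$ for each fixed $t$, and a union bound over $t\in[T]$ makes this hold simultaneously with probability at least $1-\delta$. Finally, since $1-q=\tfrac{\sqrt{\mu\alpha}}{4}$ the stationary part is $\tfrac{4\nu}{\sqrt{\mu\alpha}}+\tfrac{1}{\bar\lambda}$; substituting the bounds on $\nu$ and $\sigma_1^2+\sigma_2^2$ and repeatedly invoking $\mu\alpha\le1$ collapses it to at most $\tfrac{5\sqrt{\alpha}\sigma^2}{\sqrt{\mu}}+\tfrac{80\Delta^2}{\alpha}$, and since $\ln\tfrac{eT}{\delta}\ge1$ and $\ln\tfrac{T}{\delta}\le\ln\tfrac{eT}{\delta}$ the non-logarithmic term is absorbed, giving the claimed inequality.

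The step I expect to be most delicate is the verification $V'_{t,2}\le V_t$: this is exactly the statement that the momentum gap $\|z_t-w_t\|$ is controlled by the Lyapunov function $V_t$ through the $\rmP$-quadratic, and it is precisely where the Nesterov parameter choices $\gamma=\tfrac{1-\sqrt{\mu\alpha}}{1+\sqrt{\mu\alpha}}$ and $\rho^2=1-\sqrt{\mu\alpha}$ in \eqref{eq:gamma-rhp-P} are essential. A close second is the numerical bookkeeping through the sub-Gaussian and sub-exponential MGF estimates needed to land the final constants on $5$ and $80$ rather than on loose $O(\cdot)$ factors; the measurability and mean-zero checks are routine once $\gH_t$ is fixed.
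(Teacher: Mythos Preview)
Your proposal is correct and follows essentially the same route as the paper: both verify $V'_{t,1},V'_{t,2}\le V_t$, feed \eqref{eq:one-step-single-level} into \Cref{lm:recursive-control} with the identical choices of $\alpha_t,\sigma_i,\nu_t$, iterate the MGF along the shrinking sequence $\lambda q^k$, and finish with Markov plus a union bound. On the step you flagged as delicate, the paper verifies $V'_{t,2}\le V_t$ not by a discriminant computation but by the cleaner reverse-triangle argument $\sqrt{V_t}\ge\sqrt{\theta_t^{\top}\rmP\theta_t}=\tfrac{1}{\sqrt{2\alpha}}\|(1-\sqrt{\mu\alpha})(w_t-w_{t-1})+\sqrt{\mu\alpha}(w_t-w_t^*)\|\ge\tfrac{1-\sqrt{\mu\alpha}}{\sqrt{2\alpha}}\|w_t-w_{t-1}\|-\sqrt{V_t}$, which immediately gives the exact inequality without any rescaling.
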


\begin{proof}[Proof of \Cref{lm:dist-track-single-level}]
We will invoke \Cref{lm:recursive-control} to show the results. 
To apply \Cref{lm:recursive-control}, we first need to show the following two facts:
\begin{equation} \label{eq:less-Vt}
    \begin{aligned}
        \text{Fact (I)}:\quad \sqrt{\frac{\mu}{2}}\|w_t-w_t^*\| \leq \sqrt{V_t}
        \quad\quad\&\quad\quad
        \text{Fact (II)}:\quad \frac{1+\sqrt{\mu\alpha}}{2\sqrt{2\alpha}}\|z_t-w_t\| \leq \sqrt{V_t}.
    \end{aligned}
\end{equation}

\paragraph{Fact (I) verification.}
Since $\phi_t$ is $\mu$-strongly convex and matrix $\rmP$ is PSD, then 
\begin{equation*}
    \begin{aligned}
        V_t
        &= \theta_t^{\top}\rmP\theta_t + \phi_t(w_t) - \phi_t(w_t^*)
        \geq \phi_t(w_t) - \phi_t(w_t^*) \\
        &\geq \frac{\mu}{2}\|w_t-w_t^*\|^2
        \quad\quad\quad\quad\Longrightarrow\quad\quad\quad\quad
        \sqrt{\frac{\mu}{2}}\|w_t-w_t^*\| \leq \sqrt{V_t}.
    \end{aligned}
\end{equation*}

\paragraph{Fact (II) verification.}
By definition of matrix $\rmP$ and simple calculation we have
\begin{equation*}
    \begin{aligned}
        \sqrt{V_t}
        &\geq \sqrt{\theta_t^{\top}\rmP\theta_t} 
        = \sqrt{\frac{1}{2\alpha}}\|(w_t-w_t^*) + (\sqrt{\mu\alpha}-1)(w_{t-1}-w_t^*)\| \\
        &= \sqrt{\frac{1}{2\alpha}}\|(1-\sqrt{\mu\alpha})(w_t-w_{t-1}) + \sqrt{\mu\alpha}(w_t-w_t^*)\| \\
        &\geq \sqrt{\frac{1}{2\alpha}}(1-\sqrt{\mu\alpha})\|w_t-w_{t-1}\| - \sqrt{\frac{\mu}{2}}\|w_t-w_t^*\| \\
        &\geq \sqrt{\frac{1}{2\alpha}}(1-\sqrt{\mu\alpha})\|w_t-w_{t-1}\| - \sqrt{V_t}.
    \end{aligned}
\end{equation*}
Rearrange the above inequality, and recall the update rule of stochastic Nesterov accelerated gradient method, we have
\begin{equation*}
    \begin{aligned}
        \|z_t-w_t\|
        &= \gamma\|w_t-w_{t-1}\|
        \leq \frac{2\gamma\sqrt{2\alpha}}{1-\sqrt{\mu\alpha}}\sqrt{V_t} 
        = \frac{2\sqrt{2\alpha}}{1+\sqrt{\mu\alpha}}\sqrt{V_t},
    \end{aligned}
\end{equation*}
where for the last equality we use the definition of $\gamma$ as in \eqref{eq:gamma-rhp-P}. Rearrange it gives \eqref{eq:less-Vt}.

By \Cref{lm:distance-recursion-single-level} and the choice of $\alpha\leq 1/L$, we have
\begin{equation} \label{eq:apply-control}
    \begin{aligned}
        V_{t+1}
        &\leq \left(1-\frac{\sqrt{\mu\alpha}}{2}\right)V_t + \left(1+\frac{\sqrt{\mu\alpha}}{4}\right) 
        \left[\sqrt{2\mu}\alpha(1-L\alpha)\langle u_{t,1}, -\varepsilon_t \rangle\sqrt{\frac{\mu}{2}}\|w_t-w_t^*\| \right.\\
        &\left.\quad\quad+ \frac{2\mu\sqrt{2\alpha}}{1+\sqrt{\mu\alpha}}\alpha(1-L\alpha)\langle u_{t,2}, -\varepsilon_t \rangle\frac{1+\sqrt{\mu\alpha}}{2\sqrt{2\alpha}}\|z_t-w_t\| + \alpha\|\varepsilon_t\|^2\right] + \frac{20\mu\Delta_t^2}{\sqrt{\mu\alpha}}.
    \end{aligned}
\end{equation}
Note that under \Cref{ass:noise-drift-single-level}, there exists an absolute constant $c\geq1$ such that for all $t\geq0$, $\|\varepsilon_t\|^2$ is sub-exponential conditioned on $\gH_t$ with parameter $c\sigma^2$, and $\varepsilon_t$ is mean-zero sub-Gaussian conditioned on $\gH_t$ with parameter $c\sigma$ \cite[Theorem 30]{cutler2023stochastic}. For convenience we simply let $c=1$ here. Thus $\langle u_{t,1}, -\varepsilon_t \rangle$ is mean-zero sub-Gaussian conditioned on $\gH_t$ with parameter $\sigma$, and $\Delta_t^2$ is sub-exponential conditioned on $\gH_t$ with parameter $\Delta^2$ by assumption. Hence, in light of \eqref{eq:apply-control}, we apply \Cref{lm:recursive-control} with
\begin{equation*}
    \gH_t = \gH_t,
    \quad
    V_t = V_t,
    \quad
    V'_{t,1} = \frac{\mu}{2}\|w_t-w_t^*\|^2,
    \quad
    V'_{t,2} = \frac{(1+\sqrt{\mu\alpha})^2}{8\alpha}\|z_t-w_t\|^2,
\end{equation*}
\begin{equation*}
    D_{t,1} = \left(1+\frac{\sqrt{\mu\alpha}}{4}\right)\sqrt{2\mu}\alpha(1-L\alpha)\langle u_{t,1}, -\varepsilon_t \rangle,
    \quad
    D_{t,2} = \left(1+\frac{\sqrt{\mu\alpha}}{4}\right)\frac{2\mu\sqrt{2\alpha}}{1+\sqrt{\mu\alpha}}\alpha(1-L\alpha)\langle u_{t,2}, -\varepsilon_t \rangle,
\end{equation*}
\begin{equation*}
    % X_t = \left(1+\frac{\sqrt{\mu\alpha}}{4}\right)\frac{\alpha(1+L\alpha)}{2}\|\varepsilon_t\|^2 + \frac{20\mu\Delta_t^2}{\sqrt{\mu\alpha}}, 
    X_t = \left(1+\frac{\sqrt{\mu\alpha}}{4}\right)\alpha\|\varepsilon_t\|^2 + \frac{20\mu\Delta_t^2}{\sqrt{\mu\alpha}}, 
    \quad
    \alpha_t = 1-\frac{\sqrt{\mu\alpha}}{2},
    \quad
    \kappa_t = 0,
\end{equation*}
\begin{equation*}
    \sigma_1 = \left(1+\frac{\sqrt{\mu\alpha}}{4}\right)\sqrt{2\mu}\alpha(1-L\alpha)\sigma,
    \quad
    \sigma_2 = \left(1+\frac{\sqrt{\mu\alpha}}{4}\right)\frac{2\mu\sqrt{2\alpha}}{1+\sqrt{\mu\alpha}}\alpha(1-L\alpha)\sigma,
\end{equation*}
\begin{equation*}
    % \nu_t = \left(1+\frac{\sqrt{\mu\alpha}}{4}\right)\frac{L\alpha^2}{2}\sigma^2 + \frac{20\mu\Delta^2}{\sqrt{\mu\alpha}},
    \nu_t = \left(1+\frac{\sqrt{\mu\alpha}}{4}\right)\alpha\sigma^2 + \frac{20\mu\Delta^2}{\sqrt{\mu\alpha}},
\end{equation*}
yielding the following recursion 
\begin{equation} \label{eq:recursion-dist}
    \begin{aligned}
        \E[\exp(\lambda V_{t+1})]
        &\leq \exp\left(\lambda\left[\left(1+\frac{\sqrt{\mu\alpha}}{4}\right)\alpha\sigma^2 + \frac{20\mu\Delta^2}{\sqrt{\mu\alpha}}\right]\right) \E\left[\exp\left(\lambda\left(1-\frac{\sqrt{\mu\alpha}}{4}\right)V_t\right)\right]
    \end{aligned}
\end{equation}
for all $\lambda$ satisfying
\begin{equation*}
    0 \leq \lambda \leq
    \min\left\{\frac{2}{125\alpha\sqrt{\mu\alpha}\sigma^2}, \frac{1}{5\alpha\sigma^2/2 + 40\mu\Delta^2/\sqrt{\mu\alpha}}\right\}.
\end{equation*}
We then apply \eqref{eq:recursion-dist} recursively to deduce
\begin{equation*}
    \begin{aligned}
        \E[\exp(\lambda V_{t+1})]
        &\leq \exp\left[\lambda\left(1-\frac{\sqrt{\mu\alpha}}{4}\right)^tV_0 + \lambda\left(\left(1+\frac{\sqrt{\mu\alpha}}{4}\right)\alpha\sigma^2 + \frac{20\mu\Delta^2}{\sqrt{\mu\alpha}}\right)\sum_{i=0}^{t-1}\left(1-\frac{\sqrt{\mu\alpha}}{4}\right)^{i}\right] \\
        &\leq \exp\left\{\lambda\left[\left(1-\frac{\sqrt{\mu\alpha}}{4}\right)^tV_0 + 4\left(1+\frac{\sqrt{\mu\alpha}}{4}\right)\frac{\alpha}{\sqrt{\mu\alpha}}\sigma^2 + \frac{80\Delta^2}{\alpha}\right]\right\}
    \end{aligned}
\end{equation*}
for all $\lambda$ satisfying
\begin{equation*}
    0 \leq \lambda \leq
    \min\left\{\frac{2}{125\alpha\sqrt{\mu\alpha}\sigma^2}, \frac{1}{5\alpha\sigma^2/2 + 40\mu\Delta^2/\sqrt{\mu\alpha}}\right\}.
\end{equation*}
Moreover, setting
\begin{equation*}
    % \nu \coloneqq 2\alpha\sigma^2 + \frac{80\Delta^2}{\alpha}
    \nu \coloneqq \frac{5\sqrt{\alpha}\sigma^2}{\sqrt{\mu}} + \frac{80\Delta^2}{\alpha}
\end{equation*}
and taking into account 
% $\alpha\leq \min\{1/2L, \mu/2L^2, 1/10^3\mu\}$, 
$\alpha\leq 1/25L$, 
then we have
\begin{equation*}
    % 2\left(1+\frac{\sqrt{\mu\alpha}}{4}\right)\frac{L\alpha^2}{\sqrt{\mu\alpha}}\sigma^2 + \frac{80\Delta^2}{\alpha} \leq \nu
    4\left(1+\frac{\sqrt{\mu\alpha}}{4}\right)\frac{\alpha}{\sqrt{\mu\alpha}}\sigma^2 + \frac{80\Delta^2}{\alpha} \leq \nu
\end{equation*}
and
\begin{equation*}
    % \frac{1}{\nu} = \frac{1}{2\alpha\sigma^2 + 80\Delta^2/\alpha} 
    % \leq \min\left\{\frac{2}{125\alpha\sqrt{\mu\alpha}\sigma^2}, \frac{1}{5\alpha\sigma^2/4 + 40\mu\Delta^2/\sqrt{\mu\alpha}}\right\}.
    \frac{1}{\nu} = \frac{1}{5\sqrt{\alpha}\sigma^2/\sqrt{\mu} + 80\Delta^2/\alpha} 
    \leq \min\left\{\frac{2}{125\alpha\sqrt{\mu\alpha}\sigma^2}, \frac{1}{5\alpha\sigma^2/2 + 40\mu\Delta^2/\sqrt{\mu\alpha}}\right\}.
\end{equation*}
Thus we obtain
\begin{equation*}
    \begin{aligned}
        \E\left[\exp\left(\lambda\left(V_t - \left(1-\frac{\sqrt{\mu\alpha}}{4}\right)^tV_0\right)\right)\right] \leq \exp(\lambda\nu)
        \quad\text{for all}\quad 0\leq \lambda\leq 1/\nu.
    \end{aligned}
\end{equation*}
Taking $\lambda=1/\nu$ and applying Markov's inequality and union bound completes the proof.
\end{proof}

The following result shows the second part of \Cref{lm:option-1-single}.

\begin{lemma}[High-probability distance tracking, without drift] \label{lm:track-without-drift}
Under the same setting as in \Cref{lm:dist-without-drift} with $\alpha\leq 1/25L$, for any given $\delta\in(0,1)$ and all $t\in[T]$, the following holds with probability at least $1-\delta$ over the randomness in $\gH_t$:
\begin{equation}
    % V_t \leq \left(1-\frac{\sqrt{\mu\alpha}}{4}\right)^tV_0 + 2\alpha\sigma^2 \ln\frac{eT}{\delta}.
    V_t \leq \left(1-\frac{\sqrt{\mu\alpha}}{4}\right)^tV_0 + \frac{5\sqrt{\alpha}\sigma^2}{\sqrt{\mu}} \ln\frac{eT}{\delta}.
\end{equation}
\end{lemma}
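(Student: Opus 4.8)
The plan is to follow the proof of \Cref{lm:dist-track-single-level} essentially line for line, making two changes: feed the one-step recursion of \Cref{lm:dist-without-drift} (rather than \Cref{lm:distance-recursion-single-level}) into the moment-generating-function machinery of \Cref{lm:recursive-control}, and take the drift parameter $\Delta=0$, so that the $\tfrac{80\Delta^2}{\alpha}$ contribution present in \Cref{lm:dist-track-single-level} disappears and only the variance term survives. In particular the proof is, conceptually, the drift-free special case of \Cref{lm:dist-track-single-level}, with $\phi_t\equiv\phi$ and $w_t^*\equiv w^*$ throughout.

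\textbf{Step 1 (casting the recursion in the right shape).} Starting from the inequality of \Cref{lm:dist-without-drift} and using $L\alpha\le 1$ to replace $\tfrac{\alpha(1+L\alpha)}{2}\|\varepsilon_t\|^2$ by $\alpha\|\varepsilon_t\|^2$, I would rewrite it as
\[
V_{t+1}\le (1-\sqrt{\mu\alpha})V_t + D_{t,1}\sqrt{V'_{t,1}} + D_{t,2}\sqrt{V'_{t,2}} + X_t,
\]
with $X_t=\alpha\|\varepsilon_t\|^2$, $D_{t,1}=-\sqrt{2/\mu}\,L\alpha(1-L\alpha)\langle u_{t,1},\varepsilon_t\rangle$, $D_{t,2}=-\tfrac{2L\sqrt{2\alpha}}{1+\sqrt{\mu\alpha}}\alpha(1-L\alpha)\langle u_{t,2},\varepsilon_t\rangle$ (the $u_{t,i}$ being the normalized gradient differences of \Cref{lm:dist-without-drift}), and $V'_{t,1}=\tfrac{\mu}{2L^2}\|\nabla\phi(w_t)-\nabla\phi(w^*)\|^2$, $V'_{t,2}=\tfrac{(1+\sqrt{\mu\alpha})^2}{8L^2\alpha}\|\nabla\phi(z_t)-\nabla\phi(w_t)\|^2$. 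Then I would verify the hypotheses of \Cref{lm:recursive-control}: $X_t$ is sub-exponential given $\gH_t$ with parameter $\nu_t=\alpha\sigma^2$, and $D_{t,i}$ is mean-zero sub-Gaussian given $\gH_t$ with parameter $\sigma_1=\sqrt{2/\mu}\,L\alpha(1-L\alpha)\sigma$ and $\sigma_2=\tfrac{2L\sqrt{2\alpha}}{1+\sqrt{\mu\alpha}}\alpha(1-L\alpha)\sigma$, respectively (using, as in the drift case, the conversion between norm-sub-Gaussian noise and sub-Gaussian/sub-exponential quantities, cf.\ \cite[Theorem 30]{cutler2023stochastic} with absolute constant taken to be $1$). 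The only genuinely new point relative to the drift case is the verification $V'_{t,i}\le V_t$, and here $L$-smoothness of $\phi$ is exactly what is needed: $\|\nabla\phi(w_t)-\nabla\phi(w^*)\|\le L\|w_t-w^*\|$ reduces $V'_{t,1}\le\tfrac{\mu}{2}\|w_t-w^*\|^2\le V_t$ to ``Fact (I)'' in the proof of \Cref{lm:dist-track-single-level}, and $\|\nabla\phi(z_t)-\nabla\phi(w_t)\|\le L\|z_t-w_t\|$ reduces $V'_{t,2}\le\tfrac{(1+\sqrt{\mu\alpha})^2}{8\alpha}\|z_t-w_t\|^2\le V_t$ to ``Fact (II)'' there; both Facts carry over verbatim since they use only that $\rmP$ is PSD, that $\phi$ is $\mu$-strongly convex, and the definition of $\gamma$ together with the update rule.

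\textbf{Step 2 (unrolling the MGF).} Applying \Cref{lm:recursive-control} with $\alpha_t=1-\sqrt{\mu\alpha}$ and $\kappa_t=0$ yields $\E[\exp(\lambda V_{t+1})]\le \exp(\lambda\alpha\sigma^2)\,\E[\exp(\lambda(1-\tfrac{\sqrt{\mu\alpha}}{2})V_t)]$ for all $0\le\lambda\le\min\{\tfrac{\sqrt{\mu\alpha}}{2(\sigma_1^2+\sigma_2^2)},\tfrac{1}{2\alpha\sigma^2}\}$. Iterating this down to $t=0$, using $\sum_{i\ge 0}(1-\tfrac{\sqrt{\mu\alpha}}{2})^i\le \tfrac{2}{\sqrt{\mu\alpha}}$ and the crude relaxation $(1-\tfrac{\sqrt{\mu\alpha}}{2})^t\le (1-\tfrac{\sqrt{\mu\alpha}}{4})^t$ (only to present the bound in the same form as part (i) of \Cref{lm:option-1-single}), gives $\E[\exp(\lambda V_t)]\le \exp\!\big(\lambda[(1-\tfrac{\sqrt{\mu\alpha}}{4})^tV_0+\tfrac{2\sqrt{\alpha}\sigma^2}{\sqrt\mu}]\big)$ on the same range. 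Setting $\nu:=\tfrac{5\sqrt{\alpha}\sigma^2}{\sqrt\mu}$, one checks $\tfrac{2\sqrt{\alpha}\sigma^2}{\sqrt\mu}\le\nu$ and that $1/\nu$ lies in the admissible range ($1/\nu\le\tfrac{1}{2\alpha\sigma^2}$ is $\mu\alpha\le 25/4$; $1/\nu\le\tfrac{\sqrt{\mu\alpha}}{2(\sigma_1^2+\sigma_2^2)}$ is $2(\sigma_1^2+\sigma_2^2)\le 5\alpha\sigma^2$, which holds since $\sigma_1^2,\sigma_2^2=O(\alpha^2\sigma^2)$ and $\alpha$ is small). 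Hence $\E[\exp(\tfrac1\nu(V_t-(1-\tfrac{\sqrt{\mu\alpha}}{4})^tV_0))]\le e$, and Markov's inequality gives $\Pr[V_t-(1-\tfrac{\sqrt{\mu\alpha}}{4})^tV_0\ge \nu\ln\tfrac{eT}{\delta}]\le \delta/T$; a union bound over $t\in[T]$ (absorbed precisely into the $\ln(eT/\delta)$ factor) completes the proof.

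\textbf{Main obstacle.} There is no deep difficulty once \Cref{lm:dist-without-drift} and \Cref{lm:recursive-control} are available. The only nontrivial bookkeeping is in Step 1: because the martingale increments in \Cref{lm:dist-without-drift} are written through gradient differences rather than point differences, the Lyapunov proxies $V'_{t,i}$ pick up $L^{-2}$ normalizations and the sub-Gaussian parameters $\sigma_i$ pick up a factor $L/\sqrt\mu$; checking that $\lambda=1/\nu$ is still admissible for \Cref{lm:recursive-control} therefore requires $\alpha$ small relative to $\mu/L^2$, and it is this condition (not the cosmetic vanishing of the drift term) that is the real content to verify.
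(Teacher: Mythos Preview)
Your proposal is correct and matches the paper's approach essentially line for line: the paper's own proof simply verifies the two dominance conditions $\tfrac{1}{L}\sqrt{\tfrac{\mu}{2}}\|\nabla\phi(w_t)-\nabla\phi(w^*)\|\le\sqrt{V_t}$ and $\tfrac{1+\sqrt{\mu\alpha}}{2L\sqrt{2\alpha}}\|\nabla\phi(z_t)-\nabla\phi(w_t)\|\le\sqrt{V_t}$ (your $V'_{t,i}\le V_t$ via $L$-smoothness plus Facts (I) and (II)), then explicitly states that the remainder ``follows the same procedure as in proof of \Cref{lm:dist-track-single-level}'' --- i.e., feeding the recursion of \Cref{lm:dist-without-drift} into \Cref{lm:recursive-control}, unrolling, and applying Markov plus a union bound. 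Your flagging of the admissibility check $2(\sigma_1^2+\sigma_2^2)\le 5\alpha\sigma^2$ (and the resulting dependence on $\mu/L^2$) is in fact more explicit than the paper's sketch, which omits this bookkeeping entirely.
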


\begin{proof}[Proof of \Cref{lm:track-without-drift}]
First it is easy to verify that 
\begin{equation*}
    \frac{1}{L}\sqrt{\frac{\mu}{2}}\|\nabla\phi_t(w_t)-\nabla\phi_t(w^*)\| \leq \sqrt{V_t}
    \quad\quad\text{and}\quad\quad
    \frac{1+\sqrt{\mu\alpha}}{2L\sqrt{2\alpha}}\|\nabla\phi_t(z_t)-\nabla\phi_t(w_t)\| \leq \sqrt{V_t}.
\end{equation*}
Then we apply \Cref{lm:recursive-control} to obtain the final result. We omit the detailed proof here since it follows the same procedure as in proof of \Cref{lm:dist-track-single-level}.
\end{proof}

\section{Proofs of Results in \Cref{sec:nagbilevel}}
\label{sec:proof-sketch}
We first present the following algebraic fact under suitable choice of parameters.

\begin{lemma}[Parameter choice, informal] \label{lm:parameter-choice}
For any given $\delta\in(0,1)$ and any small $\epsilon$ satisfying 
\begin{equation} \label{eq:para-eps}
    \begin{aligned}
        \epsilon \leq \left(\frac{170\cdot32ed_0L_0^2\tilde{\sigma}_{g,1}^2}{\delta\mu^2}\max\left\{\frac{l_{g,1}}{\tilde{\sigma}_{g,1}}, \frac{\Bar{\sigma}}{d_0}\right\}\right)^{1/3},
    \end{aligned}
\end{equation}
if we set parameters $\alpha,\beta,\eta, T$ as 
\begin{equation} \label{eq:para}
    1-\beta = \frac{\mu^2\epsilon^2}{170\cdot64L_0^2\tilde{\sigma}_{g,1}^2\ln(P)},
    \quad
    \eta = \min\left\{\frac{\tilde{\sigma}_{g,1}}{l_{g,1}}, \frac{d_0}{\Bar{\sigma}}\right\}(1-\beta), 
    \quad
    \alpha = \frac{1}{\mu}(1-\beta),
    \quad
    \sigma_{g,1} = \sqrt{\mu\alpha}\tilde{\sigma}_{g,1},
    \quad
    T = \frac{4d_0}{\eta\epsilon},
\end{equation}
where $\Bar{\sigma}$ is defined in \Cref{lm:hyper-var}, and $d_0$ and $P$ are defined as
\begin{equation} \label{eq:d0-P}
    d_0=\Phi(x_0)-\inf_{x\in\R^{d_x}}\Phi(x),
    \quad
    P = \left(\frac{170\cdot64ed_0L_0^2\tilde{\sigma}_{g,1}^2}{\delta\mu^2\epsilon^3}\max\left\{\frac{l_{g,1}}{\tilde{\sigma}_{g,1}}, \frac{\Bar{\sigma}}{d_0}\right\}\right)^2.
\end{equation}
Then the following holds for all $t\in[T]$:
% \begin{equation*}
%     \left(\frac{4\alpha\tilde{\sigma}_{g,1}^2}{\mu} + \frac{160\eta^2l_{g,1}^2}{\mu^3\alpha}\right)\ln\frac{eT}{\delta} \leq \frac{\epsilon^2}{16L_0^2}.
% \end{equation*}
\begin{equation*}
    \left(\frac{4\alpha\tilde{\sigma}_{g,1}^2}{\mu} + \frac{160\eta^2l_{g,1}^2}{\mu^3\alpha}\right)\ln\frac{eT}{\delta} \leq \frac{\epsilon^2}{64L_0^2}.
\end{equation*}
\end{lemma}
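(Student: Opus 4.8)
The statement is a purely algebraic verification: substitute the parameter choices from \eqref{eq:para} and \eqref{eq:d0-P} into the left-hand side and reduce everything to one scalar inequality in $P$. Write $M \coloneqq \max\{l_{g,1}/\tilde{\sigma}_{g,1},\ \bar{\sigma}/d_0\}$, so that the $\min$ in \eqref{eq:para} gives $\eta = (1-\beta)/M$, and recall $\alpha = (1-\beta)/\mu$. I would treat the two summands separately. For the variance-type term, substituting $\alpha=(1-\beta)/\mu$ and then $1-\beta = \mu^2\epsilon^2/(170\cdot 64\,L_0^2\tilde{\sigma}_{g,1}^2\ln P)$ gives $\frac{4\alpha\tilde{\sigma}_{g,1}^2}{\mu} = \frac{4(1-\beta)\tilde{\sigma}_{g,1}^2}{\mu^2} = \frac{4\epsilon^2}{170\cdot 64\,L_0^2\ln P}$. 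For the drift-type term I would use $\eta \le (\tilde{\sigma}_{g,1}/l_{g,1})(1-\beta)$, which is immediate from the $\min$, so $\eta^2 l_{g,1}^2 \le \tilde{\sigma}_{g,1}^2(1-\beta)^2$; combined with $\alpha=(1-\beta)/\mu$ this yields $\frac{160\eta^2 l_{g,1}^2}{\mu^3\alpha} \le \frac{160\tilde{\sigma}_{g,1}^2(1-\beta)}{\mu^2} = \frac{160\epsilon^2}{170\cdot 64\,L_0^2\ln P}$. Adding and using $4+160=164\le 170$, the bracketed quantity is at most $\frac{\epsilon^2}{64L_0^2\ln P}$, so the whole left-hand side is at most $\frac{\epsilon^2}{64L_0^2}\cdot\frac{\ln(eT/\delta)}{\ln P}$.

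\textbf{Reduction to a scalar inequality.} It therefore suffices to show $\ln(eT/\delta)\le \ln P$, i.e.\ $eT/\delta \le P$. Here I would substitute $T = 4d_0/(\eta\epsilon) = 4d_0 M/((1-\beta)\epsilon)$ together with $1/(1-\beta) = 170\cdot 64\,L_0^2\tilde{\sigma}_{g,1}^2\ln P/(\mu^2\epsilon^2)$, which gives $T = \frac{4\cdot 170\cdot 64\,d_0 L_0^2\tilde{\sigma}_{g,1}^2 M\ln P}{\mu^2\epsilon^3}$ and hence $\frac{eT}{\delta} = 4\ln P\cdot\bigl(\frac{170\cdot 64\,e d_0 L_0^2\tilde{\sigma}_{g,1}^2 M}{\delta\mu^2\epsilon^3}\bigr) = 4\sqrt{P}\,\ln P$, the last equality just recognizing the parenthesized factor as $\sqrt P$ from \eqref{eq:d0-P}. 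So the target is equivalent to $4\sqrt{P}\ln P \le P$, i.e.\ $4\ln P \le \sqrt P$ (equivalently $8\ln q \le q$ with $q=\sqrt P$).

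\textbf{Main obstacle.} The only genuinely non-routine step is this last transcendental comparison. The inequality $8\ln q \le q$ holds once $q$ exceeds an absolute constant, and since $\sqrt P = 170\cdot 64\,e d_0 L_0^2\tilde{\sigma}_{g,1}^2 M/(\delta\mu^2\epsilon^3)$ scales like $\epsilon^{-3}$, the smallness assumption \eqref{eq:para-eps} on $\epsilon$ (which fixes precisely this scaling) forces $P$ — hence $q$ — large; to make the argument fully rigorous rather than informal one simply inflates the numerical constant in \eqref{eq:para-eps} so that $\sqrt P$ exceeds the absolute threshold, after which monotonicity in $\epsilon$ propagates the bound to all smaller $\epsilon$. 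Everything else is bookkeeping; the care needed is in tracking the self-referential appearance of $P$ inside $1-\beta$ and inside $T$ so that the $\ln P$ factors cancel exactly against the $4\sqrt P\ln P$ identity, and in keeping the numerical constants ($4$, $160$, $164$ vs.\ $170$) aligned so the final division is clean.
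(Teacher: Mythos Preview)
Your proposal is correct and follows essentially the same route as the paper: substitute $\alpha=(1-\beta)/\mu$ and $\eta\le(\tilde\sigma_{g,1}/l_{g,1})(1-\beta)$ to collapse both summands into a single multiple of $(1-\beta)\tilde\sigma_{g,1}^2/\mu^2$, then plug in the definition of $1-\beta$ and reduce to comparing $\ln(eT/\delta)$ with $\ln P$. The paper's proof actually works with $10\alpha\tilde\sigma_{g,1}^2/\mu$ (so the sum is $10+160=170$ exactly) and writes the final reduction as $\ln(\sqrt P\ln P)\le\ln P$, silently dropping the factor $4$ you track; your version $4\sqrt P\ln P\le P$ is the more accurate bookkeeping, and your remark that the constant in \eqref{eq:para-eps} may need inflating is exactly the ``informal'' slack the lemma title signals.
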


\begin{proof}[Proof of \Cref{lm:parameter-choice}]
By \Cref{lm:lip-y*z*}, we have $\Delta_t = \|y_t^*-y_{t+1}^*\| \leq \frac{l_{g,1}}{\mu}\|x_t-x_{t+1}\| = \eta l_{g,1}/\mu$. Thus in our bilevel setting, we choose $\Delta = \eta l_{g,1}/\mu$, where $\Delta$ is defined in \Cref{sec:nesterov}.
By choice of $\alpha,\eta,T$ as in \eqref{eq:para}, we have
\begin{equation*}
    \begin{aligned}
        \left(\frac{10\alpha\tilde{\sigma}_{g,1}^2}{\mu} + \frac{160\eta^2l_{g,1}^2}{\mu^3\alpha}\right)\ln\frac{eT}{\delta} 
        &= \left(\frac{10(1-\beta)\tilde{\sigma}_{g,1}^2}{\mu^2} + \frac{160\eta^2l_{g,1}^2}{\mu^2(1-\beta)}\right)\ln\left(\frac{4ed_0}{\delta\eta\epsilon}\right) \\
        &\leq \frac{170(1-\beta)\tilde{\sigma}_{g,1}^2}{\mu^2}\ln\left(\frac{4ed_0}{\delta\epsilon(1-\beta)}\max\left\{\frac{l_{g,1}}{\tilde{\sigma}_{g,1}}, \frac{\Bar{\sigma}}{4d_0}\right\}\right).
    \end{aligned}
\end{equation*}
Now we choose $\beta$ to be 
\begin{equation*}
    \begin{aligned}
        1-\beta = \frac{\mu^2\epsilon^2}{170\cdot64L_0^2\tilde{\sigma}_{g,1}^2\ln(P)}, 
        \quad\text{where}\quad
        P = \left(\frac{170\cdot64ed_0L_0^2\tilde{\sigma}_{g,1}^2}{\delta\mu^2\epsilon^3}\max\left\{\frac{l_{g,1}}{\tilde{\sigma}_{g,1}}, \frac{\Bar{\sigma}}{d_0}\right\}\right)^2.
    \end{aligned}
\end{equation*}
Then we have
% \begin{equation*}
%     \begin{aligned}
%         \frac{170(1-\beta)\tilde{\sigma}_{g,1}^2}{\mu^2}\ln\left(\frac{4ed_0l_{g,1}}{\delta\epsilon\tilde{\sigma}_{g,1}(1-\beta)}\right)
%         = \frac{\epsilon^2}{16L_0^2\ln(P)}\ln\left(\sqrt{P}\ln(P)\right)
%         \leq \frac{\epsilon^2}{16L_0^2},
%     \end{aligned}
% \end{equation*}
\begin{equation*}
    \begin{aligned}
        \frac{170(1-\beta)\tilde{\sigma}_{g,1}^2}{\mu^2}\ln\left(\frac{4ed_0l_{g,1}}{\delta\epsilon\tilde{\sigma}_{g,1}(1-\beta)}\right)
        = \frac{\epsilon^2}{64L_0^2\ln(P)}\ln\left(\sqrt{P}\ln(P)\right)
        \leq \frac{\epsilon^2}{64L_0^2},
    \end{aligned}
\end{equation*}
where we use the fact that $\ln(\sqrt{P}\ln(P)) \leq \ln(P) \leq \ln^2(P)$ for any $P\geq 4$ by choice of $\epsilon$ as in \eqref{eq:para-eps}.
\end{proof}

In the rest of this section, we assume \cref{ass:relax-smooth,ass:f-and-g,ass:noise,ass:individual-noise} hold. In addition, the failure probability $\delta\in(0,1)$ and $\epsilon>0$ are chosen in the same way as in \Cref{thm:main}. 

\subsection{Proof of \Cref{lem:warmstart}}

% \begin{lemma}[Warm-start, Restatement of \Cref{lem:warmstart}] \label{lm:warm-start}
% Let $\{y_t^{\init}\}$ be the iterates produced by line 2 of \Cref{alg:bilevel}. Set $\alpha^{\init}=\mu\alpha/(\mu+l_{g,1})=\widetilde{\Theta}(\epsilon^2)$ with $\alpha$ defined in \eqref{eq:para} and $\phi_t(y)\equiv g(x_0,y)$. Then $\|y_{T_0}^{\init}-y_0^*\| \leq \sqrt{\frac{\mu}{8(\mu+l_{g,1})}}\frac{\epsilon}{L_0}$ holds with probability at least $1-\delta$ over the randomness in $\widetilde{\mathcal{F}}^{\init}$ (we denote this event as $\gE_{\init}$) in $T_0=\widetilde{O}(\epsilon^{-1})$ iterations, where
% \begin{equation}
%     T_0 = \ln\left(\frac{\mu^2\epsilon^2}{8(\mu+l_{g,1})^2L_0^2\|y_0^{\init}-y_0^*\|^2}\right) 
%     \Big/
%     \ln\left(1-\frac{\sqrt{\mu\alpha}}{4}\right)
%     = \widetilde{O}(\epsilon^{-1}).
% \end{equation}
% \end{lemma}
\begin{lemma}[Warm-start, Restatement of \Cref{lem:warmstart}] \label{lm:warm-start}
Let $\{y_t^{\init}\}$ be the iterates produced by line 2 of \Cref{alg:bilevel}. Set $\alpha^{\init}=\mu\alpha^2=\widetilde{\Theta}(\epsilon^4)$ with $\alpha$ defined in \eqref{eq:para}, $\sigma_{g,1}=(\mu\alpha)^{1/4}\tilde{\sigma}_{g,1}$, and $\phi_t(y)\equiv g(x_0,y)$. Then $\|y_{T_0}^{\init}-y_0^*\| \leq \sqrt{\frac{\mu\alpha}{32}}\frac{\epsilon}{L_0}$ holds with probability at least $1-\delta$ over the randomness in $\widetilde{\mathcal{F}}^{\init}$ (we denote this event as $\gE_{\init}$) in $T_0=\widetilde{O}(\epsilon^{-2})$ iterations, where
\begin{equation}
    T_0 = \ln\left(\frac{\mu^3\alpha^3\epsilon^2}{256L_0^2\|y_0^{\init}-y_0^*\|^2}\right) 
    \Big/
    \ln\left(1-\frac{\mu\alpha}{4}\right)
    = \widetilde{O}(\epsilon^{-2}).
\end{equation}
\end{lemma}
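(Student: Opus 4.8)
The plan is to observe that line 2 of \Cref{alg:bilevel} is exactly the SNAG recursion \eqref{eq:NAG-single} run on the \emph{single fixed} function $\phi(y)\coloneqq g(x_0,y)$, which by \Cref{ass:f-and-g} is $\mu$-strongly convex and $l_{g,1}$-smooth. In particular there is no distributional drift ($\Delta=0$), so the no-drift case of the SNAG tracking machinery, \Cref{lm:option-1-single}(ii) (equivalently \Cref{lm:track-without-drift}), applies verbatim under the identifications $L\leftarrow l_{g,1}$, step-size $\alpha\leftarrow\alpha^{\init}$, and noise parameter $\sigma=\sigma_{g,1}$ (the lower-level gradient noise is norm-sub-Gaussian with the required parameter by \Cref{ass:noise}). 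The step-size condition $\alpha^{\init}\le 1/(25l_{g,1})$ needed there holds for small $\epsilon$ since $\alpha^{\init}=\mu\alpha^2=\widetilde\Theta(\epsilon^4)$.

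Applying \Cref{lm:track-without-drift} yields, with probability at least $1-\delta$ over $\widetilde{\mathcal{F}}^{\init}$ (this is the event $\gE_{\init}$), that for all $t\in[T_0]$
\begin{equation*}
    V_t^{\init} \le \Big(1-\tfrac{\sqrt{\mu\alpha^{\init}}}{4}\Big)^{t} V_0^{\init} + \frac{5\sqrt{\alpha^{\init}}\,\sigma_{g,1}^2}{\sqrt{\mu}}\ln\frac{eT_0}{\delta},
\end{equation*}
where $V_t^{\init}$ is the potential \eqref{eq:potential-V} associated with $\phi$ and $\alpha^{\init}$. Because line 2 initializes $y_{-1}^{\init}=y_0^{\init}$, the same block-matrix computation used to verify Facts (I)--(II) in the proof of \Cref{lm:dist-track-single-level} gives $V_0^{\init}=\tfrac{\mu}{2}\|y_0^{\init}-y_0^*\|^2+\big(\phi(y_0^{\init})-\phi(y_0^*)\big)\le l_{g,1}\|y_0^{\init}-y_0^*\|^2$ (using $L$-smoothness and $\mu\le l_{g,1}$), while $\rmP\succeq 0$ and $\mu$-strong convexity give $\tfrac{\mu}{2}\|y_{T_0}^{\init}-y_0^*\|^2\le V_{T_0}^{\init}$. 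Since $\alpha^{\init}=\mu\alpha^2$ we have $\sqrt{\mu\alpha^{\init}}=\mu\alpha$, so the contraction factor is $1-\mu\alpha/4$; the stated choice of $T_0$ is precisely the one making $(1-\mu\alpha/4)^{T_0}\le \tfrac{\mu^3\alpha^3\epsilon^2}{256 L_0^2\|y_0^{\init}-y_0^*\|^2}$, whence the transient term is at most $\tfrac{\mu^3\alpha^3\epsilon^2 l_{g,1}}{256 L_0^2}$, which is below $\tfrac12\cdot\tfrac{\mu^2\alpha\epsilon^2}{64L_0^2}$ for small $\epsilon$ (here $\alpha=\widetilde\Theta(\epsilon^2)\to 0$). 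For the noise-floor term one plugs in $\alpha^{\init}=\mu\alpha^2$, $\sqrt{\alpha^{\init}}=\sqrt{\mu}\,\alpha$, and the small-variance scaling of $\sigma_{g,1}$, and runs the same algebra as in the proof of \Cref{lm:parameter-choice} (the $\ln(eT_0/\delta)$ factor is swallowed by the oversized $\ln P$), obtaining $\tfrac{5\sqrt{\alpha^{\init}}\sigma_{g,1}^2}{\sqrt\mu}\ln\tfrac{eT_0}{\delta}\le \tfrac12\cdot\tfrac{\mu^2\alpha\epsilon^2}{64L_0^2}$. Adding the two halves gives $V_{T_0}^{\init}\le \tfrac{\mu^2\alpha\epsilon^2}{64L_0^2}$, hence $\|y_{T_0}^{\init}-y_0^*\|^2\le \tfrac{2}{\mu}V_{T_0}^{\init}\le \tfrac{\mu\alpha}{32}\tfrac{\epsilon^2}{L_0^2}$, the claimed bound; and $T_0=\widetilde O(\epsilon^{-2})$ because $1/\big(-\ln(1-\mu\alpha/4)\big)\le 8/(\mu\alpha)=\widetilde\Theta(\epsilon^{-2})$ while the numerator is only $\widetilde O(\ln(1/\epsilon))$.

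The only place any real work occurs is controlling the noise-floor term: one must check that, with the prescribed $\alpha^{\init}=\Theta(\mu\alpha^2)$ and the small-variance hypothesis on $\sigma_{g,1}$, the steady-state error $\tfrac{5\sqrt{\alpha^{\init}}\sigma_{g,1}^2}{\sqrt\mu}\ln(eT_0/\delta)$ falls below the very tight target $\tfrac{\mu^2\alpha\epsilon^2}{64L_0^2}$, which both pins down the scaling of $\alpha^{\init}$ and requires carrying the polylog factor through the definition of $P$ exactly as in \Cref{lm:parameter-choice}. Everything else is a direct no-drift specialization of the SNAG analysis already established in \Cref{sec:proof-single-level}, together with two elementary potential-to-distance conversions.
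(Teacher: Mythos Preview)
Your proposal is correct and follows essentially the same route as the paper: apply the no-drift SNAG high-probability bound (\Cref{lm:track-without-drift}) with step size $\alpha^{\init}=\mu\alpha^2$ so that $\sqrt{\mu\alpha^{\init}}=\mu\alpha$, bound the initial potential, invoke \Cref{lm:parameter-choice} to control the noise floor, and solve for $T_0$. The only minor difference is your bound on $V_0^{\init}$: by explicitly using $y_{-1}^{\init}=y_0^{\init}$ you compute $\theta_0^\top\rmP\theta_0=\tfrac{\mu}{2}\|y_0^{\init}-y_0^*\|^2$ and get $V_0^{\init}\le l_{g,1}\|y_0^{\init}-y_0^*\|^2$, whereas the paper uses the cruder estimate $U_0^{\init}\le \tfrac{2}{\mu\alpha^2}\|y_0^{\init}-y_0^*\|^2$; since the stated $T_0$ is calibrated to the paper's looser bound, your tighter estimate only makes the transient term smaller, and the argument goes through unchanged.
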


\begin{proof}[Proof of \Cref{lm:warm-start}]
By \Cref{lm:track-without-drift,lm:parameter-choice} and $\mu$-strong convexity of $g$ in $y$, we have with probability at least $1-\delta$ over the randomness in $\gF^{\init}$ that
% \begin{equation*}
%     \begin{aligned}
%         \|y_{T_0}^{\init}-y_0^*\|^2 
%         &\leq \frac{2}{\mu}\left(1-\frac{\sqrt{\mu\alpha}}{4}\right)^{T_0}U_0^{\init} + \frac{\mu}{\mu+l_{g,1}}\frac{4\alpha\tilde{\sigma}_{g,1}^2}{\mu}\ln\frac{eT_0}{\delta} \\
%         &\leq \frac{2}{\mu}\left(1-\frac{\sqrt{\mu\alpha}}{4}\right)^{T_0}U_0^{\init} + \frac{\mu}{\mu+l_{g,1}}\frac{\epsilon^2}{16L_0^2}.
%     \end{aligned}
% \end{equation*}
\begin{equation*}
    \begin{aligned}
        \|y_{T_0}^{\init}-y_0^*\|^2 
        &\leq \frac{2}{\mu}\left(1-\frac{\sqrt{\mu^2\alpha^2}}{4}\right)^{T_0}U_0^{\init} + \frac{10\mu\alpha^2\tilde{\sigma}_{g,1}^2}{\mu}\ln\frac{eT_0}{\delta} \\
        &\leq \frac{2}{\mu}\left(1-\frac{\mu\alpha}{4}\right)^{T_0}U_0^{\init} + \frac{\mu\alpha\epsilon^2}{64L_0^2},
    \end{aligned}
\end{equation*}
where the first inequality uses the choice of $\alpha^{\init}=\mu\alpha^2$. 
By $l_{g,1}$-smoothness of $g$ we have
% \begin{equation*}
%     \begin{aligned}
%         U_0^{\init} 
%         &= \frac{\mu}{2}\|y_0^{\init}-y_0^*\|^2 + g(x_0,y_0^{\init}) - g(x_0,y_0^*) \leq \frac{\mu+l_{g,1}}{2}\|y_0^{\init}-y_0^*\|^2.
%     \end{aligned}
% \end{equation*}
\begin{equation*}
    \begin{aligned}
        U_0^{\init} 
        &\leq \frac{2-\sqrt{\mu^2\alpha^2}+\mu^2\alpha^2}{2\mu\alpha^2}\|y_0^{\init}-y_0^*\|^2 + g(x_0,y_0^{\init}) - g(x_0,y_0^*) \\
        &\leq \frac{3}{2\mu\alpha^2}\|y_0^{\init}-y_0^*\|^2 + \frac{l_{g,1}}{2}\|y_0^{\init}-y_0^*\|^2
        \leq \frac{2}{\mu\alpha^2}\|y_0^{\init}-y_0^*\|^2,
    \end{aligned}
\end{equation*}
where the last inequality uses $\alpha\leq 1/l_{g,1}$.
Now we set
% \begin{equation*}
%     \frac{2}{\mu}\left(1-\frac{\sqrt{\mu\alpha}}{4}\right)^{T_0}\frac{\mu+l_{g,1}}{2}\|y_0^{\init}-y_0^*\|^2 + \frac{\mu}{\mu+l_{g,1}}\frac{\epsilon^2}{16L_0^2}
%     \leq \frac{\mu}{\mu+l_{g,1}}\frac{\epsilon^2}{8L_0^2},
% \end{equation*}
\begin{equation*}
    \frac{2}{\mu}\left(1-\frac{\mu\alpha}{4}\right)^{T_0}\frac{2}{\mu\alpha^2}\|y_0^{\init}-y_0^*\|^2 + \frac{\mu\alpha\epsilon^2}{64L_0^2}
    \leq \frac{\mu\alpha\epsilon^2}{32L_0^2},
\end{equation*}
which gives
% \begin{equation*}
%     T_0 \geq \ln\left(\frac{\mu^2\epsilon^2}{8(\mu+l_{g,1})^2L_0^2\|y_0^{\init}-y_0^*\|^2}\right) 
%     \Big/
%     \ln\left(1-\frac{\sqrt{\mu\alpha}}{4}\right)
% \end{equation*}
\begin{equation*}
    T_0 \geq \ln\left(\frac{\mu^3\alpha^3\epsilon^2}{256L_0^2\|y_0^{\init}-y_0^*\|^2}\right) 
    \Big/
    \ln\left(1-\frac{\mu\alpha}{4}\right).
\end{equation*}
By choice of $\alpha$ as in \eqref{eq:para} and simple calculation we obtain $T_0 = \widetilde{O}(\epsilon^{-2})$ when $\epsilon$ is small. 
\end{proof}

\subsection{Proof of \Cref{lem:optionI}}

\begin{lemma}[Option I, Restatement of \Cref{lem:optionI}] \label{lm:anytime-yt}
% Under \cref{ass:relax-smooth,ass:f-and-g,ass:noise,ass:individual-noise} and event $\gE_{\init}$, 
Under event $\gE_{\init}$, let $\{y_t\}$ be the iterates produced by Option I. Set $\alpha=\widetilde{\Theta}(\epsilon^2)$ as in \eqref{eq:para}, $\sigma_{g,1}=(\mu\alpha)^{1/4}\tilde{\sigma}_{g,1}$, and $\phi_t(y) = g(x_t,y) = \frac{\mu}{2}\|y-y_t^*\|^2$. Then for any $t\in[T]$, \Cref{alg:bilevel} guarantees with probability at least $1-\delta$ over the randomness in $\widetilde{\mathcal{F}}_T^1$ (we denote this event as $\gE_y^1$) that $\|y_t-y_t^*\| \leq \epsilon/2L_0$.
\end{lemma}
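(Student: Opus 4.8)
The central observation is that Option~I (lines~8--9 of \Cref{alg:bilevel}) is \emph{exactly} the stochastic Nesterov accelerated gradient recursion \eqref{eq:NAG-single} applied to the sequence of lower-level problems $\phi_t(\cdot):=g(x_t,\cdot)$, which under the quadratic assumption equals $\tfrac{\mu}{2}\|\cdot-y_t^*\|^2$: the stochastic gradient is $\gdy G(x_t,z_t;\pi_t)$, the noise is $\varepsilon_t=\gdy G(x_t,z_t;\pi_t)-\gdy g(x_t,z_t)$ (mean-zero by the unbiasedness in \Cref{ass:noise}), the step size is $\alpha$, and $\gamma=\tfrac{1-\sqrt{\mu\alpha}}{1+\sqrt{\mu\alpha}}$ as required by \Cref{lm:option-1-single}. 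The plan is to invoke \Cref{lm:option-1-single}(i) with $L\leftarrow\mu$ (the quadratic is $\mu$-smooth, and $\alpha=(1-\beta)/\mu=\widetilde\Theta(\epsilon^2)\le 1/(25L)$ for small $\epsilon$), $\sigma\leftarrow\sigma_{g,1}$ (the norm-sub-Gaussian normalizations in \Cref{ass:noise} and \Cref{ass:single-level} agree under this identification), and drift parameter $\Delta\leftarrow\eta l_{g,1}/\mu$. This last choice is \emph{deterministic}: the normalized upper-level step forces $\|x_{t+1}-x_t\|=\eta$ surely, so the $(l_{g,1}/\mu)$-Lipschitzness of $y^*(\cdot)$ (\Cref{lm:lip-y*z*}) gives $\Delta_t=\|y_t^*-y_{t+1}^*\|\le\eta l_{g,1}/\mu$ surely, which is trivially sub-exponential conditioned on the past with parameter $(\eta l_{g,1}/\mu)^2$; and since $z_t$ is measurable with respect to the randomness before iteration $t$ while $\pi_t$ is fresh, $\varepsilon_t$ is conditionally norm sub-Gaussian by \Cref{ass:noise}.

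\textbf{Carrying it out.} With the potential $V_t=\theta_t^{\top}\rmP\theta_t+\phi_t(y_t)-\phi_t(y_t^*)$ of \Cref{lm:option-1-single}, part~(i) yields, with probability at least $1-\delta$ over $\widetilde{\gF}_T^1$ and for all $t\in[T]$,
\[
V_t \;\le\; \Big(1-\tfrac{\sqrt{\mu\alpha}}{4}\Big)^t V_0 \;+\; \Big(\tfrac{5\sqrt{\alpha}\,\sigma_{g,1}^2}{\sqrt{\mu}}+\tfrac{80\,\eta^2 l_{g,1}^2}{\mu^2\alpha}\Big)\ln\tfrac{eT}{\delta}.
\]
I would then bound the two terms. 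Under $\gE_{\init}$ the warm start gives $\|y_0-y_0^*\|=\|y_{T_0}^{\init}-y_0^*\|\le\sqrt{\mu\alpha/32}\;\epsilon/L_0$, and line~2 sets $y_{-1}=y_0$, so $\theta_0=[(y_0-y_0^*)^{\top},(y_0-y_0^*)^{\top}]^{\top}$; substituting into the explicit $\rmP$ and simplifying $1+2(\sqrt{\mu\alpha}-1)+(1-\sqrt{\mu\alpha})^2=\mu\alpha$ gives $\theta_0^{\top}\rmP\theta_0=\tfrac{\mu}{2}\|y_0-y_0^*\|^2$ and $\phi_0(y_0)-\phi_0(y_0^*)=\tfrac{\mu}{2}\|y_0-y_0^*\|^2$, hence $V_0=\mu\|y_0-y_0^*\|^2\le\tfrac{\mu^2\alpha\epsilon^2}{32L_0^2}$ (which also bounds the first term for every $t\ge0$). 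For the second term, $\sigma_{g,1}^2=\sqrt{\mu\alpha}\,\tilde\sigma_{g,1}^2$ turns it into $\big(5\alpha\tilde\sigma_{g,1}^2+80\eta^2 l_{g,1}^2/(\mu^2\alpha)\big)\ln\tfrac{eT}{\delta}$, which \Cref{lm:parameter-choice} readily bounds by $\tfrac{\mu\epsilon^2}{16L_0^2}$ (the choices of $1-\beta,\alpha,\eta,T$ are tuned exactly for this, the $\ln(P)$ in $1-\beta$ absorbing the $\ln(eT/\delta)$ factor). Since $\mu\alpha\le1$, also $V_0\le\tfrac{\mu\epsilon^2}{16L_0^2}$, so $V_t\le\tfrac{\mu\epsilon^2}{8L_0^2}$; finally, $\rmP\succeq0$ and the quadratic form give $\tfrac{\mu}{2}\|y_t-y_t^*\|^2\le\phi_t(y_t)-\phi_t(y_t^*)\le V_t$, i.e.\ $\|y_t-y_t^*\|\le\epsilon/(2L_0)$ for all $t\in[T]$.

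\textbf{Main obstacle.} The $V_0$ computation and the final parameter bookkeeping are routine. The genuinely delicate point is the reduction itself: since $y_t^*=y^*(x_t)$ and hence $V_t$ depend on the upper-level randomness, which is \emph{not} contained in $\widetilde{\gF}_T^1$, one must justify analyzing the lower-level iteration as SNAG-under-drift and yet declaring the failure event over $\widetilde{\gF}_T^1$. The resolution --- and the reason the \emph{normalized} momentum update is indispensable here --- is that $\|x_{t+1}-x_t\|=\eta$ holds for every outcome, so the drift fed to \Cref{lm:option-1-single} is a fixed constant independent of the upper-level sample path; conditionally on any realization of $\gF_T$, the only randomness driving the recursion is that of $\{\pi_t\}$, so the $1-\delta$ guarantee holds uniformly over $\gF_T$ and the failure event indeed lies in $\widetilde{\gF}_T^1$. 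Secondary points --- matching the norm-sub-Gaussian parameters of \Cref{ass:noise} and \Cref{ass:single-level} (they agree with $\sigma=\sigma_{g,1}$), verifying $\alpha\le1/(25L)$, and tracking the $\ln(eT/\delta)$ factor through \Cref{lm:parameter-choice} --- are straightforward.
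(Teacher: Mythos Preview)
Your proposal is correct and follows essentially the same route as the paper: invoke \Cref{lm:option-1-single}(i) with $\sigma=\sigma_{g,1}$ and deterministic drift $\Delta=\eta l_{g,1}/\mu$, control the variance/drift term via \Cref{lm:parameter-choice}, control the initial term via the warm-start bound of \Cref{lem:warmstart}, and extract $\|y_t-y_t^*\|$ from $V_t$ by strong convexity. The only differences are cosmetic: you compute $V_0=\mu\|y_0-y_0^*\|^2$ exactly (using $y_{-1}=y_0$ and the explicit $\rmP$), whereas the paper uses the cruder $V_0\le\tfrac{2}{\alpha}\|y_0-y_0^*\|^2$, and you bound $V_t$ directly while the paper first passes to $\|y_t-y_t^*\|^2\le\tfrac{2}{\mu}V_t$ and then applies \Cref{lm:parameter-choice}; the constants line up either way. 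Your discussion of why the failure event lies in $\widetilde{\gF}_T^1$ (the drift is deterministic thanks to normalization, so the bound holds uniformly over realizations of the upper-level randomness) is a point the paper leaves implicit.
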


\begin{proof}[Proof of \Cref{lm:anytime-yt}]
By \Cref{lm:dist-track-single-level,lm:warm-start} we have
% \begin{equation*}
%     \begin{aligned}
%         \|y_t-y_t^*\|^2 
%         &\leq \frac{2}{\mu}\left(1-\frac{\sqrt{\mu\alpha}}{4}\right)^tU_0 + \left(\frac{4\alpha\tilde{\sigma}_{g,1}^2}{\mu} + \frac{160\eta^2l_{g,1}^2}{\mu^3\alpha}\right)\ln\frac{eT}{\delta} \\
%         &= \frac{2}{\mu}\left(1-\frac{\sqrt{\mu\alpha}}{4}\right)^t\left(\frac{\mu}{2}\|y_{T_0}^{\init}-y_0^*\|^2 + g(x_0,y_{T_0}^{\init}) - g(x_0,y_0^*)\right) + \left(\frac{4\alpha\tilde{\sigma}_{g,1}^2}{\mu} + \frac{160\eta^2l_{g,1}^2}{\mu^3\alpha}\right)\ln\frac{eT}{\delta} \\
%         &\leq \frac{\mu+l_{g,1}}{\mu}\|y_{T_0}^{\init}-y_0^*\|^2 + \frac{\epsilon^2}{16L_0^2} 
%         \leq \frac{3\epsilon^2}{16L_0^2}.
%     \end{aligned}
% \end{equation*}
\begin{equation*}
    \begin{aligned}
        \|y_t-y_t^*\|^2 
        &\leq \frac{2}{\mu}\left(1-\frac{\sqrt{\mu\alpha}}{4}\right)^tU_0 + \left(\frac{10\alpha\tilde{\sigma}_{g,1}^2}{\mu} + \frac{160\eta^2l_{g,1}^2}{\mu^3\alpha}\right)\ln\frac{eT}{\delta} \\
        &\leq \frac{2}{\mu}\left(1-\frac{\sqrt{\mu\alpha}}{4}\right)^t\frac{2}{\alpha}\|y_{T_0}^{\init}-y_0^*\|^2 + \left(\frac{10\alpha\tilde{\sigma}_{g,1}^2}{\mu} + \frac{160\eta^2l_{g,1}^2}{\mu^3\alpha}\right)\ln\frac{eT}{\delta} \\
        &\leq \frac{4}{\mu\alpha}\|y_{T_0}^{\init}-y_0^*\|^2 + \frac{\epsilon^2}{64L_0^2} 
        \leq \frac{\epsilon^2}{4L_0^2}.
    \end{aligned}
\end{equation*}
Thus we conclude that for all $t\in[T]$, we have $\|y_t-y_t^*\| \leq \epsilon/2L_0$.
\end{proof}

\subsection{Proof of \Cref{lem:optionII}}

% \begin{lemma}[Option II, Restatement of \Cref{lem:optionII}] \label{lm:periodic} 
% Under event $\gE_{\init}$, let $\{y_t\}$ be the iterates produced by Option II. set $\alpha'=\mu\alpha/(\mu+l_{g,1})=\widetilde{\Theta}(\epsilon^2)$ with $\alpha$ defined in \eqref{eq:para} and run SNAG in each update round for
% \begin{equation}
%     N = \ln\left(\frac{1}{8}\right) \Big/ \ln\left(1-\frac{\sqrt{\mu\alpha}}{4}\right) = \widetilde{O}(\epsilon^{-1})
% \end{equation}
% steps in every $I=\frac{\mu\epsilon}{2(1-\beta)L_0\sigma_{g,1}}=\widetilde{O}(\epsilon^{-1})$ iterations, set $\phi_t(y) = g(x_t,y)$ when $t$ is a multiple of $I$ (i.e., $x_t$ is fixed for each update round of Option II so $g$ can be general functions). Then for any $t\in[T]$, \Cref{alg:bilevel} guarantees with probability at least $1-\delta$ over the randomness in $\sigma(\cup_{t\leq T}\widetilde{\gF}_t^2)$ (we denote this event as $\gE_y^2$) that $\|y_t-y_t^*\| \leq \epsilon/L_0$. 
% \end{lemma}
\begin{lemma}[Option II, Restatement of \Cref{lem:optionII}] \label{lm:periodic} 
Under event $\gE_{\init}$, let $\{y_t\}$ be the iterates produced by Option II. Set $\alpha=\widetilde{\Theta}(\epsilon^2)$ as in \eqref{eq:para}, $\sigma_{g,1}=(\mu\alpha)^{1/4}\tilde{\sigma}_{g,1}$, and run SNAG in each update round for
\begin{equation*}
    N = \ln\left(\frac{\mu\alpha}{128}\right) \Big/ \ln\left(1-\frac{\sqrt{\mu\alpha}}{4}\right) = \widetilde{O}(\epsilon^{-1})
\end{equation*}
steps in every $I=\frac{\mu\epsilon}{2(1-\beta)L_0\tilde{\sigma}_{g,1}}=\widetilde{O}(\epsilon^{-1})$ iterations, set $\phi_t(y) = g(x_t,y)$ when $t$ is a multiple of $I$ (i.e., $x_t$ is fixed for each update round of Option II so $g$ can be general functions). Then for any $t\in[T]$, \Cref{alg:bilevel} guarantees with probability at least $1-\delta$ over the randomness in $\sigma(\cup_{t\leq T}\widetilde{\gF}_t^2)$ (we denote this event as $\gE_y^2$) that $\|y_t-y_t^*\| \leq \epsilon/L_0$. 
\end{lemma}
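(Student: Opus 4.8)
The whole point of Option II is that within a single inner round the upper-level iterate $x_t$ is frozen, so the lower-level objective $\phi(y)=g(x_t,y)$ is one fixed $\mu$-strongly convex and $l_{g,1}$-smooth function; we may therefore invoke the drift-free high-probability bound \Cref{lm:track-without-drift} (with $L=l_{g,1}$, $\sigma=\sigma_{g,1}$) instead of the with-drift version. Between consecutive rounds the lower-level iterate is held constant, $y_{t+1}=y_t$, while the target moves slowly: by \Cref{lm:lip-y*z*} and the normalized update $\|x_{t+1}-x_t\|=\eta$ we have $\|y_t^*-y_{t+1}^*\|\le l_{g,1}\eta/\mu$ at every step. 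The plan is to run an induction over rounds maintaining the invariant $\|y_t-y_t^*\|\le \epsilon/L_0$: the inner SNAG loop drives the error down to $\epsilon/(2L_0)$, and the minimizer drift accumulated over the following (at most) $I$ steps adds back at most $\epsilon/(2L_0)$.

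\textbf{The two budgets.} First, the inner loop. At a round boundary $t_0$ (a multiple of $I$), SNAG is restarted from $y_{t_0}^0=y_{t_0}^{-1}=y_{t_0}$ with target $y_{t_0}^*$; exactly as in the $U_0$ computations in the proofs of \Cref{lm:warm-start,lm:anytime-yt} (using $l_{g,1}$-smoothness of $g$ and $\alpha\le 1/l_{g,1}$), the initial potential obeys $V_0\le \frac{2}{\alpha}\|y_{t_0}-y_{t_0}^*\|^2$. Applying \Cref{lm:track-without-drift} over the $N$ inner steps, using $\tfrac{\mu}{2}\|y_{t_0+1}-y_{t_0}^*\|^2\le V_N$ and substituting $\sigma_{g,1}^2=\sqrt{\mu\alpha}\,\tilde\sigma_{g,1}^2$, gives with high probability
\begin{equation*}
\|y_{t_0+1}-y_{t_0}^*\|^2 \;\le\; \frac{4}{\mu\alpha}\,(1-\tfrac{\sqrt{\mu\alpha}}{4})^{N}\,\|y_{t_0}-y_{t_0}^*\|^2 \;+\; \frac{10\alpha\tilde\sigma_{g,1}^2}{\mu}\ln\frac{eN}{\delta'} .
\end{equation*}
The choice $N=\ln(\mu\alpha/128)/\ln(1-\sqrt{\mu\alpha}/4)$ makes the first coefficient equal to $1/32$, and the variance term is $\le \epsilon^2/(64L_0^2)$ by \Cref{lm:parameter-choice} (the drift term there is nonnegative, so it is safe to drop it, and $N\le T$). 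Hence if $\|y_{t_0}-y_{t_0}^*\|\le \epsilon/L_0$ then $\|y_{t_0+1}-y_{t_0}^*\|^2\le \tfrac{1}{32}\tfrac{\epsilon^2}{L_0^2}+\tfrac{1}{64}\tfrac{\epsilon^2}{L_0^2}\le \tfrac{\epsilon^2}{4L_0^2}$, i.e.\ $\|y_{t_0+1}-y_{t_0}^*\|\le \epsilon/(2L_0)$. Second, the drift. For $t_0<t\le t_0+I$ we have $y_t=y_{t_0+1}$, so $\|y_t-y_t^*\|\le \|y_{t_0+1}-y_{t_0}^*\|+\|y_{t_0}^*-y_t^*\|\le \epsilon/(2L_0)+(t-t_0)\,l_{g,1}\eta/\mu\le \epsilon/(2L_0)+I\,l_{g,1}\eta/\mu$. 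Using $\eta\le (\tilde\sigma_{g,1}/l_{g,1})(1-\beta)$ from \eqref{eq:para}, the last term is $\le I(1-\beta)\tilde\sigma_{g,1}/\mu$, which equals $\epsilon/(2L_0)$ precisely by the choice $I=\mu\epsilon/\bigl(2(1-\beta)L_0\tilde\sigma_{g,1}\bigr)$; therefore $\|y_t-y_t^*\|\le \epsilon/L_0$ throughout the round.

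\textbf{Induction and union bound.} Under $\gE_{\init}$, \Cref{lm:warm-start} gives $\|y_0-y_0^*\|\le \sqrt{\mu\alpha/32}\,(\epsilon/L_0)\le \epsilon/L_0$ (since $\mu\alpha\le 1$), and for $0\le t<I$, $y_t=y_0$, so the drift argument above (with $t_0=0$) already yields $\|y_t-y_t^*\|\le \epsilon/L_0$; this is the base case. The inductive step is exactly the two-budget argument: if the invariant holds at a boundary $t_0=kI$, it propagates through the entire round $t_0<t\le t_0+I$. The initial potential $V_0$ at round $k$ is measurable with respect to the history before round $k$, while the inner loop uses the fresh randomness $\widetilde\gF_{kI}^2$, so \Cref{lm:track-without-drift} applies conditionally; running it with per-round failure probability $\delta I/T$ and taking a union bound over the $T/I=\widetilde O(\epsilon^{-2})$ rounds (which only enlarges the logarithmic factor $P$ in \Cref{lm:parameter-choice} by a polylog amount) gives $\|y_t-y_t^*\|\le \epsilon/L_0$ for all $t\in[T]$ with probability at least $1-\delta$ over $\sigma(\cup_{t\le T}\widetilde\gF_t^2)$. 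Since $\alpha=\widetilde\Theta(\epsilon^2)$ and $1-\beta=\widetilde\Theta(\epsilon^2)$, one checks $N=\widetilde O(\epsilon^{-1})$ and $I=\widetilde O(\epsilon^{-1})$, so the total number of inner stochastic-gradient evaluations is $(T/I)N=\widetilde O(\epsilon^{-3})$.

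\textbf{Main obstacle.} The SNAG recursion and the geometric-sum bound are routine given \Cref{lm:track-without-drift}; the delicate point is the quantitative calibration of the two error budgets — choosing $N$ so the inner contraction leaves exactly $\epsilon/(2L_0)$ of slack and choosing $I$ so the minimizer drift over one full period consumes exactly that slack — while simultaneously keeping $N,I=\widetilde O(\epsilon^{-1})$ so the inner-loop cost stays $\widetilde O(\epsilon^{-3})$. Verifying that the parameter choices collected in \Cref{lm:parameter-choice} (for $\alpha$, $1-\beta$, $\eta$, $\sigma_{g,1}$) are mutually consistent with both requirements, and that the union bound over $\widetilde O(\epsilon^{-2})$ rounds costs only polylog factors, is where the care lies.
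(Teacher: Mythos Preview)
Your proposal is correct and follows essentially the same route as the paper's proof: an induction over rounds where the drift-free high-probability SNAG bound (\Cref{lm:track-without-drift}) contracts the error at each multiple of $I$, and the minimizer drift $\|y_s^*-y_{s+1}^*\|\le l_{g,1}\eta/\mu$ accumulated over the intervening $I$ frozen steps is absorbed by the slack. The paper's write-up differs only cosmetically (it tracks the slightly tighter post-contraction bound $\epsilon/(4L_0)$ and then adds one drift step, whereas you maintain the invariant $\epsilon/L_0$ at boundaries and $\epsilon/(2L_0)$ after contraction), and it is terser about the union bound, which you spell out more carefully.
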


\begin{proof}[Proof of \Cref{lm:periodic}]
At the beginning of the first round, by \Cref{lm:warm-start,lm:anytime-yt} we have $\|y_0-y_0^*\|\leq \epsilon/2L_0$, then we do not update the lower-level variable until $t=I$-th iteration, then for $t=I$, we have
\begin{equation*}
    \begin{aligned}
        \|y_I-y_I^{*}\|
        &= \|y_0-y_I^*\|
        \leq \|y_0-y_0^*\| + \sum_{i=1}^{I}\|y_{i}^*-y_{i-1}^*\| \\
        &\leq \frac{\epsilon}{2L_0} + \frac{\eta l_{g,1}}{\mu}I
        = \frac{\epsilon}{L_0},
    \end{aligned}
\end{equation*}
where in the last equality we plug in the definition of $\eta$ and $I$. 
By $l_{g,1}$-smoothness of $g$ we have
% \begin{equation*}
%     \begin{aligned}
%         U_I
%         &= \frac{\mu}{2}\|y_I-y_0^*\|^2 + g(x_I,y_I) - g(x_I,y_I^*) \leq \frac{\mu+l_{g,1}}{2}\|y_I-y_I^*\|^2 \leq \frac{\mu+l_{g,1}}{2}\frac{\epsilon^2}{L_0^2}
%     \end{aligned}
% \end{equation*}
\begin{equation*}
    \begin{aligned}
        U_I
        &\leq \frac{2-2\sqrt{\mu\alpha}+\mu\alpha}{2\alpha}\|y_I-y_0^*\|^2 + g(x_I,y_I) - g(x_I,y_I^*) \\
        &\leq \frac{3}{2\alpha}\|y_I-y_I^*\|^2 + \frac{l_{g,1}}{2}\|y_I-y_I^*\|^2
        \leq \frac{2\epsilon^2}{\alpha L_0^2}
    \end{aligned}
\end{equation*}
Then for $N$ steps update in the inner loops of $t=I$-th iteration, we set
% \begin{equation*}
%     \frac{2}{\mu}\left(1-\frac{\sqrt{\mu\alpha}}{4}\right)^{N}\frac{\mu+l_{g,1}}{2}\frac{\epsilon^2}{L_0^2} + \frac{\mu}{\mu+l_{g,1}}\frac{\epsilon^2}{16L_0^2}
%     \leq \frac{\mu}{\mu+l_{g,1}}\frac{\epsilon^2}{8L_0^2},
% \end{equation*}
\begin{equation*}
    \frac{2}{\mu}\left(1-\frac{\sqrt{\mu\alpha}}{4}\right)^{N}\frac{2\epsilon^2}{\alpha L_0^2} + \frac{\epsilon^2}{64L_0^2}
    \leq \frac{\epsilon^2}{16L_0^2},
\end{equation*}
which gives
% \begin{equation*}
%     N \geq \ln\left(\frac{1}{8}\right) \Big/ \ln\left(1-\frac{\sqrt{\mu\alpha}}{4}\right)
% \end{equation*}
\begin{equation*}
    N \geq \ln\left(\frac{\mu\alpha}{128}\right) \Big/ \ln\left(1-\frac{\sqrt{\mu\alpha}}{4}\right)
\end{equation*}
By choice of $\alpha$ as in \eqref{eq:para} and simple calculation we obtain $N = \widetilde{O}(\epsilon^{-1})$ when $\epsilon$ is small. Now we have
% \begin{equation*}
%     \|y_{I+1}-y_I^*\|^2 = \|y_I^{N}-y_I^*\|^2 \leq \frac{\mu+l_{g,1}}{\mu}\|y_I-y_I^*\|^2 + \frac{\epsilon^2}{16L_0^2} \leq \frac{3\epsilon^2}{16L_0^2},
% \end{equation*}
\begin{equation*}
    \|y_{I+1}-y_I^*\|^2 = \|y_I^{N}-y_I^*\|^2 \leq \frac{2}{\mu}\left(1-\frac{\sqrt{\mu\alpha}}{4}\right)^{N}\frac{2\epsilon^2}{\alpha L_0^2} + \frac{\epsilon^2}{64L_0^2} \leq \frac{\epsilon^2}{16L_0^2},
\end{equation*}
which yields 
\begin{equation*}
    \|y_{I+1}-y_{I+1}^*\| 
    \leq \|y_{I+1}-y_I^*\| + \|y_I^*-y_{I+1}^*\|
    \leq \frac{\epsilon}{4L_0} + \frac{\eta l_{g,1}}{\mu} \leq \frac{\epsilon}{2L_0},
\end{equation*}
where we choose $1-\beta$ to be small (see \eqref{eq:thm-beta} for details) such that $\eta$ is small enough to make above inequality holds.
% $\|y_{I+1}-y_{I+1}^*\|\leq \epsilon/2L_0$. 
Repeating the same process yields the result.
\end{proof}

\subsection{Proof of \Cref{lem:averaging}}
\begin{lemma}[Averaging, Restatement of \Cref{lem:averaging}] \label{lm:averaging}
Under \cref{ass:relax-smooth,ass:f-and-g,ass:noise,ass:individual-noise} and event $\gE_{\init}\cap\gE_y^1$ (Option I) or $\gE_{\init}\cap\gE_y^2$ (Option II), we further set $\tau=\sqrt{\mu\alpha}$ in the averaging step (line 21 of \Cref{alg:bilevel}). Then for any $t\geq 0$ we have 
\begin{equation*}
    \|\hat{y}_t-y_t^*\| \leq \frac{2\epsilon}{L_0}
    \quad\text{and}\quad
    \|\hat{y}_{t+1}-\hat{y}_t\| \leq \frac{\mu\epsilon^2}{24L_0^2\tilde{\sigma}_{g,1}} =: \vartheta.
\end{equation*}
\end{lemma}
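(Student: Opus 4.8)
The plan is to derive both inequalities directly from the averaging recursion $\hat y_{t+1} = (1-\tau)\hat y_t + \tau y_{t+1}$ (line 21 of \Cref{alg:bilevel}), the per-iteration tracking bounds, and the minimizer drift bound. On the event $\gE_{\init}\cap\gE_y^1$ (Option I) we have $\|y_t-y_t^*\|\le\epsilon/(2L_0)$ for all $t$ by \Cref{lm:anytime-yt}, and on $\gE_{\init}\cap\gE_y^2$ (Option II) we have $\|y_t-y_t^*\|\le\epsilon/L_0$ for all $t$ by \Cref{lm:periodic}; in either case $\|y_t-y_t^*\|\le\epsilon/L_0$ (the base point $y_0$ is covered too, since $\|y_0-y_0^*\|\le\epsilon/L_0$ by \Cref{lm:warm-start}). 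Moreover, because the upper-level update is normalized we have $\|x_{t+1}-x_t\|\le\eta$, so by the Lipschitzness of $y^*$ (\Cref{lm:lip-y*z*}) the minimizer drift obeys $\Delta_t:=\|y_{t+1}^*-y_t^*\|\le\eta l_{g,1}/\mu=:\Delta$. The single quantitative fact that makes all the estimates fit is $\Delta\le\tau\epsilon/L_0$, i.e.\ $\eta l_{g,1}/\mu\le\sqrt{\mu\alpha}\,\epsilon/L_0$; this follows from the parameter choices in \eqref{eq:para}: since $\alpha=(1-\beta)/\mu$, $\eta\le(\tilde{\sigma}_{g,1}/l_{g,1})(1-\beta)$ and $1-\beta=\mu^2\epsilon^2/(170\cdot64\,L_0^2\tilde{\sigma}_{g,1}^2\ln P)$, one gets $\tau=\sqrt{\mu\alpha}=\sqrt{1-\beta}\le\mu\epsilon/(L_0\tilde{\sigma}_{g,1})$, hence $\eta l_{g,1}/\mu\le\tilde{\sigma}_{g,1}(1-\beta)/\mu=\tilde{\sigma}_{g,1}\tau^2/\mu\le\tau\epsilon/L_0$. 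I would record this inequality (and the weaker $\Delta\le\epsilon/L_0$) at the outset.

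For the first bound I would induct on $t$. The base case is immediate: $\hat y_0=y_0$, so $\|\hat y_0-y_0^*\|\le\epsilon/L_0\le2\epsilon/L_0$. For the inductive step, write $\hat y_{t+1}-y_{t+1}^*=(1-\tau)(\hat y_t-y_{t+1}^*)+\tau(y_{t+1}-y_{t+1}^*)$; the triangle inequality together with $\|\hat y_t-y_{t+1}^*\|\le\|\hat y_t-y_t^*\|+\Delta\le2\epsilon/L_0+\Delta$ (inductive hypothesis plus drift) and $\|y_{t+1}-y_{t+1}^*\|\le\epsilon/L_0$ gives $\|\hat y_{t+1}-y_{t+1}^*\|\le(1-\tau)(2\epsilon/L_0+\Delta)+\tau\epsilon/L_0=2\epsilon/L_0-\tau\epsilon/L_0+(1-\tau)\Delta\le2\epsilon/L_0$, where the last step uses $(1-\tau)\Delta\le\Delta\le\tau\epsilon/L_0$.

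For the second bound, the averaging recursion yields the exact identity $\hat y_{t+1}-\hat y_t=\tau(y_{t+1}-\hat y_t)$, so $\|\hat y_{t+1}-\hat y_t\|=\tau\|y_{t+1}-\hat y_t\|$. Bounding $\|y_{t+1}-\hat y_t\|\le\|y_{t+1}-y_{t+1}^*\|+\|y_{t+1}^*-y_t^*\|+\|y_t^*-\hat y_t\|\le\epsilon/L_0+\Delta+2\epsilon/L_0\le4\epsilon/L_0$ (using the first bound and $\Delta\le\epsilon/L_0$) gives $\|\hat y_{t+1}-\hat y_t\|\le4\tau\epsilon/L_0=4\sqrt{1-\beta}\,\epsilon/L_0$; plugging in $\sqrt{1-\beta}\le\mu\epsilon/(\sqrt{170\cdot64}\,L_0\tilde{\sigma}_{g,1})$ (valid since $\ln P\ge1$) and $4/\sqrt{170\cdot64}\le1/24$ yields $\|\hat y_{t+1}-\hat y_t\|\le\mu\epsilon^2/(24L_0^2\tilde{\sigma}_{g,1})=\vartheta$, as claimed. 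I do not expect any genuine obstacle here — the argument is a short induction plus one exact identity — but the point requiring care is the verification of the two numerical inequalities $\Delta\le\tau\epsilon/L_0$ and $4\tau\epsilon/L_0\le\vartheta$ against the precise constants in \eqref{eq:para}, since both reduce to bounding $\sqrt{\mu\alpha}$ from above by an absolute-constant multiple of $\mu\epsilon/(L_0\tilde{\sigma}_{g,1})$, exactly the kind of estimate already carried out in \Cref{lm:parameter-choice}.
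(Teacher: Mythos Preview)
Your proposal is correct and follows essentially the same approach as the paper: induction on the averaging recursion with the triangle-inequality decomposition for the first bound, and the exact identity $\hat y_{t+1}-\hat y_t=\tau(y_{t+1}-\hat y_t)$ for the second. The only cosmetic difference is that the paper carries the symbolic invariant $\|\hat y_t-y_t^*\|\le\frac{(1-\tau)\eta l_{g,1}}{\tau\mu}+\frac{\epsilon}{L_0}$ through the induction and substitutes parameters at the end, whereas you substitute first (recording $\Delta\le\tau\epsilon/L_0$) and induct directly on $2\epsilon/L_0$.
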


\begin{proof}[Proof of \Cref{lm:averaging}]
We will first show the following result by induction, i.e., for any $t\geq0$, the averaged sequence $\{\hat{y}_t\}$ satisfies
\begin{equation} \label{eq:hat-y-induction}
    \begin{aligned}
        \|\hat{y}_t-y_t^*\| \leq \frac{(1-\tau)\eta l_{g,1}}{\tau\mu} + \frac{\epsilon}{L_0}.
    \end{aligned}
\end{equation}
For $t=0$, by \Cref{lm:warm-start} we have
% \begin{equation*}
%     \|\hat{y}_0-y_0^*\| = \|y_{T_0}^{\init}-y_0^*\| \leq \sqrt{\frac{\mu}{8(\mu+l_{g,1})}}\frac{\epsilon}{L_0} \leq \frac{\epsilon}{L_0},
% \end{equation*}
\begin{equation*}
    \|\hat{y}_0-y_0^*\| = \|y_{T_0}^{\init}-y_0^*\| \leq \sqrt{\frac{\mu\alpha}{32}}\frac{\epsilon}{L_0} = \sqrt{\frac{1-\beta}{32}}\frac{\epsilon}{L_0} \leq \frac{\epsilon}{L_0},
\end{equation*}
thus the base case holds. Now suppose \eqref{eq:hat-y-induction} holds for some $t\geq 0$, then for time step $t+1$ we have
\begin{equation*}
    \begin{aligned}
        \|\hat{y}_{t+1} - y_{t+1}^*\|
        &= \|(1-\tau)(\hat{y}_t-y_{t+1}^*) + \tau(y_{t+1}-y_{t+1}^*)\| \\
        &= \|(1-\tau)(\hat{y}_t-y_t^*) + (1-\tau)(y_t^*-y_{t+1}^*) + \tau(y_{t+1}-y_{t+1}^*)\| \\
        &\leq (1-\tau)\|\hat{y}_t-y_t^*\| + (1-\tau)\|y_t^*-y_{t+1}^*\| + \tau\|y_{t+1}-y_{t+1}^*\| \\
        &\leq (1-\tau)\left(\frac{(1-\tau)\eta l_{g,1}}{\tau\mu} + \frac{\epsilon}{L_0}\right) + \frac{(1-\tau)\eta l_{g,1}}{\mu} + \frac{\tau\epsilon}{L_0} \\
        &\leq \frac{(1-\tau)\eta l_{g,1}}{\tau\mu} + \frac{\epsilon}{L_0},
    \end{aligned}
\end{equation*}
where we use induction hypothesis in the second inequality. Therefore, we have that \eqref{eq:hat-y-induction} holds for any $t\geq0$.
Also, as a consequence, for any $t\geq0$ we have
\begin{equation*}
    \begin{aligned}
        \|\hat{y}_{t+1} - \hat{y}_t\|
        &= \|\tau(y_{t+1}-\hat{y}_t)\| \\
        &\leq \tau\|y_{t+1}-y_{t+1}^*\| + \tau\|y_{t+1}^*-y_t^*\| + \tau\|y_t^*-\hat{y}_t\| \\
        &\leq \tau\left(\frac{\epsilon}{L_0} + \frac{\eta l_{g,1}}{\mu} + \frac{(1-\tau)\eta l_{g,1}}{\tau\mu} + \frac{\epsilon}{L_0}\right) \\
        &= \tau\left(\frac{\eta l_{g,1}}{\tau\mu} + \frac{2\epsilon}{L_0}\right).
    \end{aligned}
\end{equation*}
Now we plug in the definition of $\alpha,\beta,\tau,\eta$ as in \eqref{eq:para} to obtain 
\begin{equation*}
    \begin{aligned}
        \|\hat{y}_t-y_t^*\| 
        \leq \frac{(1-\tau)\eta l_{g,1}}{\tau\mu} + \frac{\epsilon}{L_0} 
        = \frac{\tilde{\sigma}_{g,1}}{\mu}\sqrt{1-\beta} + \frac{\epsilon}{L_0} \leq \frac{2\epsilon}{L_0}
    \end{aligned}
\end{equation*}
and 
\begin{equation*}
    \begin{aligned}
        \|\hat{y}_{t+1}-\hat{y}_t\| 
        \leq \tau\left(\frac{\eta l_{g,1}}{\tau\mu} + \frac{2\epsilon}{L_0}\right) \leq \frac{\mu\epsilon^2}{24L_0^2\tilde{\sigma}_{g,1}}.
    \end{aligned}
\end{equation*}
\end{proof}

\subsection{Proof of \Cref{lem:uppererror}}
\begin{lemma}[Restatement of \Cref{lem:uppererror}] \label{lm:x-momentum-recursion}
Under \cref{ass:relax-smooth,ass:f-and-g,ass:noise,ass:individual-noise} and event $\gE_{\init}\cap\gE_y^1$ (Option I) or $\gE_{\init}\cap\gE_y^2$ (Option II), define $\epsilon_t = m_t-\E_t[\gdf(x_t,\hat{y}_t;\Bar{\xi}_t)]$, then we have the following averaged cumulative error bound:
\begin{equation*}
    \begin{aligned}
        \frac{1}{T}\sum_{t=0}^{T-1}\E\|\epsilon_t\|
        &\leq \frac{\Bar{\sigma}}{T(1-\beta)} + \sqrt{1-\beta}\Bar{\sigma} + \frac{\Bar{L}_0}{\sqrt{1-\beta}}\sqrt{\frac{2(\eta^2+\vartheta^2)}{S}} + \Bar{L}_1\sqrt{\frac{2(\eta^2+\vartheta^2)}{S(1-\beta)}}\frac{1}{T}\sum_{t=0}^{T-1}\E\|\gdphi(x_t)\|.
    \end{aligned}
\end{equation*}
\end{lemma}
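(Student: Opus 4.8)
The plan is to run the standard recursive‑momentum (STORM‑type) error analysis, conditioned throughout on the lower‑level good event $\gE_{\init}\cap\gE_y^1$ (Option~I) or $\gE_{\init}\cap\gE_y^2$ (Option~II), so that $\{x_t,\hat{y}_t\}$ is adapted to the upper‑level filtration $\{\gF_t\}$ with $\Bar{\xi}_t$ drawn fresh from $\gD_f\times\gD_g$ given $\gF_t$, and so that the data‑dependent constants $\Bar{L}_0,\Bar{L}_1$ from \Cref{lm:hyper-estimator} are genuine constants (since $\|\hat{y}_t-y_t^*\|=O(\epsilon)$ by \Cref{lem:averaging}). Write $G_t:=\gdf(x_t,\hat{y}_t;\Bar{\xi}_t)$, $h_t:=\E_t[G_t]$, and $G_{t-1\mid t}:=\gdf(x_{t-1},\hat{y}_{t-1};\Bar{\xi}_t)$, noting $\E_t[G_{t-1\mid t}]=h_{t-1}$. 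Rearranging the update in line~23 of \Cref{alg:bilevel} gives $m_t=G_t+\beta(m_{t-1}-G_{t-1\mid t})$, hence
\begin{equation*}
\epsilon_t=\beta\epsilon_{t-1}+W_t,\qquad W_t:=(1-\beta)(G_t-h_t)+\beta\big[(G_t-G_{t-1\mid t})-(h_t-h_{t-1})\big],
\end{equation*}
with $\epsilon_0=G_0-h_0$ and $\E_t[W_t]=0$. Unrolling and using that each $W_k$ is $\gF_{k+1}$‑measurable with zero conditional mean (hence $L^2$‑orthogonal to every $\gF_k$‑measurable vector and to $W_{k'}$ for $k'\neq k$) yields $\E\|\epsilon_t\|^2=\beta^{2t}\E\|\epsilon_0\|^2+\sum_{k=1}^{t}\beta^{2(t-k)}\E\|W_k\|^2$.

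Next I would bound the three ingredients. For the base variance and the first part of $W_k$, \Cref{lm:hyper-var} gives $\E\|\epsilon_0\|^2\le\Bar{\sigma}^2/S$ and $\E[\|G_k-h_k\|^2\mid\gF_k]\le\Bar{\sigma}^2/S$ (the $1/S$ from the size‑$S$ batch $\Bar{\xi}_k$). For the variance‑reduction increment, observe that $(G_k-G_{k-1\mid k})-(h_k-h_{k-1})$ is the deviation of $G_k-G_{k-1\mid k}$ from its $\gF_k$‑conditional mean; splitting $G_k-G_{k-1\mid k}$ through the intermediate point $(x_{k-1},\hat{y}_k)$ and applying \Cref{lm:hyper-estimator}(i) (fixed $y=\hat{y}_k$) and (ii) (fixed $x=x_{k-1}$) conditionally on $\gF_k$, together with $\|x_k-x_{k-1}\|=\eta$ (the normalized step in line~24) and $\|\hat{y}_k-\hat{y}_{k-1}\|\le\vartheta$ (\Cref{lem:averaging}), gives
\begin{equation*}
\E\big[\|(G_k-G_{k-1\mid k})-(h_k-h_{k-1})\|^2\mid\gF_k\big]\le\frac{2(\eta^2+\vartheta^2)}{S}\big(\Bar{L}_0^2+\Bar{L}_1^2\big(\|\gdphi(x_k)\|^2+\|\gdphi(x_{k-1})\|^2\big)\big),
\end{equation*}
so that $\E\|W_k\|^2\le 2(1-\beta)^2\Bar{\sigma}^2/S+\tfrac{4\beta^2(\eta^2+\vartheta^2)}{S}\,\E\big[\Bar{L}_0^2+\Bar{L}_1^2(\|\gdphi(x_k)\|^2+\|\gdphi(x_{k-1})\|^2)\big]$.

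Finally I would substitute into the orthogonality identity, take square roots using $\sqrt{a+b}\le\sqrt{a}+\sqrt{b}$ and $\sqrt{a^2+b^2}\le a+b$, average over $t\in\{0,\dots,T-1\}$, and interchange the $t$‑ and $k$‑sums, bounding geometric weights by $\sum_{t\ge k}\beta^{2(t-k)}\le 1/(1-\beta^2)\le 1/(1-\beta)$ and $\tfrac1T\sum_t\beta^t\le\tfrac1{T(1-\beta)}$. The transient $\beta^t\epsilon_0$ term produces $\frac{\Bar{\sigma}}{T(1-\beta)}$; the $(1-\beta)(G_k-h_k)$ part produces $\sqrt{1-\beta}\,\Bar{\sigma}$ (two factors of $1-\beta$ in $\E\|W_k\|^2$, one returned by the geometric sum); the $\Bar{L}_0$ part of the increment produces $\frac{\Bar{L}_0}{\sqrt{1-\beta}}\sqrt{2(\eta^2+\vartheta^2)/S}$; and the $\Bar{L}_1$ part produces the term proportional to $\tfrac1T\sum_t\E\|\gdphi(x_t)\|$, the index shift between $x_k$ and $x_{k-1}$ being absorbed by reindexing the sum. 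The step I expect to be most delicate is precisely this last assembly: extracting the advertised powers of $1-\beta$ and, in particular, a contribution \emph{linear} in the averaged hypergradient norm out of the geometric‑weighted sum of second moments, which requires applying subadditivity of the square root at the right place (e.g.\ $\sqrt{\sum_k a_k^2\beta^{2(t-k)}}\le\sum_k a_k\beta^{t-k}$) rather than a blunt Cauchy–Schwarz; the secondary point of care is the conditioning bookkeeping that keeps $\{x_t,\hat{y}_t\}$ adapted to $\{\gF_t\}$ while $\Bar{\xi}_t$ stays fresh, so that both the orthogonality identity and the boundedness of $\Bar{L}_0,\Bar{L}_1$ hold simultaneously.
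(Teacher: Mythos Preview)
Your overall strategy---unrolling the recursive momentum into $\epsilon_t=\beta^t\epsilon_0+\sum_{k}\beta^{t-k}W_k$ with martingale-difference increments, then invoking \Cref{lm:hyper-var} for the base variance and \Cref{lm:hyper-estimator} together with \Cref{lem:averaging} for the increment---is the same as the paper's, and the transient term, the $\sqrt{1-\beta}\,\Bar{\sigma}$ term, and the $\Bar{L}_0$ term indeed fall out as you describe. The gap is in the $\Bar{L}_1$ term, which you correctly flag as the delicate step but do not actually resolve.

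The difficulty is this: once you use the orthogonality identity $\E\|\epsilon_t\|^2=\beta^{2t}\E\|\epsilon_0\|^2+\sum_k\beta^{2(t-k)}\E\|W_k\|^2$, the expectation has already been pushed inside the sum, and your bound on $\E\|W_k\|^2$ carries $\E\|\gdphi(x_k)\|^2$. Applying $\sqrt{\sum_k a_k^2\beta^{2(t-k)}}\le\sum_k a_k\beta^{t-k}$ afterward forces $a_k\propto\sqrt{\E\|\gdphi(x_k)\|^2}$, which by Jensen is \emph{larger} than the $\E\|\gdphi(x_k)\|$ the lemma claims; and since in the proof of \Cref{thm:main} this term must be moved to the left and absorbed into $\tfrac{1}{T}\sum_t\E\|\gdphi(x_t)\|$, the second-moment version does not close the argument. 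The paper avoids this by never forming $\E\|\epsilon_t\|^2$: it separates the $\hat{\epsilon}$ (Lipschitz) contribution from the $\tilde{\epsilon}$ (variance) contribution via triangle inequality, and bounds $\E\bigl\|\sum_i\beta^{t-i}\hat{\epsilon}_i\bigr\|$ directly by an induction that, at each step, writes the outer expectation as $\E_{\gF_{t-i-1}}\E_{t-i-1}[\sqrt{\,\cdot\,}]$, uses Jensen to pull the conditional expectation under the square root, substitutes the second-moment bound for the single freshly exposed $\hat{\epsilon}_{t-i}$, and only then applies $\sqrt{a+b}\le\sqrt{a}+\sqrt{b}$ to peel off $\sqrt{\Bar{L}_1^2\|\gdphi(x_{t-i})\|^2}=\Bar{L}_1\|\gdphi(x_{t-i})\|$---all still \emph{inside} the outermost expectation. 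The square root is thus applied pointwise to $\|\gdphi(x_{t-i})\|^2$ before any averaging over $x_{t-i}$, which is exactly what your second-moment route cannot recover.
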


\begin{proof}[Proof of \Cref{lm:x-momentum-recursion}]
Define $\epsilon_t = m_t-\E_t[\gdf(x_t,\hat{y}_t;\Bar{\xi}_t)]$, also define $\tilde{\epsilon}_t$ and $\hat{\epsilon}_t$ as
\begin{equation*}
    \begin{aligned}
        \Tilde{\epsilon}_t &= \gdf(x_t,\hat{y}_t;\Bar{\xi}_t) - \E_t[\gdf(x_t,\hat{y}_t;\Bar{\xi}_t)], \\
        \hat{\epsilon}_t &= \gdf(x_t,\hat{y}_t;\Bar{\xi}_t) - \gdf(x_{t-1},\hat{y}_{t-1};\Bar{\xi}_t) - \E_t[\gdf(x_t,\hat{y}_t;\Bar{\xi}_t)] + \E_t[\gdf(x_{t-1},\hat{y}_{t-1};\Bar{\xi}_t)]. \\
    \end{aligned}
\end{equation*}
By definition of $\epsilon_t$, $\tilde{\epsilon}_t$ and $\hat{\epsilon}_t$, we have the following recursion for any $t\geq 0$:
\begin{equation} \label{eq:momentum-recursion}
    \epsilon_{t+1} = \beta\epsilon_t + (1-\beta)\hat{\epsilon}_{t+1} + \beta\Tilde{\epsilon}_{t+1}.
\end{equation}
Then we apply \eqref{eq:momentum-recursion} recursively to obtain
\begin{equation*}
    \epsilon_t = \beta^t\epsilon_0 + \beta\sum_{i=1}^{t}\beta^{t-i}\Tilde{\epsilon}_i + (1-\beta)\sum_{i=1}^{t}\beta^{t-i}\hat{\epsilon}_i,
\end{equation*}
which by triangle inequality and total expectation gives
\begin{equation} \label{eq:momentum-recursion-1}
    \E\|\epsilon_t\| = \beta^t\underbrace{\E\|\epsilon_0\|}_{Err_1} + (1-\beta)\underbrace{\E\left\|\sum_{i=1}^{t}\beta^{t-i}\hat{\epsilon}_i\right\|}_{Err_2} + \beta\underbrace{\E\left\|\sum_{i=1}^{t}\beta^{t-i}\Tilde{\epsilon}_i\right\|}_{Err_3}.
\end{equation}
\paragraph{Bounding $Err_1$.}
By definition of $\epsilon_0$ and \Cref{lm:hyper-var}, along with Jensen's inequality, we have
\begin{equation} \label{eq:Err-1}
    \E\|\epsilon_0\| \leq \sqrt{\E\|\epsilon_0\|^2} \leq \Bar{\sigma}.
\end{equation}
\paragraph{Bounding $Err_2$.}
We apply \Cref{lm:hyper-var} and follow the similar procedure as in \cite[Lemma D.9]{hao2024bilevel} to obtain
\begin{equation} \label{eq:Err-2}
    \E\left\|\sum_{i=1}^{t}\beta^{t-i}\tilde{\epsilon}_i\right\| 
    \leq \sqrt{\E\left\|\sum_{i=1}^{t}\beta^{t-i}\tilde{\epsilon}_i\right\|^2}
    \leq \sqrt{\sum_{i=1}^{t}\beta^{2(t-i)}\E\|\tilde{\epsilon}_i\|^2}
    \leq \frac{\Bar{\sigma}}{\sqrt{1-\beta}},
\end{equation}
where we use Jensen's inequality for the first step.
\paragraph{Bounding $Err_3$.}
We will first use induction to show that for $0\leq i\leq t+1$, the following inequality holds:
\begin{small}
\begin{equation} \label{eq:induction-error}
    \begin{aligned}
        \E\left[\left\|\sum_{j=1}^{t}\beta^{t-j}\hat{\epsilon}_j\right\|\right] 
        \leq \sqrt{\frac{2(\eta^2+\vartheta^2)}{S}}\Bar{L}_1\sum_{j=t+1-i}^{t}\beta^{t-j}\E\|\gdphi(x_j)\| + \E\left[\sqrt{\frac{2(\eta^2+\vartheta^2)\Bar{L}_0^2}{S}\sum_{j=1}^{i}\beta^{2j-2} + \left\|\sum_{j=1}^{t-i}\beta^{t-j}\hat{\epsilon}_j\right\|^2}\right].
    \end{aligned}
\end{equation}
\end{small}%
Then it's easy to check that by setting $i=t+1$ we can obtain the bound. When $i=0$, \eqref{eq:induction-error} holds obviously since
\begin{small}
\begin{equation*}
    \E\left[\left\|\sum_{j=1}^{t}\beta^{t-j}\hat{\epsilon}_j\right\|\right] 
    \leq \E\left[\sqrt{\left\|\sum_{j=1}^{t}\beta^{t-j}\hat{\epsilon}_j\right\|^2}\right]
    = \E\left[\left\|\sum_{j=1}^{t}\beta^{t-j}\hat{\epsilon}_j\right\|\right].
\end{equation*}
\end{small}%
Hence the base case stands. Now suppose \eqref{eq:induction-error} holds for some $i\geq 0$, and we aim to show that \eqref{eq:induction-error} holds for $i+1$. In fact, we have
\begin{small}
% \begin{equation*}
    \begin{align}
        &\E\left[\sqrt{\frac{2(\eta^2+\vartheta^2)\Bar{L}_0^2}{S}\sum_{j=1}^{i}\beta^{2j-2} + \left\|\sum_{j=1}^{t-i}\beta^{t-j}\hat{\epsilon}_j\right\|^2}\right] \\
        &= \E_{\gF_{t-i-1}}\left[\E_{t-i-1}\left[\sqrt{\frac{2(\eta^2+\vartheta^2)\Bar{L}_0^2}{S}\sum_{j=1}^{i}\beta^{2j-2} + \left\|\sum_{j=1}^{t-i}\beta^{t-j}\hat{\epsilon}_j\right\|^2}\right]\right] \label{eq:line-1} \\
        &\leq \E_{\gF_{t-i-1}}\left[\E_{t-i-1}\sqrt{\left[\frac{2(\eta^2+\vartheta^2)\Bar{L}_0^2}{S}\sum_{j=1}^{i}\beta^{2j-2} + \left\|\sum_{j=1}^{t-i}\beta^{t-j}\hat{\epsilon}_j\right\|^2\right]}\right] \label{eq:line-2} \\
        &= \E_{\gF_{t-i-1}}\left[\E_{t-i-1}\sqrt{\left[\frac{2(\eta^2+\vartheta^2)\Bar{L}_0^2}{S}\sum_{j=1}^{i}\beta^{2j-2} + \beta^{2i}\|\hat{\epsilon}_{t-i}\|^2 + \left\|\sum_{j=1}^{t-i-1}\beta^{t-j}\hat{\epsilon}_j\right\|^2\right]}\right] \label{eq:line-3} \\
        &\leq \E_{\gF_{t-i-1}}\left[\E_{t-i-1}\sqrt{\left[\frac{2(\eta^2+\vartheta^2)\Bar{L}_0^2}{S}\sum_{j=1}^{i}\beta^{2j-2} + \frac{\beta^{2i}}{S}2(\Bar{L}_0^2+\Bar{L}_1^2\|\gdphi(x_{t-i})\|^2)(\eta^2+\vartheta^2) + \left\|\sum_{j=1}^{t-i-1}\beta^{t-j}\hat{\epsilon}_j\right\|^2\right]}\right] \label{eq:line-4} \\
        &= \E_{\gF_{t-i-1}}\left[\sqrt{\frac{2\beta^{2i}}{S}\Bar{L}_1^2(\eta^2+\vartheta^2)\|\gdphi(x_{t-i})\|^2+\frac{2(\eta^2+\vartheta^2)\Bar{L}_0^2}{S}\sum_{j=1}^{i+1}\beta^{2j-2}+\left\|\sum_{j=1}^{t-i-1}\beta^{t-j}\hat{\epsilon}_j\right\|^2}\right] \label{eq:line-5} \\
        &\leq \E_{\gF_{t-i-1}}\left[\sqrt{\frac{2(\eta^2+\vartheta^2)}{S}}\beta^{i}\Bar{L}_1\|\gdphi(x_{t-i})\|+\sqrt{\frac{2(\eta^2+\vartheta^2)\Bar{L}_0^2}{S}\sum_{j=1}^{i+1}\beta^{2j-2}+\left\|\sum_{j=1}^{t-i-1}\beta^{t-j}\hat{\epsilon}_j\right\|^2}\right], \label{eq:line-6}
    \end{align}
% \end{equation*}
\end{small}%
where \eqref{eq:line-1} follows by law of total expectation, \eqref{eq:line-2} follows by Jensen's inequality, \eqref{eq:line-3} uses the fact that $\hat{\epsilon}_j$ for $j<t-i$ are $\gF_{t-i-1}$-measurable, and are uncorrelated with $\hat{\epsilon}_{t-i}$; for \eqref{eq:line-4} we use \Cref{lm:hyper-estimator,lm:averaging} to derive
\begin{equation*}
    \begin{aligned}
        \E_{t-i-1}[\|\hat{\epsilon}_{t-i}\|^2]
        &= \E_{t-i-1}\left[\|\gdf(x_{t-i},\hat{y}_{t-i};\Bar{\xi}_{t-i}) - \gdf(x_{t-i-1},\hat{y}_{t-i-1};\Bar{\xi}_{t-i}) \right.\\
        &\quad\left.- \E_{t-i}[\gdf(x_{t-i},\hat{y}_{t-i};\Bar{\xi}_{t-i})] + \E_{t-i}[\gdf(x_{t-i-1},\hat{y}_{t-i-1};\Bar{\xi}_{t-i})]\|^2\right] \\
        &\leq \E_{t-i-1}\left[\|\gdf(x_{t-i},\hat{y}_{t-i};\Bar{\xi}_{t-i}) - \gdf(x_{t-i-1},\hat{y}_{t-i-1};\Bar{\xi}_{t-i})\|^2\right] \\
        &\leq 2\E_{t-i-1}\left[\|\gdf(x_{t-i},\hat{y}_{t-i};\Bar{\xi}_{t-i}) - \gdf(x_{t-i},\hat{y}_{t-i-1};\Bar{\xi}_{t-i})\|^2\right] \\
        &\quad+ 2\E_{t-i-1}\left[\|\gdf(x_{t-i},\hat{y}_{t-i-1};\Bar{\xi}_{t-i}) - \gdf(x_{t-i-1},\hat{y}_{t-i-1};\Bar{\xi}_{t-i})\|^2\right] \\
        &\leq \frac{2}{S}(\Bar{L}_0^2+\Bar{L}_1^2\|\gdphi(x_{t-i})\|^2)(\|\hat{y}_{t-i}-\hat{y}_{t-i-1}\|^2 + \|x_{t-i}-x_{t-i-1}\|^2) \\
        &= \frac{2}{S}(\Bar{L}_0^2+\Bar{L}_1^2\|\gdphi(x_{t-i})\|^2)(\eta^2+\vartheta^2).
    \end{aligned}
\end{equation*}
And \eqref{eq:line-5} follows from the fact that $x_{t-i}$ is $\gF_{t-i-1}$-measurable, \eqref{eq:line-6} uses $\sqrt{a+b}\leq \sqrt{a}+\sqrt{b}$ for $a,b\geq0$. Hence the induction proof is completed. We set $i=t+1$ to obtain 
\begin{equation} \label{eq:Err-3}
    \begin{aligned}
        \E\left[\left\|\sum_{i=1}^{t}\beta^{t-i}\hat{\epsilon}_i\right\|\right] 
        &\leq \sqrt{\frac{2(\eta^2+\vartheta^2)}{S}}\Bar{L}_1\sum_{i=0}^{t}\beta^{t-i}\E\|\gdphi(x_i)\| + \sqrt{\frac{2(\eta^2+\vartheta^2)\Bar{L}_0^2}{S}\sum_{i=0}^{t}\beta^{2i}} \\
        &\leq \sqrt{\frac{2(\eta^2+\vartheta^2)}{S}}\Bar{L}_1\sum_{i=0}^{t}\beta^{t-i}\E\|\gdphi(x_i)\| + \frac{\Bar{L}_0}{\sqrt{1-\beta}}\sqrt{\frac{2(\eta^2+\vartheta^2)}{S}}.
    \end{aligned}
\end{equation}
\paragraph{Final Bound.}
Combining \eqref{eq:Err-1}, \eqref{eq:Err-2} and \eqref{eq:Err-3} yields
\begin{equation*}
    \begin{aligned}
        \E\|\epsilon_t\|
        &\leq \beta^t\Bar{\sigma} + \sqrt{1-\beta}\Bar{\sigma} + \frac{\Bar{L}_0}{\sqrt{1-\beta}}\sqrt{\frac{2(\eta^2+\vartheta^2)}{S}} + \sqrt{\frac{2(\eta^2+\vartheta^2)}{S}}\Bar{L}_1\sum_{i=0}^{t}\beta^{t-i}\E\|\gdphi(x_i)\|.
    \end{aligned}
\end{equation*}
Taking summation and dividing $1/T$ on both sides gives the final result
\begin{equation*}
    \begin{aligned}
        \frac{1}{T}\sum_{t=0}^{T-1}\E\|\epsilon_t\|
        &\leq \frac{\Bar{\sigma}}{T(1-\beta)} + \sqrt{1-\beta}\Bar{\sigma} + \frac{\Bar{L}_0}{\sqrt{1-\beta}}\sqrt{\frac{2(\eta^2+\vartheta^2)}{S}} + \Bar{L}_1\sqrt{\frac{2(\eta^2+\vartheta^2)}{S(1-\beta)}}\frac{1}{T}\sum_{t=0}^{T-1}\E\|\gdphi(x_t)\|.
    \end{aligned}
\end{equation*}
\end{proof}

\section{Proof of \Cref{thm:main}}
\label{sec:proof-main}
Before starting the proof of main results, i.e., \Cref{thm:main}, we first need the following lemma.

\begin{lemma} \label{lm:descent-lemma}
Suppose that \cref{ass:relax-smooth,ass:f-and-g,ass:noise,ass:individual-noise} hold. For any $\eta$ satisfying
\begin{equation*}
    \eta \leq \frac{1}{\sqrt{2(1+l_{g,1}^2/\mu^2)(L_{x,1}^2+L_{y,1}^2)}},
\end{equation*}
it holds that
\begin{equation*}
    \begin{aligned}
        &\left(1-\frac{1}{2}\eta L_1-2L_1\|\hat{y}_t-y_t^*\|\right)\frac{1}{T}\sum_{t=0}^{T-1}\E\|\gdphi(x_t)\| \\
        &\leq \frac{\Phi(x_0)-\Phi(x_T)}{T\eta} + \frac{2}{T}\sum_{t=0}^{T-1}\E\|\epsilon_t\| + \frac{2l_{g,1}l_{f,0}}{\mu}\left(1-\frac{\mu}{l_{g,1}}\right)^Q + \frac{2L_0}{T}\sum_{t=0}^{T-1}\|\hat{y}_t-y_t^*\| + \frac{1}{2}\eta L_0.
    \end{aligned}
\end{equation*}
\end{lemma}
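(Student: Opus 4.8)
The inequality is a telescoped one-step descent bound for the normalized-momentum upper-level update. \emph{Step 1 (descent along the normalized step).} Since $x_{t+1}=x_t-\eta m_t/\|m_t\|$ we have $\|x_{t+1}-x_t\|=\eta$, which by hypothesis is at most $1/\sqrt{2(1+l_{g,1}^2/\mu^2)(L_{x,1}^2+L_{y,1}^2)}$, so \Cref{lm:technical-descent} with $x=x_{t+1}$, $x'=x_t$ gives
\begin{equation*}
    \Phi(x_{t+1})\le \Phi(x_t)-\eta\Big\langle\gdphi(x_t),\tfrac{m_t}{\|m_t\|}\Big\rangle+\tfrac{L_0+L_1\|\gdphi(x_t)\|}{2}\eta^2 .
\end{equation*}
(If $m_t=0$ one sets $x_{t+1}=x_t$; the bound below still holds since then $\|\gdphi(x_t)-m_t\|=\|\gdphi(x_t)\|$.) \emph{Step 2 (normalized-gradient inequality).} From $\langle\gdphi(x_t),m_t/\|m_t\|\rangle=\|m_t\|+\langle\gdphi(x_t)-m_t,m_t/\|m_t\|\rangle\ge\|m_t\|-\|\gdphi(x_t)-m_t\|\ge\|\gdphi(x_t)\|-2\|\gdphi(x_t)-m_t\|$, substituting into Step 1 yields
\begin{equation*}
    \Phi(x_{t+1})\le\Phi(x_t)-\eta\|\gdphi(x_t)\|+2\eta\|m_t-\gdphi(x_t)\|+\tfrac12 L_0\eta^2+\tfrac12 L_1\eta^2\|\gdphi(x_t)\| .
\end{equation*}

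\emph{Step 3 (hypergradient error decomposition).} Write $m_t-\gdphi(x_t)$ as the sum of the momentum/variance error $\epsilon_t=m_t-\E_t[\gdf(x_t,\hat{y}_t;\Bar{\xi}_t)]$, the Neumann-truncation bias $\E_t[\gdf(x_t,\hat{y}_t;\Bar{\xi}_t)]-\gdf(x_t,\hat{y}_t)$ (of norm $\le\tfrac{l_{g,1}l_{f,0}}{\mu}(1-\mu/l_{g,1})^Q$ by \Cref{lm:hyper-var}, using that $\E_t[\cdot]$ equals the population expectation $\E_{\Bar{\xi}}[\cdot]$ by fresh sampling), and the lower-level tracking error $\gdf(x_t,\hat{y}_t)-\gdphi(x_t)$ (of norm $\le(\Bar{L}+L_{x,1}\|\gdphi(x_t)\|)\|\hat{y}_t-y_t^*\|\le(L_0+L_1\|\gdphi(x_t)\|)\|\hat{y}_t-y_t^*\|$ by \Cref{lm:gdf-gdphi-bias}, $\Bar{L}\le L_0$ and $L_{x,1}\le L_1$). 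The triangle inequality then gives
\begin{equation*}
    \|m_t-\gdphi(x_t)\|\le\|\epsilon_t\|+\tfrac{l_{g,1}l_{f,0}}{\mu}\big(1-\tfrac{\mu}{l_{g,1}}\big)^Q+L_0\|\hat{y}_t-y_t^*\|+L_1\|\hat{y}_t-y_t^*\|\,\|\gdphi(x_t)\| .
\end{equation*}

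\emph{Step 4 (rearrange, telescope, average).} Substituting Step 3 into Step 2 and moving the two $\|\gdphi(x_t)\|$-proportional terms (the $\tfrac12 L_1\eta^2$ relaxed-smoothness term and the $L_1\|\hat{y}_t-y_t^*\|\|\gdphi(x_t)\|$ term) to the left gives, for each $t$,
\begin{equation*}
    \eta\big(1-\tfrac12\eta L_1-2L_1\|\hat{y}_t-y_t^*\|\big)\|\gdphi(x_t)\|\le \Phi(x_t)-\Phi(x_{t+1})+2\eta\|\epsilon_t\|+\tfrac{2\eta l_{g,1}l_{f,0}}{\mu}\big(1-\tfrac{\mu}{l_{g,1}}\big)^Q+2\eta L_0\|\hat{y}_t-y_t^*\|+\tfrac12 L_0\eta^2 .
\end{equation*}
Under $\gE_{\init}\cap\gE_y^1$ (or $\gE_{\init}\cap\gE_y^2$), \Cref{lm:averaging} bounds $\|\hat{y}_t-y_t^*\|\le 2\epsilon/L_0$ for all $t$; since this bound is uniform, the left-hand coefficient is at least the same positive constant for every $t$ (for the small $\eta,\epsilon$ of \Cref{thm:main}), so summing over $t=0,\dots,T-1$ — telescoping $\sum_t(\Phi(x_t)-\Phi(x_{t+1}))=\Phi(x_0)-\Phi(x_T)$ — dividing by $\eta T$, and taking expectation over $\gF_T$ (which commutes with the deterministic coefficient) gives the claimed inequality, the factor $\|\hat{y}_t-y_t^*\|$ on its left being understood through this uniform bound.

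\emph{Main obstacle.} There is no genuine difficulty here; the delicate point is the bookkeeping of Step 3 and, in particular, recognizing that the relaxed-smoothness correction \emph{and} the lower-level tracking error each contribute a term proportional to $\|\gdphi(x_t)\|$ that must be absorbed into the left-hand coefficient — hence the factor $(1-\tfrac12\eta L_1-2L_1\|\hat{y}_t-y_t^*\|)$, and hence the need for small $\eta$ and a small lower-level error (via \Cref{lm:averaging}) to keep that coefficient positive so that the subsequent analysis in the proof of \Cref{thm:main} can divide by it.
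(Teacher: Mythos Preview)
Your proposal is correct and follows essentially the same approach as the paper: apply the relaxed-smoothness descent inequality (\Cref{lm:technical-descent}), use the normalized-gradient manipulation $\langle\gdphi(x_t),m_t/\|m_t\|\rangle\ge\|\gdphi(x_t)\|-2\|m_t-\gdphi(x_t)\|$, decompose $\|m_t-\gdphi(x_t)\|$ via \Cref{lm:hyper-var} and \Cref{lm:gdf-gdphi-bias} into $\|\epsilon_t\|$ plus the Neumann bias plus the $(L_0+L_1\|\gdphi(x_t)\|)\|\hat{y}_t-y_t^*\|$ tracking term, then rearrange, telescope, and average. Your added remark interpreting the free $t$ in the left-hand coefficient through the uniform bound of \Cref{lm:averaging} is a helpful clarification of a point the paper leaves implicit.
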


\begin{proof}[Proof of \Cref{lm:descent-lemma}]
Define $h_t = m_t-\gdphi(x_t)$. Then we apply \Cref{lm:technical-descent} to obtain
\begin{equation} \label{eq:one-step-descent}
    \begin{aligned}
        \Phi(x_{t+1})
        &\leq \Phi(x_t) + \langle \gdphi(x_t), x_{t+1}-x_t \rangle + \frac{L_0+L_1\|\gdphi(x_t)\|}{2}\|x_{t+1}-x_t\|^2 \\
        &= \Phi(x_t) - \eta\langle \gdphi(x_t), \frac{m_t}{\|m_t\|} \rangle + \frac{1}{2}\eta^2(L_0 + L_1\|\gdphi(x_t)\|) \\
        &= \Phi(x_t) - \eta\langle m_t-h_t, \frac{m_t}{\|m_t\|} \rangle + \frac{1}{2}\eta^2(L_0 + L_1\|\gdphi(x_t)\|) \\
        &= \Phi(x_t) - \eta\|m_t\| + \eta\langle h_t, \frac{m_t}{\|m_t\|} \rangle + \frac{1}{2}\eta^2(L_0 + L_1\|\gdphi(x_t)\|) \\
        &\leq \Phi(x_t) - \eta\|m_t\| + \eta\|h_t\| + \frac{1}{2}\eta^2(L_0 + L_1\|\gdphi(x_t)\|) \\
        &\leq \Phi(x_t) - \eta\|\gdphi(x_t)\| + 2\eta\|h_t\| + \frac{1}{2}\eta^2(L_0 + L_1\|\gdphi(x_t)\|),
    \end{aligned}
\end{equation}
where for the last two lines we use Cauchy-Schwarz inequality and $\|h_t\| = \|\gdphi(x_t)+h_t\| \geq \|\gdphi(x_t)\|-\|h_t\|$. Now expanding $h_t$ by triangle inequality, we have
\begin{equation*}
    \begin{aligned}
        \|h_t\|
        &= \|m_t-\gdphi(x_t)\| \\
        &\leq \|m_t-\E_t[\gdf(x_t,\hat{y}_t;\Bar{\xi}_t)]\| + \|\E_t[\gdf(x_t,\hat{y}_t;\Bar{\xi}_t)]-\gdf(x_t,\hat{y}_t)\| + \|\gdf(x_t,\hat{y}_t)-\gdphi(x_t)\| \\
        &\leq \|\epsilon_t\| + \frac{l_{g,1}l_{f,0}}{\mu}\left(1-\frac{\mu}{l_{g,1}}\right)^Q + (L_0+L_1\|\gdphi(x_t)\|)\|\hat{y}_t-y_t^*\|,
    \end{aligned}
\end{equation*}
where we use definition of $\epsilon_t$, \cref{lm:hyper-var,lm:gdf-gdphi-bias} in the last inequality.
Plugging the above inequality back into \eqref{eq:one-step-descent} we obtain
\begin{equation*}
    \begin{aligned}
        \Phi(x_{t+1})
        &\leq \Phi(x_t) - \eta\|\gdphi(x_t)\| + 2\eta\|\epsilon_t\| + 2\eta\frac{l_{g,1}l_{f,0}}{\mu}\left(1-\frac{\mu}{l_{g,1}}\right)^Q \\
        &\quad+ 2\eta(L_0+L_1\|\gdphi(x_t)\|)\|\hat{y}_t-y_t^*\| + \frac{1}{2}\eta^2(L_0 + L_1\|\gdphi(x_t)\|) \\
        &= \Phi(x_t) - \left(\eta-\frac{1}{2}\eta^2L_1-2\eta L_1\|\hat{y}_t-y_t^*\|\right)\|\gdphi(x_t)\| + 2\eta\|\epsilon_t\| \\
        &\quad+ 2\eta\frac{l_{g,1}l_{f,0}}{\mu}\left(1-\frac{\mu}{l_{g,1}}\right)^Q + 2\eta L_0\|\hat{y}_t-y_t^*\| + \frac{1}{2}\eta^2L_0.
    \end{aligned}
\end{equation*}
Dividing $1/T\eta$ on both sides, then taking telescope sum and total expectation, and rearranging it finally yields
\begin{equation*}
    \begin{aligned}
        &\left(1-\frac{1}{2}\eta L_1-2L_1\|\hat{y}_t-y_t^*\|\right)\frac{1}{T}\sum_{t=0}^{T-1}\E\|\gdphi(x_t)\| \\
        &\leq \frac{\Phi(x_0)-\Phi(x_T)}{T\eta} + \frac{2}{T}\sum_{t=0}^{T-1}\E\|\epsilon_t\| + \frac{2l_{g,1}l_{f,0}}{\mu}\left(1-\frac{\mu}{l_{g,1}}\right)^Q + \frac{2L_0}{T}\sum_{t=0}^{T-1}\|\hat{y}_t-y_t^*\| + \frac{1}{2}\eta L_0.
    \end{aligned}
\end{equation*}
\end{proof}

\begin{theorem}[Restatement of \Cref{thm:main}] \label{thm:main-appendix}
Suppose \cref{ass:relax-smooth,ass:f-and-g,ass:noise,ass:individual-noise} hold. Let $\{x_t\}$ be the iterates produced by \Cref{alg:bilevel}. For any given $\delta\in(0,1)$ and any small $\epsilon>0$ satisfying
\begin{equation} \label{eq:thm-eps}
    \begin{aligned}
        \epsilon \leq \min\left\{\frac{L_0}{32L_1}, \frac{l_{g,1}L_0}{\mu\Bar{L}_1}, \frac{L_0}{8\Bar{L}_1}, \frac{L_0l_{g,1}\tilde{\sigma}_{g,1}}{\mu^2}, \frac{L_0}{\mu}\sqrt{\frac{l_{g,1}\tilde{\sigma}_{g,1}}{L_1}}, \left(\frac{164\cdot32ed_0L_0^2\tilde{\sigma}_{g,1}^2}{\delta\mu^2}\max\left\{\frac{l_{g,1}}{\tilde{\sigma}_{g,1}}, \frac{\Bar{\sigma}}{d_0}\right\}\right)^{1/3}\right\},
    \end{aligned}
\end{equation}
if $\sigma_{g,1}$ satisfies
\begin{equation} \label{eq:sigma-g1}
    \sigma_{g,1} = \left(\min\left\{\frac{\mu^2\epsilon^2}{164\cdot16L_0^2\tilde{\sigma}_{g,1}^2\ln(P)}, \frac{l_{g,1}}{4\tilde{\sigma}_{g,1}L_1}, \frac{\epsilon^2}{4\Bar{\sigma}^2}\right\}\right)^{1/4}\tilde{\sigma}_{g,1}
\end{equation}
with $\tilde{\sigma}_{g,1}=O(1)$, and we set parameters $\alpha, \alpha^{\init}, \beta, \gamma, \eta, \tau, I, N, S, Q, T_0$ as 
\begin{equation} \label{eq:thm-beta}
    1-\beta = \min\left\{\frac{\mu^2\epsilon^2}{164\cdot16L_0^2\tilde{\sigma}_{g,1}^2\ln(P)}, \frac{l_{g,1}}{4\tilde{\sigma}_{g,1}L_1}, \frac{\epsilon^2}{4\Bar{\sigma}^2}\right\}, 
    \quad
    \eta = \min\left\{\frac{\tilde{\sigma}_{g,1}}{l_{g,1}}, \frac{d_0}{\Bar{\sigma}}\right\}(1-\beta),
\end{equation}
\begin{equation} \label{eq:thm-alpha} 
    \alpha^{\init} = \frac{1-\beta}{\mu+l_{g,1}}
    \quad
    \alpha = \frac{1-\beta}{\mu},
    \quad
    % \alpha' = \frac{1-\beta}{\mu+l_{g,1}},
    % \quad
    \gamma = \frac{1-\sqrt{\mu\alpha}}{1+\sqrt{\mu\alpha}},
    \quad
    \tau = 1-\sqrt{\mu\alpha},
    % \quad
    % \sigma_{g,1} = (\mu\alpha)^{1/4}\tilde{\sigma}_{g,1},
\end{equation}
\begin{equation} \label{eq:thm-T0}
    T_0 = \ln\left(\frac{\mu^3\alpha^3\epsilon^2}{256L_0^2\|y_0^{\init}-y_0^*\|^2}\right) 
    \Big/
    \ln\left(1-\frac{\mu\alpha}{4}\right),
\end{equation}
\begin{equation} \label{eq:thm-I}
    I=\frac{\mu\epsilon}{2(1-\beta)L_0\tilde{\sigma}_{g,1}},
    \quad
    N = \ln\left(\frac{\mu\alpha}{128}\right) \Big/ \ln\left(1-\frac{\sqrt{\mu\alpha}}{4}\right),
\end{equation}
\begin{equation} \label{eq:thm-S}
    S = \max\left\{128\ln(P), \frac{128\Bar{L}_0^2}{L_0^2}\ln(P), \frac{\mu^2\Bar{L}_0^2}{l_{g,1}^2L_0^2}\right\}, 
    \quad
    Q = \ln\left(1-\frac{\mu}{l_{g,1}}\right) \Big/ \ln\left(\frac{\mu\epsilon}{l_{g,1}l_{f,0}}\right),
\end{equation}
where $d_0$ and $P$ are defined in \eqref{eq:d0-P}. Then with probability at least $1-2\delta$ over the randomness in $\sigma(\gF^{\init}\cup\widetilde{\gF}_T^1)$ (for Option I) or $\sigma(\gF^{\init}\cup(\cup_{t\leq T}\widetilde{\gF}_t^2))$ (for Option II), \Cref{alg:bilevel} guarantees $\frac{1}{T}\sum_{t=1}^{T}\E\|\gdphi(x_t)\|\leq 20\epsilon$ within $T = \frac{4d_0}{\eta\epsilon} =\widetilde{O}(1/\epsilon^3)$ iterations, where the expectation is taken over the randomness in $\gF_T$. For Option I, it requires $T_0+SQT= \widetilde{O}(1/\epsilon^3)$ oracle calls of stochastic gradient or Hessian/Jacobian vector product. For Option II, it requires $T_0+\frac{NT}{I}+SQT= \widetilde{O}(1/\epsilon^3)$ oracle calls of stochastic gradient or Hessian/Jacobian vector product.
\end{theorem}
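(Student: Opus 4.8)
The plan is to run the normalized-momentum descent argument of \Cref{lm:descent-lemma} on the high-probability event where all the lower-level tracking guarantees hold, substitute the hypergradient error bound of \Cref{lm:x-momentum-recursion}, absorb the self-referential $\gdphi$ term, and then check that the parameter choices \eqref{eq:thm-beta}--\eqref{eq:thm-S} make every residual term $O(\epsilon)$.

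\textbf{Step 1 (conditioning and descent).} First I would condition on the event $\gE=\gE_{\init}\cap\gE_y^1$ (Option~I) or $\gE_{\init}\cap\gE_y^2$ (Option~II), which by \Cref{lm:warm-start,lm:anytime-yt,lm:periodic} has probability at least $1-2\delta$ over $\sigma(\gF^{\init}\cup\widetilde{\gF}_T^1)$ or $\sigma(\gF^{\init}\cup(\cup_{t\le T}\widetilde{\gF}_t^2))$. On $\gE$, \Cref{lm:averaging} gives $\|\hat y_t-y_t^*\|\le 2\epsilon/L_0$ and $\|\hat y_{t+1}-\hat y_t\|\le\vartheta$ for every $t$. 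Since $\eta=\widetilde{\Theta}(\epsilon^2)$ is small enough to satisfy the step-size restriction of \Cref{lm:descent-lemma}, that lemma applies; the restrictions $\epsilon\le L_0/(32L_1)$ and $1-\beta\le l_{g,1}/(4\tilde\sigma_{g,1}L_1)$ from \eqref{eq:thm-eps}--\eqref{eq:thm-beta} make the prefactor $1-\tfrac12\eta L_1-2L_1\|\hat y_t-y_t^*\|$ at least $\tfrac12$.

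\textbf{Step 2 (absorbing the hypergradient term).} Next I would plug the bound of \Cref{lm:x-momentum-recursion} for $\tfrac1T\sum_t\E\|\epsilon_t\|$ into the descent inequality. Its term proportional to $\tfrac1T\sum_t\E\|\gdphi(x_t)\|$ has coefficient of order $\Bar L_1\sqrt{(\eta^2+\vartheta^2)/(S(1-\beta))}=\widetilde{O}(\epsilon)$ by \eqref{eq:para} (using $\eta=\widetilde{\Theta}(1-\beta)$, $\vartheta=O(\epsilon^2)$, $1-\beta=\widetilde{\Theta}(\epsilon^2)$), which the choice of $S$ in \eqref{eq:thm-S} and smallness of $\epsilon$ force below $\tfrac14$; moving it to the left leaves a coefficient $\ge\tfrac14$ in front of $\tfrac1T\sum_t\E\|\gdphi(x_t)\|$.

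\textbf{Step 3 (residuals and oracle count).} It then remains to show each surviving term on the right is $O(\epsilon)$: $\tfrac{d_0}{T\eta}=\tfrac\epsilon4$ by $T=4d_0/(\eta\epsilon)$; $\tfrac{\Bar\sigma}{T(1-\beta)}\le\tfrac\epsilon4$ because $\tfrac{1-\beta}{\eta}\ge\Bar\sigma/d_0$; $\sqrt{1-\beta}\,\Bar\sigma\le\tfrac\epsilon2$ because $1-\beta\le\epsilon^2/(4\Bar\sigma^2)$; $\tfrac{\Bar L_0}{\sqrt{1-\beta}}\sqrt{(\eta^2+\vartheta^2)/S}=\widetilde{O}(\epsilon)$ via the $S$-dependent terms (the $\ln P$ produced by $\vartheta^2/(1-\beta)$ being cancelled by the $\ln P$ in $S$); $\tfrac{l_{g,1}l_{f,0}}{\mu}(1-\mu/l_{g,1})^Q\le\epsilon$ by the choice of $Q$ in \eqref{eq:thm-S}; $\tfrac{2L_0}{T}\sum_t\|\hat y_t-y_t^*\|\le4\epsilon$ by \Cref{lm:averaging}; and $\tfrac12\eta L_0=\widetilde{O}(\epsilon^2)$. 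Summing with explicit constants and dividing by the $\ge\tfrac14$ prefactor yields $\tfrac1T\sum_{t=1}^{T}\E\|\gdphi(x_t)\|\le 20\epsilon$. Finally $\eta=\widetilde{\Theta}(\epsilon^2)$ gives $T=4d_0/(\eta\epsilon)=\widetilde{O}(\epsilon^{-3})$, and the oracle counts follow by adding the warm-start cost $T_0=\widetilde{O}(\epsilon^{-2})$ from \Cref{lm:warm-start}, the per-iteration hypergradient cost $SQ=\widetilde{O}(1)$ (total $SQT=\widetilde{O}(\epsilon^{-3})$), the $O(1)$-per-iteration Option~I step, and for Option~II the periodic inner-loop cost $NT/I=\widetilde{O}(\epsilon^{-3})$ since $N,I=\widetilde{O}(\epsilon^{-1})$.

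\textbf{Main obstacle.} The hard analytic work --- the high-probability tracking of SNAG under distributional drift behind \Cref{lm:averaging} --- is already established in \Cref{sec:proof-single-level,sec:proof-sketch}; what remains is essentially bookkeeping: confirming that the single parameter choice \eqref{eq:thm-beta}--\eqref{eq:thm-S} (through \Cref{lm:parameter-choice}) simultaneously satisfies the hypotheses of \Cref{lm:warm-start,lm:anytime-yt,lm:periodic,lm:averaging,lm:x-momentum-recursion,lm:descent-lemma}, and that after the two successive absorptions the left-hand prefactor stays bounded away from $0$ while the accumulated constants close up to the stated factor $20$. Keeping that constant accounting tight, and in particular making sure the $\ln P$ factors introduced by $\vartheta$ and by $S$ cancel, is the only delicate point.
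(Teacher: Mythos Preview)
Your proposal is correct and follows essentially the same route as the paper: combine \Cref{lm:descent-lemma} with \Cref{lm:x-momentum-recursion} on the good event furnished by \Cref{lm:warm-start,lm:anytime-yt,lm:periodic,lm:averaging}, move the self-referential $\gdphi$ term to the left, and verify each remaining term is $O(\epsilon)$ under \eqref{eq:thm-beta}--\eqref{eq:thm-S}. The only cosmetic difference is that the paper performs the two absorptions simultaneously, bounding $1-\tfrac12\eta L_1-2L_1\|\hat y_t-y_t^*\|-2\Bar L_1\sqrt{2(\eta^2+\vartheta^2)/(S(1-\beta))}\ge 1-\tfrac18-\tfrac18-\tfrac14=\tfrac12$ and the right side by $10\epsilon$; your two-stage version loses a factor by first rounding $1-\tfrac18-\tfrac18$ down to $\tfrac12$ instead of $\tfrac34$, which is why you land at $\ge\tfrac14$ and would need to tighten to recover the stated constant $20$.
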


\begin{proof}[Proof of \Cref{thm:main-appendix}]
By \Cref{lm:x-momentum-recursion,lm:descent-lemma}, we have
\begin{equation*}
    \begin{aligned}
        &\left(1-\frac{1}{2}\eta L_1-2L_1\|\hat{y}_t-y_t^*\|\right)\frac{1}{T}\sum_{t=0}^{T-1}\E\|\gdphi(x_t)\| \\
        &\leq \frac{\Phi(x_0)-\Phi(x_T)}{T\eta} + \frac{2}{T}\sum_{t=0}^{T-1}\E\|\epsilon_t\| + \frac{2l_{g,1}l_{f,0}}{\mu}\left(1-\frac{\mu}{l_{g,1}}\right)^Q + \frac{2L_0}{T}\sum_{t=0}^{T-1}\|\hat{y}_t-y_t^*\| + \frac{1}{2}\eta L_0 \\
        &\leq \frac{d_0}{T\eta} + \frac{2l_{g,1}l_{f,0}}{\mu}\left(1-\frac{\mu}{l_{g,1}}\right)^Q + \frac{2L_0}{T}\sum_{t=0}^{T-1}\|\hat{y}_t-y_t^*\| + \frac{1}{2}\eta L_0 \\
        &\quad+ \frac{2\Bar{\sigma}}{T(1-\beta)} + 2\sqrt{1-\beta}\Bar{\sigma} + \frac{2\Bar{L}_0}{\sqrt{1-\beta}}\sqrt{\frac{2(\eta^2+\vartheta^2)}{S}} + 2\Bar{L}_1\sqrt{\frac{2(\eta^2+\vartheta^2)}{S(1-\beta)}}\frac{1}{T}\sum_{t=0}^{T-1}\E\|\gdphi(x_t)\|.
    \end{aligned}
\end{equation*}
Rearranging the above inequality gives
\begin{equation*}
    \begin{aligned}
        &\underbrace{\left(1-\frac{1}{2}\eta L_1-2L_1\|\hat{y}_t-y_t^*\|-2\Bar{L}_1\sqrt{\frac{2(\eta^2+\vartheta^2)}{S(1-\beta)}}\right)}_{(\mathrm{LHS})}\frac{1}{T}\sum_{t=0}^{T-1}\E\|\gdphi(x_t)\| \\
        &\leq \underbrace{\frac{d_0}{T\eta} + \frac{2l_{g,1}l_{f,0}}{\mu}\left(1-\frac{\mu}{l_{g,1}}\right)^Q + \frac{2L_0}{T}\sum_{t=0}^{T-1}\|\hat{y}_t-y_t^*\| + \frac{1}{2}\eta L_0 + \frac{2\Bar{\sigma}}{T(1-\beta)} + 2\sqrt{1-\beta}\Bar{\sigma} + \frac{2\Bar{L}_0}{\sqrt{1-\beta}}\sqrt{\frac{2(\eta^2+\vartheta^2)}{S}}}_{(\mathrm{RHS})}.
    \end{aligned}
\end{equation*}
\paragraph{Bounding (LHS).}
By \Cref{lm:averaging}, we have
\begin{equation} \label{eq:lhs}
    \begin{aligned}
        (\mathrm{LHS})
        &\geq 1 - \frac{\tilde{\sigma}_{g,1}L_1}{2l_{g,1}}(1-\beta) - 2L_1\frac{2\epsilon}{L_0} - 2\Bar{L}_1\sqrt{\frac{2(\eta^2+\vartheta^2)}{S(1-\beta)}} \\
        &\geq 1 - \frac{1}{8} - \frac{1}{8} - \frac{1}{4}
        = \frac{1}{2}
    \end{aligned}
\end{equation}
\paragraph{Bounding (RHS).}
By choice of parameters, we have
\begin{equation} \label{eq:rhs}
    \begin{aligned}
        (\mathrm{RHS})
        &\leq \frac{1}{4}\epsilon + 2\epsilon + 4\epsilon + \epsilon + \frac{1}{2}\epsilon + \epsilon + \epsilon 
        \leq 10\epsilon.
    \end{aligned}
\end{equation}
Combining \eqref{eq:lhs} and \eqref{eq:rhs} finally yields
\begin{equation*}
    \frac{1}{T}\sum_{t=0}^{T-1}\E\|\gdphi(x_t)\| \leq 20\epsilon.
\end{equation*}
\end{proof}

\section{Omitted Proofs in \Cref{sec:auxiliary-bilevel}}
\label{sec:omit-proof}
\subsection{Proof of \Cref{lm:gdf-gdphi-bias}}

\begin{lemma}[Restatement of \Cref{lm:gdf-gdphi-bias}]
Under \cref{ass:relax-smooth,ass:f-and-g,ass:noise,ass:individual-noise}, we have
\begin{equation*}
    \begin{aligned}
        \|\gdf(x,y) - \gdphi(x)\| \leq (\Bar{L} + L_{x,1}\|\gdphi(x)\|)\|y-y^*(x)\|,
    \end{aligned}
\end{equation*}
where constant $\Bar{L}$ is defined as
\begin{equation*}
    \Bar{L} \coloneqq L_{x,0} + L_{x,1}\frac{l_{g,1}l_{f,0}}{\mu} + \frac{l_{g,1}}{\mu}(L_{y,0}+L_{y,1}l_{f,0}) + l_{f,0}\frac{\mu l_{g,2}+l_{g,1}l_{g,2}}{\mu^2} \leq L_0.
\end{equation*}
\end{lemma}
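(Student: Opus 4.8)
The statement to prove is Lemma~\ref{lm:gdf-gdphi-bias}: a bound on $\|\gdf(x,y) - \gdphi(x)\|$ in terms of $\|y - y^*(x)\|$, with the relaxed-smoothness-flavored prefactor $\Bar{L} + L_{x,1}\|\gdphi(x)\|$.

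My plan: expand $\gdf(x,y) - \gdphi(x)$ term by term using the closed form of the hypergradient (the formula in the preliminaries), compare each piece evaluated at $y$ versus at $y^*(x)$, then bound each difference using Assumption~\ref{ass:f-and-g} and Assumption~\ref{ass:relax-smooth}. I'd write $\gdf(x,y) = \gdx f(x,y) - \gdxy g(x,y)[\gdyy g(x,y)]^{-1}\gdy f(x,y)$ and $\gdphi(x) = \gdf(x, y^*(x))$, so the difference splits as $(\gdx f(x,y) - \gdx f(x,y^*(x)))$ plus the difference of the two product terms. The first term is handled directly by Assumption~\ref{ass:relax-smooth}, yielding $(L_{x,0} + L_{x,1}\|\gdx f(x,y^*(x))\|)\|y-y^*(x)\|$ when $\|y-y^*(x)\|$ is small enough; this is where the $L_{x,1}\|\gdphi(x)\|$ contribution partly originates, after noting $\|\gdx f(x,y^*(x))\|$ relates to $\|\gdphi(x)\|$ up to a bounded correction.

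For the product term $\gdxy g \cdot [\gdyy g]^{-1} \cdot \gdy f$, I'd use the standard telescoping trick: add and subtract intermediate terms so that only one factor changes at a time, then bound each using (i) $\|\gdxy g\| \le l_{g,1}$ and $\|[\gdyy g]^{-1}\| \le 1/\mu$ (from $\mu$-strong convexity and $l_{g,1}$-smoothness of $g$), (ii) $\|\gdy f\| \le l_{f,0}$, (iii) Lipschitzness of $\gdxy g$ and $\gdyy g$ with constant $l_{g,2}$, (iv) Lipschitzness of the Hessian inverse: $\|[\gdyy g(x,y)]^{-1} - [\gdyy g(x,y^*)]^{-1}\| \le (l_{g,2}/\mu^2)\|y-y^*\|$, and (v) the relaxed-smoothness bound on $\gdy f$. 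Collecting terms gives coefficients $L_{x,1}\tfrac{l_{g,1}l_{f,0}}{\mu}$, $\tfrac{l_{g,1}}{\mu}(L_{y,0} + L_{y,1}l_{f,0})$, and $l_{f,0}\tfrac{\mu l_{g,2} + l_{g,1}l_{g,2}}{\mu^2}$, matching the definition of $\Bar{L}$; the $L_{x,1}\|\gdy f\|$ factor (bounded by $L_{x,1}l_{f,0}$) feeds into the $L_{x,1}l_{g,1}l_{f,0}/\mu$ coefficient via the $\gdy f$-difference term. I would also need to convert $\|\gdx f(x,y^*(x))\|$ into $\|\gdphi(x)\|$: since $\gdphi(x) = \gdx f(x,y^*) - \gdxy g(x,y^*)[\gdyy g(x,y^*)]^{-1}\gdy f(x,y^*)$, we get $\|\gdx f(x,y^*)\| \le \|\gdphi(x)\| + l_{g,1}l_{f,0}/\mu$, and the additive constant gets absorbed into $\Bar{L}$.

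**Main obstacle.** The bookkeeping is the main obstacle: ensuring the telescoping decomposition of the product term is arranged so that each coefficient lands exactly as in the claimed $\Bar{L}$, and correctly tracking where the relaxed-smoothness factor $L_{x,1}\|\gdphi(x)\|$ (as opposed to a bounded constant) must appear versus where a uniform bound suffices. In particular, only the $\gdx f$-difference genuinely needs the relaxed smoothness prefactor, because $\gdy f$ is uniformly bounded by $l_{f,0}$, so even though $\gdy f$ satisfies a relaxed-smoothness inequality its contribution can be absorbed into constants; care is needed not to introduce spurious $\|\gdphi\|$ terms elsewhere. The final inequality $\Bar{L} \le L_0$ follows by direct comparison with the definition of $L_0$ in Lemma~\ref{lm:technical-rs-phi} (dropping the $\sqrt{1 + l_{g,1}^2/\mu^2} \ge 1$ factor and noting the remaining terms match).
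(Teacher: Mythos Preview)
Your proposal is correct and follows essentially the same route as the paper's proof: expand $\gdf(x,y)$ and $\gdphi(x)$ via the hypergradient formula, bound the $\gdx f$-difference with Assumption~\ref{ass:relax-smooth} and then replace $\|\gdx f(x,y^*(x))\|$ by $\|\gdphi(x)\| + l_{g,1}l_{f,0}/\mu$, telescope the product $\gdxy g\,[\gdyy g]^{-1}\gdy f$ into three pieces changing one factor at a time, and apply the constants $l_{g,1}$, $1/\mu$, $l_{f,0}$, $l_{g,2}$, $l_{g,2}/\mu^2$ exactly as you describe. Your observation that only the $\gdx f$-difference contributes the genuine $L_{x,1}\|\gdphi(x)\|$ term (while the $\gdy f$-difference is absorbed into constants via $\|\gdy f\|\le l_{f,0}$) and your verification of $\Bar{L}\le L_0$ by dropping the factor $\sqrt{1+l_{g,1}^2/\mu^2}\ge 1$ also match the paper.
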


\begin{proof}[Proof of \Cref{lm:gdf-gdphi-bias}]
Recall that the exact expressions of $\gdf(x,y)$ and $\gdphi(x)$ are
\begin{equation*}
    \gdf(x,y) = \gdx f(x,y) - \gdxy g(x,y)[\gdyy g(x,y)]^{-1}\gdy f(x,y)
\end{equation*}
and
\begin{equation*}
    \gdphi(x) = \gdx f(x,y^*(x)) - \gdxy g(x,y^*(x))[\gdyy g(x,y^*(x))]^{-1}\gdy f(x,y^*(x)).
\end{equation*}
Then by \Cref{ass:f-and-g} we have
{\allowdisplaybreaks
% \begin{equation*}
    \begin{align*}
        &\|\gdf(x,y) - \gdphi(x)\|
        \leq \|\gdx f(x,y) - \gdx f(x,y^*(x))\| \\
        &\quad+ \|\gdxy g(x,y)[\gdyy g(x,y)]^{-1}\gdy f(x,y) - \gdxy g(x,y^*(x))[\gdyy g(x,y^*(x))]^{-1}\gdy f(x,y^*(x))\| \\
        &\leq (L_{x,0}+L_{x,1}\|\gdx f(x,y^*(x))\|)\|y-y^*(x)\| \\
        &\quad+ \|\gdxy g(x,y)[\gdyy g(x,y)]^{-1}\gdy f(x,y) - \gdxy g(x,y^*(x))[\gdyy g(x,y)]^{-1}\gdy f(x,y)\| \\
        &\quad+ \|\gdxy g(x,y^*(x))[\gdyy g(x,y)]^{-1}\gdy f(x,y) - \gdxy g(x,y^*(x))[\gdyy g(x,y^*(x))]^{-1}\gdy f(x,y)\| \\
        &\quad+ \|\gdxy g(x,y^*(x))[\gdyy g(x,y^*(x))]^{-1}\gdy f(x,y) - \gdxy g(x,y^*(x))[\gdyy g(x,y^*(x))]^{-1}\gdy f(x,y^*(x))\| \\
        &\leq \left(L_{x,0}+L_{x,1}\left(\frac{l_{g,1}l_{f,0}}{\mu} + \|\gdphi(x)\|\right)\right)\|y-y^*(x)\| \\
        &\quad+ \frac{l_{f,0}}{\mu}l_{g,2}\|y-y^*(x)\| + \frac{l_{f,0}l_{g,1}}{\mu^2}l_{g,2}\|y-y^*(x)\| + \frac{l_{g,1}}{\mu}(L_{y,0}+L_{y,1}\|\gdy f(x,y^*(x))\|)\|y-y^*(x)\| \\
        % &\leq \left(L_{x,0} + L_{x,1}\frac{l_{g,1}l_{f,0}}{\mu}+L_{x,1}\|\gdphi(x)\|\right)\|y-y^*(x)\| \\
        % &\quad+ \left(\frac{l_{f,0}l_{g,2}}{\mu} + \frac{l_{f,0}l_{g,1}l_{g,2}}{\mu^2} + \frac{l_{g,1}}{\mu}(L_{y,0}+L_{y,1}l_{f,0})\right)\|y-y^*(x)\| \\
        &= \left(L_{x,0} + L_{x,1}\frac{l_{g,1}l_{f,0}}{\mu} + \frac{l_{g,1}}{\mu}(L_{y,0}+L_{y,1}l_{f,0}) + l_{f,0}\frac{\mu l_{g,2}+l_{g,1}l_{g,2}}{\mu^2} + L_{x,1}\|\gdphi(x)\|\right)\|y-y^*(x)\|.
    \end{align*}
% \end{equation*}
}%
By definition of $\Bar{L}$ we conclude the proof.
\end{proof}

\subsection{Proof of \Cref{lm:hyper-estimator}}

\begin{lemma}[Restatement of \Cref{lm:hyper-estimator}]
Under \cref{ass:relax-smooth,ass:f-and-g,ass:noise,ass:individual-noise}, we have
\begin{enumerate}[(i)]
    \item For any fixed $y\in\R^{d_y}$ and any $x_1,x_2\in\R^{d_x}$,
    \begin{equation*}
        \E_{\Bar{\xi}}\|\gdf(x_1,y;\Bar{\xi}) - \gdf(x_2,y;\Bar{\xi})\|^2 \leq (\Bar{L}_0^2+\Bar{L}_1^2\|\gdphi(x_1)\|^2)\|x_1-x_2\|^2.
    \end{equation*}
    \item For any fixed $x\in\R^{d_x}$ and any $y_1,y_2\in\R^{d_y}$,
    \begin{equation*}
        \E_{\Bar{\xi}}\|\gdf(x,y_1;\Bar{\xi}) - \gdf(x,y_2;\Bar{\xi})\|^2 \leq (\Bar{L}_0^2+\Bar{L}_1^2\|\gdphi(x_1)\|^2)\|x_1-x_2\|^2.
    \end{equation*}
\end{enumerate}
In the above expressions, we define $\Bar{L}_0$ and $\Bar{L}_1$ as 
% \begin{small}
\begin{equation*}
    \begin{aligned}
        \Bar{L}_0 &= \left\{4\left(L_{x,0} + L_{x,1}\left(\frac{l_{g,1}l_{f,0}}{\mu} + \left(L_{x,0} + \frac{L_{x,1}l_{g,1}l_{f,0}}{\mu}\right)\|y_1-y_1^*\|\right)\right)^2 \right.\\
        &\left.\quad\quad\quad\quad\quad\quad\quad\quad+ \frac{6Q}{2\mu l_{g,1}-\mu^2}\left(l_{g,1}^2(L_{y,0}+L_{y,1}l_{f,0})^2 + l_{f,0}^2l_{g,2}^2 + \frac{l_{f,0}^2l_{g,1}^2l_{g,2}^2K^2}{(l_{g,1}-\mu)^2}\right)\right\}^{1/2}, \\
        \Bar{L}_1 &= 2L_{x,1}(1+L_{x,1}\|y_1-y_1^*\|).
    \end{aligned} 
\end{equation*}
% \end{small}
\end{lemma}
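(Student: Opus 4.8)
The plan is to split $\gdf(x_1,y;\Bar{\xi}) - \gdf(x_2,y;\Bar{\xi})$ into its explicit-gradient part and its Neumann-series part, bound each using \Cref{ass:individual-noise} (every realization $F(\cdot\,;\xi)$ and $G(\cdot\,;\zeta)$ inherits the same relaxed-smoothness, Lipschitz-Hessian and strong-convexity constants as $f,g$), and then take expectation over $\Bar{\xi}$ — in particular over the random Neumann length $\mathsf{q}(Q)\sim\mathrm{Uniform}\{0,\dots,Q-1\}$. For fixed $y$ and $q=\mathsf{q}(Q)$ one has the exact identity
\begin{equation*}
    \gdf(x_1,y;\Bar{\xi}) - \gdf(x_2,y;\Bar{\xi}) = \big(\gdx F(x_1,y;\xi) - \gdx F(x_2,y;\xi)\big) - \frac{Q}{l_{g,1}}\big(P(x_1) - P(x_2)\big),
\end{equation*}
where $P(x) := \gdxy G(x,y;\zeta^{(0)})\big(\prod_{i=1}^{q}(I - \gdyy G(x,y;\zeta^{(i)})/l_{g,1})\big)\gdy F(x,y;\xi)$; then $\|a+b\|^2\le 2\|a\|^2+2\|b\|^2$ reduces the task to bounding the two pieces separately.

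For the first piece, \Cref{ass:individual-noise} gives $\|\gdx F(x_1,y;\xi)-\gdx F(x_2,y;\xi)\|\le (L_{x,0}+L_{x,1}\|\gdx F(x_1,y;\xi)\|)\|x_1-x_2\|$ once $\|x_1-x_2\|$ is small; combining the a.s.\ bound $\|\gdy F\|\le l_{f,0}$ with the argument of \Cref{lm:gdf-gdphi-bias} lets me replace $\|\gdx F(x_1,y;\xi)\|$ by $\tfrac{l_{g,1}l_{f,0}}{\mu}+\|\gdphi(x_1)\| + (L_{x,0}+\tfrac{L_{x,1}l_{g,1}l_{f,0}}{\mu})\|y-y_1^*\|$, producing the leading $4(L_{x,0}+L_{x,1}(\cdots))^2$ term inside $\Bar{L}_0^2$ and the factor $\Bar{L}_1 = 2L_{x,1}(1+L_{x,1}\|y-y_1^*\|)$ multiplying $\|\gdphi(x_1)\|^2$. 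For the second piece I would telescope $P(x_1)-P(x_2)$ as a sum over its $q+2$ factors — the head $\gdxy G$, the $q$ contractions $A_i(x):=I-\gdyy G(x,y;\zeta^{(i)})/l_{g,1}$, and the tail $\gdy F$ — each term isolating the change in one factor with the surviving factors kept at $x_1$ on the left and $x_2$ on the right. Every surviving $A_i$ has operator norm $\le 1-\mu/l_{g,1}$ (strong convexity and $l_{g,1}$-smoothness of each realization of $g$), $\|\gdxy G\|\le l_{g,1}$, $\|\gdy F\|\le l_{f,0}$; a changed Hessian/Jacobian factor of $G$ contributes $l_{g,2}\|x_1-x_2\|$ (Lipschitz-Hessian assumption) and a changed $\gdy F$ factor contributes $(L_{y,0}+L_{y,1}l_{f,0})\|x_1-x_2\|$ (relaxed smoothness in $x$, plus $\|\gdy F\|\le l_{f,0}$). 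Collapsing the $q$ middle terms by triangle inequality, squaring with Cauchy--Schwarz, averaging over $q\in\{0,\dots,Q-1\}$, and summing geometric series in powers of $1-\mu/l_{g,1}$ — the factor $\tfrac{1}{1-(1-\mu/l_{g,1})^2}=\tfrac{l_{g,1}^2}{2\mu l_{g,1}-\mu^2}$ appears here, and $\tfrac{l_{g,1}^2}{(l_{g,1}-\mu)^2}$ from the middle terms whose surviving tail has $q$ contractions — together with the prefactor $Q^2/l_{g,1}^2$ yields exactly the $\tfrac{6Q}{2\mu l_{g,1}-\mu^2}\big(l_{g,1}^2(L_{y,0}+L_{y,1}l_{f,0})^2+l_{f,0}^2l_{g,2}^2+\tfrac{l_{f,0}^2l_{g,1}^2l_{g,2}^2Q^2}{(l_{g,1}-\mu)^2}\big)$ contribution to $\Bar{L}_0^2$.

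Part~(ii), varying $y$ with $x$ fixed, uses the identical decomposition: the explicit-gradient piece now invokes the relaxed smoothness of $\gdx F$ in its $y$-argument, and the telescoping of $P(x,y_1)-P(x,y_2)$ uses the Lipschitz-Hessian bound on $\gdxy G,\gdyy G$ and the relaxed smoothness of $\gdy F$ in $y$, all at realizations via \Cref{ass:individual-noise}; $\|\gdx F(x,y_1;\xi)\|$ is converted to $\|\gdphi(x)\|$ by the same use of \Cref{lm:gdf-gdphi-bias}. The geometric sums and constants come out as in part~(i).

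The main obstacle is the combinatorial bookkeeping in the telescoping of the \emph{random-length} Neumann product: one must count correctly how many contracting factors $A_i$ survive on each side of each difference term so that the geometric sums close to the stated constants, keep the contraction bound $\|A_i\|\le 1-\mu/l_{g,1}<1$ valid for every realization (which is precisely what \Cref{ass:individual-noise} supplies), and perform the average over $\mathsf{q}(Q)$ simultaneously with the $Q/l_{g,1}$ prefactor — this is the source of the $Q$, $Q^2$ and $\tfrac{1}{2\mu l_{g,1}-\mu^2}$ dependences in $\Bar{L}_0$. A secondary subtlety is that $\Bar{L}_0,\Bar{L}_1$ depend on $\|y-y_1^*\|$: turning the realization-wise norm $\|\gdx F(x_1,y;\xi)\|$ into $\|\gdphi(x_1)\|$ (not merely $\|\gdx f(x_1,y)\|$) requires the bias estimate of \Cref{lm:gdf-gdphi-bias} together with the uniform bound on $\gdy f$.
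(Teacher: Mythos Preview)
Your proposal is correct and follows essentially the same route as the paper's proof: split with $\|a+b\|^2\le 2\|a\|^2+2\|b\|^2$, control the $\gdx F$ piece via the realization-wise relaxed smoothness and then convert the resulting $\|\gdx F(x_1,y;\xi)\|$ (the paper writes $\|\gdx f(x_1,y)\|$ here) to $\|\gdphi(x_1)\|$ exactly as you describe, and handle the Neumann piece by telescoping plus the geometric sum $\sum(1-\mu/l_{g,1})^{2q}$ over $\mathsf{q}$. The only cosmetic difference is that the paper first does a three-way split of the Neumann term (change in head $\gdxy G$, change in the product $\prod_i A_i$, change in tail $\gdy F$) via $\|a+b+c\|^2\le 3(\|a\|^2+\|b\|^2+\|c\|^2)$ and then telescopes only the middle product (bounding $q\le Q$ crudely to get the $Q^2/(l_{g,1}-\mu)^2$ factor), whereas you propose a single telescoping over all $q+2$ factors; these yield the same constants.
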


\begin{proof}[Proof of \Cref{lm:hyper-estimator}]
We show statement $(i)$ of the lemma, and $(ii)$ follows by similar arguments. For any fixed $y\in\R^{d_y}$ and any $x_1,x_2\in\R^{d_x}$, by definition of $\gdf(x,y;\Bar{\xi})$ we have
\begin{small}
\begin{equation*}
    \begin{aligned}
        &\|\gdf(x_1,y;\Bar{\xi}) - \gdf(x_2,y;\Bar{\xi})\|^2 \\
        &\leq 2\|\gdx F(x_1,y;\xi) - \gdx F(x_2,y;\xi)\|^2 + 2\left\|\gdxy G(x_1,y;\zeta^{(0)})\left[\frac{Q}{l_{g,1}}\prod_{i=1}^{\mathsf{q}}\left(I - \frac{1}{l_{g,1}}\gdyy G(x_1,y;\zeta^{(i)})\right)\right]\gdy F(x_1,y;\xi) \right.\\
        &\left.\quad\quad\quad\quad\quad\quad\quad - \gdxy G(x_2,y;\zeta^{(0)})\left[\frac{Q}{l_{g,1}}\prod_{i=1}^{\mathsf{q}}\left(I - \frac{1}{l_{g,1}}\gdyy G(x_2,y;\zeta^{(i)})\right)\right]\gdy F(x_2,y;\xi)\right\|^2 \\
        &\leq 2(L_{x,0}+L_{x,1}\|\gdx f(x_1,y)\|)^2\|x_1-x_2\|^2 + 2\left\|\gdxy G(x_1,y;\zeta^{(0)})\left[\frac{Q}{l_{g,1}}\prod_{i=1}^{\mathsf{q}}\left(I - \frac{1}{l_{g,1}}\gdyy G(x_1,y;\zeta^{(i)})\right)\right]\gdy F(x_1,y;\xi) \right.\\
        &\left.\quad\quad\quad\quad\quad\quad\quad - \gdxy G(x_2,y;\zeta^{(0)})\left[\frac{Q}{l_{g,1}}\prod_{i=1}^{\mathsf{q}}\left(I - \frac{1}{l_{g,1}}\gdyy G(x_2,y;\zeta^{(i)})\right)\right]\gdy F(x_2,y;\xi)\right\|^2. \\
    \end{aligned}
\end{equation*}
\end{small}%
For the second term above, we have
{\allowdisplaybreaks
\begin{small}
%\begin{equation*}
    \begin{align*}
        &\left\|\gdxy G(x_1,y;\zeta^{(0)})\left[\frac{Q}{l_{g,1}}\prod_{i=1}^{\mathsf{q}}\left(I - \frac{1}{l_{g,1}}\gdyy G(x_1,y;\zeta^{(i)})\right)\right]\gdy F(x_1,y;\xi) \right.\\
        &\left.\quad\quad\quad\quad\quad\quad\quad\quad\quad - \gdxy G(x_2,y;\zeta^{(0)})\left[\frac{Q}{l_{g,1}}\prod_{i=1}^{\mathsf{q}}\left(I - \frac{1}{l_{g,1}}\gdyy G(x_2,y;\zeta^{(i)})\right)\right]\gdy F(x_2,y;\xi)\right\|^2 \\
        &\leq 3l_{g,1}^2\frac{Q^2}{l_{g,1}^2}\left(1-\frac{\mu}{l_{g,1}}\right)^{2\mathsf{q}}\|\gdy F(x_1,y;\xi) - \gdy F(x_2,y;\xi)\|^2 \\
        &\quad\quad\quad\quad+ 3l_{f,0}^2\frac{Q^2}{l_{g,1}^2}\left(1-\frac{\mu}{l_{g,1}}\right)^{2\mathsf{q}}\|\gdxy G(x_1,y;\zeta^{(0)}) - \gdxy G(x_2,y;\zeta^{(0)})\|^2 \\
        &\quad\quad\quad\quad+ 3l_{g,1}^2l_{f,0}^2\left\|\frac{Q}{l_{g,1}}\prod_{i=1}^{\mathsf{q}}\left(I - \frac{1}{l_{g,1}}\gdyy G(x_1,y;\zeta^{(i)})\right) - \frac{Q}{l_{g,1}}\prod_{i=1}^{\mathsf{q}}\left(I - \frac{1}{l_{g,1}}\gdyy G(x_2,y;\zeta^{(i)})\right)\right\|^2 \\
        &\leq 3Q^2\left(1-\frac{\mu}{l_{g,1}}\right)^{2\mathsf{q}}(L_{y,0}+L_{y,1}\|\gdy f(x_1,y)\|)^2\|x_1-x_2\|^2 + 3\frac{l_{f,0}^2Q^2}{l_{g,1}^2}\left(1-\frac{\mu}{l_{g,1}}\right)^{2\mathsf{q}}l_{g,2}^2\|x_1-x_2\|^2 \\
        &\quad\quad\quad\quad+ 3l_{f,0}^2Q^2\left\|\prod_{i=1}^{\mathsf{q}}\left(I - \frac{1}{l_{g,1}}\gdyy G(x_1,y;\zeta^{(i)})\right) - \prod_{i=1}^{\mathsf{q}}\left(I - \frac{1}{l_{g,1}}\gdyy G(x_2,y;\zeta^{(i)})\right)\right\|^2. \\
    \end{align*}
%\end{equation*}
\end{small}%
}
Then we take expectation with respect to $\mathsf{q}$ and obtain
\begin{small}
\begin{equation*}
    \begin{aligned}
        &\E_\mathsf{q}\left\|\gdxy G(x_1,y;\zeta^{(0)})\left[\frac{Q}{l_{g,1}}\prod_{i=1}^{\mathsf{q}}\left(I - \frac{1}{l_{g,1}}\gdyy G(x_1,y;\zeta^{(i)})\right)\right]\gdy F(x_1,y;\xi) \right.\\
        &\left.\quad\quad\quad\quad\quad\quad\quad\quad\quad - \gdxy G(x_2,y;\zeta^{(0)})\left[\frac{Q}{l_{g,1}}\prod_{i=1}^{\mathsf{q}}\left(I - \frac{1}{l_{g,1}}\gdyy G(x_2,y;\zeta^{(i)})\right)\right]\gdy F(x_2,y;\xi)\right\|^2 \\
        &\leq \left(3Q^2(L_{y,0}+L_{y,1}\|\gdy f(x_1,y)\|)^2\|x_1-x_2\|^2 + 3\frac{l_{f,0}^2Q^2}{l_{g,1}^2}l_{g,2}^2\|x_1-x_2\|^2\right)\E_\mathsf{q}\left[\left(1-\frac{\mu}{l_{g,1}}\right)^{2\mathsf{q}}\right] \\
        &\quad\quad\quad\quad+ 3l_{f,0}^2Q^2\E_\mathsf{q}\left\|\prod_{i=1}^{\mathsf{q}}\left(I - \frac{1}{l_{g,1}}\gdyy G(x_1,y;\zeta^{(i)})\right) - \prod_{i=1}^{\mathsf{q}}\left(I - \frac{1}{l_{g,1}}\gdyy G(x_2,y;\zeta^{(i)})\right)\right\|^2 \\
        % &\leq \left(3Q^2(L_{y,0}+L_{y,1}\|\gdy f(x_1,y)\|)^2\|x_1-x_2\|^2 + 3\frac{l_{f,0}^2Q^2}{l_{g,1}^2}l_{g,2}^2\|x_1-x_2\|^2\right)\frac{l_{g,1}^2}{Q(2\mu l_{g,1}-\mu^2)} \\
        % &\quad\quad\quad\quad+ 3l_{f,0}^2Q^2\left\|\prod_{i=1}^{\mathsf{q}}\left(I - \frac{1}{l_{g,1}}\gdyy G(x_1,y;\zeta^{(i)})\right) - \prod_{i=1}^{\mathsf{q}}\left(I - \frac{1}{l_{g,1}}\gdyy G(x_2,y;\zeta^{(i)})\right)\right\|^2 \\
        &\leq \left(3Q^2(L_{y,0}+L_{y,1}\|\gdy f(x_1,y)\|)^2\|x_1-x_2\|^2 + 3\frac{l_{f,0}^2Q^2}{l_{g,1}^2}l_{g,2}^2\|x_1-x_2\|^2\right) \cdot \frac{l_{g,1}^2}{Q(2\mu l_{g,1}-\mu^2)} \\
        &\quad\quad\quad\quad+ 3l_{f,0}^2Q^2 \cdot \frac{l_{g,1}^2l_{g,2}^2Q}{(l_{g,1}-\mu)^2(2\mu l_{g,1}-\mu^2)}\|x_1-x_2\|^2 \\
        &\leq \frac{3Q}{2\mu l_{g,1}-\mu^2}\left(l_{g,1}^2(L_{y,0}+L_{y,1}l_{f,0})^2\|x_1-x_2\|^2 + l_{f,0}^2l_{g,2}^2\|x_1-x_2\|^2\right) + \frac{3l_{f,0}^2l_{g,1}^2l_{g,2}^2Q^3}{(l_{g,1}-\mu)^2(2\mu l_{g,1}-\mu^2)}\|x_1-x_2\|^2 \\
        &= \frac{3Q}{2\mu l_{g,1}-\mu^2}\left(l_{g,1}^2(L_{y,0}+L_{y,1}l_{f,0})^2 + l_{f,0}^2l_{g,2}^2 + \frac{l_{f,0}^2l_{g,1}^2l_{g,2}^2Q^2}{(l_{g,1}-\mu)^2}\right)\|x_1-x_2\|^2
    \end{aligned}
\end{equation*}
\end{small}%
Finally, taking expectation on both sides yields
\begin{equation*}
    \begin{aligned}
        \E_{\Bar{\xi}}&\|\gdf(x_1,y;\Bar{\xi}) - \gdf(x_2,y;\Bar{\xi})\|^2 
        \leq 2(L_{x,0}+L_{x,1}\|\gdx f(x_1,y)\|)^2\|x_1-x_2\|^2 \\
        &\quad\quad\quad\quad\quad\quad+ \frac{6Q}{2\mu l_{g,1}-\mu^2}\left(l_{g,1}^2(L_{y,0}+L_{y,1}l_{f,0})^2 + l_{f,0}^2l_{g,2}^2 + \frac{l_{f,0}^2l_{g,1}^2l_{g,2}^2Q^2}{(l_{g,1}-\mu)^2}\right)\|x_1-x_2\|^2. \\
    \end{aligned}
\end{equation*}
Since for any $y\in\R^{d_y}$, we have 
\begin{equation*}
    \begin{aligned}
        \|\gdx f(x_1,y) - \gdx f(x_1,y_1^*)\| 
        &\leq (L_{x,0}+L_{x,1}\|\gdx f(x_1,y_1^*)\|)\|y_1-y_1^*\| \\
        &\leq \left(L_{x,0}+L_{x,1}\left(\frac{l_{g,1}l_{f,0}}{\mu} + \|\gdphi(x_1)\|\right)\right)\|y_1-y_1^*\| \\
        &= \left(L_{x,0} + \frac{L_{x,1}l_{g,1}l_{f,0}}{\mu} + L_{x,1}\|\gdphi(x_1)\|\right)\|y_1-y_1^*\|, \\
    \end{aligned}
\end{equation*}
which yields
\begin{equation*}
    \begin{aligned}
        \|\gdx f(x_1,y)\| 
        &\leq \|\gdx f(x_1,y_1^*)\| + \left(L_{x,0} + \frac{L_{x,1}l_{g,1}l_{f,0}}{\mu} + L_{x,1}\|\gdphi(x_1)\|\right)\|y_1-y_1^*\| \\
        &\leq \frac{l_{g,1}l_{f,0}}{\mu} + \|\gdphi(x_1)\| + \left(L_{x,0} + \frac{L_{x,1}l_{g,1}l_{f,0}}{\mu} + L_{x,1}\|\gdphi(x_1)\|\right)\|y_1-y_1^*\| \\
        &= \left(\frac{l_{g,1}l_{f,0}}{\mu} + \left(L_{x,0} + \frac{L_{x,1}l_{g,1}l_{f,0}}{\mu}\right)\|y_1-y_1^*\|\right) + (1+L_{x,1}\|y_1-y_1^*\|)\|\gdphi(x_1)\|.
    \end{aligned}
\end{equation*}
Therefore, we conclude that
\begin{equation*}
    \begin{aligned}
        &\E_{\Bar{\xi}}\|\gdf(x_1,y;\Bar{\xi}) - \gdf(x_2,y;\Bar{\xi})\|^2 
        \leq 2(L_{x,0}+L_{x,1}\|\gdx f(x_1,y)\|)^2\|x_1-x_2\|^2 \\
        &\quad\quad\quad\quad\quad\quad\quad+ \frac{6Q}{2\mu l_{g,1}-\mu^2}\left(l_{g,1}^2(L_{y,0}+L_{y,1}l_{f,0})^2 + l_{f,0}^2l_{g,2}^2 + \frac{l_{f,0}^2l_{g,1}^2l_{g,2}^2Q^2}{(l_{g,1}-\mu)^2}\right)\|x_1-x_2\|^2 \\
        &\leq 2\left(L_{x,0} + L_{x,1}\left(\frac{l_{g,1}l_{f,0}}{\mu} + \left(L_{x,0} + \frac{L_{x,1}l_{g,1}l_{f,0}}{\mu}\right)\|y_1-y_1^*\|\right) + L_{x,1}(1+L_{x,1}\|y_1-y_1^*\|)\|\gdphi(x_1)\|\right)^2\|x_1-x_2\|^2 \\
        &\quad\quad\quad\quad\quad\quad\quad+ \frac{6Q}{2\mu l_{g,1}-\mu^2}\left(l_{g,1}^2(L_{y,0}+L_{y,1}l_{f,0})^2 + l_{f,0}^2l_{g,2}^2 + \frac{l_{f,0}^2l_{g,1}^2l_{g,2}^2Q^2}{(l_{g,1}-\mu)^2}\right)\|x_1-x_2\|^2 \\
        &\leq 4\left(L_{x,0} + L_{x,1}\left(\frac{l_{g,1}l_{f,0}}{\mu} + \left(L_{x,0} + \frac{L_{x,1}l_{g,1}l_{f,0}}{\mu}\right)\|y_1-y_1^*\|\right)\right)^2\|x_1-x_2\|^2 \\
        &\quad\quad\quad\quad\quad\quad\quad+ 4L_{x,1}^2(1+L_{x,1}\|y_1-y_1^*\|)^2\|\gdphi(x_1)\|^2\|x_1-x_2\|^2 \\
        &\quad\quad\quad\quad\quad\quad\quad+ \frac{6Q}{2\mu l_{g,1}-\mu^2}\left(l_{g,1}^2(L_{y,0}+L_{y,1}l_{f,0})^2 + l_{f,0}^2l_{g,2}^2 + \frac{l_{f,0}^2l_{g,1}^2l_{g,2}^2Q^2}{(l_{g,1}-\mu)^2}\right)\|x_1-x_2\|^2 \\
        &= (\Bar{L}_0^2+\Bar{L}_1^2\|\gdphi(x_1)\|^2)\|x_1-x_2\|^2, \\
    \end{aligned}
\end{equation*}
where we use the definition of $\Bar{L}_0$ and $\Bar{L}_1$ in the last equality. 
\end{proof}

\section{Additional Experimental Details}
\label{sec:detailexp_hc}
\paragraph{Hyerparameter setting.}
We tune the best hyperparameters for each algorithm, including upper-/lower-level step size, the number of inner loops, momentum parameters, etc. The upper-level learning rate $\eta_{up}$ and lower-level learing rate $\eta_{low}$ are tuned in the range of $[0.001, 0.1]$ for all the baselines on experiments of AUC maximization and data hyper-cleaning, the best $(\eta_{up}, \eta_{low})$ on \textbf{AUC maximization} are summarized as follows: StocBio: ($0.01, 0.001$), TTSA: $(0.005, 0.01)$, SABA: $(0.01, 0.005)$, MA-SOBA: $(0.01, 0.005)$, SUSTAIN: $(0.03, 0.01)$, VRBO: $(0.05, 0.01)$, BO-REP: $(0.001, 0.001)$, AccBO: $(0.005, 0.005)$.  The best learning rate on the experiment of \textbf{data hyper-cleaning} are summarized as follows: Stocbio: $(0.01, 0.002)$, TTSA: $(0.001, 0.01)$, SABA: $(0.05, 0.02)$, MA-SOBA: $(0.01, 0.01)$, SUSTAIN: $(0.05, 0.05)$, VRBO: $(0.1, 0.05)$, BO-REP: $(0.02, 0.01)$, AccBO: $(0.1, 0.1)$. Note that SUSTAIN decays its upper-/lower-level step size with epoch ($t$) by $\eta_{up}= \eta_{up} /(t+2)^{1/3}, \eta_{low}= \eta_{up} /(t+2)^{1/3}$, while other algorithms use a constant learning rate. The number for neumann series estimation in StocBiO and VRBO is fixed to 3, while it is uniformly sampled from $\{1,2,3\}$ in TTSA, SUSTAIN, and AccBO. In AUC maximization, AccBO uses Option I (Option II in data hyper-cleaning) to update the lower-level variable, and sets the Nestrov momentum parameter $\gamma=0.5$, the averaging parameter $\tau=0.5$ ($\gamma=0.1$ and $\tau=0.5$ in data hyper-cleaning).  In AUC maximization,  the batch size is set to be $32$ for all algorithms except VRBO, which uses larger batch size of $64$ (tuned in the range of $\{32, 64, 128, 256, 512\}$) at the checkpoint step and $32$ otherwise. In data hyper-cleaning, the batch size is set to be $128$ for all algorithms except VRBO, which uses larger batch size of $256$ (tuned in the range of $\{63, 128, 256, 512, 1024\}$) at the checkpoint step and $128$ otherwise. AccBO uses Option II in data hyper-cleaning, and the periodical update for low-level variable sets the iterations $N=3$ and update interval $I=2$.  Other hyperparameters setting keep the same in AUC maximization and data hyper-cleaning: The momentum parameter $\beta$ is fixed to $0.9$ in AccBO, MA-SOBA, BO-REP. The warm start steps for lower-level variable in AccBO is set to $3$. The number of inner loops for StocBio is set to $3$. BO-REP uses the periodical update for low-level variable, and set the iterations $N=3$ and the update interval $I=2$.

	%%%%%%%%%%%%%%%%%%%%%%%%%%%%%%%%%%%%%%%%%%%%%%%%%%%%%%%%%%%%
	\newpage
	\section*{NeurIPS Paper Checklist}

	\begin{enumerate}
		
		\item {\bf Claims}
		\item[] Question: Do the main claims made in the abstract and introduction accurately reflect the paper's contributions and scope?
		\item[] Answer: \answerYes{} %\answerTODO{} % Replace by \answerYes{}, \answerNo{}, or \answerNA{}.
		\item[] Justification: Every claim made in the abstract is specified a section of the paper, including algorithm design and analysis in \cref{sec:algorithmdesign} and experiments in \cref{sec:experiment}. %\justificationTODO{}
		\item[] Guidelines:
		\begin{itemize}
			\item The answer NA means that the abstract and introduction do not include the claims made in the paper.
			\item The abstract and/or introduction should clearly state the claims made, including the contributions made in the paper and important assumptions and limitations. A No or NA answer to this question will not be perceived well by the reviewers. 
			\item The claims made should match theoretical and experimental results, and reflect how much the results can be expected to generalize to other settings. 
			\item It is fine to include aspirational goals as motivation as long as it is clear that these goals are not attained by the paper. 
		\end{itemize}
		
		\item {\bf Limitations}
		\item[] Question: Does the paper discuss the limitations of the work performed by the authors?
		\item[] Answer: \answerYes{} %\answerTODO{} % Replace by \answerYes{}, \answerNo{}, or \answerNA{}.
		\item[] Justification: We discussed the limitations of our work in \cref{sec:conclusion}. %\justificationTODO{}
		\item[] Guidelines:
		\begin{itemize}
			\item The answer NA means that the paper has no limitation while the answer No means that the paper has limitations, but those are not discussed in the paper. 
			\item The authors are encouraged to create a separate "Limitations" section in their paper.
			\item The paper should point out any strong assumptions and how robust the results are to violations of these assumptions (e.g., independence assumptions, noiseless settings, model well-specification, asymptotic approximations only holding locally). The authors should reflect on how these assumptions might be violated in practice and what the implications would be.
			\item The authors should reflect on the scope of the claims made, e.g., if the approach was only tested on a few datasets or with a few runs. In general, empirical results often depend on implicit assumptions, which should be articulated.
			\item The authors should reflect on the factors that influence the performance of the approach. For example, a facial recognition algorithm may perform poorly when image resolution is low or images are taken in low lighting. Or a speech-to-text system might not be used reliably to provide closed captions for online lectures because it fails to handle technical jargon.
			\item The authors should discuss the computational efficiency of the proposed algorithms and how they scale with dataset size.
			\item If applicable, the authors should discuss possible limitations of their approach to address problems of privacy and fairness.
			\item While the authors might fear that complete honesty about limitations might be used by reviewers as grounds for rejection, a worse outcome might be that reviewers discover limitations that aren't acknowledged in the paper. The authors should use their best judgment and recognize that individual actions in favor of transparency play an important role in developing norms that preserve the integrity of the community. Reviewers will be specifically instructed to not penalize honesty concerning limitations.
		\end{itemize}
		
		\item {\bf Theory Assumptions and Proofs}
		\item[] Question: For each theoretical result, does the paper provide the full set of assumptions and a complete (and correct) proof?
		\item[] Answer: \answerYes{} %\answerTODO{} % Replace by \answerYes{}, \answerNo{}, or \answerNA{}.
		\item[] Justification: We provide both assumptions and proofs in \Cref{sec:proof-single-level,sec:proof-sketch,sec:proof-main}. %\justificationTODO{}
		\item[] Guidelines:
		\begin{itemize}
			\item The answer NA means that the paper does not include theoretical results. 
			\item All the theorems, formulas, and proofs in the paper should be numbered and cross-referenced.
			\item All assumptions should be clearly stated or referenced in the statement of any theorems.
			\item The proofs can either appear in the main paper or the supplemental material, but if they appear in the supplemental material, the authors are encouraged to provide a short proof sketch to provide intuition. 
			\item Inversely, any informal proof provided in the core of the paper should be complemented by formal proofs provided in appendix or supplemental material.
			\item Theorems and Lemmas that the proof relies upon should be properly referenced. 
		\end{itemize}
		
		\item {\bf Experimental Result Reproducibility}
		\item[] Question: Does the paper fully disclose all the information needed to reproduce the main experimental results of the paper to the extent that it affects the main claims and/or conclusions of the paper (regardless of whether the code and data are provided or not)?
		\item[] Answer: \answerYes{} %\answerTODO{} % Replace by \answerYes{}, \answerNo{}, or \answerNA{}.
		\item[] Justification: The experimental details are fully specified in \Cref{sec:experiment} and \Cref{sec:detailexp_hc}. %\justificationTODO{}
		\item[] Guidelines:
		\begin{itemize}
			\item The answer NA means that the paper does not include experiments.
			\item If the paper includes experiments, a No answer to this question will not be perceived well by the reviewers: Making the paper reproducible is important, regardless of whether the code and data are provided or not.
			\item If the contribution is a dataset and/or model, the authors should describe the steps taken to make their results reproducible or verifiable. 
			\item Depending on the contribution, reproducibility can be accomplished in various ways. For example, if the contribution is a novel architecture, describing the architecture fully might suffice, or if the contribution is a specific model and empirical evaluation, it may be necessary to either make it possible for others to replicate the model with the same dataset, or provide access to the model. In general. releasing code and data is often one good way to accomplish this, but reproducibility can also be provided via detailed instructions for how to replicate the results, access to a hosted model (e.g., in the case of a large language model), releasing of a model checkpoint, or other means that are appropriate to the research performed.
			\item While NeurIPS does not require releasing code, the conference does require all submissions to provide some reasonable avenue for reproducibility, which may depend on the nature of the contribution. For example
			\begin{enumerate}
				\item If the contribution is primarily a new algorithm, the paper should make it clear how to reproduce that algorithm.
				\item If the contribution is primarily a new model architecture, the paper should describe the architecture clearly and fully.
				\item If the contribution is a new model (e.g., a large language model), then there should either be a way to access this model for reproducing the results or a way to reproduce the model (e.g., with an open-source dataset or instructions for how to construct the dataset).
				\item We recognize that reproducibility may be tricky in some cases, in which case authors are welcome to describe the particular way they provide for reproducibility. In the case of closed-source models, it may be that access to the model is limited in some way (e.g., to registered users), but it should be possible for other researchers to have some path to reproducing or verifying the results.
			\end{enumerate}
		\end{itemize}

		\item {\bf Open access to data and code}
		\item[] Question: Does the paper provide open access to the data and code, with sufficient instructions to faithfully reproduce the main experimental results, as described in supplemental material?
		\item[] Answer: \answerYes{} %\answerTODO{} % Replace by \answerYes{}, \answerNo{}, or \answerNA{}.
		\item[] Justification: The code and data are attached as a supplement with instructions for reproducibility. %\justificationTODO{}
		\item[] Guidelines:
		\begin{itemize}
			\item The answer NA means that paper does not include experiments requiring code.
			\item Please see the NeurIPS code and data submission guidelines (\url{https://nips.cc/public/guides/CodeSubmissionPolicy}) for more details.
			\item While we encourage the release of code and data, we understand that this might not be possible, so “No” is an acceptable answer. Papers cannot be rejected simply for not including code, unless this is central to the contribution (e.g., for a new open-source benchmark).
			\item The instructions should contain the exact command and environment needed to run to reproduce the results. See the NeurIPS code and data submission guidelines (\url{https://nips.cc/public/guides/CodeSubmissionPolicy}) for more details.
			\item The authors should provide instructions on data access and preparation, including how to access the raw data, preprocessed data, intermediate data, and generated data, etc.
			\item The authors should provide scripts to reproduce all experimental results for the new proposed method and baselines. If only a subset of experiments are reproducible, they should state which ones are omitted from the script and why.
			\item At submission time, to preserve anonymity, the authors should release anonymized versions (if applicable).
			\item Providing as much information as possible in supplemental material (appended to the paper) is recommended, but including URLs to data and code is permitted.
		\end{itemize}

		\item {\bf Experimental Setting/Details}
		\item[] Question: Does the paper specify all the training and test details (e.g., data splits, hyperparameters, how they were chosen, type of optimizer, etc.) necessary to understand the results?
		\item[] Answer: \answerYes{} %\answerTODO{} % Replace by \answerYes{}, \answerNo{}, or \answerNA{}.
		\item[] Justification: The experimental details are included in \Cref{sec:experiment} and \Cref{sec:detailexp_hc}.  %\justificationTODO{}
		\item[] Guidelines:
		\begin{itemize}
			\item The answer NA means that the paper does not include experiments.
			\item The experimental setting should be presented in the core of the paper to a level of detail that is necessary to appreciate the results and make sense of them.
			\item The full details can be provided either with the code, in appendix, or as supplemental material.
		\end{itemize}
		
		\item {\bf Experiment Statistical Significance}
		\item[] Question: Does the paper report error bars suitably and correctly defined or other appropriate information about the statistical significance of the experiments?
		\item[] Answer: \answerNo{} %\answerYes{} %\answerTODO{} % Replace by \answerYes{}, \answerNo{}, or \answerNA{}.
		\item[] Justification: We only run once due to limited computational budget. %\justificationTODO{}
		\item[] Guidelines:
		\begin{itemize}
			\item The answer NA means that the paper does not include experiments.
			\item The authors should answer "Yes" if the results are accompanied by error bars, confidence intervals, or statistical significance tests, at least for the experiments that support the main claims of the paper.
			\item The factors of variability that the error bars are capturing should be clearly stated (for example, train/test split, initialization, random drawing of some parameter, or overall run with given experimental conditions).
			\item The method for calculating the error bars should be explained (closed form formula, call to a library function, bootstrap, etc.)
			\item The assumptions made should be given (e.g., Normally distributed errors).
			\item It should be clear whether the error bar is the standard deviation or the standard error of the mean.
			\item It is OK to report 1-sigma error bars, but one should state it. The authors should preferably report a 2-sigma error bar than state that they have a 96\% CI, if the hypothesis of Normality of errors is not verified.
			\item For asymmetric distributions, the authors should be careful not to show in tables or figures symmetric error bars that would yield results that are out of range (e.g. negative error rates).
			\item If error bars are reported in tables or plots, The authors should explain in the text how they were calculated and reference the corresponding figures or tables in the text.
		\end{itemize}
		
		\item {\bf Experiments Compute Resources}
		\item[] Question: For each experiment, does the paper provide sufficient information on the computer resources (type of compute workers, memory, time of execution) needed to reproduce the experiments?
		\item[] Answer: \answerYes{} %\answerTODO{} % Replace by \answerYes{}, \answerNo{}, or \answerNA{}.
		\item[] Justification: The hardware specification is described in \Cref{sec:experiment}. %\justificationTODO{}
		\item[] Guidelines:
		\begin{itemize}
			\item The answer NA means that the paper does not include experiments.
			\item The paper should indicate the type of compute workers CPU or GPU, internal cluster, or cloud provider, including relevant memory and storage.
			\item The paper should provide the amount of compute required for each of the individual experimental runs as well as estimate the total compute. 
			\item The paper should disclose whether the full research project required more compute than the experiments reported in the paper (e.g., preliminary or failed experiments that didn't make it into the paper). 
		\end{itemize}
		
		\item {\bf Code Of Ethics}
		\item[] Question: Does the research conducted in the paper conform, in every respect, with the NeurIPS Code of Ethics \url{https://neurips.cc/public/EthicsGuidelines}?
		\item[] Answer: \answerYes{} %\answerTODO{} % Replace by \answerYes{}, \answerNo{}, or \answerNA{}.
		\item[] Justification: We have read and conformed to the NeurIPS Code of Ethics. %\justificationTODO{}
		\item[] Guidelines:
		\begin{itemize}
			\item The answer NA means that the authors have not reviewed the NeurIPS Code of Ethics.
			\item If the authors answer No, they should explain the special circumstances that require a deviation from the Code of Ethics.
			\item The authors should make sure to preserve anonymity (e.g., if there is a special consideration due to laws or regulations in their jurisdiction).
		\end{itemize}

		\item {\bf Broader Impacts}
		\item[] Question: Does the paper discuss both potential positive societal impacts and negative societal impacts of the work performed?
		\item[] Answer: \answerNA{} %\answerTODO{} % Replace by \answerYes{}, \answerNo{}, or \answerNA{}.
		\item[] Justification: This paper presents work whose goal is to advance the field of Machine Learning from algorithmic and theoretical aspects. We do not see any direct paths to negative societal impacts. %\justificationTODO{}
		\item[] Guidelines:
		\begin{itemize}
			\item The answer NA means that there is no societal impact of the work performed.
			\item If the authors answer NA or No, they should explain why their work has no societal impact or why the paper does not address societal impact.
			\item Examples of negative societal impacts include potential malicious or unintended uses (e.g., disinformation, generating fake profiles, surveillance), fairness considerations (e.g., deployment of technologies that could make decisions that unfairly impact specific groups), privacy considerations, and security considerations.
			\item The conference expects that many papers will be foundational research and not tied to particular applications, let alone deployments. However, if there is a direct path to any negative applications, the authors should point it out. For example, it is legitimate to point out that an improvement in the quality of generative models could be used to generate deepfakes for disinformation. On the other hand, it is not needed to point out that a generic algorithm for optimizing neural networks could enable people to train models that generate Deepfakes faster.
			\item The authors should consider possible harms that could arise when the technology is being used as intended and functioning correctly, harms that could arise when the technology is being used as intended but gives incorrect results, and harms following from (intentional or unintentional) misuse of the technology.
			\item If there are negative societal impacts, the authors could also discuss possible mitigation strategies (e.g., gated release of models, providing defenses in addition to attacks, mechanisms for monitoring misuse, mechanisms to monitor how a system learns from feedback over time, improving the efficiency and accessibility of ML).
		\end{itemize}
		
		\item {\bf Safeguards}
		\item[] Question: Does the paper describe safeguards that have been put in place for responsible release of data or models that have a high risk for misuse (e.g., pretrained language models, image generators, or scraped datasets)?
		\item[] Answer: \answerNA{} %\answerTODO{} % Replace by \answerYes{}, \answerNo{}, or \answerNA{}.
		\item[] Justification: Our paper does not involve the release of any data or models. %\justificationTODO{}
		\item[] Guidelines:
		\begin{itemize}
			\item The answer NA means that the paper poses no such risks.
			\item Released models that have a high risk for misuse or dual-use should be released with necessary safeguards to allow for controlled use of the model, for example by requiring that users adhere to usage guidelines or restrictions to access the model or implementing safety filters. 
			\item Datasets that have been scraped from the Internet could pose safety risks. The authors should describe how they avoided releasing unsafe images.
			\item We recognize that providing effective safeguards is challenging, and many papers do not require this, but we encourage authors to take this into account and make a best faith effort.
		\end{itemize}
		
		\item {\bf Licenses for existing assets}
		\item[] Question: Are the creators or original owners of assets (e.g., code, data, models), used in the paper, properly credited and are the license and terms of use explicitly mentioned and properly respected?
		\item[] Answer: \answerYes{} %\answerTODO{} % Replace by \answerYes{}, \answerNo{}, or \answerNA{}.
		\item[] Justification: Our paper uses existing text classification datasets and are cited in \Cref{sec:experiment} and their licenses are mentioned. %\justificationTODO{}
		\item[] Guidelines:
		\begin{itemize}
			\item The answer NA means that the paper does not use existing assets.
			\item The authors should cite the original paper that produced the code package or dataset.
			\item The authors should state which version of the asset is used and, if possible, include a URL.
			\item The name of the license (e.g., CC-BY 4.0) should be included for each asset.
			\item For scraped data from a particular source (e.g., website), the copyright and terms of service of that source should be provided.
			\item If assets are released, the license, copyright information, and terms of use in the package should be provided. For popular datasets, \url{paperswithcode.com/datasets} has curated licenses for some datasets. Their licensing guide can help determine the license of a dataset.
			\item For existing datasets that are re-packaged, both the original license and the license of the derived asset (if it has changed) should be provided.
			\item If this information is not available online, the authors are encouraged to reach out to the asset's creators.
		\end{itemize}
		
		\item {\bf New Assets}
		\item[] Question: Are new assets introduced in the paper well documented and is the documentation provided alongside the assets?
		\item[] Answer: \answerNA{} %\answerTODO{} % Replace by \answerYes{}, \answerNo{}, or \answerNA{}.
		\item[] Justification: Our paper does not release new assets. %\justificationTODO{}
		\item[] Guidelines:
		\begin{itemize}
			\item The answer NA means that the paper does not release new assets.
			\item Researchers should communicate the details of the dataset/code/model as part of their submissions via structured templates. This includes details about training, license, limitations, etc. 
			\item The paper should discuss whether and how consent was obtained from people whose asset is used.
			\item At submission time, remember to anonymize your assets (if applicable). You can either create an anonymized URL or include an anonymized zip file.
		\end{itemize}
		
		\item {\bf Crowdsourcing and Research with Human Subjects}
		\item[] Question: For crowdsourcing experiments and research with human subjects, does the paper include the full text of instructions given to participants and screenshots, if applicable, as well as details about compensation (if any)? 
		\item[] Answer: \answerNA{} %\answerTODO{} % Replace by \answerYes{}, \answerNo{}, or \answerNA{}.
		\item[] Justification: This paper does not involve crowdsourcing or research with human subjects. %\justificationTODO{}
		\item[] Guidelines:
		\begin{itemize}
			\item The answer NA means that the paper does not involve crowdsourcing nor research with human subjects.
			\item Including this information in the supplemental material is fine, but if the main contribution of the paper involves human subjects, then as much detail as possible should be included in the main paper. 
			\item According to the NeurIPS Code of Ethics, workers involved in data collection, curation, or other labor should be paid at least the minimum wage in the country of the data collector. 
		\end{itemize}
		
		\item {\bf Institutional Review Board (IRB) Approvals or Equivalent for Research with Human Subjects}
		\item[] Question: Does the paper describe potential risks incurred by study participants, whether such risks were disclosed to the subjects, and whether Institutional Review Board (IRB) approvals (or an equivalent approval/review based on the requirements of your country or institution) were obtained?
		\item[] Answer: \answerNA{} %\answerTODO{} % Replace by \answerYes{}, \answerNo{}, or \answerNA{}.
		\item[] Justification: This paper does not involve crowdsourcing or research with human subjects. %\justificationTODO{}
		\item[] Guidelines:
		\begin{itemize}
			\item The answer NA means that the paper does not involve crowdsourcing nor research with human subjects.
			\item Depending on the country in which research is conducted, IRB approval (or equivalent) may be required for any human subjects research. If you obtained IRB approval, you should clearly state this in the paper. 
			\item We recognize that the procedures for this may vary significantly between institutions and locations, and we expect authors to adhere to the NeurIPS Code of Ethics and the guidelines for their institution. 
			\item For initial submissions, do not include any information that would break anonymity (if applicable), such as the institution conducting the review.
		\end{itemize}
		
	\end{enumerate}

\end{document}